\pdfoutput=1
\documentclass{article}
\usepackage{iclr2027_conference,times}

\usepackage{amsmath,amssymb,amsthm,mathtools,bm}
\usepackage{graphicx,booktabs,array}
\usepackage[hypertexnames=false]{hyperref}
\usepackage[protrusion=true,expansion=false]{microtype}
\usepackage{xcolor}
\usepackage{needspace}
\definecolor{PaperNavy}{HTML}{243B53}
\definecolor{PaperTeal}{HTML}{12635D}
\hypersetup{
  colorlinks=true,
  linkcolor=PaperNavy,
  citecolor=PaperTeal,
  urlcolor=PaperTeal,
  pdfborder={0 0 0},
  bookmarksopen=false,
  bookmarksnumbered=true,
  pdftitle={Awakening of the Buddha: Subspace Learning During Population-Loss Plateaus},
  pdfauthor={Akash Kumar}
}

\iclrfinalcopy
\makeatletter
\renewcommand{\@maketitle}{\vbox{\hsize\textwidth
  {\LARGE\scshape\@title\par}
  \vskip 1em
  {\normalsize\normalfont\@author\par}
  \vskip 0.3in minus 0.1in}}
\makeatother
\usepackage{xurl}

\newtheorem{theorem}{Theorem}[section]
\newtheorem{lemma}[theorem]{Lemma}
\newtheorem{proposition}[theorem]{Proposition}
\newtheorem{corollary}[theorem]{Corollary}
\theoremstyle{definition}

\theoremstyle{remark}

\title{Awakening of the Buddha:\\ Subspace Learning During\\ Population-Loss Plateaus}
\author{Akash Kumar\\Department of Computer Science \& Engineering\\University of California-San Diego}
\date{}

\begin{document}
\maketitle

\begin{abstract}
Population loss can remain nearly constant while a neural network learns a
substantially more predictive representation. We establish this separation
for two-layer ReLU and leaky-ReLU networks trained on Gaussian inputs by
simultaneous fixed-step population gradient descent on all parameters.
For structured additive teachers whose links are positive mixtures of
Gaussian-damped cubics in $H^1(\gamma)$, we give explicit conditions under
which small IID Gaussian initialization yields a high-probability guarantee:
at a checkpoint during a high-loss plateau, minimum alignment between the
rank-$r$ teacher subspace and the leading $r$-dimensional eigenspace of the
predictor's average gradient outer product (AGOP) increases by at least
$1/2$, and the minimum refit MSE under unchanged coefficient budgets
decreases by more than $0.399$, both relative to initialization.
The same trajectory subsequently attains a trained loss below every value
in the plateau window. A complementary result treats unequal-weight cubic
teachers and small additive Sobolev perturbations using projected-feature
refits. For SwiGLU networks with an exactly fitted intercept, we prove
leading-AGOP alignment during a loss plateau at fixed width and dimension
as Gaussian initialization vanishes, for square-integrable teachers with
nonzero Hermite content of degree one, two, or three. A rank-one cubic
specialization also gives simultaneous unrestricted-refit gains at a
prescribed width. An approximation lower bound further shows that certain
interaction targets retain nonzero error when ridge neurons are restricted
to shared orthogonal axes within the teacher subspace. Population-moment
experiments with ReLU students across 21 teachers and 50 initializations
per teacher complement the analysis.

\end{abstract}

\begin{flushright}
\begin{minipage}{0.72\textwidth}
\raggedleft
\textit{``There are no covenants between lions and men...''}\\[0.3em]
--- Akhille\'us to H\'ekt\={o}r, H\'om\={e}ros, \textit{Ili\'as} 22.262
\end{minipage}
\end{flushright}

\clearpage
\section{Introduction}
\label{sec:introduction}

A nearly constant prediction loss can conceal substantial changes in a
network's representation. Early ReLU alignment \citep{maennel2018quantizes},
\emph{silent alignment} of tangent-kernel eigenstructure
\citep{atanasov2022silent}, and hidden feature amplification in parity
learning \citep{barak2022hidden} provide precedents. We ask:
can an ordinary nonlinear training trajectory recover \emph{every}
direction of a target subspace while its population loss remains high,
and can the learned features already support a substantially better predictor?

We study simultaneous population gradient descent on all parameters of
two-layer ReLU and leaky-ReLU networks. Gaussian inputs and a target
depending on an unknown $r$-dimensional subspace connect our setting to
Gaussian multi-index representation learning
\citep{damian2022representations,bietti2025gradient}; small initialization
connects it to early alignment and feature acquisition
\citep{boursier2025early,kunin2025alternating}.
We track the weakest recovered direction using the leading
$r$-dimensional eigenspace of the current predictor's average gradient
outer product (AGOP), including all cross-neuron terms. To measure
predictive information separately, we refit the actual feature bank
under identical normalization and coefficient budgets before and after learning.

For scalar teacher links, let $\gamma=N(0,1)$ and
$h_k=\mathrm{He}_k/\sqrt{k!}$ be the orthonormal probabilists' Hermite
polynomials. Our Gaussian Sobolev regularity class is
\begin{equation}
 \mathcal G=H^1(\gamma)
 =\left\{g=\sum_{k=0}^{\infty}\widehat g_k h_k:
   \sum_{k=0}^{\infty}(1+k)|\widehat g_k|^2<\infty\right\}.
 \label{eq:gaussian-sobolev-links}
\end{equation}
Here $\widehat g_k=\mathbb E[g(Z)h_k(Z)]$, $Z\sim\gamma$, and the series
converges in $L^2(\gamma)$. Equivalently, $g$ and its weak derivative
belong to $L^2(\gamma)$. The class contains all globally Lipschitz links
and polynomials; multivariate links use $H^1(\gamma_r)$ with
$\gamma_r=N(0,I_r)$. Positive learning results require additional teacher
structure; the SwiGLU alignment result below allows $L^2$ teachers.

Small initialization separates output amplitude from feature geometry
while all parameters train; refitting never enters the updates.
For positive mixtures of
Gaussian-damped cubic links, including $h_3$ and nonpolynomial links
\eqref{eq:mixture-teacher}, our first result combines all-direction
recovery and improved refitting during a high-loss window with a later
decrease in the trained loss.

\Needspace{11\baselineskip}
\paragraph{Informal Theorem~\ref{thm:main} (ReLU and leaky ReLU).}
\textit{For our positive-mixture teachers, fixed-step population GD from
small Gaussian initialization learns the teacher subspace while loss
stays near one. With high probability, minimum AGOP alignment improves
by at least $1/2$ and same-budget refitting reduces MSE by more than
$0.399$ during the plateau. The same trajectory later lowers its own loss.}

To test whether feature learning during a plateau extends beyond
piecewise-linear students, we turn to SwiGLU. Its smooth gate and
trainable value response can change direction while their product stays
small, yielding leading-direction AGOP alignment for a broader teacher class.

\paragraph{Informal Theorem~\ref{swb:thm:alignment} (SwiGLU).}
\textit{For square-integrable teachers with a nonzero Hermite component
of degree one, two, or three, SwiGLU exhibits a related separation.
At fixed width, dimension and GD step, with the intercept fitted exactly,
its leading AGOP direction can approach the teacher subspace arbitrarily
closely as Gaussian initialization shrinks, while loss remains near
its initial value.}

For ReLU and leaky ReLU, Theorems~\ref{app:sw-theorem}
and~\ref{app:aj-theorem} give the full conditions and extensions. For restricted
rank-one teachers at a specified width,
Corollary~\ref{cor:swiglu-joint-main} adds simultaneous unrestricted-refit
improvement; its pure-$h_3$ example gives alignment and refit gains of at
least $0.5$ and $0.7$ with high asymptotic probability.

Figure~\ref{fig:main-trajectories} illustrates fifty initializations per
teacher. For the normalized
$\operatorname{SiLU}(X_1)X_2$ teacher, mean minimum-AGOP alignment rises from
$0.028$ to $0.986$ and mean refit MSE falls from $0.829$ to $0.455$
by update 200, while loss remains near one. Neuron-angle and axis-snapping
diagnostics reveal mixed teacher coordinates: subspace recovery need not
entail specialization to teacher axes. These interaction teachers lie
outside the ReLU plateau theorems. Figure~\ref{fig:swiglu-main} reports
SwiGLU students over twenty seeds per teacher.

\begin{figure}[p]
 \centering
 \includegraphics[width=\linewidth]{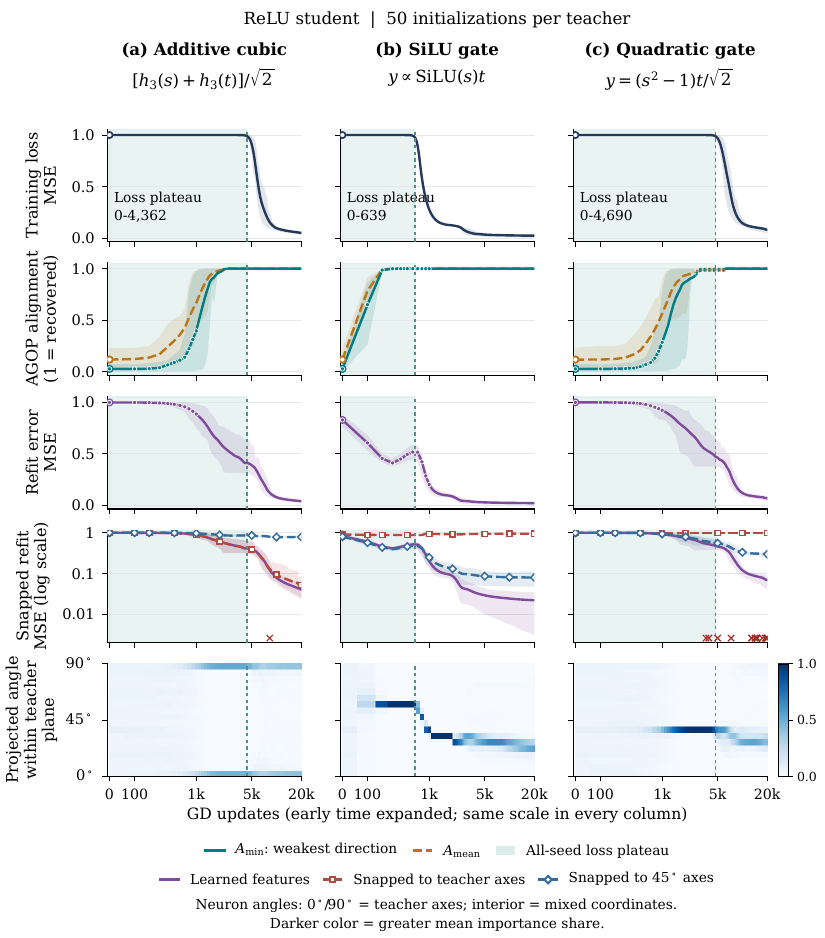}
 \caption{\textbf{Features improve before loss falls; gated teachers retain mixed neurons.}
 Fifty initializations of one ReLU-student configuration per normalized
 teacher ($s=X_1$, $t=X_2$).
 \emph{Read down each column:} training loss stays near one, AGOP subspace
 alignment rises, and a same-budget readout refit improves before loss release.
 Shading marks the loss-only plateau shared by all fifty seeds; the labeled
 boundaries are the earliest individual plateau endpoints.
 Horizontal position is $\log(1+n/100)$: early time is expanded, ticks show
 actual updates, and initialization and all 20,000 updates remain visible.
 For SiLU, mean $A_{\min}$ rises $0.028\to0.986$ and refit MSE falls
 $0.829\to0.455$ by update 200 while loss remains near one; refit improvement
 is not monotone.
 Row four projects each weight's teacher-plane component onto its nearest
 axis in the indicated frame, retaining outside-plane components and biases,
 then refits. Bottom-row color shows mean importance share of projected
 neuron directions \emph{within} the teacher plane: $0^\circ/90^\circ$
 are its two axes; interior angles mix their coordinates. This is not the
 angle to the plane and does not measure the outside-plane component.
 Curves use all fifty seeds; light bands are pointwise 10th--90th percentiles,
 not confidence intervals. Dotted alignment bounds retain unresolved spectra;
 red crosses mark snapped-refit solver tolerance misses.
 Diagnostics are recorded every 100 updates (early dots); connecting lines
 add no observations. Appendix~\ref{app:experiments} gives numerical brackets,
 the plateau rule and eighteen further teachers.}
 \label{fig:main-trajectories}
 \label{fig:frame-contrast}
\end{figure}

\clearpage
\begin{figure}[p]
\centering
\includegraphics[width=\textwidth]{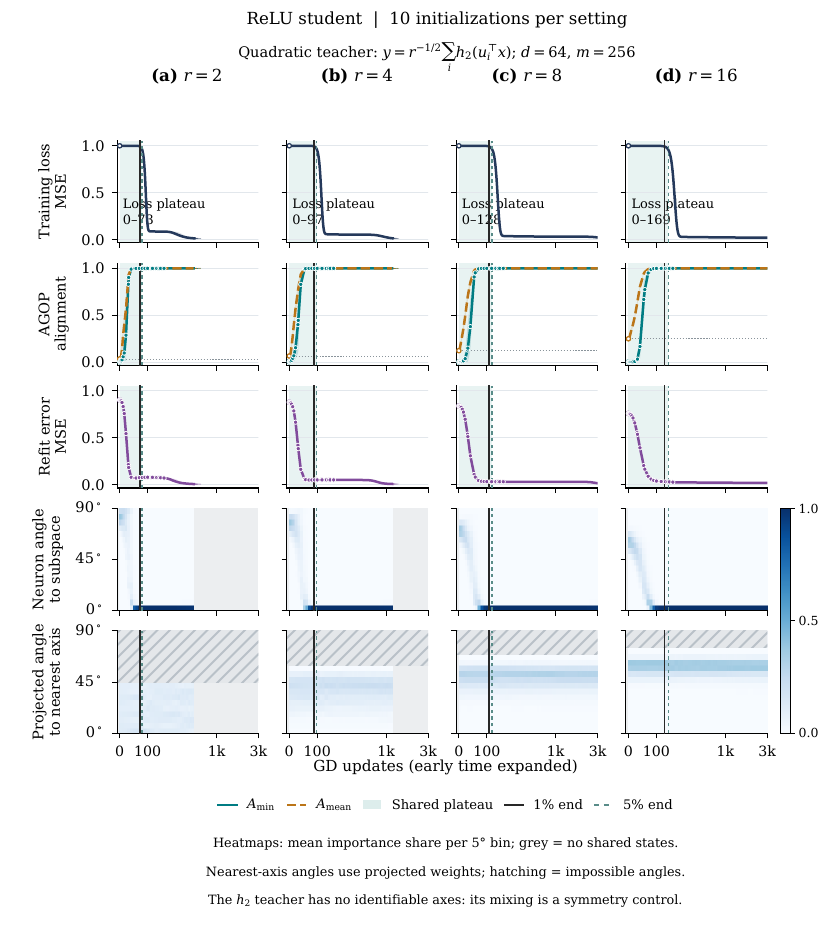}
\caption{\textbf{Subspace recovery before loss release at increasing teacher rank.}
ReLU students ($d=64$, $m=256$), ten initializations per column,
learn $y=r^{-1/2}\sum_i h_2(u_i^TX)$ at $r=2,4,8,16$.
Rows show training MSE, AGOP alignment, bounded-head refit MSE,
neuron angle \emph{to} the teacher subspace, and projected neuron
angle to its \emph{nearest axis}. The last two rows distinguish
subspace entry from individual-axis alignment. Unlike the bottom row of
Figure~\ref{fig:main-trajectories}, the bottom row here uses a projected
nearest-axis angle at arbitrary rank.
Means and 10th--90th percentile bands retain all ten seeds;
heatmaps average separately normalized importance-weighted histograms.
The dotted $r/d$ line is an isotropic reference for $A_{\rm mean}$.
Shading marks the common 5\% loss-ratio prefix, with a 1\% comparison
boundary; the loss also remains in $[0.95,1.05]$.
Grey heatmap regions have no shared saved state; hatching marks
geometrically impossible angles.
The quadratic teacher is invariant to rotations inside its subspace,
so its axes are not identifiable: this is a symmetry control for
coordinate mixing. Appendix~\ref{app:highrank} defines the angles,
protocol, numerical checks and additional teacher comparisons.}
\label{fig:highrank-relu-main}
\end{figure}

\clearpage
\section{Related work}
\label{sec:related}

\paragraph{Feature changes before loss reduction.}
Small-initialization ReLU networks can align before appreciable loss
reduction \citep{maennel2018quantizes}. Silent alignment describes a
related separation through the evolving tangent kernel, analytically
in linear networks and experimentally in nonlinear networks
\citep{atanasov2022silent}. Nonlinear early-alignment guarantees cover
classification under label-correlation conditions \citep{min2024early},
empirical gradient flow \citep{boursier2025early}, and orthogonal-input
regression under balanced initialization \citep{boursier2022orthogonal}.
Hidden progress in parity learning instead amplifies Fourier features
\citep{barak2022hidden}. Our result connects this early progress to every-direction AGOP
recovery, a bounded-readout gain, and later loss decrease on one
simultaneous-GD trajectory.

\paragraph{Plateaus, scale, and escape.}
Delayed transitions occur in exact deep-linear dynamics \citep{saxe2014exact}.
Alternating directional acquisition and feature growth have a proved
small-initialization limit in diagonal linear networks
\citep{kunin2025alternating}; optimal first-escape directions in deep
ReLU networks have a low-rank bias in deeper layers
\citep{bantzis2026saddle}.
Univariate ReLU plateaus also reflect activation-pattern dynamics
\citep{ainsworth2021plateau}. Scaling determines whether training is lazy
\citep{chizat2019lazy}, so our initialization and GD step are substantive
hypotheses.

\paragraph{Gaussian multi-index learning.}
Staged representation learning for polynomial targets
\citep{damian2022representations} assumes an expected Hessian of rank $r$,
which excludes our central odd links. Gaussian multi-index population
flow with infinitely faster nonparametric link fitting exhibits
Hermite-dependent phases \citep{bietti2025gradient}.
For suitable orthogonal additive targets and initialization, correlation-loss
flow gives all-index recovery with order-$r\log r$ neurons
\citep{simsek2025learning}. Unlike those decoupled neurons, ours remain
coupled through square loss, with trained biases and heads; recovery
concerns the current predictor's AGOP. The emphasis here is the joint
dynamical and spectral guarantee under simultaneous training; our
width prescriptions are sufficient existence bounds and do not improve
that coverage rate. Large first-layer steps followed by readout fitting
provide another regime
\citep{dandi2024giant}.

\paragraph{Hermite structure.}
Information exponents govern high-dimensional online search times
\citep{benarous2021online}. Here the mixture and Sobolev-neighborhood
hypotheses control both directional signal and approximation by learned
profiles; regularity alone is insufficient.

\paragraph{Gradient-based subspaces and AGOP.}
Gradient outer products support active-subspace approximation and
dimension reduction \citep{constantine2014active,yuan2025efficient}.
The neural feature ansatz relates weight geometry and AGOP
\citep{radhakrishnan2024mechanism}, with exact identities under balanced
deep-linear flow and nonlinear counterexamples \citep{tansley2026neural}.
We study the evolution of the AGOP eigenspace during the transient regime
in which the population loss remains nearly constant.
AGOP progress during flat loss is observed in recursive feature machines
for modular arithmetic \citep{mallinar2025emergence}; that alignment uses
the final learned matrix, whereas ours uses the known target subspace.
We control the full current-head AGOP and cutoff eigengap without
assuming proportionality to a weight Gram matrix; bounded refitting
separately measures predictive information.

\section{Population training and representation diagnostics}
\label{sec:setup}

Let $X\sim\gamma_d=N(0,I_d)$, and let $U=(u_1,\ldots,u_r)\in\mathbb R^{d\times r}$
have orthonormal columns, with $1\le r<d$. The teacher depends only on $U^TX$;
$P_U=UU^T$ and $P_{U^\perp}=I_d-P_U$ project onto its subspace and complement.
For functions, $\|g\|_2^2=\mathbb E[g(X)^2]$ and
$\|g\|_{H^1}^2=\|g\|_2^2+\mathbb E[\|\nabla g(X)\|^2]$,
with weak derivatives when needed. Unless stated otherwise, $\|y\|_2=1$.
Vector norms $\|\cdot\|_p$ are $\ell^p$ norms; unadorned norms are Euclidean
for vectors and operator norms for matrices. For symmetric $M$,
$\lambda_k(M)$ are decreasing eigenvalues, $\lambda_{\min}(M)$ is the least,
and $M\succeq H$ means $M-H$ is positive semidefinite.
Write $\varphi(t)=e^{-t^2/2}/\sqrt{2\pi}$, $\Phi(t)=\int_{-\infty}^t\varphi(z)\,dz$,
and $h_k(t)=(-1)^k\varphi^{(k)}(t)/(\sqrt{k!}\varphi(t))$ for the
orthonormal probabilists' Hermite polynomials; thus $h_3(t)=(t^3-3t)/\sqrt6$.

\paragraph{Network and optimization.}
For a fixed $0\le\alpha<1$, put $J_\alpha=1-\alpha$ and
$\sigma_\alpha(t)=\alpha t+J_\alpha\max\{t,0\}$; $\alpha=0$ gives ReLU.
With width $m$, scalar heads $A_{j,n}$, weights $W_{j,n}\in\mathbb R^d$,
and biases $B_{j,n}$ at update $n$, the network and loss are
\begin{equation}
 f_n(x)=\frac1m\sum_{j=1}^m A_{j,n}
      \sigma_\alpha(W_{j,n}^Tx+B_{j,n}),\qquad
 L_n=\mathbb E[(y(X)-f_n(X))^2].
 \label{eq:network}
\end{equation}
Every scalar entry of $A_{j,0},W_{j,0},B_{j,0}$ is independently
$N(0,s^2/d)$ for the same public $s>0$. Thus the output heads are
random and signed. All parameters undergo simultaneous Euclidean
gradient descent with fixed force step $h>0$ and raw step $\eta=mh/2$:
\begin{equation}
 \Theta_{n+1}=\Theta_n-\eta\nabla_\Theta L_n,
 \qquad \Theta_n=((A_{j,n},W_{j,n},B_{j,n}))_{j=1}^m.
 \label{eq:gd}
\end{equation}
For example, $A_{j,n+1}=A_{j,n}+h\mathbb E[(y-f_n)
\sigma_\alpha(W_{j,n}^TX+B_{j,n})]$. Expectations are over $X$ unless
initialization is specified; training uses exact population gradients.
The appendix gives the remaining coordinate updates and kink convention.

\paragraph{Full predictor AGOP.}
Define
\begin{equation}
 \mathsf G_n=\mathbb E[\nabla_x f_n(X)\nabla_x f_n(X)^T],
 \qquad
 \nabla_x f_n(X)=\frac1m\sum_j A_{j,n}
 \sigma_\alpha'(W_{j,n}^TX+B_{j,n})W_{j,n}.
 \label{eq:agop}
\end{equation}
This includes all pairs of neurons, with their current heads.
Whenever $\lambda_r(\mathsf G_n)>\lambda_{r+1}(\mathsf G_n)$,
let $P_n$ be its unique leading rank-$r$ orthogonal projector and set
\begin{equation}
 A_{\min,n}=\lambda_{\min}(U^TP_nU),\qquad
 A_{\mathrm{mean},n}=\frac1r\operatorname{tr}(U^TP_nU).
 \label{eq:alignment}
\end{equation}
These are the minimum and mean squared cosines of the principal
angles. The minimum score requires recovery of every teacher direction.
The theorems establish the required spectral gaps at both compared
checkpoints. A null eigenspace is never completed with an arbitrary
basis to define a favorable alignment.
Multiplying $f_n$ by a nonzero constant rescales its AGOP but preserves
its eigenspaces. These scores therefore measure directional geometry
separately from output amplitude.

\paragraph{Normalized-feature refit.}
The main diagnostic uses the full unprojected feature bank:
\begin{equation}
 \phi_{j,n}(x)=
 \frac{\sigma_\alpha(W_{j,n}^Tx+B_{j,n})}
      {\sqrt{\|W_{j,n}\|^2+B_{j,n}^2}},\qquad
 \mathcal R(n)=\inf_{\substack{\|v\|_2\le32/J_\alpha\\
                    \|v\|_1\le64\sqrt r/J_\alpha}}
 \left\|y-\sum_jv_j\phi_{j,n}\right\|_2^2.
 \label{eq:refit}
\end{equation}
A zero augmented row gives the zero feature. Both coefficient caps
are identical at every checkpoint and independent of $s$; there is no
additional intercept. The readout is fitted only to measure information
available in the features. It does not replace the trained heads in
\eqref{eq:network}.
By positive homogeneity, scaling a weight and its bias by the same
positive factor leaves the normalized feature unchanged. The diagnostic
thus compares feature shapes at a fixed readout budget.

\paragraph{Teacher family for the main theorem.}
For any probability measure $\mu$ on $[1/3,1]$, define
\begin{equation}
 \begin{split}
 g_\mu(t)&=\int_{1/3}^1
 v^{-7/2}(t^3-3vt)e^{-(v^{-1}-1)t^2/2}\,d\mu(v),\\
 N_\mu&=\|g_\mu\|_{L^2(\gamma_1)},\qquad
 q_\mu=g_\mu/N_\mu,\qquad
 y(x)=\frac1{\sqrt r}\sum_{i=1}^r q_\mu(u_i^Tx).
 \end{split}
 \label{eq:mixture-teacher}
\end{equation}
These links belong to $H^1(\gamma_1)$: their mean and first Hermite
coefficient vanish, $\sqrt6\le N_\mu<9$, and $\|y\|_2=1$.
Writing $\delta_v$ for a unit point mass at $v$, $\mu=\delta_1$ gives
$q_\mu=h_3$. Whenever $\mu([1/3,1))>0$,
the link has an infinite Hermite expansion, starting at degree three.
For atomic measures we abbreviate $g_{\delta_v}$ as $g_v$.
The entire link is used in the objective and gradients.
The positive optimized signal and the later-loss scale are
\begin{equation}
 K_*=\frac{J_\alpha}{2\sqrt r\,N_\mu}
 \max_{B\ge0}\left\{\sqrt{\frac B{1+B}}
   \int v^{-3/2}\varphi(\sqrt{B/v})\,d\mu(v)\right\},
 \qquad G_c=\frac{K_*^2}{32768}>0.
 \label{eq:signal}
\end{equation}

\paragraph{SwiGLU convention.}
Section~\ref{sec:swiglu-main} instead uses $\operatorname{SiLU}(t)=t/(1+e^{-t})$ and
\begin{equation}
 f_\theta(x)=b_0+\frac{\kappa}{m}\sum_{j=1}^m
 a_j\operatorname{SiLU}(w_j^Tx+b_j)(z_j^Tx+c_j),\qquad \kappa>0.
 \label{eq:swiglu-network}
\end{equation}
Here $\theta$ collects scalar heads $a_j$, gate/value weights $w_j,z_j\in\mathbb R^d$
and biases $b_j,c_j$; $\kappa$ is a fixed output scale.
Only the intercept is fitted exactly: $b_0=\mathbb E[y-\widetilde f_\theta]$,
where $\widetilde f_\theta=f_\theta-b_0$. The profiled loss is
$L(\theta)=\operatorname{Var}(y-\widetilde f_\theta)$.
This removes the error in fitting the target mean; the plateau concerns
learning its input-dependent variation.
All other parameters train jointly by $\dot\theta=-m\nabla L$ or
$\theta^{n+1}=\theta^n-mh\nabla L(\theta^n)$: the SwiGLU raw GD step
is $mh$, fixed independently of $\varepsilon$.
For $\theta_j=(a_j,(w_j,b_j),(z_j,c_j))$, initialize
$\theta_j(0)=\varepsilon\vartheta_j$ with independent
$\vartheta_j\sim N(0,\operatorname{diag}(\beta_0^2,s_0^2I_{2d+2}))$
and fixed $\beta_0,s_0>0$.
The illustrations use $\beta_0=1$, $s_0=1/\sqrt{d+1}$, and $\kappa=1$.
The SwiGLU theorem normalizes $\operatorname{Var}(y)=1$; its illustrations retain
$\mathbb E[y^2]=1$ and plot $\ell=L/\operatorname{Var}(y)$.
For its full AGOP $\mathsf G$, $A_{\rm top}=\min\{\|P_Ue\|^2:
\|e\|=1,\ \mathsf Ge=\lambda_1(\mathsf G)e\}$.

\section{Subspace acquisition during a loss plateau}
\label{sec:theory}

We state the equal-coefficient, common-link case; Appendix~\ref{app:sw-theorem}
gives the broader statement. The theorem assumes Gaussian inputs, exact
population gradients, the specified teacher, and the appendix's explicit
size, initialization and step conditions.

The acquisition checkpoint $N$ lies inside the loss-controlled window
$0\le n\le N_c$. At $N$, the features already support a better bounded
refit; at a later $N_2$, the original predictor beats every loss value
in that window. Thus improved representation and improved trained
prediction are quantified separately along the same trajectory.

\paragraph{Parameter scope.}
The appendix prescription is part of the theorem's hypotheses.
It supplies explicit occupancy, dimension, mesh, force, and
initialization inequalities in an acyclic order before the random
draw. In particular $m\ge r$ and $d\ge16r/\delta$, but these two
inequalities alone are insufficient. Compatible widths may exceed
$d$; arbitrary triples $(r,d,m)$ are not covered. Width must populate
rare Gaussian categories, and the raw scale and mesh can be
exponentially small in an already large acquisition horizon.
The result gives no practical complexity or sample-size bound.

\begin{theorem}[Population subspace learning on a loss plateau]
\label{thm:main}
Fix $r\ge1$, $\mu$, and $0\le\alpha<1$ as above, and tolerances
$0<\delta<1/2$, $0<\epsilon_L\le1$, $0<\epsilon_G\le1/4$.
Choose $m,d,s,h$ and the public checkpoints $N<N_c$ according to
the complete parameter prescription in
Appendix~\ref{app:sw-parameters}, specialized to
$\mu_i=\mu$, $\lambda_i=r^{-1/2}$ and residual $e=0$.
Train \eqref{eq:network} by \eqref{eq:gd} from the stated IID Gaussian
initialization. With probability at least $1-7\delta/8$, the following
hold simultaneously:
\begin{enumerate}
\item \textbf{Loss plateau.} Throughout $0\le n\le N_c$, the original loss obeys
\begin{equation}
 1-\epsilon_L/16\le L_n\le1+\epsilon_L/16,\quad
 L_n\ge1-G_c,\quad
 \frac{\max_{n\le N_c}L_n}{\min_{n\le N_c}L_n}\le1+\epsilon_L/4.
 \label{eq:plateau-guarantee}
\end{equation}
\item \textbf{Subspace recovery.} The full AGOP has a positive rank-$r$ cutoff gap at $0$ and $N$,
and its principal-angle scores satisfy
\begin{equation}
 \begin{split}
 A_{\min,0}\le A_{\mathrm{mean},0}&\le\tfrac14,\qquad
 A_{\min,N}\ge1-\epsilon_G,\qquad
 A_{\mathrm{mean},N}\ge1-\epsilon_G/r,\\
 A_{\min,N}-A_{\min,0}&\ge\tfrac34-\epsilon_G\ge\tfrac12.
 \end{split}
 \label{eq:angle-guarantee}
\end{equation}
\item \textbf{Refit improvement.} For the identical bounded diagnostic \eqref{eq:refit},
\begin{equation}
 \mathcal R(0)-\mathcal R(N)>0.399.
 \label{eq:refit-guarantee}
\end{equation}
\item \textbf{Later loss decrease.} At a finite subsequent checkpoint $N_2>N_c$, the same unchanged
GD trajectory satisfies
\begin{equation}
 L_{N_2}\le1-2G_c,\qquad
 L_n-L_{N_2}\ge G_c\quad(0\le n\le N_c).
 \label{eq:release-guarantee}
\end{equation}
\end{enumerate}
The teacher subspace $\operatorname{span}(U)$ is minimal. The times
$N,N_c$ are fixed before initialization; $N_2$ may depend on the
realized trajectory.
\end{theorem}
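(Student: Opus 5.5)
The plan is to reduce Theorem~\ref{thm:main} to the general statement of Appendix~\ref{app:sw-theorem}, specialized to $\mu_i=\mu$, $\lambda_i=r^{-1/2}$, $e=0$. I would organize the argument itself around a small-initialization linearization with three phases.

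\emph{Phase one: silent alignment ($0\le n\le N$).} Because every parameter has scale $s/\sqrt d$, the predictor satisfies $f_n=O(s^2)$ uniformly on $n\le N_c$. Hence the residual $y-f_n$ equals $y$ up to $O(s^2)$. I would write the GD recursion for each neuron $(A_j,W_j,B_j)$ as the decoupled "force" dynamics driven by $y$ alone, plus a perturbation controlled by a Grönwall-type bound with explicit constants.

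For the teacher family \eqref{eq:mixture-teacher}, Gaussian integration by parts gives the $W$-force. Its component along $u_i$ is
\[
\mathbb E\bigl[y\,\sigma_\alpha'(W^TX+B)\,u_i^TX\bigr],
\]
which depends only on the projected geometry $(U^TW,\|P_{U^\perp}W\|,B)$. Since the Hermite expansion of $q_\mu$ starts at degree three, this force is cubic-type. Its optimized strength along the best ratio $B/\|W\|$ is exactly the constant $K_*$ in \eqref{eq:signal}.

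The key step is an occupancy argument. Since $m$ is chosen large, with probability $\ge1-\delta/8$ each teacher direction $u_i$ has a prescribed fraction of neurons whose initial sign pattern of $(A_j,u_i^TW_j,B_j)$ lies in a "growth category". For these neurons the one-dimensional ODE blows up toward $u_i$ in finite rescaled time, whereas complement components only contract or grow linearly. By choosing $N$ just before the fastest blow-up (measured in units of $\log(1/s)$), the growing neurons dominate in norm while the total output stays $O(\sqrt{\epsilon_L})$. That yields the first part of \eqref{eq:plateau-guarantee}, and $L_n\ge1-G_c$ follows from the same size bound.

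\emph{Phase two: AGOP spectrum and refit.} Next I would write
\[
\mathsf G_N=\sum_i c_i u_iu_i^T+E,
\]
where the coefficients $c_i$ are comparable across $i$ by the equal weights and a symmetric occupancy bound, and $\|E\|$ is small relative to $\min_i c_i$. Davis--Kahan then gives the cutoff gap and $A_{\min,N}\ge1-\epsilon_G$. Summing over $i$ gives the $A_{\mathrm{mean},N}$ bound, since $1-A_{\mathrm{mean}}\le(1-A_{\min})$ averaged, refined to $\epsilon_G/r$ via the trace deficit bound.

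At $n=0$, the AGOP of a random small network is a Wishart-like matrix with isotropic top eigenspace. Since $d\ge16r/\delta$, concentration gives $\operatorname{tr}(U^TP_0U)\le r/4$ and a positive gap almost surely.

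For the refit, at $N$ I would exhibit an explicit readout $v$ within the budgets $32/J_\alpha$ and $64\sqrt r/J_\alpha$. It combines aligned normalized features $\sigma_\alpha(u_i^Tx+b)$ over a mesh of biases to approximate $r^{-1/2}q_\mu(u_i^Tx)$. This is where the damped-cubic mixture structure enters: $q_\mu$ is a positive combination of second differences of ReLU-type ramps, which bounds the $\ell^1$ cost. The resulting error is at most $0.6-\epsilon$. At $n=0$, the features are essentially random ridges, and the bounded budget gives $\mathcal R(0)\ge1-o(1)$, since projections of $y$ onto random directions of degree $\ge3$ Hermite content are $O((r/d)^{3/2})$.

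\emph{Phase three: release.} After $N_c$, the aligned neurons reach order-one heads. I would then show that the loss must drop by at least $2G_c$, by comparing with the directional derivative along the aligned configuration, whose first-order gain is $\propto K_*^2$. Since GD with the prescribed step is a descent method on the relevant sublevel set, $N_2$ exists. Minimality of $\operatorname{span}(U)$ follows because each $q_\mu$ is non-constant.

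\emph{Main obstacle.} I expect phase one to be hardest. It requires controlling coupled square-loss dynamics with signed heads over a horizon of order $\log(1/s)$, keeping all neurons' blow-up times separated from $N_c$ while the error terms grow exponentially. My first attempt would be a bootstrap: assume $\|f_n\|_2\le\epsilon_L/64$ on $[0,N_c]$, propagate the decoupled comparison with the explicit mesh $h$, and close the bootstrap using the exponentially small choice of $s$.
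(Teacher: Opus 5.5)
Your proposal has two genuine gaps, described below, plus several smaller errors.

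\textbf{The refit step does not go through.} At the public checkpoint $N$, the dynamics certify aligned rows at only one bias ratio per axis. The calibration contraction drives every representative's normalized bias to the root $B_i^*$ of $B(1+B)m_i(B)=1$. So, up to $e_*$, the trained bank offers just the four profiles $\sigma_\alpha(\epsilon u_i^TX+\tau\sqrt{B_i^*})$, not a mesh of biases, and a ramp-spline approximation of $q_\mu$ cannot be built from the actual features. The spline claim also fails on its own terms: $q_\mu''$ is odd, so it changes sign, and for $\mu=\delta_1$ it equals $\sqrt6\,t$, which is not integrable, so the $\ell^1$ cost is infinite. The missing idea is the exact identity $\sum_{\epsilon,\tau}c_{\epsilon,\tau}\sigma_\alpha(\epsilon z+\tau b)=\operatorname{clip}(z,-b,b)-p_bz$ with $\|c\|_1=2/J_\alpha$. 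Rather than approximating $q_\mu$, the paper projects it onto this single odd function per axis. It proves the captured energy $L_\mu(b)^2/(N_\mu^2V_b)$ exceeds $2/5$ uniformly over the admissible bias interval; concavity of $\log(L_v^2/N_v^2)$ in $v$ plus Minkowski reduces this to $v\in\{1/3,1\}$. Positive homogeneity cancels the learned radii and keeps the witness inside both caps, giving $\mathcal R(N)<(\sqrt{3/5}+\chi_0/64)^2$. Your figure $0.6$ is the right number, but nothing in your argument produces it.

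\textbf{Phases one and two leave the rest of the signed bank uncontrolled.} For (leaky) ReLU the normalized row dynamics is homogeneous of degree one, so there is no finite-time blow-up; that is the SwiGLU mechanism with $k>2$. Every row with positive $Q$ grows exponentially, and unselected rows can grow as fast as the selected ones. What must be shown is therefore not norm dominance but that the head-weighted outside and wrong-orientation energy $\Xi_N=\sum_j|q_j|\sqrt{\mathcal E_{\tau_j}(w_j)}$ is at most $c_Gv_G$. The paper gets this from two ingredients:
\begin{itemize}
\item the sharp ceiling $|Q|\le\Theta(B)K_*$ for large bias ratios, which confines marked rows to a corridor where a bias-weighted version of this defect contracts;
\item the gap between the selected rows' growth rate $3.32K_*$ and the unmarked rows' $3.31K_*$.
\end{itemize}

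Your decomposition $\mathsf G_N=\sum_ic_iu_iu_i^T+E$ with comparable $c_i$ is likewise unsupported. You have only lower bounds $V_j\ge v_G$ on the representatives, other favorable rows add uncontrolled mass inside $U$, and signed cross-neuron terms can cancel. A cross-block perturbation bound such as Davis--Kahan then degrades with the spread of $U^T\mathsf G_NU$. The paper instead lower-bounds the symmetric second-Hermite frame $D=\sum_j\omega_j(e_r^Tx_j)x_jx_j^T$. There every favorable row is a positive semidefinite summand and adverse rows are charged to $\Xi_N$. It then uses $P_N\preceq\mathsf G_N/a$ to bound the angle deficits by $\beta/a$, with no upper control of the inside block needed.

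\textbf{Smaller points.}
\begin{itemize}
\item $N$ and $N_c$ must be public, $s$-independent times, with the plateau coming from choosing $s$ last. They cannot be placed \emph{just before the fastest blow-up}, which is random.
\item Fixed-step GD on this objective is not a descent method. Release needs whole-bank positivity (marking rows, with unmarked energy a vanishing fraction) and the same-step radial/tangent lemma.
\item Nonconstancy of $q_\mu$ does not give minimality: a linear link makes $\sum_iu_i^Tx$ depend on one direction. The rank-$r$ cubic tensor $\sum_ic_iu_i^{\otimes3}$ is what gives it.
\end{itemize}
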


\paragraph{Proof outline.}
The complete discrete-time proof appears in
Appendix~\ref{app:proof-sw}. We summarize the four interfaces that
connect acquisition to the stated observable.

\textit{1. Directional acquisition.} Gaussian occupancy supplies initial neurons in suitable
axis and bias regions. A coupled induction on the actual GD updates
shows that these neurons acquire every teacher direction by a public
time. The induction retains the forces generated by all neurons.
A subsequent interval makes their signal dominate the total signed
defect, including adverse heads and components outside $U$.
The small raw scale keeps the full loss within
\eqref{eq:plateau-guarantee} while these relative changes occur.

\textit{2. From gradients to the AGOP.} The proof controls a matrix of second-Hermite coefficients of
the \emph{complete} input gradient. Write $Z=U^TX$,
$e_r=r^{-1/2}(1,\ldots,1)^T$, and
$\xi=(e_r^TZ)Z-e_r$, so
$\mathbb E[\xi\xi^T]=I+e_re_r^T\preceq2I$.
For the symmetric frame $D=\mathbb E[(U^T\nabla f_N)\xi^T]$, the signed-defect estimate
gives a positive lower bound on $\lambda_{\min}(D)$, as well as
a small complete outside-gradient energy. Bessel's inequality yields
\begin{equation}
 U^T\mathsf G_NU\succeq\tfrac12DD^T.
 \label{eq:frame-bessel}
\end{equation}
Minmax and the positive-semidefinite comparison
$P_N\preceq\mathsf G_N/\lambda_r(\mathsf G_N)$ turn these two
estimates into the spectral gap and both angle bounds. This step
retains the AGOP cross terms. Initially, rotational invariance and
almost-sure simplicity of the positive spectrum imply
$\mathbb E[A_{\mathrm{mean},0}]=r/d$, averaging over initialization.

\textit{3. Prediction from the learned features.} The teacher's absence of Hermite degrees below three bounds
its correlation with an initial normalized feature by the cube of
that feature direction's overlap with $U$. A simultaneous Gaussian
bound and the fixed $\ell^1$ cap imply
$\mathcal R(0)\ge1-10^{-4}/8$. The acquired biased profiles admit
a readout satisfying both caps with
$\mathcal R(N)<(\sqrt{3/5}+10^{-4}/64)^2$.
Their difference exceeds $0.399$.

\textit{4. Loss release on the same trajectory.} A full-energy continuation estimate turns the acquired
signal into later loss decrease under the original step size.
The acquisition, initial-refit, and initial-projector events have
failure probabilities at most $3\delta/8$, $\delta/4$, and
$\delta/4$, respectively. A union bound proves the simultaneous
claim without assuming these events are independent.

\paragraph{Broader mixture teachers.}
The full theorem in Appendix~\ref{app:sw-theorem} permits different
measures $\mu_i$, positive coefficients $\lambda_i$ with
$\sum_i\lambda_i^2=1$, and a normalized target
$y=\sum_i\lambda_iq_{\mu_i}(u_i^T x)+e(U^Tx)$.
The residual obeys every explicit $H^1$ bound in the parameter
prescription and may contain interactions within $U$.
Each directional signal $K_i^*$ is defined by \eqref{eq:signal}
with $(r^{-1/2},\mu)$ replaced by $(\lambda_i,\mu_i)$;
the assumption is $\min_iK_i^*\ge(5/6)\max_iK_i^*$.
For a common link this allows a coefficient ratio at most $6/5$.
Neither arbitrary signed mixtures nor arbitrary $H^1$ teachers
satisfy this hypothesis.

\paragraph{Unequal coefficients near an additive cubic.}
A complementary theorem, proved in Appendix~\ref{app:proof-aj},
starts from
\begin{equation}
 y_c(x)=\sum_{i=1}^r a_i h_3(u_i^Tx),\qquad
 a_i>0,\quad\sum_i a_i^2=1,\qquad
 \|y-y_c\|_{H^1}\le\nu/2,
 \label{eq:additive-teacher}
\end{equation}
where $\nu>0$ is the prescribed perturbation tolerance and the unit-norm
teacher $y$ must remain \emph{additive} on $U$.
No ratio bound is imposed on the positive $a_i$; the
smallest coefficient enters the public costs. Under the complete
prescription in Appendix~\ref{app:aj-parameters}, with probability
at least $1-\delta$, its public acquisition checkpoint $N_A$ has minimum-AGOP
gain at least $1/2$ during a high-loss plateau. At that checkpoint,
the projected diagnostic
\begin{equation}
 \mathcal R^P_B(n)=\inf_{\|v\|_2\le B}
 \left\|y-\sum_jv_j
 \sigma_\alpha((P_nW_{j,n})^TX+B_{j,n})\right\|_2^2
 \label{eq:projected-refit}
\end{equation}
improves by more than $1/2$, both for $B=C_\alpha/(7s)$ and
for $B=\infty$. The explicit profile constant $C_\alpha$ and the
much stricter admissible angle tolerance are given in that appendix.
Projection is before activation, biases are retained, and rows are
not renormalized. The same unchanged GD later reduces loss below
every plateau value by at least
$G_A=1/(8192\overline T_A^2)>0$. Here $N_A,\overline T_A$ denote
the quantities $N,\overline T$ in that parameter prescription.
Thus this extension uses a different feature
class and budget from \eqref{eq:refit}.

\Needspace{10\baselineskip}
\section{Population illustrations across teacher links}
\label{sec:experiments}

Twenty-one normalized teachers share a ReLU-student configuration:
$r=2$, $d=16$, $m=32$, $s=10^{-4}$. All parameters train from IID
$N(0,s^2/d)$ entries, with fifty predetermined seeds per teacher and
a 20,000-update population-GD budget. These are illustrative parameters, outside
the theorem's sufficient prescriptions. Labeled followups use $d=64$, $s=0.01$
(Appendix~\ref{app:followup-protocol}).

\paragraph{Full teachers and population evaluation.}
For each of the thirteen scalar raw links $q$, we retain its actual mean
and set
\begin{equation}
 y(X)=\frac{\sum_{i=1}^r q(X_i)}
 {\sqrt r\sqrt{\mathbb E[q(Z)^2]+(r-1)(\mathbb E[q(Z)])^2}},
 \qquad Z\sim N(0,1).
 \label{eq:experiment-normalization}
\end{equation}
Six product targets and two additive SiLU controls complete the family
(Appendix~\ref{app:experiments}); the interactions are exploratory
$H^1(\gamma_2)$ examples outside the plateau theorems. Gradients use
analytic student moments and validated deterministic teacher quadrature,
without a finite training set. Here population refers to expectation
over the input distribution. Gaussian weights specify initialization
only; no Gaussian law is assumed for the parameters after training
begins. The reported ReLU loss is the full MSE
$\mathbb E[(f(X)-y(X))^2]$. Its unit baseline comes from teacher
normalization, not from dividing each trajectory by its own initial loss.

\paragraph{What the averaged curves measure.}
Figure~\ref{fig:main-trajectories} shows three teachers; the appendix
gives the others. Means and 10th--90th percentiles retain all fifty seeds,
including bounds for unresolved AGOP scores. The shared display expands
early time without seed-specific alignment or smoothing.

Each seed's loss-only plateau is the maximal initial prefix with MSE in
$[0.95,1.05]$ and running maximum/minimum at most $1.05$, checked at every
update. Shading marks the common prefix; training runs to the fixed horizon.

\paragraph{Prediction and geometry are separate diagnostics.}
Refitting uses the unprojected, normalized bank in \eqref{eq:refit},
with fixed caps $\|v\|_2\le32$, $\|v\|_1\le64\sqrt2$ and no intercept;
it never enters training. Separate ensemble means do not establish
simultaneous alignment and prediction improvement for every initialization.
Table~\ref{tab:ensemble-means} reports continuous endpoint gains and
later original loss. These numerical illustrations do not test the
theorem's sufficient parameter assumptions.

Neuron angles and axis-snapped refits probe basis-dependent specialization:
\emph{monosemantic} means concentration near teacher axes;
\emph{polysemantic} means mixing their coordinates. Snapping projects each
teacher-plane component onto its nearest frame axis, preserving bias,
outside-plane component and normalization (Appendix~\ref{app:frame-diagnostics}).
A small snapping penalty alone does not prove specialization.
The quadratic-gate target remains at squared $L^2$ distance at least
$1/4$ from every orthonormal two-axis additive class
(Proposition~\ref{prop:orthogonal-frame-floor}).

\paragraph{Optimization.}
Runs start at $h=0.05$ (raw step $\eta=mh/2=0.8$).
Armijo backtracking halves rejected steps and carries the accepted step
forward (Appendix~\ref{app:experiments}); the theorems use fixed steps.

\subsection{SwiGLU: feature learning through a smooth gate}
\label{sec:swiglu-main}

SwiGLU's smooth gate in \eqref{eq:swiglu-network} permits directional
learning while its nonconstant output remains small.

\begin{theorem}[SwiGLU plateau and leading AGOP alignment]
\label{swb:thm:alignment}
Fix finite $m\ge1$, $1\le r<d$, and $\kappa,\beta_0,s_0,h>0$.
Let $y=F(U^TX)$ with $F\in L^2(\gamma_r)$, $U^TU=I_r$, and
$\operatorname{Var}(y)=1$. Assume that $F$ has a nonzero Hermite
component of total degree one, two, or three. Put $k=3$ if degree one or
two is nonzero, and $k=4$ otherwise; higher degrees are unrestricted.
Use the SwiGLU model, profiled intercept, Gaussian initialization and
updates of Section~\ref{sec:setup}, with the same Gaussian draw
$\vartheta$ coupled across $\varepsilon$.

There is an initialization event $\mathcal E$ with
$\mathbb P(\mathcal E)\ge1-2^{-m}$ such that, almost surely on
$\mathcal E$, for every $\delta\in(0,1)$ there are
$\tau_1,C>0$ and $\varepsilon_0=\varepsilon_0(\vartheta,\delta,h)>0$
for which every $0<\varepsilon<\varepsilon_0$ satisfies, at
$n_1=\lfloor\tau_1/(h\varepsilon^{k-2})\rfloor$,
\begin{equation}
 \max_{0\le n\le n_1}|L_n-L_0|\le C\varepsilon^k,
 \quad L_0=1+O_\vartheta(\varepsilon^k),
 \quad A_{{\rm top},n_1}\ge1-\delta.
 \label{eq:swiglu-main-guarantee}
\end{equation}
For gradient flow the same conclusions hold at
$t_1=\tau_1\varepsilon^{-(k-2)}$, with the maximum replaced by
the supremum over $0\le t\le t_1$.
The constants $\tau_1,C$ depend on $(\vartheta,\delta)$;
the initialization-dependent cutoff gives no explicit finite-scale
confidence bound.
\end{theorem}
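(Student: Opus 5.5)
We will argue by a small-initialization rescaling. Rescale time and parameters as $\theta=\varepsilon\tilde\theta$. Each SwiGLU neuron $a_j\operatorname{SiLU}(w_j^Tx+b_j)(z_j^Tx+c_j)$ is then of total order $\varepsilon^3$, since $\operatorname{SiLU}(t)=t/2+t^2/4+O(t^4)$ near zero.

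\emph{Step 1: leading-order dynamics.} The profiled residual is $r_\theta=y-\mathbb E y-\widetilde f_\theta$, and $\widetilde f_\theta=O(\varepsilon^3)$. Hence the leading-order gradient of $L$ with respect to each parameter is minus twice the correlation of the centered target with the corresponding partial derivative of the neuron. That partial derivative is a homogeneous polynomial of degree two in $\theta_j$, up to higher-order terms.

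We expand $\operatorname{SiLU}$ in Taylor series and take Gaussian expectations via Hermite coefficients. This reduces the leading-order dynamics to a homogeneous polynomial vector field $\dot{\tilde\theta}=\varepsilon^{k-2}V_k(\tilde\theta)+O(\varepsilon^{k-1})$ after rescaling, with two cases.
\begin{itemize}
\item If $F$ has degree-one or degree-two content, the correlations $\mathbb E[y\,\partial_\theta(\text{cubic neuron})]$ are quadratic in $\theta$. This gives $k=3$, time scale $\varepsilon^{-1}$.
\item If only degree three is present among degrees $\le3$, the quadratic terms vanish. The first nonzero drift then comes from the quartic part $\frac{t^2}{4}\cdot(z^Tx)$ and cubic Taylor products, which pair with degree-three Hermite content. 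This gives $k=4$, time scale $\varepsilon^{-2}$.
\end{itemize}
Since $L=\operatorname{Var}(y-\widetilde f)$ and $\widetilde f=O(\varepsilon^3)$, we get $L=1-2\operatorname{Cov}(y,\widetilde f)+O(\varepsilon^6)$. For $k=3$ the covariance is $O(\varepsilon^3)$, and for $k=4$ the degree-three leading part correlates with $y$ only through higher Hermite terms. In both cases $L_0=1+O(\varepsilon^k)$, and on the window where $\tilde\theta$ stays $O(1)$ we have $|L_n-L_0|\le C\varepsilon^k$.

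\emph{Step 2: comparing the discrete scheme with the flow.} Discrete GD with fixed step $mh$ on the $\tilde\theta$ scale advances by $h\varepsilon^{k-2}V_k$ per step. It is therefore an Euler scheme for the rescaled flow, with step $h\varepsilon^{k-2}\to0$. A Grönwall estimate over the rescaled time $\tau_1$ gives $\tilde\theta_{n_1}\to\Psi_{\tau_1}(\vartheta)$, the flow of $V_k$ at time $\tau_1$, uniformly. This requires $\Psi$ not to blow up before $\tau_1$; since $V_k$ is homogeneous of degree $k-1\ge2$, blow-up can occur in finite time, so $\tau_1$ must be chosen below the blow-up time.

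\emph{Step 3: alignment under the limiting flow.} This is the main obstacle. One has to show that, for a generic $\vartheta$, the homogeneous flow $V_k$ grows the teacher-subspace components of the gate/value directions super-linearly relative to the $U^\perp$ components. Concretely, the drift on $P_{U^\perp}w_j$ and $P_{U^\perp}z_j$ is multiplied by terms that only see $U$-content, so those components stay bounded. Meanwhile some neuron's $U$-component undergoes polynomial blow-up, roughly $\dot x\propto x^{k-1}$.

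As $\tau_1$ approaches the first blow-up time, the dominant neuron's input gradient therefore concentrates in $\operatorname{span}(U)$ up to relative error $\delta$. The event $\mathcal E$ will encode that at least one neuron's initial data lies in the basin where the relevant Hermite-coefficient-weighted quadratic/cubic form is positive, with the signs of $a_j$ and of the gate/value overlaps favourable. Each neuron falls in this basin independently with probability $\ge1/2$, which gives $\mathbb P(\mathcal E)\ge1-2^{-m}$.

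We then compute the AGOP $\mathsf G$ of the rescaled predictor at $\Psi_{\tau_1}(\vartheta)$. The blown-up neuron dominates, so $\mathsf G$ is close to a matrix whose range lies in $\operatorname{span}(U)$, with a spectral gap at the top. Davis–Kahan perturbation then gives $A_{\rm top}\ge1-\delta$, and this persists for $\varepsilon<\varepsilon_0$ by continuity. Amplitude stays $O(\varepsilon^3\cdot\text{const})$ at fixed $\tau_1$, so the loss bound of Step 1 still holds.

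The gradient-flow version follows identically without the Euler comparison.
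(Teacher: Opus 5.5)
Your architecture matches the paper's. You rescale $\theta=\varepsilon Z$, pass to the decoupled comparator $X_j'=\nabla\Psi(X_j)$ with $\Psi=2\kappa a\,T_{\rm lead}(p,v)$, track it by a Gr\"onwall/Euler argument up to a fixed $\tau_1<\tau_*$, read the plateau off $L(\varepsilon Z)=1-\tfrac{\varepsilon^k}{m}\sum_j\Psi(Z_j)+O(\varepsilon^{k+1})$, and let an escaping neuron dominate the leading AGOP term $\varepsilon^6(\kappa/2m)^2\{B(Z)^2+\ell(Z)\ell(Z)^\top\}$. Steps 1--2 are fine. In fact the transverse parts $P_{U^\perp}w_j,P_{U^\perp}z_j$ are \emph{exactly} frozen, because $\nabla_w\Psi,\nabla_z\Psi$ are contractions of $g_1,H_2,T_3$, all supported on $U$.

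The gap is Step 3, where ``the blown-up neuron dominates'' is asserted but rests on two facts you do not supply.
\begin{enumerate}
\item \emph{The earliest escaper must be unique.} For $k=4$ the linear isometry $(a,w,z)\mapsto(-a,-w,-z)$ preserves $\Psi$ but negates $a(wz^\top+zw^\top)$. Two neurons escaping simultaneously can therefore cancel the in-$U$ block on which a single-winner dominance argument rests. The paper rules this out almost surely by showing that escape times have no finite atoms, using radial homogeneity $\tau_b(s\vartheta)=s^{-(k-2)}\tau_b(\vartheta)$, a Gaussian density, and independence across neurons.
\item \emph{You need a quantitative in-$U$ lower bound.} Write $D=PB_WP$, $v=P\ell_W$, $K=\|X_W\|$. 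You must show $\lambda_{\max}(D^2+vv^\top)\ge cK^6$, against an $O(K^5)$ error coming from the frozen transverse factors and the bounded nonwinners. ``Some $U$-component blows up'' does not give this: growth of the head alone, or of only one of $Pw_W,Pz_W$, would leave $D$ comparable to that error. The paper first derives $\Psi(X_W)\ge cK^k$ from $N'=2k\Psi$ and $\Psi'=\|\nabla\Psi\|^2\ge k^2\Psi^2/N$. This also covers a winner whose initial potential was negative, a case your description of $\mathcal E$ overlooks. It then uses $|\Psi|\le C|a|\|Pw\|^2\|Pz\|$ when $k=4$, and the identity $\Psi=\kappa\{\tfrac12\operatorname{tr}(H_2D)+g_1^\top v\}$ when $k=3$.
\end{enumerate}

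Second, your Davis--Kahan step presumes a gap at the top of the limiting matrix, and that gap need not exist. If $Pw_W\perp Pz_W$, then $D=a(\mu\nu^\top+\nu\mu^\top)$ has eigenvalues $\pm a\|\mu\|\|\nu\|$, so $D^2$ has a double top eigenvalue. This is exactly why $A_{\rm top}$ is defined as a minimum over the whole top eigenspace. For the same reason, transferring an eigenvector ``by continuity'' to small $\varepsilon$ is unjustified.

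The paper's replacement is gap-free. Suppose $Q\succeq0$, $\operatorname{range}(Q)\subseteq\operatorname{span}(U)$, and $e$ is any unit top eigenvector of $Q+E$. Then $\lambda_{\max}(Q)-\|E\|\le e^\top(Q+E)e\le\lambda_{\max}(Q)\|Pe\|^2+\|E\|$, so $\|Pe\|^2\ge1-2\|E\|/\lambda_{\max}(Q)=1-O(K^{-1})$. The same inequality absorbs the extra $o(1)$ operator-norm error between the rescaled actual AGOP and its comparator limit. A $\sin\Theta$ bound between the top eigenspace of $\mathsf G$ and $U^\perp$, where $Q$ vanishes, would also work. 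A perturbation bound for the top eigenvector of $Q$ itself would not.
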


Theorem~\ref{swb:thm:alignment} controls geometry. To also quantify
prediction, we restrict to rank-one teachers with cubic signal and no
linear or quadratic Hermite component. At a specified width, the next
result gives alignment and unrestricted-refit gains at the same
checkpoint during the plateau.

\begin{corollary}[Joint SwiGLU alignment and refit gains]
\label{cor:swiglu-joint-main}
In Theorem~\ref{swb:thm:alignment}, take $r=1$, $d\ge2$ and
$m=D=(d+1)(d+2)/2$. Let $y=F(u^TX)$ have unit variance, vanishing
first and second Hermite coefficients, and
$a_3=\mathbb E[F(Z)h_3(Z)]\ne0$, $Z\sim N(0,1)$.
Write $\mathcal R_{\rm free}$ for minimum population MSE after unrestricted
refitting of the frozen SwiGLU features and an intercept
as defined in \eqref{swr:eq:refit}. Fix $q_{\rm A},q_{\rm R}\in(0,1)$ and
$\delta_{\rm A},\delta_{\rm R}>0$ with $q_{\rm A}+\delta_{\rm A}<1$.
For flow or any fixed-step GD, at a common endpoint $*$ the joint event
\[
 A_{{\rm top},*}-A_{{\rm top},0}\ge1-\delta_{\rm A}-q_{\rm A},\qquad
 \mathcal R_{\rm free}(0)-\mathcal R_{\rm free}(*)
 \ge a_3^2(1-q_{\rm R})-\delta_{\rm R}
\]
along with the entire-prefix loss bound $\max|L-L_0|=O(\varepsilon^4)$
has limiting inferior probability, as $\varepsilon\downarrow0$, at least
\[
 \max\!\left\{0,\Pr(B_d\le q_{\rm A})-2^{-D}
       -\frac{3}{q_{\rm R}d(d+2)}\right\},
 \qquad B_d\sim\operatorname{Beta}(1/2,(d-1)/2).
\]
The endpoint time scales as $\varepsilon^{-2}$; its rescaled time,
constants and initialization cutoff depend on the Gaussian draw.
The rescaled time is fixed before $\varepsilon$ shrinks.
\end{corollary}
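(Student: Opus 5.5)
We prove the corollary by reducing it to Theorem~\ref{swb:thm:alignment} with $r=1$ and $k=4$, and adding a separate refit analysis at the same endpoint. The degree restriction makes $k=4$: $F$ has vanishing first and second Hermite coefficients and $a_3\ne0$. The endpoint time therefore scales as $\tau_1/(h\varepsilon^{2})$, and the prefix loss bound is $O(\varepsilon^4)$, both directly from \eqref{eq:swiglu-main-guarantee}. The event $\mathcal E$ there has probability at least $1-2^{-m}=1-2^{-D}$, which accounts for the $2^{-D}$ term.

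\textbf{Alignment gain.} On $\mathcal E$ the theorem gives $A_{{\rm top},*}\ge1-\delta_{\rm A}$ for all small $\varepsilon$. It remains to bound $A_{{\rm top},0}$. As $\varepsilon\downarrow0$ the initial predictor is, after rescaling by $\varepsilon^{-3}$, the quadratic-times-SiLU-linearized network at the draw $\vartheta$. To leading order its AGOP is determined by $\vartheta$ and is rotationally equivariant in law. Its top eigenvector $e_0$ is therefore (almost surely simple) uniform on the sphere, independent of $u$. Hence $\|P_ue_0\|^2\sim B_d=\operatorname{Beta}(1/2,(d-1)/2)$. By continuity of eigenvectors under a simple top eigenvalue, $A_{{\rm top},0}\to\|P_ue_0\|^2$ almost surely as $\varepsilon\to0$ along the coupled draw. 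So $\liminf\Pr(A_{{\rm top},0}\le q_{\rm A}+o(1))\ge\Pr(B_d\le q_{\rm A})$. A small slack can be absorbed, or we can use the Portmanteau bound at continuity points of $B_d$, which is continuous.

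\textbf{Refit gain.} $\mathcal R_{\rm free}$ is the residual of projecting $y$ onto the span of the $D$ frozen features and the constant.

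At the endpoint, the learned gate and value directions are aligned with $u$ up to $\delta$, with small biases. The features $\operatorname{SiLU}(w_j^Tx+b_j)(z_j^Tx+c_j)$ then have nondegenerate expansions in $u^Tx$ through degree three. With $m=D$ generic neurons, their span plus the intercept contains, up to $o(1)$, a function with correlation near $a_3$ with $h_3(u^TX)$. In fact it suffices that the span contains $h_3(u^TX)$ approximately: use a Taylor expansion in the small scale $\varepsilon$ and a finite-difference combination across neurons. This gives $\mathcal R_{\rm free}(*)\le1-a_3^2+\delta_{\rm R}$ for small $\varepsilon$.

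At initialization, all features are near-quadratic polynomials of the form $(\text{SiLU}'(0)\,w^Tx+\text{SiLU}(0)+\ldots)(z^Tx+c)$. Their span, with $D=\dim$ of polynomials of degree $\le2$ on $\mathbb R^d$, is generically all polynomials of degree $\le2$, to leading order in $\varepsilon$. Since $y$ has no degree-$\le2$ Hermite content except the constant, the degree-three correction enters only at relative size $O(\varepsilon)$. That correction is a random cubic $\sum_j c_j(\cdot)$ whose projection onto $h_3(u^TX)$ is controlled in expectation by a rotational-average moment computation. The fraction of $a_3^2$ captured has expectation at most $3/(d(d+2))$, a fourth-moment of a uniform direction. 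Markov's inequality then gives $\mathcal R_{\rm free}(0)\ge 1-q_{\rm R}a_3^2$ except with probability $3/(q_{\rm R}d(d+2))$. Subtracting yields the refit gain.

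\textbf{Conclusion and main obstacle.} A union bound combines the three failure events, giving the stated liminf. The main obstacle is the initial refit step. We must show the frozen initial span captures only a small random fraction of $h_3(u^TX)$ despite $D$ features. The key claim to verify first is that the degree-three part of the initial span, in the $\varepsilon\to0$ limit, is a single explicit random cubic direction determined by $\vartheta$. This direction is rotation-invariant in law, which makes the $3/(d(d+2))$ moment bound computable.
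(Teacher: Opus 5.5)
Your skeleton matches the paper's proof. You use the same three failure events: non-escape ($2^{-D}$); a large squared correlation $Z$ between $h_3(u^TX)$ and the single initial cubic direction ($3/(q_{\rm R}d(d+2))$ by Markov); and missing alignment headroom ($\Pr(B_d>q_{\rm A})$). These are joined by a union bound with a random cutoff, as in Lemma~\ref{lem:common-event}.

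Your headroom argument is a legitimate variant. You pass to the limit law of the top eigenvector of $B(\vartheta)^2+\ell(\vartheta)\ell(\vartheta)^\top$, which requires proving that its top eigenvalue is almost surely simple. The paper instead proves stochastic domination by $B_d$ at every $\varepsilon$, by drawing a uniform unit vector from the top eigenspace with auxiliary randomness, so it needs neither simplicity nor a limit.

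Two corrections on the initial refit. First, $T[\omega,\omega,\omega]^2$ is a sixth (not fourth) sphere moment, equal to $(6+9\|\operatorname{tr}T\|^2)/(d(d+2)(d+4))$. Reaching $3/(d(d+2))$ needs $\|\operatorname{tr}T\|^2\le(d+2)/3$, not just $\|T\|_F=1$. Second, your remark that the cubic enters ``only at relative size $O(\varepsilon)$'' describes the $m\le D-1$ regime, where no gain survives. At $m=D$ the cancelled cubic $H_0$ enters the unrestricted span at order one. You must also show $H_0\ne0$ almost surely (the paper uses an explicit monomial witness) so that the refit projectors, and hence $\mathcal R_{\rm free}(0)$, converge.

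The genuine gap is the endpoint bound $\mathcal R_{\rm free}(*)\le1-a_3^2+\delta_{\rm R}$. Your premise that the learned gate and value directions are aligned with $u$, with small biases, is false. Only the winner $W$ aligns: its $a$, $u^Tw$, $u^Tz$ grow like $K(\tau)$, while its transverse parts and biases stay frozen. The $D-1$ nonwinners remain bounded at generic positions, and that genericity is exactly what the argument uses. As at initialization, after rescaling the only cubic direction in the limiting span is the one left by an exact cancellation $\beta_0+\sum_j\beta_jq_j=0$ with $\beta_W=1$. Two ingredients are needed to make that direction approach $h_3(u^TX)$, and both are missing.

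\emph{Surjectivity at $\tau_*$.} The nonwinner quadratics plus the constant must span all polynomials of degree at most two at the \emph{random} escape time $\tau_*$, with a right inverse bounded uniformly near $\tau_*$. This is where $m\ge D$ enters. The paper conditions on the winner's initial value. It then uses that the time-$t$ comparator flow is a diffeomorphism of the survival domain, so the nonwinner endpoint law is absolutely continuous and a nonzero determinant polynomial vanishes with probability zero.

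\emph{Winner dominance.} The cancellation coefficients are $O(K^2)$ and the nonwinner cubics are bounded, while $\chi_W=\gamma s^3+O(K^2)$ with $|\gamma|\gtrsim K^3$. Hence the normalized cubic is within $O(K^{-1})$ of $s^3$. A linear correction from the quadratic bank then yields $h_3$, and $F_{\ge4}$ is orthogonal to the limiting polynomial of degree at most three.

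The time $\tau_1$ must be fixed late enough for both this margin and the alignment margin before $\varepsilon$ shrinks, with $\delta_{\rm R}$ split between this bound and the convergence of $\mathcal R_{\rm free}(0)$. Endpoint AGOP alignment does not supply any of this, and neither does a Taylor expansion with finite differences across supposedly aligned neurons.
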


A zero bound is vacuous; high probability requires suitable dimensions
and thresholds.
For pure $h_3$, $d=16$ and $m=153$, the gains can be at least $0.5$
and $0.7$, with asymptotic probability lower bound approximately $0.9519$
(Corollary~\ref{swj:cor:joint}). No practical initialization cutoff is
supplied; refit coefficients may diverge as $\varepsilon\to0$.
Later decrease of the trained loss remains open.

The two guarantees in Theorem~\ref{swb:thm:alignment} have different
time quantifiers: the loss bound holds throughout the prefix, whereas
the alignment bound is asserted at its endpoint. Thus the selected
checkpoint lies inside an interval of uniformly small loss movement.
The exponent $k$ records the degree of the leading student--teacher
interaction. A nonzero linear or quadratic Hermite component gives
$k=3$, an endpoint time proportional to $\varepsilon^{-1}$, and
$O(\varepsilon^3)$ loss variation. When degrees one and two vanish
and cubic signal remains, $k=4$ gives the scales $\varepsilon^{-2}$ and
$O(\varepsilon^4)$. These are guaranteed observation windows, rather
than formulas for the eventual plateau exit.

For $r>1$, a large $A_{\rm top}$ places the leading AGOP eigenspace
close to the teacher subspace; it does not establish recovery of every
teacher direction. The joint corollary adds the rank-one cubic-signal
and width assumptions under which a simultaneous refit improvement
is proved.
The numerical comparison in Figure~\ref{fig:swiglu-main} uses
$d=16$, $m=64$ and $\varepsilon=.05$, whereas the stated quantitative
corollary example uses $m=153$. The plotted finite-initialization trajectories
therefore illustrate the separation of alignment, refit and trained
loss at another width, without inheriting the example's asymptotic
probability bound.

\paragraph{Proof overview.}
Appendix~\ref{swb:sec:alignment} expands the smooth gate near initialization.
After rescaling parameters and time, the leading dynamics become
independent homogeneous gradient flows for the neurons; the network's
feedback enters at higher order. On the escape event, the earliest
comparator escape is almost surely unique. The growing neuron's spatial components
orthogonal to the teacher subspace remain fixed, while its contribution
within that subspace dominates the leading AGOP. A fixed rescaled time
before escape therefore gives the required alignment. Uniform tracking
transfers this conclusion to population gradient flow and fixed-step GD,
and the loss expansion controls the whole preceding interval.

For Corollary~\ref{cor:swiglu-joint-main},
Appendix~\ref{swr:sec:refit} uses the quadratic and cubic terms of the
feature expansion. Near comparator escape, at the stated width, the nonwinning
neurons' leading quadratic polynomials and the intercept almost surely
span all polynomials of degree at most two. Cancellation using the actual rescaled parameters removes
the quadratic contribution; normalization and a linear correction then
give a feature combination approximating $h_3$ along the teacher direction.
This controls the attained refit error. Corollary~\ref{swj:cor:joint}
compares it with initialization, chooses a common endpoint for prediction
and alignment before shrinking $\varepsilon$, and combines the escape
and initial-headroom probability bounds without assuming independence.
These steps yield the simultaneous gains stated above.

\begin{figure}[t]
\centering
\includegraphics[width=\textwidth]{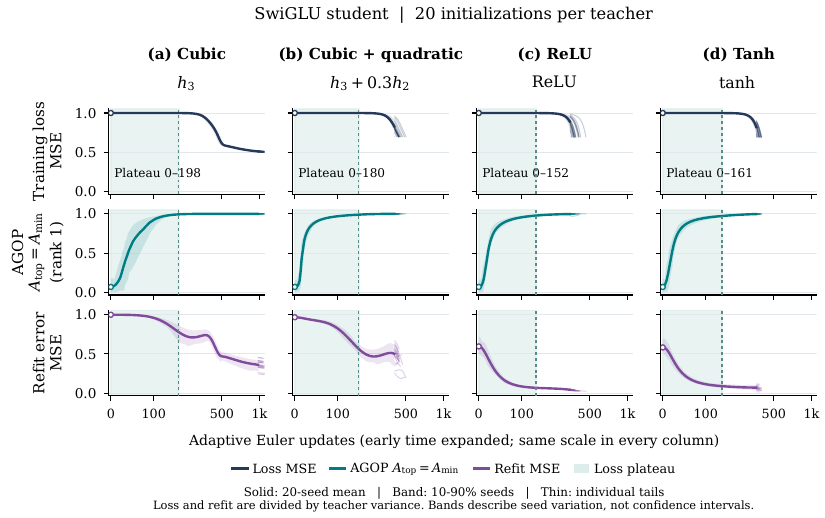}
\caption{\textbf{SwiGLU students: twenty initializations per teacher.}
All seeds 0--19; $d=16$, $m=64$, $\varepsilon=.05$.
Loss and numerical unrestricted-refit MSE are divided by teacher variance;
rank-one AGOP alignment is $A_{\rm top}=A_{\min}$.
Navy, teal and purple consistently denote loss, AGOP alignment and refit MSE.
Solid means and light 10th--90th percentile bands use all twenty seeds
on their shared support; later individual tails remain visible.
Shading marks the shared loss-only prefix $|\ell-\ell(0)|\le10^{-3}$.
The axis counts adaptive Euler updates, with early updates expanded.
Diagnostic interpolation is for display only; bands show seed variation,
not confidence intervals or integration-error bounds. Crosses, if present,
mark the last finite point of censored or failed trajectories.}
\label{fig:swiglu-main}
\end{figure}

\clearpage
\begin{figure}[p]
\centering
\includegraphics[width=\textwidth]{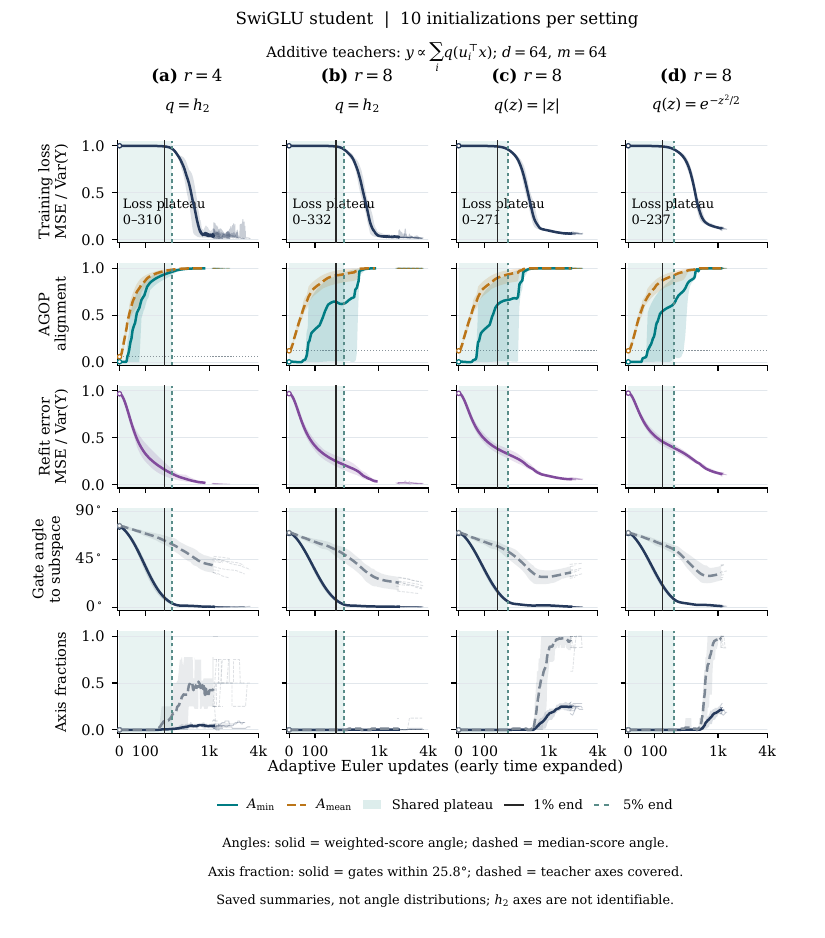}
\caption{\textbf{SwiGLU subspace learning and gate geometry across ranks and links.}
Ten initializations per setting, $d=64$, $m=64$;
additive quadratic, absolute-value and Gaussian-bump teachers vary
rank and link. Loss and numerical refit risk are divided by teacher variance.
The first three rows retain the loss, AGOP and refit conventions;
the dotted $r/d$ line references isotropic $A_{\rm mean}$.
The fourth row shows effective gate angles obtained from weighted
and median squared-cosine scores; these are saved scalar summaries,
not per-neuron angle distributions. The last row reports the fraction
of gates within $25.8^\circ$ of a teacher axis and the fraction of
axes covered by such gates. Quadratic-teacher axes remain unidentifiable.
Shading uses a common 5\% loss-ratio prefix, with a 1\% comparison
boundary; this is broader than the $|\ell_n-\ell_0|\le10^{-3}$ prefix
retained in Figure~\ref{fig:swiglu-main}.
All ten seeds contribute to means and 10th--90th percentile bands;
individual tails and diagnostic gaps remain visible.
Gate entry and axis specialization are distinct observations;
neither alone establishes a prediction gain.
Appendix~\ref{app:highrank} gives definitions, numerical diagnostics,
threshold sensitivity and rank-16 limitations.}
\label{fig:highrank-swiglu-main}
\end{figure}

\clearpage
\section{Discussion}
\label{sec:discussion}

\paragraph{What a plateau can conceal.}
Theorem~\ref{thm:main} links recovery of every teacher direction by the
leading AGOP subspace, an improved bounded readout, and a later decrease
of the original predictor's loss on one trajectory. The first two
conclusions hold while loss remains close to the zero predictor's unit
loss. A plateau controls variation over a finite interval; it need not
be exactly flat or converge to a positive error floor.

Small initialization permits feature geometry to change while the
output remains small. Scaling all ReLU or leaky-ReLU heads, weights
and biases by one positive factor scales the output quadratically.
Geometry and output amplitude therefore need not evolve on the same
scale. This is a dynamical mechanism, not a normalization that hides
loss decrease. GD uses a fixed step throughout, subject to the theorem's
restrictive initialization and step prescriptions.

\paragraph{Geometry and predictive information.}
The minimum principal-angle score tests the weakest teacher direction;
a high leading-direction score alone may represent only one direction
of a higher-rank teacher. The full AGOP also depends on the current
readout, so its motion cannot by itself demonstrate that the hidden
feature bank has improved. The same-budget refit addresses this separate
question: both checkpoints use the same feature normalization, coefficient
constraints and target. Its improvement measures additional predictive
information accessible under that budget. Refitting never changes the
heads used by GD. The complementary unequal-coefficient theorem instead
uses projected features and a different budget, and the SwiGLU
corollary uses an unrestricted refit; these conclusions should not be
read as interchangeable.

\paragraph{Relation to early alignment and lazy training.}
Early directional learning and silent alignment already show that
representations can change before loss falls appreciably
\citep{maennel2018quantizes,atanasov2022silent}. Our contribution is the
joint all-direction, bounded-refit and later-loss guarantee for the
specified nonlinear population-GD system. Lazy training concerns the
accuracy of linearization around initialization and the behavior of the
tangent kernel, rather than the speed of loss reduction
\citep{chizat2019lazy}. Neither a flat loss nor a moving AGOP determines
whether a trajectory is lazy. Comparing the learned hidden bank with its
initial version is informative, but does not establish a separation from
the full initial tangent-feature model. Such a comparison would require
additional analysis or a separate diagnostic.

\paragraph{Teacher regularity and neuron mixing.}
Gaussian Sobolev regularity provides a common language for the links,
while the positive results use additional structure. The main ReLU
family contains $h_3$ and positive mixtures of Gaussian-damped cubics
with infinite Hermite expansions. Its extension allows carefully
controlled changes in directional signals and a small Sobolev residual;
the complementary theorem permits unequal positive cubic coefficients
and a small additive perturbation. These are explicit families within
$H^1$, rather than a guarantee for every Sobolev link. Extra Hermite
components can alter directional forces, and target energy in components
not yet acquired still matters for prediction.

The interaction teachers $g(S)T/\|g\|_2$, where $S=u_1^TX$,
$T=u_2^TX$, and $g$ is SiLU, ReLU, GELU, $\tanh$, or $h_2$, supply
further Gaussian-Sobolev examples in the experiments. They lie outside
the ReLU plateau theorems. Their neuron-angle and axis-snapping
diagnostics illustrate why recovering a subspace need not require
specialization to its coordinate axes. For $g=h_2$,
Proposition~\ref{prop:orthogonal-frame-floor} gives MSE at least $1/4$
when all teacher-plane neuron components lie on one shared orthonormal
pair of axes. This approximation obstruction does not itself prove
which neuron configurations training selects.

\paragraph{Why study a gate?}
SwiGLU exposes a complementary route because each feature multiplies a
smooth gate by a trainable value response. Both factors can change
direction while their product and the resulting network output remain
small. Theorem~\ref{swb:thm:alignment} formalizes leading-direction
learning for a broader low-degree Hermite signal class, at fixed finite
width and dimension as initialization vanishes. Its profiled intercept,
initialization-dependent cutoff and leading-direction conclusion differ
from the ReLU result. All-direction recovery and later trained-loss
release remain open in this setting. The restricted rank-one corollary
adds refit improvement, but its unconstrained readout can grow without
bound as initialization shrinks.

\paragraph{Limits and next questions.}
The quantitative ReLU prescriptions are sufficient existence conditions,
not practical width or dimension bounds. Population gradients average
over the input distribution; finite-sample guarantees require further
work. Experiments at accessible scales illustrate mechanisms rather
than verify all theorem hypotheses. Higher-rank evaluation must retain
the weakest-direction score and check alignment and prediction on the
same runs and time windows. For smooth ungated students such as SiLU and GELU, trainable biases
can expose cubic correlations and collective responses can carry
directional information at small output scale. Transferring that
intuition into the same all-direction and bounded-refit guarantee
requires control of the coupled dynamics. This is a route for further
work, not a theorem asserted here. Controlling broader Hermite mixtures, sharpening parameter costs, and
establishing persistence after acquisition are substantive remaining
problems.

\section{Conclusion}
\label{sec:conclusion}
Loss alone can miss substantial progress in a representation. Tracking
loss, all-direction AGOP geometry and a controlled refit separately makes
that progress measurable and distinguishes an acquired subspace from a
predictor that already uses it effectively. Our results establish this
separation in explicit population regimes and identify the additional
conditions needed to connect feature acquisition to prediction.

\clearpage
\begingroup
  \edef\MainEndPage{\number\numexpr\value{page}-1\relax}
  \renewcommand{\thepage}{\MainEndPage}
  \label{main:end}
  \typeout{PUBLIC-PAGE-CHECK: main text ends on p. \MainEndPage}
\endgroup

\section*{AI use}
Generative AI tools assisted with problem formulation and hypothesis
development, mathematical modeling and theorem formulation, proof
development and writing, experimental design, implementation and debugging,
numerical diagnostics, interpretation of results, literature discovery,
and manuscript and figure preparation. The authors are responsible for the
correctness, originality, and final content of this work. AI-assisted
reviews and numerical checks do not constitute formal verification.

\bibliographystyle{iclr2027_conference}
\bibliography{references}

\begin{thebibliography}{21}
\providecommand{\natexlab}[1]{#1}
\providecommand{\url}[1]{\texttt{#1}}
\expandafter\ifx\csname urlstyle\endcsname\relax
  \providecommand{\doi}[1]{doi: #1}\else
  \providecommand{\doi}{doi: \begingroup \urlstyle{rm}\Url}\fi

\bibitem[Ainsworth \& Shin(2021)Ainsworth and Shin]{ainsworth2021plateau}
Mark Ainsworth and Yeonjong Shin.
\newblock Plateau phenomenon in gradient descent training of {ReLU} networks:
  Explanation, quantification, and avoidance.
\newblock \emph{SIAM Journal on Scientific Computing}, 43\penalty0
  (5):\penalty0 A3438--A3468, 2021.
\newblock \doi{10.1137/20M1353010}.
\newblock URL \url{https://epubs.siam.org/doi/10.1137/20M1353010}.

\bibitem[Atanasov et~al.(2022)Atanasov, Bordelon, and
  Pehlevan]{atanasov2022silent}
Alexander Atanasov, Blake Bordelon, and Cengiz Pehlevan.
\newblock Neural networks as kernel learners: The silent alignment effect.
\newblock In \emph{International Conference on Learning Representations}, 2022.
\newblock URL \url{https://openreview.net/forum?id=1NvflqAdoom}.

\bibitem[Bantzis et~al.(2026)Bantzis, Simon, and Jacot]{bantzis2026saddle}
Ioannis Bantzis, James~B. Simon, and Arthur Jacot.
\newblock Saddle-to-saddle dynamics in deep {ReLU} networks: Low-rank bias in
  the first saddle escape.
\newblock In \emph{International Conference on Learning Representations}, 2026.
\newblock URL \url{https://openreview.net/forum?id=B4zcoLvjw0}.

\bibitem[Barak et~al.(2022)Barak, Edelman, Goel, Kakade, Malach, and
  Zhang]{barak2022hidden}
Boaz Barak, Benjamin~L. Edelman, Surbhi Goel, Sham Kakade, Eran Malach, and
  Cyril Zhang.
\newblock Hidden progress in deep learning: {SGD} learns parities near the
  computational limit.
\newblock In \emph{Advances in Neural Information Processing Systems},
  volume~35, pp.\  21750--21764, 2022.
\newblock \doi{10.52202/068431-1581}.
\newblock URL
  \url{https://proceedings.neurips.cc/paper_files/paper/2022/hash/884baf65392170763b27c914087bde01-Abstract-Conference.html}.

\bibitem[Ben~Arous et~al.(2021)Ben~Arous, Gheissari, and
  Jagannath]{benarous2021online}
G{\'e}rard Ben~Arous, Reza Gheissari, and Aukosh Jagannath.
\newblock Online stochastic gradient descent on non-convex losses from
  high-dimensional inference.
\newblock \emph{Journal of Machine Learning Research}, 22\penalty0
  (106):\penalty0 1--51, 2021.
\newblock URL \url{https://jmlr.org/papers/v22/20-1288.html}.

\bibitem[Bietti et~al.(2025)Bietti, Bruna, and
  Pillaud-Vivien]{bietti2025gradient}
Alberto Bietti, Joan Bruna, and Loucas Pillaud-Vivien.
\newblock On learning {Gaussian} multi-index models with gradient flow part
  {I}: General properties and two-timescale learning.
\newblock \emph{Communications on Pure and Applied Mathematics}, 78\penalty0
  (12):\penalty0 2354--2435, 2025.
\newblock \doi{10.1002/cpa.70006}.
\newblock URL \url{https://doi.org/10.1002/cpa.70006}.

\bibitem[Boursier \& Flammarion(2025)Boursier and
  Flammarion]{boursier2025early}
Etienne Boursier and Nicolas Flammarion.
\newblock Early alignment in two-layer networks training is a two-edged sword.
\newblock \emph{Journal of Machine Learning Research}, 26\penalty0
  (183):\penalty0 1--75, 2025.
\newblock URL \url{https://jmlr.org/papers/v26/24-1523.html}.

\bibitem[Boursier et~al.(2022)Boursier, Pillaud-Vivien, and
  Flammarion]{boursier2022orthogonal}
Etienne Boursier, Loucas Pillaud-Vivien, and Nicolas Flammarion.
\newblock Gradient flow dynamics of shallow {ReLU} networks for square loss and
  orthogonal inputs.
\newblock In \emph{Advances in Neural Information Processing Systems},
  volume~35, pp.\  20105--20118, 2022.
\newblock \doi{10.52202/068431-1462}.
\newblock URL
  \url{https://proceedings.neurips.cc/paper_files/paper/2022/hash/7eeb9af3eb1f48e29c05e8dd3342b286-Abstract-Conference.html}.

\bibitem[Chizat et~al.(2019)Chizat, Oyallon, and Bach]{chizat2019lazy}
L{\'e}na{\"i}c Chizat, Edouard Oyallon, and Francis Bach.
\newblock On lazy training in differentiable programming.
\newblock In \emph{Advances in Neural Information Processing Systems},
  volume~32, 2019.
\newblock URL
  \url{https://proceedings.neurips.cc/paper_files/paper/2019/hash/ae614c557843b1df326cb29c57225459-Abstract.html}.

\bibitem[Constantine et~al.(2014)Constantine, Dow, and
  Wang]{constantine2014active}
Paul~G. Constantine, Eric Dow, and Qiqi Wang.
\newblock Active subspace methods in theory and practice: Applications to
  kriging surfaces.
\newblock \emph{SIAM Journal on Scientific Computing}, 36\penalty0
  (4):\penalty0 A1500--A1524, 2014.
\newblock \doi{10.1137/130916138}.
\newblock URL \url{https://doi.org/10.1137/130916138}.

\bibitem[Damian et~al.(2022)Damian, Lee, and
  Soltanolkotabi]{damian2022representations}
Alexandru Damian, Jason Lee, and Mahdi Soltanolkotabi.
\newblock Neural networks can learn representations with gradient descent.
\newblock In \emph{Proceedings of Thirty Fifth Conference on Learning Theory},
  volume 178 of \emph{Proceedings of Machine Learning Research}, pp.\
  5413--5452. PMLR, 2022.
\newblock URL \url{https://proceedings.mlr.press/v178/damian22a.html}.

\bibitem[Dandi et~al.(2024)Dandi, Krzakala, Loureiro, Pesce, and
  Stephan]{dandi2024giant}
Yatin Dandi, Florent Krzakala, Bruno Loureiro, Luca Pesce, and Ludovic Stephan.
\newblock How two-layer neural networks learn, one (giant) step at a time.
\newblock \emph{Journal of Machine Learning Research}, 25\penalty0
  (349):\penalty0 1--65, 2024.
\newblock URL \url{https://www.jmlr.org/papers/v25/23-1543.html}.

\bibitem[Kunin et~al.(2025)Kunin, Marchetti, Chen, Karkada, Simon, DeWeese,
  Ganguli, and Miolane]{kunin2025alternating}
Daniel Kunin, Giovanni~Luca Marchetti, Feng Chen, Dhruva Karkada, James~B.
  Simon, Michael~R. DeWeese, Surya Ganguli, and Nina Miolane.
\newblock Alternating gradient flows: A theory of feature learning in two-layer
  neural networks.
\newblock In \emph{Advances in Neural Information Processing Systems},
  volume~38, 2025.
\newblock \doi{10.52202/085713-0156}.
\newblock URL
  \url{https://papers.nips.cc/paper_files/paper/2025/hash/06cbd2e81dfbd3bb4cb0abce95b32584-Abstract-Conference.html}.

\bibitem[Maennel et~al.(2018)Maennel, Bousquet, and
  Gelly]{maennel2018quantizes}
Hartmut Maennel, Olivier Bousquet, and Sylvain Gelly.
\newblock Gradient descent quantizes {ReLU} network features, 2018.
\newblock URL \url{https://arxiv.org/abs/1803.08367}.

\bibitem[Mallinar et~al.(2025)Mallinar, Beaglehole, Zhu, Radhakrishnan, Pandit,
  and Belkin]{mallinar2025emergence}
Neil~Rohit Mallinar, Daniel Beaglehole, Libin Zhu, Adityanarayanan
  Radhakrishnan, Parthe Pandit, and Mikhail Belkin.
\newblock Emergence in non-neural models: grokking modular arithmetic via
  average gradient outer product.
\newblock In \emph{Proceedings of the 42nd International Conference on Machine
  Learning}, volume 267 of \emph{Proceedings of Machine Learning Research},
  pp.\  42834--42856. PMLR, 2025.
\newblock URL \url{https://proceedings.mlr.press/v267/mallinar25a.html}.

\bibitem[Min et~al.(2024)Min, Mallada, and Vidal]{min2024early}
Hancheng Min, Enrique Mallada, and Rene Vidal.
\newblock Early neuron alignment in two-layer {ReLU} networks with small
  initialization.
\newblock In \emph{International Conference on Learning Representations}, 2024.
\newblock URL
  \url{https://proceedings.iclr.cc/paper_files/paper/2024/hash/0f07eb4d39094358667640e4bd9f2c9d-Abstract-Conference.html}.

\bibitem[Radhakrishnan et~al.(2024)Radhakrishnan, Beaglehole, Pandit, and
  Belkin]{radhakrishnan2024mechanism}
Adityanarayanan Radhakrishnan, Daniel Beaglehole, Parthe Pandit, and Mikhail
  Belkin.
\newblock Mechanism for feature learning in neural networks and
  backpropagation-free machine learning models.
\newblock \emph{Science}, 383\penalty0 (6690):\penalty0 1461--1467, 2024.
\newblock \doi{10.1126/science.adi5639}.
\newblock URL \url{https://doi.org/10.1126/science.adi5639}.

\bibitem[Saxe et~al.(2014)Saxe, McClelland, and Ganguli]{saxe2014exact}
Andrew~M. Saxe, James~L. McClelland, and Surya Ganguli.
\newblock Exact solutions to the nonlinear dynamics of learning in deep linear
  neural networks.
\newblock In \emph{International Conference on Learning Representations}, 2014.
\newblock URL \url{https://arxiv.org/abs/1312.6120}.

\bibitem[{\c{S}}im{\c{s}}ek et~al.(2025){\c{S}}im{\c{s}}ek, Bendjeddou, and
  Hsu]{simsek2025learning}
Berfin {\c{S}}im{\c{s}}ek, Amire Bendjeddou, and Daniel Hsu.
\newblock Learning {Gaussian} multi-index models with gradient flow: Time
  complexity and directional convergence.
\newblock In \emph{Proceedings of The 28th International Conference on
  Artificial Intelligence and Statistics}, volume 258 of \emph{Proceedings of
  Machine Learning Research}, pp.\  4204--4212. PMLR, 2025.
\newblock URL \url{https://proceedings.mlr.press/v258/simsek25a.html}.

\bibitem[Tansley et~al.(2026)Tansley, Massart, and Cartis]{tansley2026neural}
Edward Tansley, Estelle Massart, and Coralia Cartis.
\newblock On the {Neural Feature Ansatz} for deep neural networks.
\newblock In \emph{Proceedings of the 29th International Conference on
  Artificial Intelligence and Statistics}, volume 300 of \emph{Proceedings of
  Machine Learning Research}, pp.\  5077--5085. PMLR, 2026.
\newblock URL \url{https://proceedings.mlr.press/v300/tansley26a.html}.

\bibitem[Yuan et~al.(2025)Yuan, Xu, Kpotufe, and Hsu]{yuan2025efficient}
Gan Yuan, Mingyue Xu, Samory Kpotufe, and Daniel Hsu.
\newblock Efficient estimation of the central mean subspace via smoothed
  gradient outer products.
\newblock \emph{SIAM Journal on Mathematics of Data Science}, 7\penalty0
  (3):\penalty0 1241--1264, 2025.
\newblock \doi{10.1137/23M1626700}.
\newblock URL \url{https://doi.org/10.1137/23M1626700}.

\end{thebibliography}

\clearpage
\begingroup
\setcounter{tocdepth}{1}
\makeatletter
\renewcommand{\l@section}{\@dottedtocline{1}{0em}{2.4em}}
\makeatother
\renewcommand{\contentsname}{Table of contents}
\tableofcontents
\endgroup
\clearpage
\appendix
\section{Numerical protocol and complete teacher comparison}
\label{app:experiments}

\paragraph{Run matrix and reproducible initialization.}
The original baseline comparison has twenty-one teacher functions, one ReLU
student configuration, and the fixed initialization seeds 1--50 for
each teacher: 1,050 trajectories. The sizes are $r=2,d=16,m=32$, with
all raw heads, spatial weights and biases drawn independently from
$N(0,10^{-8}/d)$. The same seed produces the same initial parameter
arrays across teachers; seed outcomes across different teachers are
therefore paired, not independent additional trials. The configuration
is inherited from the earlier illustrations. Seeds 1--3 were previously
inspected; the remaining forty-seven extend that comparison. No seed
is selected or removed because of its trajectory.

The endpoint-gain table below retains this original baseline. Four replacement teacher panels
use the separate followup protocol in Section~\ref{app:followup-protocol}.

All parameters train throughout. Every trajectory has a fixed budget
of 20,000 updates; neither the loss plateau nor feature learning stops
training. The earlier 114-run ReLU/leaky-ReLU/Adam study and the separate
fifteen-run gated extension remain archived under their original
protocols. They are not pooled as extra seeds in these averages.

\paragraph{Computational environment.}
Seeds 1--20 use Python 3.11.4 on macOS and fresh seeds 21--50 use
Python 3.11.10 on Linux, with identical frozen numerical source and
NumPy 1.24.3, SciPy 1.10.1, and Clarabel 0.11.1. Disjoint scheduling
uses six local and twenty-four remote CPU workers, with one numerical
thread per worker. Three fixed 1,000-update portability checks
(additive cubic, SiLU product, and quadratic product at seed 1)
agree with the corresponding local trajectory prefixes to within
$2\times10^{-15}$ in the compared loss, step, path, and parameter
arrays. This check does not assert bitwise equality of all subsequent
trajectories across platforms. Completed transfers retain source hashes;
no partial trajectory is resumed in a different runtime.

\paragraph{Population gradient descent.}
For full MSE $L$ and parameter vector $\theta$, write
$F=-(m/2)\nabla L(\theta)$. The nominal force step is $h=0.05$, or raw
learning rate $\eta=mh/2=0.8$. A trial is accepted if
\begin{equation}
 L(\theta+hF)\le L(\theta)-10^{-4}\frac{2h}{m}\|F\|^2
                +10^{-13}\max\{1,|L(\theta)|\}.
 \label{eq:empirical-armijo}
\end{equation}
Otherwise $h$ is halved; the accepted step persists and cannot increase.
At most forty halvings are allowed per update. This safeguard is used
uniformly from initialization, rather than switched on at an observed
plateau endpoint. A divergent or nonfinite run is a recorded failure,
not an excluded seed. The update axis does not represent a common
physical-time horizon when accepted steps differ. Cumulative raw GD
time, accepted steps and backtracking counts are saved separately.
These variable-step experiments are distinct from the fixed-step
learning theorem.

\paragraph{Complete teacher definitions.}
For $Z\sim N(0,1)$ let $h_k=\operatorname{He}_k/\sqrt{k!}$ and use the
positive-mixture building block $g_v$ defined in the main theorem.
The thirteen scalar raw links are
\[
\begin{gathered}
 h_3,\quad g_{2/3},\quad g_{1/3},\quad (g_1+g_{1/3})/2,\quad g_{1/2},\quad
 0.2g_{0.4}+0.3g_{0.7}+0.5g_1,\\
 h_3+0.1h_5,\quad h_3+0.1\sin,\quad h_2,\quad h_4,\quad z_+,\quad |z|,\quad\sin z.
\end{gathered}
\]
Each gives the full additive target in
\eqref{eq:experiment-normalization}, including its mean and every
Hermite component. The positive-mixture links are in the central
teacher family; the displayed perturbation magnitude $0.1$ is not
certified by the small-neighborhood theorem.
For independent $S,T\sim N(0,1)$, the six product targets are
\begin{equation}
 Y_g=\frac{g(S)T}{\sqrt{\mathbb E[g(S)^2]}},\qquad
 g\in\{\operatorname{SiLU},\ z_+,\ z\Phi(z),\ \tanh,\ h_2,\ z\}.
 \label{eq:empirical-new-teachers}
\end{equation}
Here $z\Phi(z)$ is exact GELU and
$\operatorname{SiLU}(z)=z/(1+e^{-z})$. The two further controls are
\begin{align}
 Y_{\rm raw}&=\frac{\operatorname{SiLU}(S)+\operatorname{SiLU}(T)}
 {\sqrt{2\mathbb E[\operatorname{SiLU}(S)^2]+2(\mathbb E[\operatorname{SiLU}(S)])^2}},\notag\\
 Y_{\rm centered}&=
 \frac{\operatorname{SiLU}(S)+\operatorname{SiLU}(T)-2\mathbb E[\operatorname{SiLU}(S)]}
 {\sqrt{2\operatorname{Var}(\operatorname{SiLU}(S))}}.
 \label{eq:frame-centered-control}
\end{align}
All targets have unit second moment. Products have zero mean but can
retain a linear component in $T$. For nonzero $g\in H^1(\gamma)$,
$\|Y_g\|_{H^1(\gamma_2)}^2=2+\|g'\|_2^2/\|g\|_2^2$.
This ambient regularity is not a plateau guarantee for these
standalone interaction targets. Expectations use analytic student
moments and analytic or deterministic one-dimensional teacher
quadrature, without finite training or test samples.

\paragraph{Per-seed plateaus and common observation times.}
For each seed separately, the plateau ends immediately before the first
update at which MSE leaves $[0.95,1.05]$ or its running maximum/minimum
ratio exceeds $1.05$. Loss is checked at every update. If no exit occurs
by update 20,000, the plateau is right-censored at that horizon.
AGOP and refit do not select the interval. The numerical tables use the
exact endpoint state, while ensemble curves use the common grid
$n=0,100,\ldots,20000$. Initialization, the endpoint and its successor
are retained separately. A first qualifying saved checkpoint is not an
exact hitting time between saved states.

\paragraph{Averaging and uncertainty.}
Every seed has weight $1/50$. Identified values are summarized by arithmetic means and pointwise
empirical 10th--90th percentiles across seeds. Where a score is only
known within a numerical interval, the figure instead propagates lower
and upper bounds on its mean and percentile band.
These are variability bands, not confidence intervals for a population
mean. Loss uses its full every-update record; the other curves join
observed common-grid diagnostics without smoothing or imputing an
unmeasured diagnostic. Neuron-angle heatmaps average the separately
normalized importance distributions of all seeds. They do not pool
unnormalized neuron mass in a way that gives high-norm runs more weight.

Let $E_s$ be the endpoint update of initialization seed $s$.
Pale background marks $[0,\min_s E_s]$, the intersection of the
individual loss-selected plateaus; endpoint quantiles additionally
summarize their variation. The panel abbreviation ``cens.'' counts prefixes that have
not ended by the fixed horizon. A plateau of the average loss would not
establish the same property on every run and is not used. Likewise,
separately high mean alignment and mean refit improvement do not imply
that they occur together within each run.
Table~\ref{tab:ensemble-means} reports continuous endpoint changes and
later original loss, retaining all fifty seeds.

\paragraph{Full AGOP and unresolved spectra.}
We use current trained heads, activation gates and all signed
cross-neuron terms in the population AGOP. Positive scalar
rescaling for eigendecomposition leaves its eigenspaces unchanged.
A rank-$r$ score is numerically resolved only if both
$\lambda_r/\lambda_1>10^{-9}$ and
$(\lambda_r-\lambda_{r+1})/\lambda_1>10^{-9}$.
This threshold is a reporting convention, not a uniform error theorem.
For averaging, an unresolved score contributes its possible interval
$[0,1]$: its mean and percentile uncertainty are propagated, and the
resolved fraction is reported. It is never set to zero, treated as a
proved alignment, or silently removed from a fifty-seed denominator.
The hidden-weight energy fraction
$A_{\rm sub}=\sum_j\|UU^TW_j\|^2/\sum_j\|W_j\|^2$
(undefined for a zero denominator) is saved but does not
replace the full-predictor AGOP statistic.

\paragraph{Constrained refitting and its numerical error.}
At every checkpoint, use all unprojected features divided by their
original augmented row norms $\sqrt{\|W_j\|^2+B_j^2}$, with caps
$\|v\|_2\le32$ and $\|v\|_1\le64\sqrt2$ and no extra intercept.
For normalized features $\phi=(\phi_1,\ldots,\phi_m)^T$,
let $H=\mathbb E[\phi(X)\phi(X)^T]$ and $c=\mathbb E[y(X)\phi(X)]$. Minimize
$R(v)=1-2c^Tv+v^THv$ on that fixed feasible set. Warm starts do not change
this set. No ridge is added and no Gram eigenmode is discarded in the
objective. Each feasible candidate gives an upper risk $\overline R$;
a first-order support-function bound, with negative-eigenvalue and
arithmetic allowances, supplies a numerical lower estimate
$\underline R$. Original and snapped refit brackets remain separate
from the between-seed bands. Solver-tolerance misses remain flagged;
these are numerical optimization brackets for the computed moments,
not rigorous interval bounds on integration error.

\paragraph{Per-run outcomes.}
Define the conservative numerical refit improvement
$\widehat\Delta_n=\underline R_0-\overline R_n-10^{-5}$.
The joint reference event requires resolved initial/current spectra,
$A_{\min,n}-A_{\min,0}\ge0.5$ and
$\widehat\Delta_n\ge0.399$ at one checkpoint in the same plateau.
We distinguish the exact endpoint event from an event at any saved
in-prefix checkpoint, and record later original-model loss separately.
Failure of the absolute $0.399$ test is not absence of useful learning
when the initial refit risk is already small. Mean and linear target
components remain in the optimization objective. Their contributions
and nonlinear residual errors are retained in the machine-readable
results; large initial means are not removed to create a high-loss
claim.

\begin{table}[!htbp]
\centering
\caption{Baseline results for all 21 teachers and all 50 declared seeds per teacher. Endpoint gains use each seed's exact loss-prefix endpoint. Brackets are mean lower and upper bounds, not confidence intervals; unresolved AGOPs contribute $[0,1]$ before taking gain differences. Refits retain numerical lower and feasible upper risks. Final MSE is mean $\pm$ seed standard deviation.}
\label{tab:ensemble-means}
\begin{tabular}{@{}p{112pt}rrr@{}}
\toprule
Teacher & \shortstack{Mean\\$\Delta A_{\min}$} & \shortstack{Mean refit\\decrease} & Final MSE \\
\midrule
$|z|$ & $[-0.013,0.727]$ & $-0.021$ & $0.147\,\pm\,0.106$ \\
$st$ & $0.972$ & $0.769$ & $0.003\,\pm\,0.001$ \\
$g_{1/2}$ & $[0.952,0.972]$ & $0.697$ & $0.099\,\pm\,0.053$ \\
$g_{1/3}$ & $[0.912,0.972]$ & $0.473$ & $0.173\,\pm\,0.058$ \\
$g_{2/3}$ & $0.972$ & $0.783$ & $0.032\,\pm\,0.003$ \\
$\mathrm{GELU}(s)t$ & $0.972$ & $0.240$ & $0.017\,\pm\,0.005$ \\
$h_2$ & $0.972$ & $0.468$ & $0.005\,\pm\,0.001$ \\
$h_3$ & $0.972$ & $0.601$ & $0.050\,\pm\,0.014$ \\
$h_3+0.1h_5$ & $0.972$ & $0.545$ & $0.066\,\pm\,0.018$ \\
$h_3+0.1\sin$ & $0.972$ & $0.259$ & $0.190\,\pm\,0.080$ \\
$h_4$ & $0.972$ & $0.133$ & $0.198\,\pm\,0.069$ \\
$g_1+g_{1/3}$ & $[0.952,0.972]$ & $0.656$ & $0.089\,\pm\,0.058$ \\
$h_2(s)t$ & $[0.952,0.972]$ & $0.553$ & $0.080\,\pm\,0.037$ \\
ReLU & $[-0.028,0.972]$ & $0.110$ & $0.000\,\pm\,0.000$ \\
$\mathrm{ReLU}(s)t$ & $0.972$ & $0.224$ & $0.004\,\pm\,0.000$ \\
Raw additive SiLU & $[-0.028,0.972]$ & $0.141$ & $0.001\,\pm\,0.008$ \\
Centered additive SiLU & $[-0.028,0.972]$ & $0.193$ & $0.001\,\pm\,0.001$ \\
$\mathrm{SiLU}(s)t$ & $0.972$ & $0.307$ & $0.023\,\pm\,0.019$ \\
$\sin z$ & $[-0.028,0.972]$ & $0.209$ & $0.061\,\pm\,0.006$ \\
$\tanh(s)t$ & $0.972$ & $0.819$ & $0.002\,\pm\,0.001$ \\
Three-scale mixture & $0.972$ & $0.808$ & $0.012\,\pm\,0.002$ \\
\bottomrule
\end{tabular}
\end{table}

\subsection{Neuron axes and frame interventions}
\label{app:frame-diagnostics}

Here $u_1,u_2$ are the orthonormal teacher directions, and
$A_j,W_j,B_j$ denote a neuron's trained head, spatial vector and bias
at the displayed checkpoint. The two extra rows in the additive/gated
panels distinguish subspace recovery from axis concentration. For a nonzero in-plane row set
\begin{equation}
 \vartheta_j=\operatorname{atan2}(|W_j^Tu_2|,|W_j^Tu_1|)\in[0,90^\circ],
 \qquad \omega_j=\frac{|A_j|\|W_j\|}{\sum_k|A_k|\|W_k\|}.
 \label{eq:frame-angle}
\end{equation}
Five-degree bins contain the sum of these weights; zero in-plane rows
are omitted and the omitted mass is recorded. The heatmap averages
these histograms across seeds. Concentration near zero or ninety
degrees describes teacher-axis alignment; interior angles mix teacher
coordinates. This unsigned projection is not itself a prediction
metric. The labels \emph{monosemantic} and \emph{polysemantic} refer to
this chosen basis and observed time, not a permanent or semantic
classification of every neuron.

For an orthonormal frame rotated within the teacher plane, project each
row's in-plane component onto its closest unoriented frame axis.
Preserve its bias, outside-plane component and \emph{original}
augmented denominator. Refit with exactly the same coefficient caps as
for the original bank. Figure~\ref{fig:main-trajectories} compares original
features with teacher-axis and $45^\circ$ snapping in its fourth row.
The appendix's five-row sheets additionally show the best of eighteen
frames at $0,5,\ldots,85^\circ$. The two fixed frames are evaluated
on the common hundred-update grid; the eighteen-frame comparison is
evaluated at $0,1000,\ldots,20000$. For the best-grid numerical interval
we minimize lower and feasible risks separately at each seed and
checkpoint before averaging. The selected frame can therefore differ
between seeds and times; it is not a single shared rotation. In all cases, markers denote
evaluated states and connecting lines only guide the eye.
For original and snapped brackets the gap interval is
\begin{equation}
 [\underline R_F-\overline R,\;\overline R_F-\underline R].
 \label{eq:frame-gap}
\end{equation}
A small gap means that the intervention preserves refit quality; it
does not prove that the original neurons already specialized. Negative
gaps can occur because original and snapped spans are not nested.
A tested rotation grid is not an optimization over all rotations.

\paragraph{An exact obstruction for a fixed orthonormal frame.}
The following elementary population statement separates axis geometry
from subspace recovery. It is an approximation bound, not a learning
dynamics theorem.

\begin{proposition}[Orthogonal-frame floor]
\label{prop:orthogonal-frame-floor}
Let $X\sim N(0,I_d)$, $P=UU^T$, and let $(a_1,a_2)$ be an
orthonormal basis of $\operatorname{span}(U)$. Write
$Z_k=a_k^TX$ and let $Y$ be a square-integrable function of
$U^TX$. Suppose every row of a finite ridge network satisfies
$PW_j\in\operatorname{span}(a_1)$ or
$PW_j\in\operatorname{span}(a_2)$, with arbitrary biases and
outside-plane components, and square-integrable features. Then its
population MSE is at least
\begin{equation}
 \mathcal E_F(Y)=
 \bigl\|Y-\mathbb E[Y\mid Z_1]-\mathbb E[Y\mid Z_2]
                       +\mathbb E[Y]\bigr\|_2^2.
 \label{eq:orthogonal-frame-floor}
\end{equation}
For independent $S=u_1^TX,T=u_2^TX\sim N(0,1)$ and
$Y=g(S)T/\|g\|_2$, the teacher-frame floor is
$\operatorname{Var}(g(S))/\mathbb E[g(S)^2]$. The minimum floor over all
orthonormal frames is $1/4$ when $g=h_2$, and zero when $g(z)=z$.
\end{proposition}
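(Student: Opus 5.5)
The plan is to reduce the network to an additive function of $(Z_1,Z_2)$ after conditioning, and then compute the $L^2$ projection of $Y$ onto additive functions explicitly. Write $V=P_{U^\perp}X$, which is independent of $(Z_1,Z_2)$, and set $Z=(Z_1,Z_2)$.

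By hypothesis each row satisfies $W_j^TX=\beta_jZ_{k(j)}+(P_{U^\perp}W_j)^TV$ for some $k(j)\in\{1,2\}$. Hence the network output splits as $f=F_1(Z_1,V)+F_2(Z_2,V)$, with $F_1,F_2$ square integrable. Since $Y$ is a function of $Z$, the conditional Pythagorean identity gives
\[
\mathbb E[(Y-f)^2]\ge\mathbb E\bigl[(Y-\mathbb E[f\mid Z])^2\bigr].
\]
By independence of $V$ and $Z$, the conditional mean is additive: $\mathbb E[f\mid Z]=G_1(Z_1)+G_2(Z_2)$ with $G_k(z)=\mathbb E[F_k(z,V)]$. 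It therefore suffices to compute the distance from $Y$ to the closed subspace $L^2(Z_1)+L^2(Z_2)$.

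Because $Z_1$ and $Z_2$ are independent, this subspace decomposes orthogonally as constants $\oplus$ centered $L^2(Z_1)$ $\oplus$ centered $L^2(Z_2)$. The orthogonal projection of $Y$ onto it is therefore $\mathbb E[Y]+(\mathbb E[Y\mid Z_1]-\mathbb E[Y])+(\mathbb E[Y\mid Z_2]-\mathbb E[Y])$. The squared residual of this projection is exactly \eqref{eq:orthogonal-frame-floor}, which proves the general bound.

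For the teacher frame, take $Y=g(S)T/\|g\|_2$ with $Z_1=S$ and $Z_2=T$. Then:
\begin{itemize}
\item $\mathbb E[Y]=0$;
\item $\mathbb E[Y\mid S]=0$, since $T$ is independent of $S$ with mean zero;
\item $\mathbb E[Y\mid T]=\mathbb E[g(S)]\,T/\|g\|_2$.
\end{itemize}
The residual is $(g(S)-\mathbb E g(S))T/\|g\|_2$, whose squared norm is $\operatorname{Var}(g(S))/\mathbb E[g(S)^2]$.

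For a rotated frame, write $Z_1=cS+sT$ and $Z_2=-sS+cT$ with $c^2+s^2=1$. Since $Y$ has unit norm, the floor equals $1$ minus the squared norm of the projection. I expect this Hermite computation to be the main step.

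For $g=h_2$ we have $Y=h_2(S)h_1(T)$, which lies in the homogeneous degree-three Wiener chaos. Its projection onto $L^2(Z_k)$ therefore equals $\mathbb E[Yh_3(Z_k)]h_3(Z_k)$, and its mean is zero.

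To evaluate the coefficient, expand $\mathrm{He}_3(cS+sT)$ in products $\mathrm{He}_a(S)\mathrm{He}_b(T)$; the $\mathrm{He}_2(S)\mathrm{He}_1(T)$ term has coefficient $3c^2s$. Using $\mathbb E[\mathrm{He}_2(S)^2\mathrm{He}_1(T)^2]=2$ and the normalization $h_3=\mathrm{He}_3/\sqrt6$, $h_2h_1=\mathrm{He}_2\mathrm{He}_1/\sqrt2$, this gives
\[
\mathbb E[Yh_3(Z_1)]=\frac{3c^2s\cdot2}{\sqrt{12}}=\sqrt3\,c^2s.
\]
The same computation with $(c,s)$ replaced by $(-s,c)$ gives $\mathbb E[Yh_3(Z_2)]=\sqrt3\,s^2c$ up to sign.

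The captured energy is then $3c^4s^2+3s^4c^2=3c^2s^2\le3/4$. Equality holds at $45^\circ$, so the minimal floor is $1-3/4=1/4$.

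For $g(z)=z$, the $45^\circ$ frame gives $ST=(Z_1^2-Z_2^2)/2$, which is exactly additive, so the floor is $0$.
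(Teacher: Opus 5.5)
Your proof is correct and follows the paper's argument. You reduce $\mathbb E[f\mid U^TX]$ to an additive function of $(Z_1,Z_2)$ by independence of the in-plane and outside-plane Gaussian components, split the additive space orthogonally using independence of $Z_1,Z_2$, and get captured energy $3c^2s^2$ for $h_2(S)T$ from Hermite orthogonality. The only cosmetic difference is that for $g(z)=z$ you exhibit the identity $ST=(Z_1^2-Z_2^2)/2$ at $45^\circ$ instead of computing the captured energy $4c^2s^2$ at every angle, and that suffices for the stated minimum.
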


\begin{proof}
Gaussian independence of $PX$ and $(I-P)X$ makes the conditional
expectation of each ridge feature given $U^TX$ a univariate function
of its selected $Z_k$. Thus $\mathbb E[f(X)\mid U^TX]$ is additive
in $Z_1,Z_2$. Conditional Jensen bounds the network's MSE below by
the squared distance of $Y$ from this additive space. Since $Z_1,Z_2$
are independent, their centered univariate spaces are orthogonal and
the residual is exactly \eqref{eq:orthogonal-frame-floor}.
In the teacher frame, the additive projection of $g(S)T/\|g\|_2$
is $(\mathbb E[g(S)])T/\|g\|_2$, giving the stated variance ratio.
For a frame rotated by $\theta$, put $c=\cos\theta$, $s=\sin\theta$.
Hermite orthogonality gives captured additive energy $3c^2s^2$ for
$h_2(S)T$ and $4c^2s^2$ for $ST$. Since
$c^2s^2\le1/4$, their minimum residual energies are $1/4$ and zero,
respectively, both attained at $45^\circ$.
\end{proof}

This obstruction concerns every orthonormal frame, whereas a finite
rotation-grid diagnostic is only a comparison over the tested frames.
Neither statement excludes all nonorthogonal frames. For smooth gates,
a computed $45^\circ$ floor does not establish that this rotation is the
best possible one. The proposition is an approximation bound; it adds
no learning-dynamics theorem for interaction targets.

\subsection{Denser followups for four short-prefix teachers}
\label{app:followup-protocol}

\paragraph{Why a separate followup is needed.}
The raw and centered additive SiLU, additive ReLU and additive sine
teachers have short loss-selected prefixes under the original
$d=16$, $s=10^{-4}$ protocol. Their early behavior is difficult to see
on a 20,000-update axis. A display change alone also leaves a numerical
problem: the original endpoint rank-two spectra are unresolved for
these four cohorts. We therefore retain the original outcomes in
the baseline tables and report a separate experiment with better
resolved spectra and denser observations. These familiar links remain
empirical cases outside the positive-family theorem.

\paragraph{Pilot and disclosed configuration choice.}
The pilot varied $s\in\{10^{-4},0.01,0.1\}$ at $d=64$, $m=32$,
$r=2$, with force step $h=0.05$, seeds 51--53 and 3,000 updates:
36 trajectories over the four teachers. The pilot did not identify
a reliable regime of full-subspace learning during the plateau. For a common, interpretable
comparison we subsequently chose $s=0.01$ for all four teachers;
all twelve pilot endpoints at this scale had resolved rank-two spectra.
This choice was made after inspecting the pilot. All pilot outcomes
are retained.

\paragraph{Fifty-seed followup and unchanged definitions.}
Each selected teacher uses seeds 1--50, 20,000 updates,
$d=64$, $m=32$, $r=2$, and IID $N(0,0.01^2/d)$ entries for every
raw head, weight and bias. These seeds are disjoint from the pilot
but reuse the baseline seed identifiers; the followup is not an
independent replication of the original protocol. All parameters
train. The targets, persistent Armijo rule, plateau test, full-predictor
AGOP, resolution convention and constrained refit are unchanged.
No teacher mean is removed except in the explicitly centered SiLU
control. Computation uses Linux CPUs, one numerical thread per
worker, with the same numerical libraries and frozen population
kernels as the baseline. No finite sample or stochastic gradient is used.

The loss is saved at every update. AGOP and refit diagnostics are
saved every ten updates through 1,000, every fifty through 3,000,
and every hundred thereafter, together with each exact plateau
endpoint and its successor. For each teacher the figure pairs a
linear 0--500-update view with a linear 0--20,000-update view.
Both use all fifty seeds with equal weights and empirical
10th--90th percentile envelopes. Shading ends at
$\min_s E_s$, the earliest individual loss-selected endpoint;
neither alignment nor refit selects the shaded interval.
An expanded early view is not a claim of a longer physical-time plateau.

\paragraph{Interpretation.}
Both $A_{\min}$ and $A_{\rm mean}$ are shown. An increase in the mean
with little increase in the minimum describes partial geometric
learning, not recovery of every teacher direction. Refit gains are
measured separately and may include improved prediction of low-degree
components. A successful refit therefore does not by itself establish
full-subspace recovery. The table below reports continuous alignment
and refit gains, and later original-model loss separately.

All 200 initial states, endpoint states and saved in-prefix spectra are
resolved under the stated numerical convention. Median mean-alignment
and refit gains are substantial across all four teachers, while weakest-direction
gains are smaller. Thus these followups display partial feature and
prediction improvement on the plateau without claiming uniform
recovery of both teacher directions. The 281 unresolved checkpoints
later in training remain visible as numerical uncertainty in the plots;
no refit solver-tolerance misses occurred. Better visibility does not
imply better final optimization: raw SiLU has median final MSE $0.0542$
here, versus $0.000279$ in the original baseline. The reduction in
mean-plus-linear squared prediction error accounts for a median
90--98\% of the original-loss decrease during these plateaus.
Accordingly, the displayed refit gains should not be interpreted as
isolated learning of higher Hermite components.

\begin{table}[htbp]
\centering
\caption{Separate four-teacher followup, fifty seeds each. Endpoint columns report medians of individual-run endpoint changes, not changes at an averaged endpoint. $E$ is the plateau endpoint update (median and range); $\Delta A=A(E)-A(0)$ for each alignment score; $\Delta R=\underline R_0-\overline R_E-10^{-5}$ is the conservative numerical refit gain. Later loss is the original model at update 20,000.}
\label{tab:followup-outcomes}
\textit{Alignment and plateau endpoint}\par
\begin{tabular}{@{}p{108pt}rrr@{}}
\toprule
Teacher & $E$ [range] & $\Delta A_{\min}$ & $\Delta A_{\rm mean}$ \\
\midrule
Raw additive SiLU & 128 [122,137] & 0.222 & 0.576 \\
Centered additive SiLU & 203 [190,227] & 0.000411 & 0.476 \\
Additive ReLU & 113 [108,120] & 0.115 & 0.534 \\
Additive sine & 203.5 [191,219] & 0.0231 & 0.487 \\
\bottomrule
\end{tabular}
\end{table}

\begin{table}[!htbp]
\centering
\textbf{Table~\ref{tab:followup-outcomes} (continued).}\par
\textit{Refit and later loss}\par
\begin{tabular}{@{}p{108pt}rr@{}}
\toprule
Teacher & $\Delta R$ & $L_{20000}$ \\
\midrule
Raw additive SiLU & 0.43 & 0.054 \\
Centered additive SiLU & 0.589 & 0.00056 \\
Additive ReLU & 0.313 & 1.2e-06 \\
Additive sine & 0.607 & 0.062 \\
\bottomrule
\end{tabular}
\end{table}

\clearpage
\begin{figure}[p]
\centering
\includegraphics[width=\linewidth]{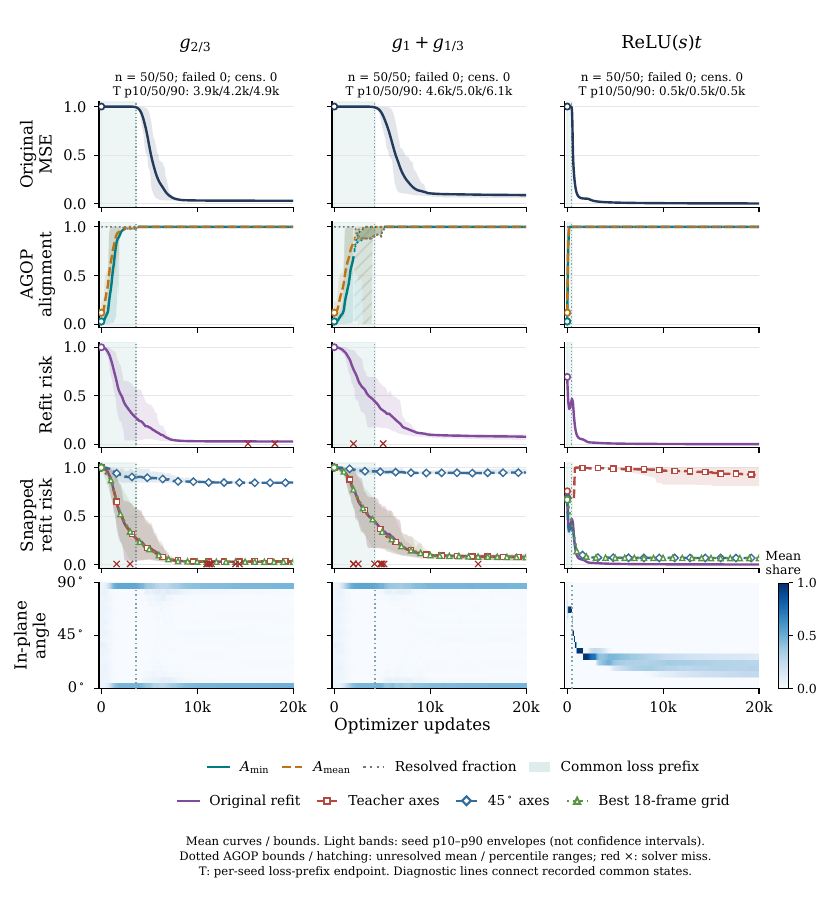}
\caption{\textbf{Additive damped and mixture links, and the ReLU gate.} The two additive-family targets and the ReLU product use five diagnostic rows. Each teacher uses fifty fixed initialization seeds with identical optimization parameters. Means, numerical bounds, percentile bands and loss-prefix shading follow Figure~\ref{fig:main-trajectories}.}
\label{fig:ensemble-frames-1}
\end{figure}
\clearpage
\begin{figure}[p]
\centering
\includegraphics[width=\linewidth]{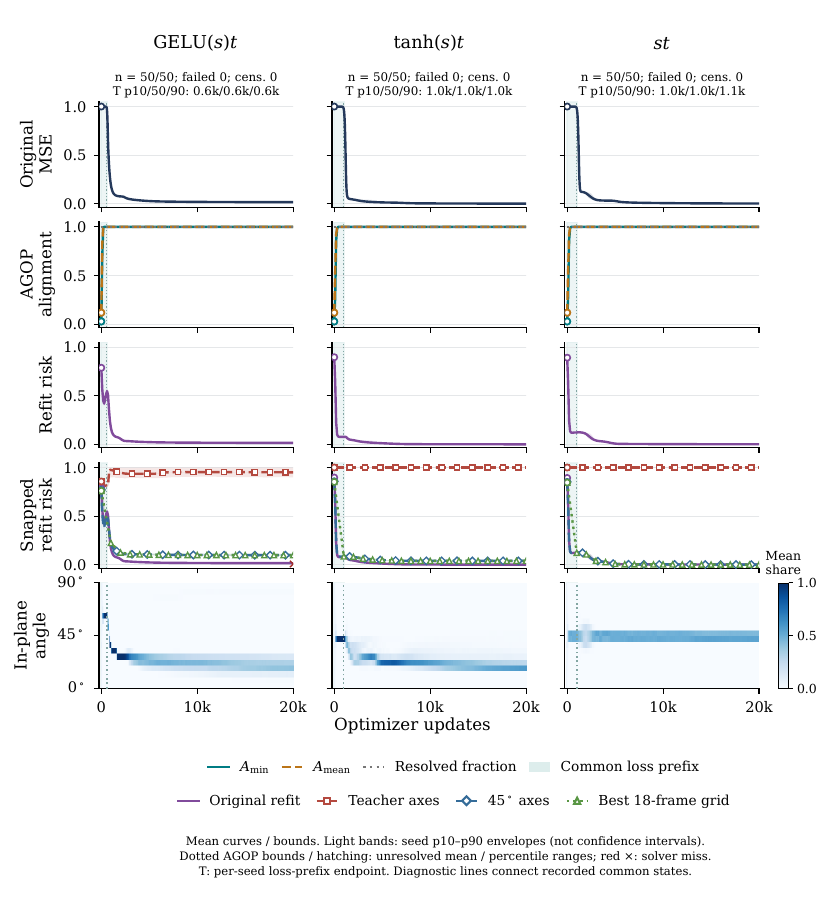}
\caption{\textbf{Other gated teachers.} Exact GELU and tanh products, together with the bilinear rotation control, use the same five rows. Each teacher uses fifty fixed initialization seeds with identical optimization parameters. Means, numerical bounds, percentile bands and loss-prefix shading follow Figure~\ref{fig:main-trajectories}.}
\label{fig:ensemble-frames-2}
\end{figure}
\clearpage
\begin{figure}[p]
\centering
\includegraphics[width=\linewidth]{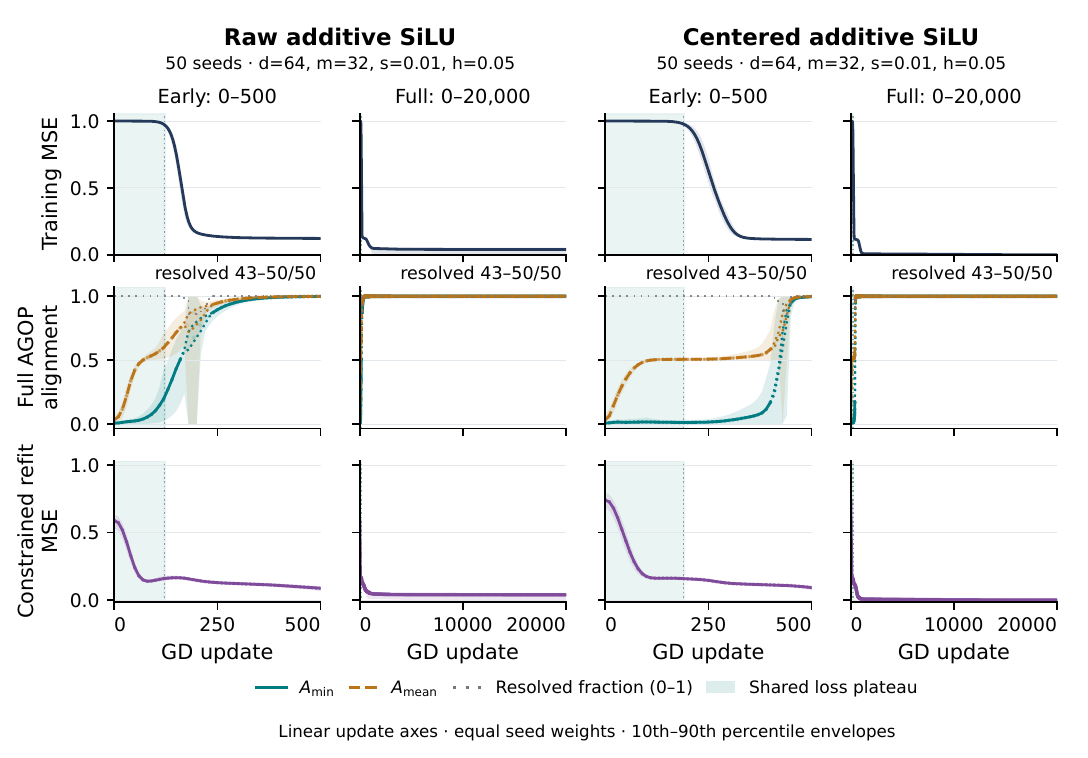}
\caption{\textbf{Raw and centered additive SiLU: denser followups.}
Each teacher has paired linear early (0--500 updates) and full
(0--20,000 updates) views, with fifty seeds at $d=64$, $m=32$,
$s=0.01$. Rows show original loss, minimum and mean full-AGOP
alignment, and constrained refit MSE. Bands are empirical
10th--90th percentiles; the gray trace reports the resolved fraction.
Refit lower/upper numerical bounds remain separate. Shading marks
the loss-prefix intersection across all fifty seeds. The common
configuration was chosen after the pilot did not establish reliable
full-subspace learning, as disclosed in Section~\ref{app:followup-protocol}.
The enlarged early view exposes partial alignment and refit changes;
the averages do not establish full-subspace recovery for every seed.}
\label{fig:ensemble-controls}
\end{figure}
\clearpage
\begin{figure}[p]
\centering
\includegraphics[width=\linewidth]{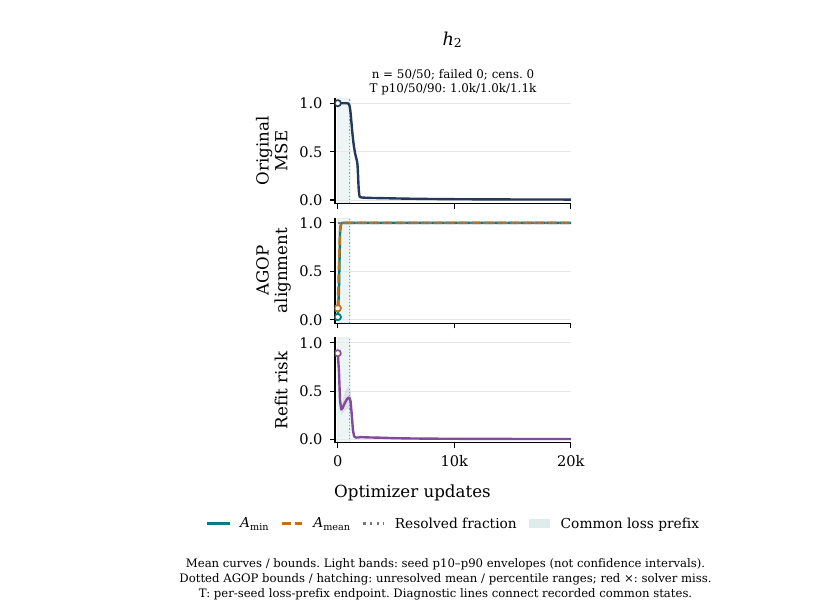}
\caption{\textbf{Quadratic additive teacher.} The three rows display original loss, full-current-head AGOP alignment and same-budget refit MSE. Each teacher uses fifty fixed initialization seeds with identical optimization parameters. Means, numerical bounds, percentile bands and loss-prefix shading follow Figure~\ref{fig:main-trajectories}.}
\label{fig:ensemble-h2}
\end{figure}
\clearpage
\begin{figure}[p]
\centering
\includegraphics[width=\linewidth]{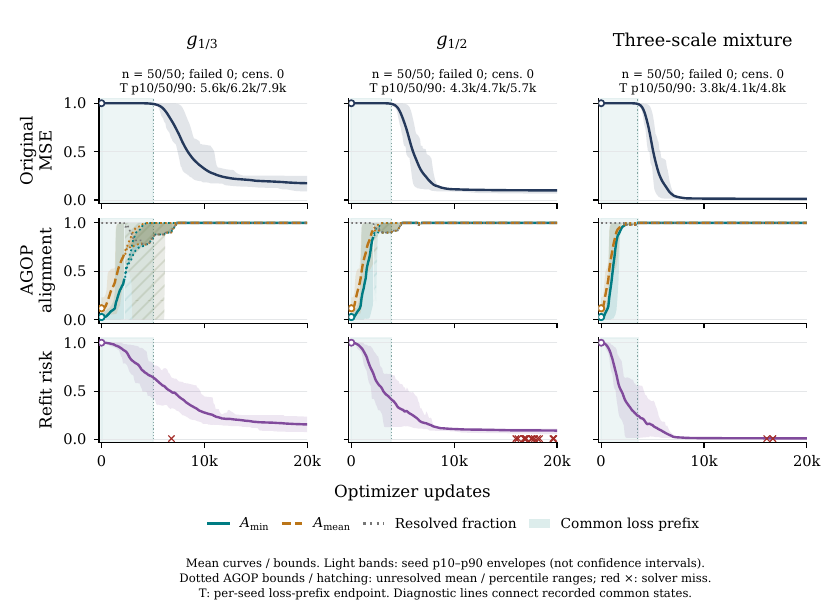}
\caption{\textbf{Additional positive-mixture links.} Damped links and the three-scale mixture use the same three-row format. Each teacher uses fifty fixed initialization seeds with identical optimization parameters. Means, numerical bounds, percentile bands and loss-prefix shading follow Figure~\ref{fig:main-trajectories}.}
\label{fig:ensemble-mixtures}
\end{figure}
\clearpage
\begin{figure}[p]
\centering
\includegraphics[width=\linewidth]{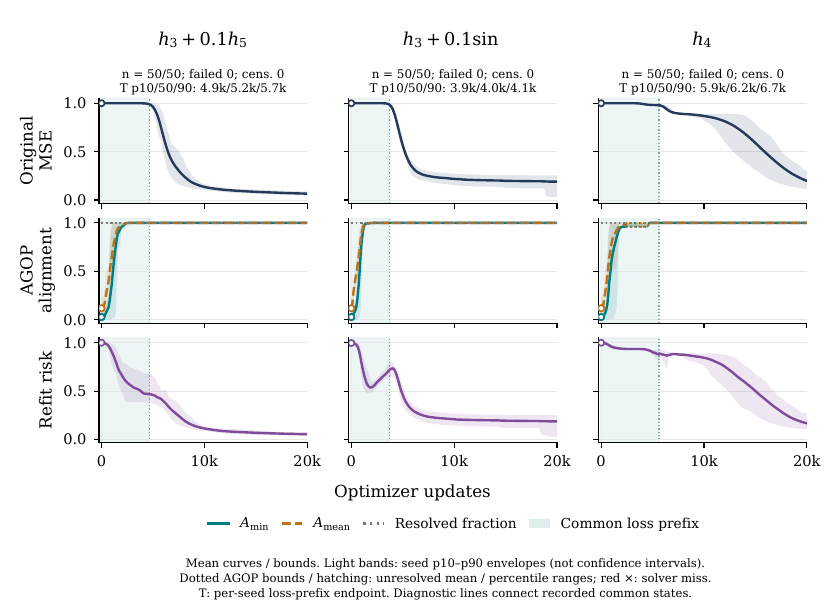}
\caption{\textbf{Perturbed cubic and fourth-Hermite teachers.} These standalone perturbation magnitudes and the fourth Hermite are empirical cases, not claims of the small-neighborhood theorem. Each teacher uses fifty fixed initialization seeds with identical optimization parameters. Means, numerical bounds, percentile bands and loss-prefix shading follow Figure~\ref{fig:main-trajectories}.}
\label{fig:ensemble-perturbations}
\end{figure}
\clearpage
\begin{figure}[p]
\centering
\includegraphics[width=\linewidth]{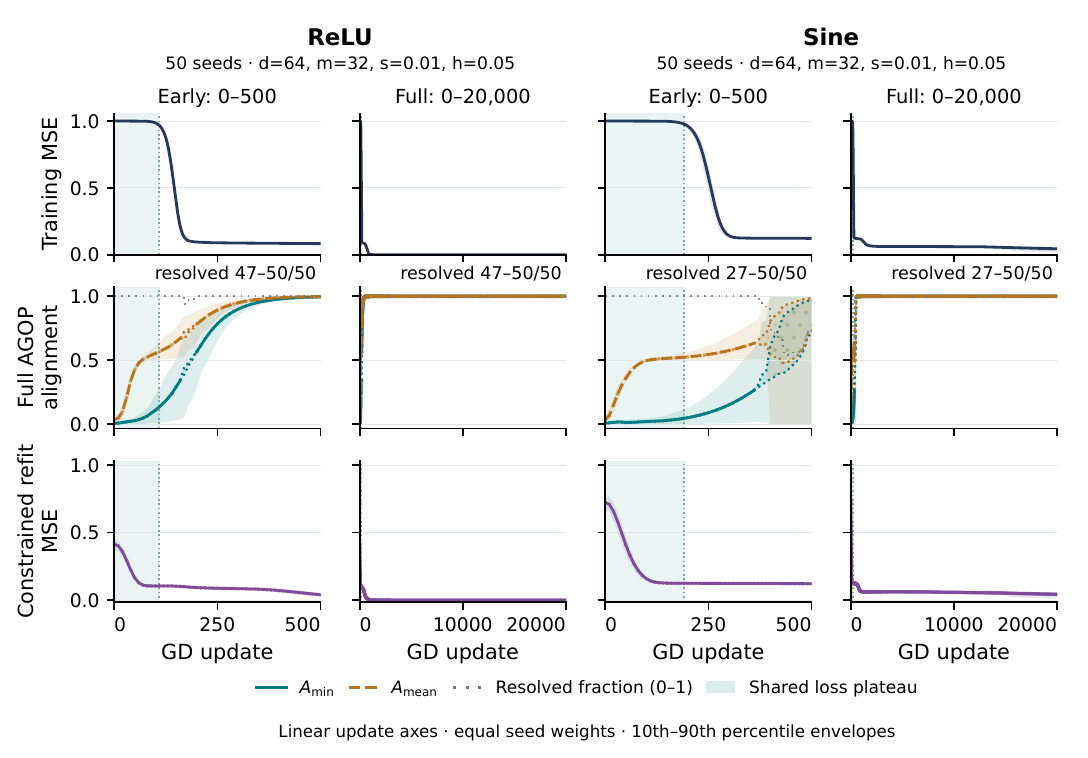}
\caption{\textbf{Additive ReLU and sine: denser followups.}
The paired early and full views use the same fifty-seed configuration,
three rows, linear update axes, numerical uncertainty conventions and
loss-prefix shading as Figure~\ref{fig:ensemble-controls}. Both teacher
means and all Hermite components are retained. Partial geometric
learning and refit gains are distinct from recovery of every teacher
direction; Table~\ref{tab:followup-outcomes} reports continuous endpoint changes.}
\label{fig:ensemble-ordinary}
\end{figure}

\clearpage
\begin{figure}[p]
\centering
\includegraphics[width=\linewidth]{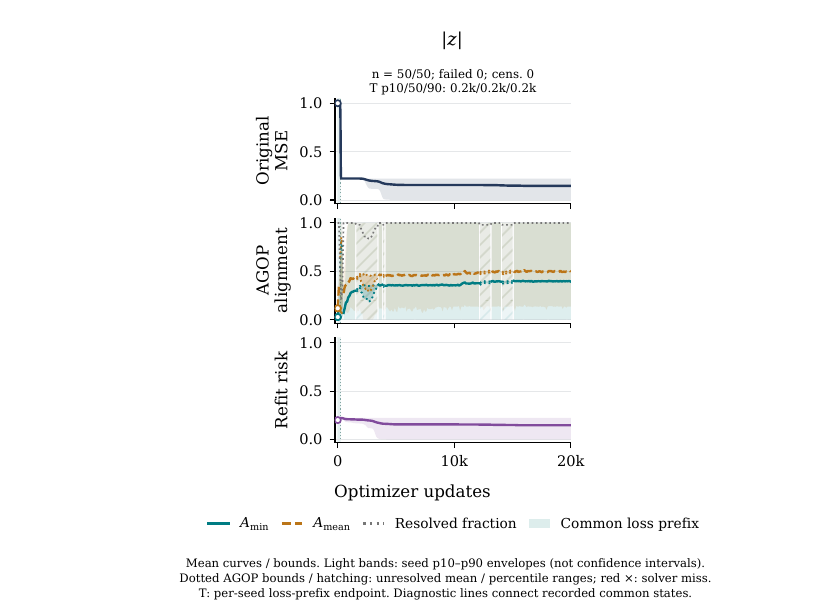}
\caption{\textbf{Absolute-value teacher: original baseline.}
This panel retains the original fifty seeds at $r=2$, $d=16$, $m=32$,
$s=10^{-4}$ and initial force step $h=0.05$; it was not rerun.
The three rows show original loss, full AGOP alignment and constrained
refit MSE over 20,000 updates. Means, percentiles, unresolved spectral
intervals and shared loss-prefix shading follow the original baseline
protocol. It is separated here so the four new followups do not displace
the absolute-value control.}
\label{fig:ensemble-abs}
\end{figure}

\clearpage
\section{Higher-rank comparisons and neuron geometry}
\label{app:highrank}

This separate cohort tests ranks and teacher links beyond the rank-two
ReLU and rank-one SwiGLU illustrations. Its 140 trajectories comprise
60 ReLU and 80 SwiGLU runs, with ten prescribed seeds 9351--9360 per
configuration. The same seed is reused across settings; such outcomes
are paired comparisons, not additional independent repetitions.
Every declared seed is retained. Four rank-16 ReLU runs are earlier
confirmation runs with matching numerical code; the other six extend
that setting. These are empirical illustrations, not additional
higher-rank learning theorems.

\paragraph{Teachers, sizes and optimization.}
Let $X\sim N(0,I_d)$, let $U=(u_1,\ldots,u_r)$ have orthonormal columns,
and write $P_U=UU^T$. All runs use $d=64$ and additive teachers
\begin{equation}
 Y=\frac{\sum_{i=1}^r q(u_i^TX)}
 {\sqrt{r\mathbb E[q(Z)^2]+r(r-1)(\mathbb E[q(Z)])^2}},
 \qquad Z\sim N(0,1),
 \quad q\in\{h_2,\ |\cdot|,\ e^{-(\cdot)^2/2}\}.
 \label{eq:highrank-teachers}
\end{equation}
Thus $\mathbb E[Y^2]=1$; $h_2(z)=(z^2-1)/\sqrt2$ has zero mean
and unit variance. The other teacher means are retained, with an
intercept fitted analytically at every training and refit evaluation.
Write $V_Y=\operatorname{Var}(Y)$ and $\ell_n=L_n/V_Y$ for the
variance-normalized MSE. For the nonzero-mean teachers, the unit initial
baseline is the intercept-only residual error, not the full raw second
moment. At rank eight, $V_Y$ is approximately $0.06660$ for $|z|$
and $0.01897$ for the Gaussian bump.

ReLU uses $m=256$, IID $N(0,s^2/d)$ raw heads, weights and biases
with $s=0.01$, all trained. Its force step is $h=0.5$, corresponding
to raw-MSE gradient step $mh/2=64$; its Armijo safeguard never activates.
Runs stop at force time 1500 (3000 updates) or raw MSE at most $0.01$
for $h_2$, and at most $0.001$ for the absolute-value and Gaussian-bump links.
The bounded refit uses the normalized, unprojected feature bank,
an intercept, and caps $\|v\|_2\le32$, $\|v\|_1\le64\sqrt r$.
This is a separate protocol from the no-intercept rank-two baseline.

SwiGLU uses $m=64$, initialization scale $s=0.1$, and block-adaptive
explicit-Euler population flow, with maximum relative block change
$0.01$, maximum time step 50 and flow-time ceiling 3000.
The head learning-rate multiplier is $0.01$, or $0.001$ in the
labeled comparison. Runs can terminate at normalized loss
$\ell\le0.01$, the flow-time ceiling, a 20,000-update ceiling,
or a wall-time budget of approximately thirty minutes.
The horizontal axis counts updates, not a shared physical-time clock.
Refit is unrestricted numerically: the displayed risk is the largest
actual risk among four pseudoinverse-cutoff solves. This sensitivity
envelope is neither the same comparison class as bounded ReLU refit
nor a certificate of the exact unrestricted oracle.

\paragraph{Loss-selected windows and sensitivity.}
For $q_0\in\{0.01,0.05\}$, let $T_{q_0}$ be the last update before
the first failure of
\begin{equation}
 \frac{\max_{0\le k\le n}\ell_k}{\min_{0\le k\le n}\ell_k}
 \le 1+q_0.
 \label{eq:highrank-window}
\end{equation}
Losses must be positive and finite. ReLU additionally requires
$\ell_k\in[0.95,1.05]$ at every update in the prefix for both thresholds.
This agrees with the raw-MSE convention for the zero-mean quadratic
teacher. Every-update loss histories determine the window independently
of alignment, refit and neuron geometry. Shading marks
$[0,\min_s T_{0.05}^{(s)}]$ across all ten seeds; the tighter comparison
boundary is $\min_s T_{0.01}^{(s)}$. A 5\% maximum/minimum ratio
is not an absolute $0.05$ loss tolerance and does not assert zero slope.
Figure~\ref{fig:swiglu-main} retains the stricter absolute condition
$|\ell_n-\ell_0|\le10^{-3}$ from its separate rank-one protocol.

In the four high-rank SwiGLU main-text columns, the common 5\% windows
end at 310, 332, 271 and 237 updates; their mean losses are approximately
$0.967,0.963,0.962,0.963$ there. Their 1\% comparison windows end at
233, 248, 182 and 149 updates. Applying the stricter absolute
$10^{-3}$ rule to the same data instead gives endpoints 119, 128, 64
and 41. These changes are changes of the reporting threshold, not
changes of trajectories or optimization.

\paragraph{Diagnostic support and interpretation.}
The initial and current numerical screens are applied before interpreting
alignment and refit gains. These checks concern computed moments, not
rigorous integration-error certificates. Separate mean curves do not
establish a same-checkpoint event for every initialization, and the
broader loss window does not establish the same conclusion under the
stricter absolute rule.

Curves use arithmetic means and pointwise 10th--90th percentiles,
with all ten seeds required at each aggregate point. Thin individual
tails remain visible beyond shared support. ReLU scalar curves join
common saved checkpoints. SwiGLU interpolates only between adjacent
valid saved diagnostics for display; missing or rejected diagnostics
are not bridged. Exact window endpoints need not have saved geometry
or refit diagnostics. Interpolated endpoint values create no observed
joint event. Some later SwiGLU trajectories oscillate, and wall stops
truncate others. Saved-data consistency checks do not certify trajectory
convergence or eliminate quadrature and time-discretization error.
The dotted $r/d$ line is the isotropic reference for $A_{\rm mean}$,
not the expected minimum alignment.
The ordered principal-cosine heatmap sorts the eigenvalues of
$U^TP_rU$ at each saved state, where $P_r$ is the top-$r$ AGOP
projector. It describes subspace principal angles, not alignment with
fixed individual teacher axes or the directions of individual neurons.

\paragraph{ReLU angles and importance weighting.}
For spatial weight $W_j$ and raw head $A_j$, define
\begin{align}
 \theta_{U,j}&=\arccos\frac{\|P_UW_j\|}{\|W_j\|},&
 \theta_{{\rm axis},j}&=\arccos\frac{\max_{1\le i\le r}|u_i^TW_j|}
 {\|P_UW_j\|},&
 \omega_j&=\frac{|A_j|\|W_j\|}{\sum_k|A_k|\|W_k\|}.
 \label{eq:highrank-neuron-angles}
\end{align}
The first angle measures entry into $U$; the second measures individual-axis
alignment of the projected weight. Directions are considered up to sign.
The nearest-axis angle lies in $[0,\arccos(1/\sqrt r)]$; at rank two it
folds the original Figure~\ref{fig:main-trajectories} in-plane angle at
$45^\circ$. It is not an AGOP principal angle. Histograms use five-degree
bins and weights normalized separately in each seed and state, before
averaging the ten histograms. All compared saved states have positive
weight denominators and nonzero projections. Grey regions denote absence
of a state shared by all seeds; hatching marks impossible angles.

For equal-coefficient quadratic teachers,
\begin{equation}
 \frac1{\sqrt r}\sum_i h_2(u_i^TX)
 =\frac{\|P_UX\|^2-r}{\sqrt{2r}}.
 \label{eq:highrank-quadratic-invariance}
\end{equation}
Rotating the axes inside $U$ leaves the teacher unchanged. Their
nearest-axis statistics are therefore symmetry controls, not evidence
for an intrinsic preference for or against polysemanticity. The
nonquadratic links give a more informative teacher-coordinate comparison:
subspace entry can precede concentration near individual axes. This
does not prove that mixing is necessary, optimal or permanent.

\paragraph{SwiGLU gate and value summaries.}
Write $a_j,w_j,z_j$ for the head, spatial gate weight and spatial value
weight in the model of Section~\ref{sec:setup}; biases are excluded from
the following angular diagnostics. For $c_j=\|P_Uw_j\|^2/\|w_j\|^2$,
the plotted effective angles are
\begin{equation}
 \Theta_{\rm weighted}=\arccos\sqrt{\frac{\sum_j\rho_jc_j}{\sum_j\rho_j}},
 \qquad \rho_j=|a_j|\|w_j\|^2\|z_j\|,
 \qquad \Theta_{\rm median}=\arccos\sqrt{\operatorname{median}_j c_j}.
 \label{eq:highrank-gate-angles}
\end{equation}
The weights $\rho_j$ are a degree-three contribution proxy.
$\Theta_{\rm weighted}$ is not a weighted mean angle; for even $m=64$,
transforming the median squared cosine need not equal the usual median
of individual angles. Transformations are applied to each seed's saved
scores before aggregation; variability bands concern seeds, not neurons.
The neuron-summary appendix panels also show a dotted purple value-weight
curve,
\begin{equation}
 \Theta_{\rm value}=\arccos\sqrt{\operatorname{median}_j
 \frac{\|P_Uz_j\|^2}{\|z_j\|^2}},
 \label{eq:highrank-value-angle}
\end{equation}
computed from each seed's saved median squared-cosine score before
averaging. The same even-width median qualification applies.
The main-text geometry rows show gates only.
Only these scalar summaries and threshold counts were retained, so
per-neuron angle histograms cannot be recovered for SwiGLU.

A gate is counted near an axis if
$\max_i|u_i^Tw_j|/\|w_j\|>0.9$, equivalently its angle to that axis is
less than $\arccos(0.9)\simeq25.8^\circ$. This uses the full gate norm,
unlike the projected ReLU nearest-axis angle. We plot both the unweighted
fraction of such gates and the fraction of teacher axes with at least
one such gate. These are gate-only diagnostics, not whole-neuron semantic
classifications. Few near-axis gates alone do not demonstrate mixing:
the gates could still lie outside $U$. Read them jointly with subspace
entry at the same time and retain the quadratic-axis caveat.

\clearpage
\begin{figure}[p]
\centering
\includegraphics[width=\textwidth]{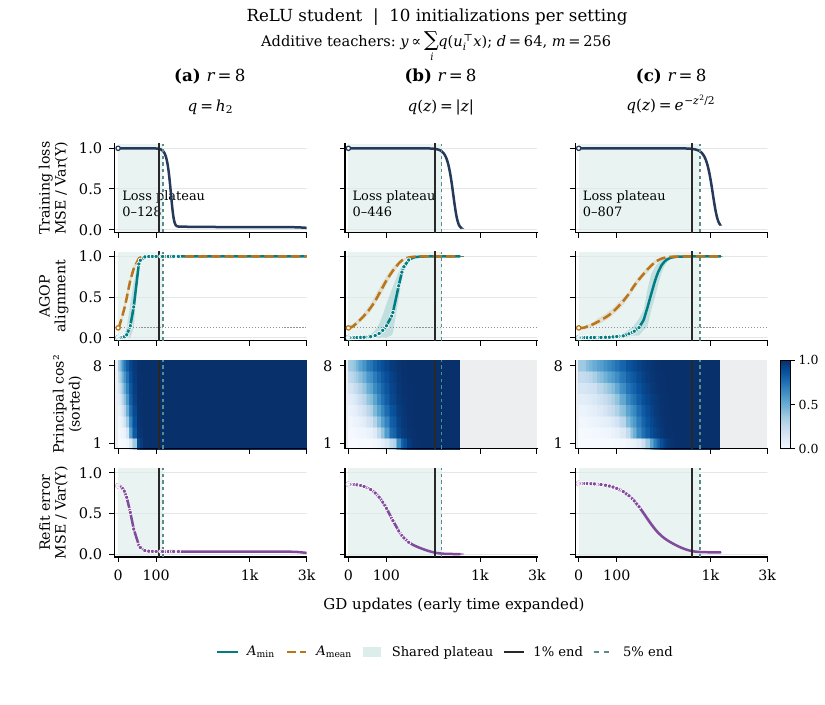}
\caption{\textbf{ReLU teacher-link comparison at rank eight.}
Ten seeds each for quadratic, absolute-value and Gaussian-bump teachers,
$d=64,m=256$. Loss and bounded refit are in units of $V_Y$; target means
are retained and fitted by the intercept. The 5\% shared plateaus and
1\% boundaries use \eqref{eq:highrank-window}. Means, bands and
support conventions are those defined above.}
\label{fig:highrank-relu-teachers}
\end{figure}
\clearpage
\begin{figure}[p]
\centering
\includegraphics[width=\textwidth]{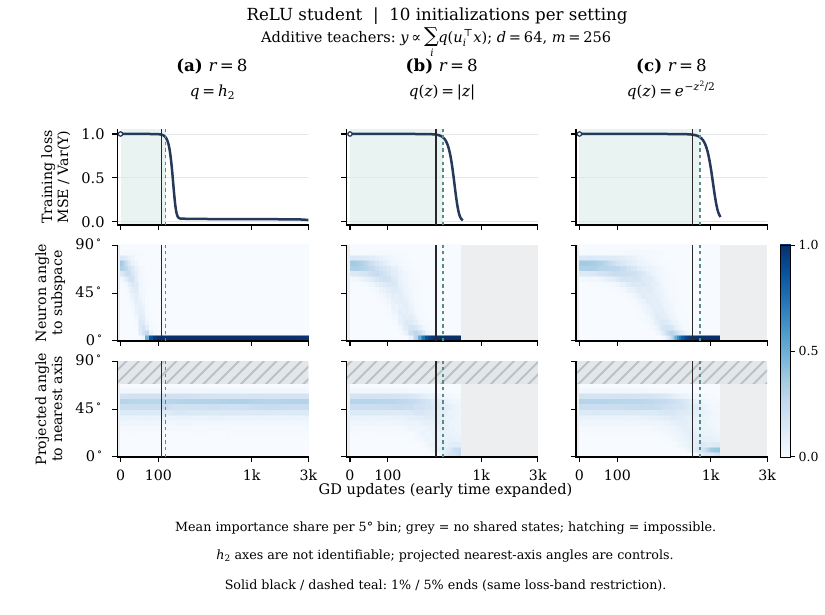}
\caption{\textbf{ReLU subspace entry and teacher-axis specialization.}
The same rank-eight runs as Figure~\ref{fig:highrank-relu-teachers}.
Importance-weighted angle histograms distinguish entry into the teacher
subspace from alignment with a single projected teacher axis.
For the nonquadratic links, axis concentration develops later than
subspace entry; the quadratic column is a rotation-invariant control.
Five-degree histograms average the ten separately normalized seed
distributions. Grey means no shared saved state, and hatching marks
angles above the nearest-axis geometric limit.}
\label{fig:highrank-relu-neurons-teachers}
\end{figure}
\clearpage
\begin{figure}[p]
\centering
\includegraphics[width=\textwidth]{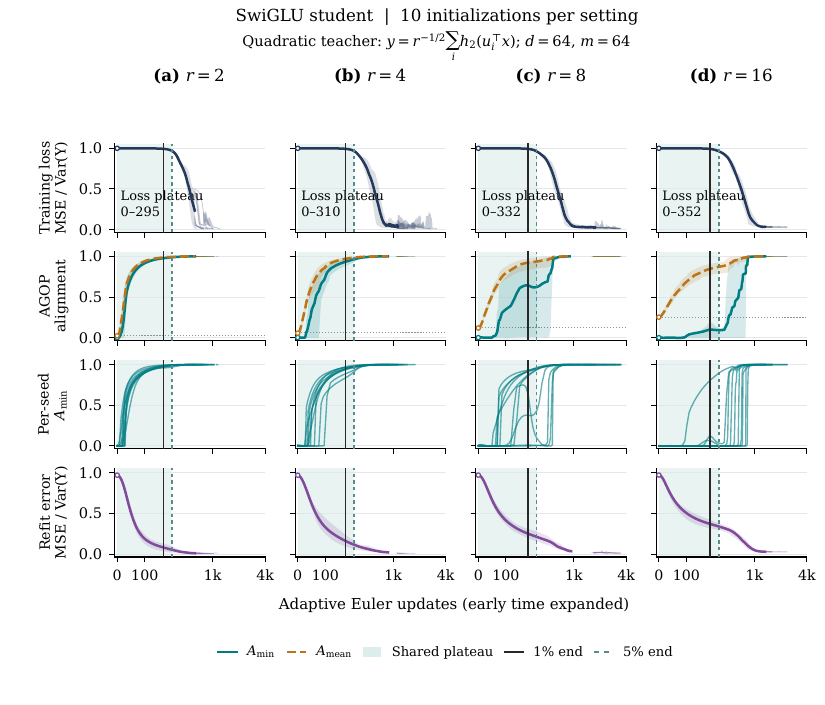}
\caption{\textbf{SwiGLU rank sweep, including the weak rank-sixteen case.}
Quadratic teachers at ranks $2,4,8,16$, with $d=64,m=64$,
head-rate multiplier $0.01$ and ten seeds each.
The rank-sixteen column does not establish robust
all-direction recovery during the plateau. All seeds and later
diagnostic gaps remain visible; large mean alignment does not replace
the weakest-direction and same-checkpoint requirements.}
\label{fig:highrank-swiglu-ranks}
\end{figure}
\clearpage
\begin{figure}[p]
\centering
\includegraphics[width=\textwidth]{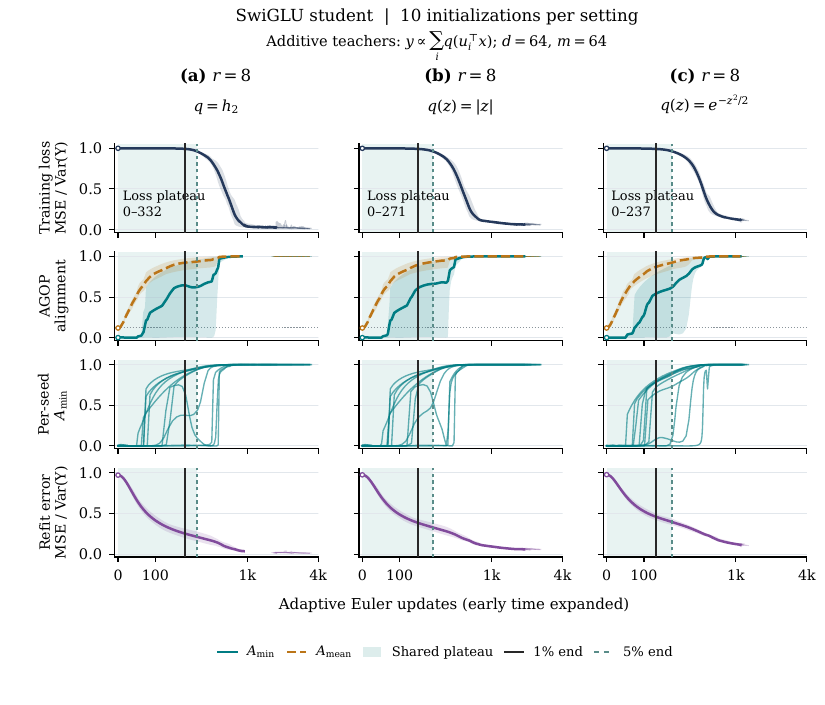}
\caption{\textbf{SwiGLU teacher-link comparison at rank eight.}
Quadratic, absolute-value and Gaussian-bump links use the same
ten-seed protocol with head-rate multiplier $0.01$.
Loss and numerical unrestricted-refit risks are divided by $V_Y$.
These are empirical trajectories under declared loss tolerances,
not guarantees for every link or initialization.}
\label{fig:highrank-swiglu-teachers}
\end{figure}
\clearpage
\begin{figure}[p]
\centering
\includegraphics[width=\textwidth]{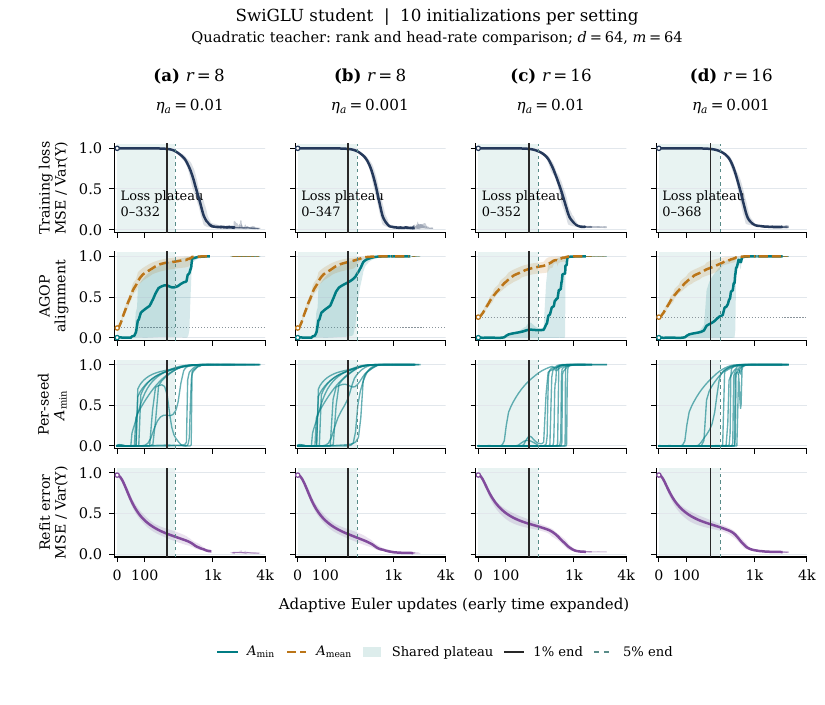}
\caption{\textbf{Slower SwiGLU heads help some runs without resolving the rank limit.}
Quadratic teachers at ranks eight and sixteen compare head-rate
multipliers $0.01$ and $0.001$, with ten paired seeds per setting.
The curves compare how head-update speed affects loss and alignment;
the rank-sixteen setting remains a limitation.
The settings remain distinct cohorts; outcomes are not pooled or
selected by performance.}
\label{fig:highrank-swiglu-headrate}
\end{figure}
\clearpage
\begin{figure}[p]
\centering
\includegraphics[width=\textwidth]{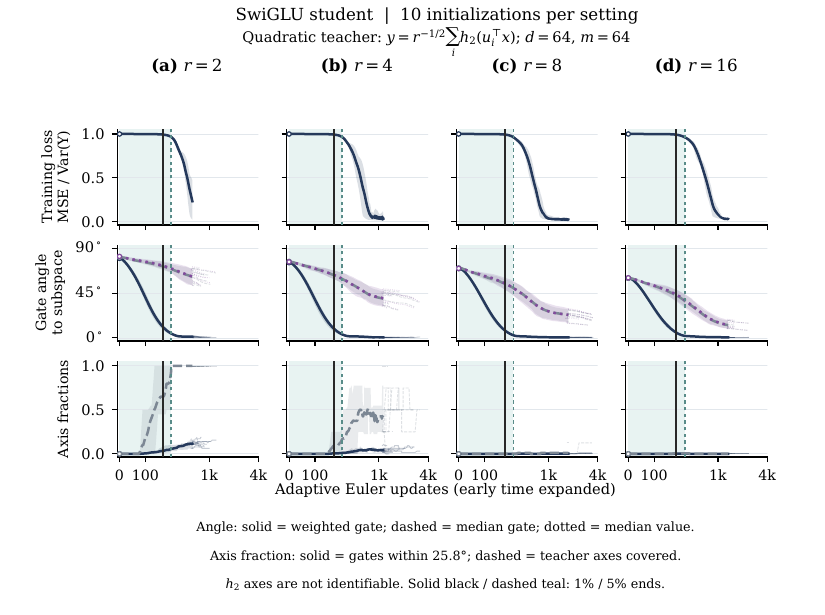}
\caption{\textbf{SwiGLU gate geometry across ranks.}
The same standard-head rank sweep as Figure~\ref{fig:highrank-swiglu-ranks}.
Gate-summary angles from \eqref{eq:highrank-gate-angles}, together
with the dotted value-weight angle in \eqref{eq:highrank-value-angle},
can improve
even where the weakest AGOP direction is not recovered during the
plateau. They describe different quantities, and weighted gate
alignment alone is not an all-direction recovery result.
The quadratic teacher has no identifiable internal axes.}
\label{fig:highrank-swiglu-neurons-ranks}
\end{figure}
\clearpage
\begin{figure}[p]
\centering
\includegraphics[width=\textwidth]{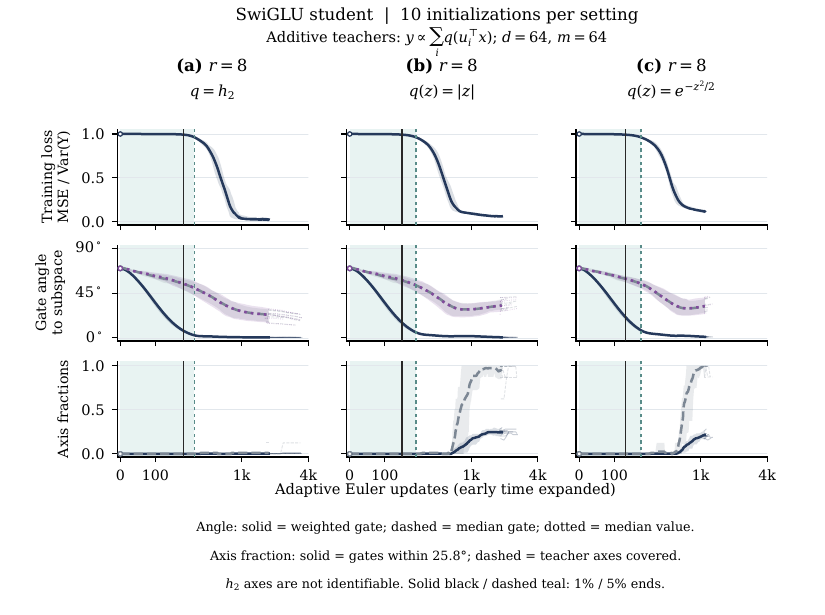}
\caption{\textbf{SwiGLU gate entry and axis specialization for different links.}
The same rank-eight teacher comparison as Figure~\ref{fig:highrank-swiglu-teachers}.
Effective gate and value angles and thresholded gate-axis counts use saved
scalar summaries; no per-neuron angle distributions are inferred.
For the nonquadratic teachers, the fraction of near-axis gates rises
later than the decline in the effective subspace angle.
The weighted angle and unweighted gate counts must be interpreted
together and relative to the stated loss window; they do not classify
every neuron or establish that coordinate mixing is necessary.}
\label{fig:highrank-swiglu-neurons-teachers}
\end{figure}
\clearpage

\clearpage
\clearpage
\section{Teacher families and complete parameter prescriptions}
\label{app:parameters}
\subsection{Positive mixtures of Gaussian third derivatives}
\label{app:sw-parameters}
Here $X\sim\gamma_d=N(0,I_d)$, $U=(u_1,\ldots,u_r)$,
$P_U=UU^T$ and $P_{U^\perp}=I_d-P_U$. Function norms are Gaussian:
$\|g\|_2^2=\mathbb E[g^2]$ and
$\|g\|_{H^1}^2=\mathbb E[g^2+\|\nabla g\|^2]$.
Write $h_k=\operatorname{He}_k/\sqrt{k!}$ for the normalized
probabilists' Hermite polynomial, and $\Phi$ for the standard normal
CDF. Expectations without an indicated law are over the displayed
Gaussian input; expectations of initialization statistics are over the
original parameter draw.
Let $J=1-\alpha>0$, $U^TU=I_r$, and let $\mu_i$ be probability
measures on $[1/3,1]$. Throughout this appendix $\varphi_v$ denotes
the density of $N(0,v)$ and $\varphi=\varphi_1$. Define
\begin{align}
 g_v(t)&=v^{-7/2}(t^3-3vt)e^{-(v^{-1}-1)t^2/2},&
 g_i&=\int g_v\,d\mu_i(v),&N_i&=\|g_i\|_2,\notag\\
 y_c(X)&=\sum_{i=1}^r\lambda_i g_i(u_i^TX)/N_i,&
 \lambda_i&>0,&\sum_i\lambda_i^2&=1.
 \label{eq:app-mixture}
\end{align}
The target is $y=y_c+e=y(U^TX)$ with $\|y\|_2=1$ and
$\|e\|_{H^1}\le\zeta$. The residual may contain interactions on $U$.
The central common-link theorem takes $\mu_i=\mu$,
$\lambda_i=r^{-1/2}$ and $e=0$. For the more general theorem require
\begin{equation}
 K_i^*=\max_{B\ge0}\frac{J\lambda_i}{2N_i}
 \sqrt{\frac B{1+B}}\int v^{-3/2}\varphi(\sqrt{B/v})\,d\mu_i(v),
 \qquad K_*:=\max_iK_i^*,\qquad \min_iK_i^*\ge\frac56K_*.
 \label{eq:app-signals}
\end{equation}
Here and below, all quantities are public choices made before the
independent Gaussian draw. The following list prints the entire
prescription, including the conditions needed for later loss release.
There is no further implicit width or dimension condition.

\paragraph{Structural constants and sizes.}
Fix $0<\delta<1/2$, $0<\epsilon_L\le1$ and
$0<\epsilon_G\le1/4$. Set
\begin{gather}
 \lambda_*:=\min_i\lambda_i,\quad \chi_0=10^{-4},\quad
 e_*=\frac{J\chi_0}{4096\sqrt r},\quad
 \varepsilon_{\rm BM}=e_*/8,\quad
 \kappa_{\rm BM}=J\lambda_*/1024,\notag\\
 \vartheta=\min\left\{\lambda_*/1000,e_*/16,
             \sqrt{\kappa_{\rm BM}\varepsilon_{\rm BM}/6400}\right\},
 \quad
 R=\max\left\{8,16/e_*,
             \sqrt{25600/(\kappa_{\rm BM}\varepsilon_{\rm BM})}\right\}.
 \label{eq:sw-structure}
\end{gather}
With $p_I=\Phi(1)-\Phi(1/2)$ and $p_G=\Phi(2)-\Phi(1)$, put
\begin{equation}
 p_{\rm rect}=p_Gp_I^2
 \begin{cases}1,&r=1,\\
 [2\Phi(\vartheta/(4\sqrt{r-1}))-1]^{r-1},&r>1,
 \end{cases}
 \qquad m\ge\max\{r,p_{\rm rect}^{-1}\log(16r/\delta)\}.
 \label{eq:sw-width}
\end{equation}
Choose integer $d>r$ satisfying all of
\begin{gather}
 \ell=\log(64m/\delta),\quad B_1=64\sqrt r/J,\quad
 \ell_I=\log(8m/\delta),\quad Q_I=r+2\sqrt{r\ell_I}+2\ell_I,
 \notag\\
 d\ge\max\{r,64,128\ell,16r/\delta,16\ell_I,
                    2Q_I(32B_1/\chi_0)^{2/3}\}.
 \label{eq:sw-dimension}
\end{gather}
These conditions allow compatible $m\ge d$. Once $d$ is fixed set
$x_0=d^{-1/2}$, $k=J\lambda_*/(4096d^2)$, and
$Q_*=k/(10\sqrt d)$.

\paragraph{Acquisition and spectral clocks.}
Define
\begin{gather}
 T_1=\kappa_{\rm BM}^{-1}\log(4/\varepsilon_{\rm BM}),\quad
 T_0=4R/k+T_1+1,\quad S_0=5me^{4T_0},\quad v_*=4R^2,
 \notag\\
 c_G=\frac{J\varphi(1)\sqrt{\epsilon_G}}{128\sqrt r},\quad
 \Gamma=\frac{3S_0/4+400m}{v_*},\quad
 T_G=\frac{100}{K_*}\log\frac{2\Gamma}{c_G},\notag\\
 T=T_0+T_G+1,\quad S=S_0e^{4(T_G+1)},\quad M=2\sqrt S.
 \label{eq:sw-clock}
\end{gather}
Choose one positive mesh $h$ and one positive force allowance $\nu$
obeying \emph{every} cutoff in
\begin{align}
 h&\le\min\left\{2^{-12},\frac{k}{512M},\frac1{4M^2T},
 \frac{\kappa_{\rm BM}\varepsilon_{\rm BM}}{3200},
 \frac{K_*}{10000},\frac1{1000r},10^{-4},
 \frac{c_Gv_*}{20000TS}\right\},\label{eq:sw-mesh}\\
 \nu&\le\min\left\{\begin{gathered}\frac{Q_*}4,\frac1{400T},
 \frac{x_0}{4MT},\frac{\vartheta\lambda_*kx_0^2}{2048M^3},
 \frac{k}{512M},\frac{2k}{3M^2},\\ \frac1{80M^2T},
 \frac{\kappa_{\rm BM}\varepsilon_{\rm BM}}{320},
 \frac{K_*}{10000},\sqrt{\frac{K_*}{1000T}},
 \frac{c_Gv_*}{80TS}\end{gathered}\right\}.\label{eq:sw-force}
\end{align}
Set
\begin{gather}
 N_* =\lceil4R/(hk)\rceil+\lceil T_1/h\rceil,\qquad
 N=N_*+\lceil T_G/h\rceil,\notag\\
 t_G=(N-N_*)h,\qquad v_G=v_*e^{3.32K_*t_G}.
 \label{eq:sw-checkpoints}
\end{gather}
In particular $N_*h\le T_0$, $T_G\le t_G\le T_G+h$ and $Nh\le T$.

\paragraph{Continuation, residual and original Gaussian scale.}
After choosing the same $h$, define
\begin{gather}
 \gamma=K_*/2,\quad
 k_c=\left\lceil\frac2{h\gamma}\log\frac{32S}{\gamma v_*}\right\rceil,
 \quad T_c=h+\frac2\gamma\log\frac{32S}{\gamma v_*},\notag\\
 S_c=Se^{4T_c},\quad N_c=N+k_c,\quad
 R_c^2=\gamma m/64,\quad G_c=\gamma^2/8192=K_*^2/32768,\notag\\
 0<\nu_c\le\min\{\gamma/16,\sqrt{\gamma/(8T_c)}\}.
 \label{eq:sw-release-parameters}
\end{gather}
The allowable residual and raw scale satisfy
\begin{gather}
 0\le\zeta\le\min\left\{\nu/2,\chi_0/128,\lambda_*/100,
                       \chi_0/(32B_1),\nu_c/2\right\},
 \label{eq:sw-residual}\\
 0<s^2\le\min\left\{\frac{m\nu}{S},
 \frac{m\epsilon_L}{20M^2},\frac{m\nu_c}{S_c},
 \frac{R_c^2}{4S_c},\frac{mG_c}{S_c},
 \frac{m\epsilon_L}{20S_c}\right\}.
 \label{eq:sw-scale}
\end{gather}
All raw coordinates are independently $N(0,s^2/d)$ and the single
Euclidean step on full MSE is $\eta=mh/2$. Smaller positive $h,\nu,s$
are allowed, with the downstream quantities recomputed in the printed
order. No parameter is chosen from an observed favorable trajectory.
The very small scale and mesh are part of the theorem.

\subsection{Additive cubic neighborhood: complete parameter package}\label{app:aj-parameters}
Let $X\sim N(0,I_d)$, let $U=(u_1,\ldots,u_r)$ have orthonormal
columns, and let
\[
 y_c=\sum_{i=1}^r a_i h_3(u_i^TX),\qquad
 a_i>0,\quad\sum_i a_i^2=1,\quad
 a_0=\min_i a_i,\quad a_{\max}=\max_i a_i.
 \tag{AJ1}
\]
In this subsection the local $N,\overline T,G_{\rm rel}$ are the main-text $N_A,\overline T_A,G_A$, respectively.
Here $h_3(t)=(t^3-3t)/\sqrt6$, $\gamma_d=N(0,I_d)$,
$\|g\|_2^2=\mathbb E[g(X)^2]$, and
$\|g\|_{H^1}^2=\mathbb E[g(X)^2+\|\nabla g(X)\|^2]$.
Nonzero coefficient signs can be absorbed into the $u_i$.
For $0\le\alpha<1$, train
\[
 f_n(X)=\frac1m\sum_{j=1}^m A_{j,n}
       \sigma_\alpha(W_{j,n}^TX+B_{j,n}),\qquad
 \sigma_\alpha(t)=\alpha t+(1-\alpha)t_+,\qquad
 L_n=\mathbb E[(y-f_n)^2]
 \tag{AJ2}
\]
by simultaneous Euclidean GD in every raw coordinate at one fixed
step $\eta=mh/2$. All initial raw coordinates, including heads and
biases, are independent $N(0,s^2/d)$. There is no screening, resetting,
freezing, sign selection by the algorithm, or refitting during training.

Fix $0<\delta<1/2$ and $0<\delta_L\le1$, and put
$\delta_0=\delta/2$ and $\zeta=\delta/2$.
The following scalar constants depend only on the indicated public
parameters, not on a realized initialization:
\begin{gather}
 b_*^2=(\sqrt5-1)/2,\quad p=2\Phi(b_*)-1,\quad
 v_g=p-2b_*\varphi(b_*)+b_*^2(1-p)-p^2,\notag\\
 t_g=-2b_*\varphi(b_*)/\sqrt6,\quad K_*=t_g^2/v_g>2/3,\quad
 D_\alpha=\sqrt{(1-\alpha)^{-2}+p^2/(1+\alpha)^2},\notag\\
 C_\alpha=|t_g|D_\alpha/v_g,\qquad
 D_g=|t_g|\max\{p,1-p\}/v_g,\notag\\
 \xi=\min\{(16\sqrt r)^{-1},(128C_\alpha\sqrt r)^{-1}\},\quad
 \vartheta=\min\{a_0/8,\xi/32\},\quad R_D=64/\xi.
 \tag{AJ3}
\end{gather}
Here $\varphi,\Phi$ are the standard normal density and distribution
function. Set
\[
 p_C=[\Phi(2)-\Phi(1)][\Phi(2)-\Phi(1/2)]^2
       [2\Phi(\vartheta/(4\sqrt{r-1}))-1]^{r-1},
 \tag{AJ4}
\]
omitting the final factor when $r=1$. Choose integer sizes satisfying
\begin{gather}
 m\ge\max\{4r,\log(16r/\delta_0)/p_C\},\quad
 \ell=\log(96m/\delta_0),\quad H=r+2\sqrt{r\ell}+2\ell,\notag\\
 d\ge\max\{256,128\ell,16H,
                (r+4)(128/\delta_0)^{1/3},16r/\delta\}.
 \tag{AJ5}
\end{gather}
In particular $r<d$ and $m\ge r$. There is no $m<d$ requirement.
The restriction on width is stronger than $m\ge r$: it pays for all
four axis/bias categories under the original Gaussian draw.

Define the original-coordinate constants
\begin{gather}
 g_0=\frac{\zeta a_0}{4mr^2\sqrt d},\quad
 b_0=a_{\max}\sqrt{H/d},\quad
 D_a=\sum_i a_i^{-2},\quad L_a=(a_{\max}^2D_a)^{1/3},\notag\\
 Z_*=40+640\sqrt{D_a}
                  \{b_0^2/g_0+(1+L_a)b_0\},\quad
 \Gamma=3Z_*+182.
 \tag{AJ6}
\end{gather}
Choose any
$0<\varepsilon\le\min\{1/4,(768D_g^2)^{-1}\}$ and then a public
radius $R$ with
\[
 R\ge\max\left\{R_D,16\sqrt m,
       \sqrt{m\Gamma/18},
       \sqrt{\frac{m\sqrt{1522/\varepsilon}}
                         {18(1-\alpha)\varphi(1)}}\right\}.
 \tag{AJ7}
\]
Only after this choice set
\begin{gather}
 c=(1-\alpha)/\sqrt6,\quad k=c\varphi(1)a_0/(64d^2),\quad
 T=1+8R/k,\quad\overline T=T+1,\notag\\
 M^2=20m e^{4\overline T},\quad \overline S=4\log(8M),\quad
 0<h\le\min\{1/512,\xi/1024,(16\overline T)^{-1}\},\quad
 N=\lceil T/h\rceil .
 \tag{AJ8}
\end{gather}
Choose $\nu>0$ at most the minimum of
\[
 \frac1{64},\quad \frac{k}{10^4M^2},\quad
 \frac1{1024\overline T^2},\quad
 \frac{k}{64M^2\overline S\sqrt d},\quad
 \frac{k\vartheta a_0}{1024M^3d},\quad
 \frac{k\xi}{12800M},\quad
 \frac{m}{1024\overline T M^2},\quad
 \frac{g_0e^{-4\overline T}}{10^5\overline T M}.
 \tag{AJ9}
\]
The full target is any additive function on $U^TX$ with
\[
 \|y\|_2=1,\qquad y\in H^1(\gamma_d),\qquad
 \|y-y_c\|_{H^1}\le\nu/2.
 \tag{AJ10}
\]
Its actual mean, linear component and higher tails remain in all
gradients, losses and comparisons. Since $\nu/2<a_0$, its minimal
index subspace is $U$, by the cubic tensor argument in Appendix~\ref{app:teacher-rank}. Thus $r$ here is a true teacher rank, not only a representation.
Finally put
\begin{gather}
 a_{\rm rel}=1/(64m\overline T),\quad
 R_{\rm rel}^2=a_{\rm rel}m^2,\quad
 G_{\rm rel}=1/(8192\overline T^2),\notag\\
 0<s^2\le\min\left\{\frac{m\nu}{M^2},
  \frac{m\delta_L}{50M^2},
  \frac{R_{\rm rel}^2}{4M^2},
  \frac{mG_{\rm rel}}{M^2}\right\}.
 \tag{AJ11}
\end{gather}
This order of choices is acyclic: $Z_*$ is determined before $R$,
and neither $Z_*$ nor $R$ depends on the ensuing clock or raw scale.
All quantities are finite and positive at every allowed finite rank
and coefficient vector. No practical scale is asserted.

\section{Exact update and minimal teacher rank}
\label{app:teacher-rank}
Here $X\sim N(0,I_d)$,
$f=m^{-1}\sum_j A_j\sigma_\alpha(W_j^TX+B_j)$,
$\sigma_\alpha(t)=\alpha t+(1-\alpha)t_+$, and a superscript $+$
denotes the next simultaneous iterate. The teacher axes are the
columns of $U$, and $h_3(t)=(t^3-3t)/\sqrt6$.
The coefficient families $a_i$ and $\lambda_i/N_i$ are those in
AJ1 and \eqref{eq:app-mixture}, respectively.
For $z_j=W_j^TX+B_j$, the simultaneous full-MSE step $\eta=mh/2$ is
\begin{align}
 A_j^+&=A_j+h\mathbb E[(y-f)\sigma_\alpha(z_j)],\notag\\
 W_j^+&=W_j+hA_j\mathbb E[(y-f)\sigma'_\alpha(z_j)X],\notag\\
 B_j^+&=B_j+hA_j\mathbb E[(y-f)\sigma'_\alpha(z_j)].
 \label{eq:app-exact-update}
\end{align}
Every right-hand side uses the same current state. At a nonzero spatial
row the kink has Gaussian probability zero. At a pure-bias row with
nonzero bias the derivative is the corresponding affine derivative.
The identically zero joint row is absorbing and may use any fixed
subgradient convention. Initial nonzero balanced hidden states remain
nonzero throughout the controlled update windows, by the compatible
balance identity and step bounds proved below.

For a unit vector $u$, the normalized cubic $h_3(u^TX)$ corresponds
isometrically to the symmetric third-chaos tensor $u^{\otimes3}$.
The central cubic tensor is $\sum_i c_i u_i^{\otimes3}$, with
$c_i=a_i$ in AJ and $c_i=\sqrt6\lambda_i/N_i$ in the mixture theorem.
Its mode-one matricization has singular values $c_i$, because both
$u_i$ and $u_i\otimes u_i$ are orthonormal families. Orthogonal chaos
projection and matricization give an operator perturbation at most
$\|y-y_c\|_2$. In AJ this is less than $a_0$; in SW it is at most
$\lambda_*/100<\sqrt6\lambda_*/9$. Thus the cubic matricization of
the actual target has rank at least $r$. If the target were measurable
on a smaller linear subspace, all modes of every chaos tensor would
lie in that subspace, a contradiction. Since the target depends on
$U^TX$ by hypothesis, its minimal index subspace is exactly $U$.

For $Z\sim N(0,1)$, the mixture identity $g_v\varphi=-\varphi_v'''$ implies
\[
 \mathbb E[g_v(Z)e^{tZ-t^2/2}]=t^3e^{-(1-v)t^2/2}.
\]
The resulting coefficients are printed in MP2. Their squared sums and
Hermite-degree-weighted squared sums converge uniformly for
$v\in[1/3,1]$. Indeed their generating function is
$K(t)=3(2+3t)(1-t)^{-7/2}$ at $t=(1-v)^2\le4/9$;
the weighted sum is $2tK'(t)+3K(t)$. Minkowski's inequality therefore
justifies integration against arbitrary probability measures,
including non-atomic ones, in both $L^2$ and $H^1$.
It gives $\sqrt6\le N_i<9$ and
$\|g_i/N_i\|_{H^1}<13$. The coefficient at every odd degree
$2j+3$ is nonzero whenever $\mu_i([1/3,1))>0$. No truncation enters
the updates or the proofs.

\subsection{Common original-law event assembly}
\label{app:probability-assembly}
Write $[x]_+=\max\{0,x\}$. All events in the following rule use one
probability space; in our applications this is the original Gaussian draw.

\begin{lemma}[Common-event and headroom rule]
\label{lem:common-event}
Let $E_1,\ldots,E_k$ be measurable events with
$\Pr(E_i^c)\le p_i$, and put $E=\bigcap_i E_i$.
If the desired conclusions hold simultaneously on $E$, their probability
is at least $[1-\sum_i p_i]_+$.
For a coupled scale $\varepsilon>0$, suppose instead that $c>0$ is a
measurable random cutoff on $E$, and that all desired conclusions hold
on $E\cap\{\varepsilon<c\}\cap H_\varepsilon$, where
$\Pr(H_\varepsilon^c)\le q$ at each scale. Then their probability is at least
\[
 [1-\textstyle\sum_i p_i-q-r_\varepsilon]_+,
 \qquad r_\varepsilon=\Pr(E\cap\{\varepsilon\ge c\})\longrightarrow0.
\]
Consequently its limit inferior is at least $[1-\sum_i p_i-q]_+$.
When conclusions are claimed at a common endpoint, the premise must
supply that common endpoint on the same trajectory.
\end{lemma}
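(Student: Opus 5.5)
The first assertion is a union bound. Let $C$ be the event on which the desired conclusions hold. By hypothesis $E\subseteq C$, so $\Pr(C^c)\le\Pr(E^c)=\Pr(\bigcup_iE_i^c)\le\sum_i\Pr(E_i^c)\le\sum_ip_i$. Hence $\Pr(C)\ge1-\sum_ip_i$. Since a probability is also nonnegative, $\Pr(C)\ge[1-\sum_ip_i]_+$. No independence among the $E_i$ is needed, which is exactly what the later assemblies (for example the $3\delta/8+\delta/4+\delta/4$ split in Theorem~\ref{thm:main}) require.

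For the coupled-scale statement, fix $\varepsilon>0$ and let $C_\varepsilon$ be the conclusion event at that scale, so that $E\cap\{\varepsilon<c\}\cap H_\varepsilon\subseteq C_\varepsilon$. I decompose the complement:
\[
 C_\varepsilon^c\subseteq E^c\cup\bigl(E\cap\{\varepsilon\ge c\}\bigr)\cup H_\varepsilon^c .
\]
Measurability of $c$ on $E$ makes $E\cap\{\varepsilon\ge c\}$ an event. The three terms then have probabilities at most $\sum_ip_i$, $r_\varepsilon$ and $q$. Subadditivity gives $\Pr(C_\varepsilon)\ge1-\sum_ip_i-q-r_\varepsilon$, and taking the positive part is again harmless.

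It remains to show $r_\varepsilon\to0$ as $\varepsilon\downarrow0$. Since $c>0$ on $E$, the sets $E\cap\{c\le\varepsilon\}$ decrease as $\varepsilon\downarrow0$. Along any sequence $\varepsilon_k\downarrow0$ their intersection is $E\cap\{c\le0\}=\emptyset$. Continuity from above of the finite measure $\Pr$ gives $r_{\varepsilon_k}\to0$. Monotonicity of $r_\varepsilon$ in $\varepsilon$ upgrades this to the full limit.

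Taking the limit inferior in $\Pr(C_\varepsilon)\ge[1-\sum_ip_i-q-r_\varepsilon]_+$ then gives $\liminf_{\varepsilon\downarrow0}\Pr(C_\varepsilon)\ge[1-\sum_ip_i-q]_+$, because $x\mapsto[x]_+$ is continuous and nondecreasing.

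The final sentence of the lemma is a usage condition rather than a claim. I would note only that the inclusion $E\cap\{\varepsilon<c\}\cap H_\varepsilon\subseteq C_\varepsilon$ must be verified for the joint conclusion at one common endpoint of one trajectory. Otherwise the conclusions need not hold simultaneously on the intersection.

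I do not expect any real obstacle. The only points needing care are two. First, $c$ must be strictly positive on $E$, since that is what gives the empty intersection. Second, $H_\varepsilon$ may vary with $\varepsilon$, which is why its failure probability enters uniformly as $q$ and not through a limit.
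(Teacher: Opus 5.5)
Your proposal is correct and follows the paper's proof: both bound the complement of the conclusion event by $E^c\cup H_\varepsilon^c\cup(E\cap\{\varepsilon\ge c\})$ via a union bound without independence, and both deduce $r_\varepsilon\to0$ from $c>0$ on $E$. The only cosmetic difference is that the paper applies bounded convergence to $\mathbf 1_E\mathbf 1_{\{\varepsilon\ge c\}}$, whereas you use continuity from above along a sequence together with monotonicity of $r_\varepsilon$, which amounts to the same fact.
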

\begin{proof}
The complement of the displayed intersection is contained in the union
of the listed failure events. The union bound proves both inequalities,
without independence. On $E$, $c>0$, so
$\mathbf1_E\mathbf1_{\{\varepsilon\ge c\}}\to0$; bounded convergence proves
the limit. Probability is nonnegative, giving the positive parts.
\end{proof}
The events $H_\varepsilon$ need not stabilize along a coupled draw.
A random cutoff supplies no public numerical finite-scale confidence bound.

\section{Full mixture theorem and proof}
\label{app:proof-sw}
Use the teacher and parameter choices of
Appendix~\ref{app:sw-parameters}, with $J=1-\alpha$ and $U=(u_1,\ldots,u_r)$.
The full raw student is $f_n=m^{-1}\sum_j A_{j,n}
\sigma_\alpha(W_{j,n}^TX+B_{j,n})$, with
$\sigma_\alpha(t)=\alpha t+Jt_+$ and $X\sim N(0,I_d)$.
For clarity, $L_n=\mathbb E[(y-f_n)^2]$,
$\mathsf G_n=\mathbb E[\nabla_x f_n\nabla_x f_n^T]$, and $P_n$ denotes its
leading rank-$r$ spectral projector at the theorem's checkpoints. The alignment scores are
$A_{\min,n}=\lambda_{\min}(U^TP_nU)$ and
$A_{\rm mean,n}=r^{-1}\operatorname{tr}(U^TP_nU)$.
The diagnostic below uses
$\phi_{j,n}=\sigma_\alpha(W_{j,n}^TX+B_{j,n})/
\sqrt{\|W_{j,n}\|^2+B_{j,n}^2}$, with a zero augmented row assigned
zero feature: $\mathcal R(n)$ is the infimum of
$\|y-\sum_jv_j\phi_{j,n}\|_2^2$ under the two stated coefficient budgets.
\begin{theorem}[Comparable mixture signals]
\label{app:sw-theorem}
Under all conditions in Appendix~\ref{app:sw-parameters}, let
$\mathcal R(n)$ be the normalized, unprojected, full-bank diagnostic
with $\ell^2$ budget $32/J$ and $\ell^1$ budget $64\sqrt r/J$.
With probability at least $1-7\delta/8$ over the original Gaussian draw,
the following conclusions hold simultaneously. Both leading rank-$r$
AGOP projectors $P_0,P_N$ have positive cutoff gaps, and
\begin{gather}
 A_{\min,0}\le A_{\rm mean,0}\le1/4,\qquad
 A_{\min,N}\ge1-\epsilon_G,\qquad
 A_{\rm mean,N}\ge1-\epsilon_G/r,\label{eq:full-sw-angle}\\
 \lambda_r(\mathsf G_N)\ge
 a_G:=\tfrac12[J\varphi(1)s^2v_G/(64m\sqrt r)]^2>0,
 \qquad \lambda_{r+1}(\mathsf G_N)\le\epsilon_Ga_G,
 \label{eq:full-sw-spectrum}\\
 \mathcal R(0)\ge1-\chi_0/8,\qquad
 \mathcal R(N)<(\sqrt{3/5}+\chi_0/64)^2,
 \qquad\mathcal R(0)-\mathcal R(N)>0.399.
 \label{eq:full-sw-risk}
\end{gather}
For the entire prefix $0\le n\le N_c$,
\begin{equation}
 1-\epsilon_L/16\le L_n\le1+\epsilon_L/16,\qquad
 L_n\ge1-G_c,\qquad
 \frac{\max_{n\le N_c}L_n}{\min_{n\le N_c}L_n}\le1+\epsilon_L/4.
 \label{eq:full-sw-loss}
\end{equation}
There is a finite later $N_2>N_c$ with
$L_{N_2}\le1-2G_c$, and hence $L_n-L_{N_2}\ge G_c$ for every
$n\le N_c$, under the same original GD update. The physical clocks obey
\begin{gather}
 \frac m2(4R/k+T_1+T_G)\le\eta N\le mT/2,\qquad
 \eta(N-N_*)\ge mT_G/2,\qquad
 \eta N_c\le(m/2)(T+T_c).
 \label{eq:full-sw-time}
\end{gather}
One may take $N_2$ to be the first subsequent hit of joint raw radius
$R_c$. If $\mathfrak R_{N_c}^2=\sum_j(A_j^2+\|W_j\|^2+B_j^2)$
at $N_c$ and $a_{\rm sig}=\gamma/(64m)$, then
\begin{equation}
 \frac m4\log\frac{R_c}{\mathfrak R_{N_c}}
 \le\eta(N_2-N_c)\le
 \eta+a_{\rm sig}^{-1}\log\frac{R_c}{\mathfrak R_{N_c}}.
 \label{eq:full-sw-motion}
\end{equation}
No angle or diagnostic persistence after $N$ is required or asserted.
\end{theorem}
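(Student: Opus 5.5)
The proof assembles four events on the original Gaussian draw and closes with Lemma~\ref{lem:common-event}. The first is an acquisition event of failure probability at most $3\delta/8$. The second is an initial-refit event with failure at most $\delta/4$, and the third an initial-projector event with failure at most $\delta/4$. The acquisition event is itself split in two. A Gaussian occupancy part uses the width condition \eqref{eq:sw-width}: each teacher axis $i$ receives a neuron whose normalized row has $u_i$-overlap in $[1/2,1]$, bias in the $p_G$ window, and other in-$U$ coordinates below $\vartheta/(4\sqrt{r-1})$, costing $\delta/16$. A concentration part uses $\ell=\log(64m/\delta)$ and \eqref{eq:sw-dimension} to control initial norms, outside-$U$ components, and head signs for all $m$ rows simultaneously.

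On the acquisition event I would run a coupled induction over $0\le n\le N_c$ in the rescaled variables $\Theta_n/s$. The invariants are three.
\begin{itemize}
\item The joint radius stays below $\sqrt{S}$ up to time $T$ and below $\sqrt{S_c}$ up to $T+T_c$, by a Gronwall bound with the $e^{4t}$ growth encoded in $S_0,S,S_c$.
\item The output $f_n$ has $L^2$ size $O(s^2 M^2/m)$. By \eqref{eq:sw-scale} it is bounded by $\nu$, so the residual $y-f_n$ differs from $y$ by at most $\nu$, and this feeds a force allowance into each update.
\item The balance identity $A_j^2-\|W_j\|^2-B_j^2\approx$ const holds up to $O(h)$, which keeps hidden rows nonzero.
\end{itemize}

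With the residual frozen at $y$ up to $\nu$, each neuron's dynamics decouple into a one-dimensional problem. I would compute the directional force using the identity $g_v\varphi=-\varphi_v'''$. Differentiating $\mathbb E[g_v(Z)e^{tZ-t^2/2}]=t^3e^{-(1-v)t^2/2}$ against the ReLU gate gives the signal $\lambda_i\int v^{-3/2}\varphi(\sqrt{B/v})\,d\mu_i$ appearing in $K_i^*$.

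The analysis then has two phases.
\begin{itemize}
\item \textbf{Entry.} During $[0,N_*]$, the overlap of the occupied neuron for axis $i$ grows at rate at least $k$ up to radius $R$. The bias then settles within $\varepsilon_{\rm BM}$ of the maximizer in \eqref{eq:app-signals} after time $T_1$.
\item \textbf{Amplification.} During $[N_*,N]$, the signal amplitude grows like $e^{3.32K_*t}$ to $v_G$. The condition $\min K_i^*\ge\frac56K_*$ makes every axis reach amplitude comparable to $v_G$. Meanwhile the total adverse signed defect stays bounded by $\Gamma v_*$, and $T_G$ is chosen so that $c_Gv_G$ dominates it.
\end{itemize}

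The loss bounds \eqref{eq:full-sw-loss} then follow from $L_n=1-2\mathbb E[yf_n]+\|f_n\|^2$ and $\|f_n\|\le s^2 S_c/m$, using the last items of \eqref{eq:sw-scale}.

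For the AGOP claims I would use the Bessel frame \eqref{eq:frame-bessel}. I compute $D=\mathbb E[(U^T\nabla f_N)\xi^T]$ neuron by neuron. The acquired neurons contribute $\gtrsim J\varphi(1)s^2v_G/(m\sqrt r)$ on the diagonal, and all other neurons contribute $O(\Gamma v_*)$. This gives $\lambda_{\min}(D)^2/2\ge a_G$. The outside-$U$ gradient energy is separately small, by the $x_0=d^{-1/2}$ control and the dimension bound. Minmax on $\mathsf G_N$ then yields \eqref{eq:full-sw-spectrum}. The comparison $P_N\preceq\mathsf G_N/\lambda_r$, restricted to $U^\perp$, gives $\operatorname{tr}(P_{U^\perp}P_N)\le\epsilon_G$, and this yields both angle bounds.

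At initialization, rotational invariance gives $\mathbb E A_{\rm mean,0}=r/d\le\delta/16$, so Markov's inequality gives $A_{\rm mean,0}\le 1/4$ with failure at most $\delta/4$. The cutoff gap is positive almost surely by simplicity of the spectrum.

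For the refit, at $n=0$ I bound $|\mathbb E[y\phi_{j,0}]|$ by a constant times the cube of the $U$-overlap. This holds because $y$ has no Hermite degree below three. A chi-square tail, through $Q_I$ and \eqref{eq:sw-dimension}, makes every overlap at most $(\chi_0/32B_1)^{1/3}$. With the $\ell^1$ cap this gives $\mathcal R(0)\ge1-\chi_0/8$. At $N$ I exhibit coefficients on the acquired biased profiles. Each normalized row is within $e_*$ of $(u_i,B_i^*)$, so the features approximate $J\sigma(u_i^TX+B^*)$. The bias-optimized projection of $q_{\mu_i}$ onto such a ridge feature captures at least $2/5$ of its energy, which gives $(\sqrt{3/5}+\chi_0/64)^2$ within both caps.

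The release phase continues the induction for $k_c$ further steps. The acquired signal keeps growing at rate $\gamma=K_*/2$ until the joint radius reaches $R_c$, which gives \eqref{eq:full-sw-motion}. At that point a full-energy estimate $\mathbb E[yf]-\tfrac12\|f\|^2\ge G_c$ forces $L_{N_2}\le1-2G_c$.

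The main obstacle is the amplification-phase control of the total signed defect in $D$. It requires showing that adversely signed heads and out-of-$U$ components cannot grow faster than the acquired neurons, uniformly over all $m$ neurons and the whole interval, while the cross-neuron coupling remains inside the $\nu$ allowance.
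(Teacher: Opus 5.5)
Your event budget and frame-to-AGOP route match the paper, but two steps do not go through as written.

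\textbf{The terminal refit witness fails.} The diagnostic $\mathcal R$ has no intercept. A single calibrated profile $\sigma_\alpha(\epsilon u_i^TX+\tau b_i)$ is dominated by its constant and linear parts, and both are orthogonal to the odd, mean-free, linear-free link $q_{\mu_i}$. Its captured fraction is therefore tiny. For $q_\mu=h_3$, $\alpha=0$ and $b^2=(\sqrt5-1)/2$ it equals $b^2\varphi(b)^2/\{6[(1+b^2)\Phi(b)+b\varphi(b)]\}<10^{-2}$, nowhere near $2/5$. Combining one feature per axis across axes does not help. The bound $E_b(y_\mu)>2/5$ is the projection onto $q_b=\operatorname{clip}(z,-b,b)-p_bz$. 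The bank reaches $q_b$ only through all four profiles $(\epsilon,\tau)\in\{-1,1\}^2$ of each axis, via the exact identity $\sum_{\epsilon,\tau}c_{\epsilon,\tau}(p)\sigma_\alpha(\epsilon z+\tau b)=\operatorname{clip}(z,-b,b)-pz$. Its coefficients satisfy $\|c\|_2^2=J^{-2}+p^2/(1+\alpha)^2$ and $\|c\|_1=2/J$; with the normalization factor $\sqrt{1+(B_j/\rho_j)^2}<1.3$, these keep the witness inside the caps $32/J$ and $64\sqrt r/J$.

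Your occupancy event is consequently too weak. You need all $4r$ categories: axis coordinate $\epsilon G_i\in[1,2]$, bias $\tau B_0\in[1/2,1]$, and head $-\epsilon\tau Z\in[1/2,1]$. The favorable head sign is part of this rectangle, not a concentration statement. This costs $\delta/4$, plus $\delta/8$ for the all-row tails, which still fits your $3\delta/8$. The entry, bias-sign persistence and calibration induction must then be run for every one of these $4r$ rows. One category per axis does suffice for the frame floor, but not for the refit.

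\textbf{The amplification bound is misstated.} Nothing keeps the signed defect $\Xi$ below $\Gamma v_*$. Rows below the marking threshold $Q<.826K_*$ can still grow at rates up to $3.31K_*$. The provable bound is $\Xi(N_*+l)\le(3S_0/4+400m)e^{3.31K_*lh}+(5000h+20\nu)TS$. This is beaten by the selected rows' $v_*e^{3.32K_*lh}$ only through a rate gap of $0.01K_*$. That gap is why $T_G=(100/K_*)\log(2\Gamma/c_G)$, and why $\min_iK_i^*\ge\frac56K_*$ is needed: it keeps selected rows at $Q>.832K_*$.

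The missing idea is how marked rows are prevented from accumulating defect, in three steps.
\begin{itemize}
\item The sharp ceiling $B\ge a\Rightarrow|Q|\le\Theta(a)K_*$ forces any row with $Q\ge.825K_*$ into $B<29/20$.
\item The upper balance-defect bound $\|\theta\|^2-q^2\le6$ then gives $q^2/\|\theta\|^2>20/21$ once $V\ge480$.
\item In that corridor, the bias-weighted quantity $F(B)|q|\sqrt{\mathcal E}$, with $\mathcal E=\|P_{U^\perp}w\|^2+\sum_i\min\{\tau u_i^Tw,0\}^2$, increases by at most $(5000h^2+20h\nu)V$ per actual step, sign crossings included.
\end{itemize}

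\textbf{The outside-$U$ estimate rests on the same quantity.} It is not small by dimension, since initially almost all weight lies outside $U$. Instead $\operatorname{tr}(P_{U^\perp}\mathsf G_NP_{U^\perp})\le s^4\Xi_N^2/m^2$, so it too needs $\Xi_N\le c_Gv_G$.

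\textbf{Release needs more than growth of the acquired row.} The trained loss involves the whole bank. You first need a suffix of $k_c$ steps in which the signal row (energy rate $2\gamma$) outgrows every still-unmarked, possibly negatively correlated row (rate at most $3\gamma/2$). This gives bank correlation $\sum_jH_j/\sum_jV_j\ge\gamma/16$. You then need a whole-bank quotient argument keeping $\mathbb E[yf]/\mathfrak R^2\ge a_{\rm sig}$ up to radius $R_c$, where the output is no longer negligible.
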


\begin{proof}
The complete proof is organized into the lemmas of
Appendices~\ref{app:kernels}--\ref{app:general-release}. We give the
join here, including the numerical constants that determine the
spectral conclusion.

\emph{Original event and actual profile acquisition.}
Appendix~\ref{app:mixture-acquisition} proves that the Gaussian category
occupancy and simultaneous row-norm event fails with probability at
most $3\delta/8$. On that event its compatible-force induction controls
all original rows through $N$, with $\sum_jV_j\le S$ and
$\|E_j\|\le\nu$. Four distinct original rows per teacher axis retain
calibrated profiles throughout $[N_*,N]$. The notation is
$(A,W,B)=s(q,w,b)$, $\rho=\|w\|$,
$V=q^2+\rho^2+b^2$, $C=\mathbb E[y_c\sigma_\alpha(w^TX+b)]$,
and $Q=qC/V$. The selected rows satisfy
\begin{equation}
 \rho>2R,\quad |q|/\rho>19/20,\quad
 1/16\le b^2/\rho^2\le3/4,\quad
 \|(w/\rho,b/\rho)-(\epsilon u_i,\tau\sqrt{B_i^*})\|\le e_*.
 \label{eq:sw-profiles}
\end{equation}
Their sign is $\operatorname{sign}(q)=-\epsilon\tau$.
Appendix~\ref{app:mixture-defect} proves, using the optimized signal
condition, that each selected $V_j\ge v_G$ at $N$. It simultaneously
controls every unselected or adversely signed row by
\begin{equation}
 \Xi_N:=\sum_j|q_j|
 \left(\|P_{U^\perp}w_j\|^2+
     \sum_i\min\{-\operatorname{sign}(q_jb_j)u_i^Tw_j,0\}^2
 \right)^{1/2}\le c_Gv_G.
 \label{eq:sw-full-defect}
\end{equation}
At $q_jb_j=0$ either orientation is allowed. This is an all-row
estimate for the actual coupled simultaneous candidates, including
all entry and exit steps.

\emph{Full current-head AGOP.}
Use $e_r=r^{-1/2}(1,\ldots,1)^T$ and the tests $\xi$, oriented rows
$x_j$, and weights $\omega_j$ of BF21--24 at $N$. They give $C_\xi=\mathbb E[\xi\xi^T]\preceq2I$ and
\begin{equation}
 D=\mathbb E[(U^T\nabla f_N)\xi^T]
   =\sum_j\omega_j(e_r^Tx_j)x_jx_j^T,
 \label{eq:sw-complete-frame}
\end{equation}
including every current head, with the zero-row conventions of BF22.
Its complete negative part is bounded by $(J\varphi(1)s^2/m)\Xi_N$.
Choose one category per axis. By \eqref{eq:sw-profiles},
$\|x_j-e_i\|\le e_*$, $|q_j|\rho_j\ge V_j/4$,
and $|z_j|\varphi(z_j)\ge\varphi(1)/4$.
Consequently $\omega_j\ge J\varphi(1)s^2v_G/(16m)$ and
\begin{equation}
 \lambda_{\min}(D)\ge\frac{J\varphi(1)s^2}{m}
 \left(\frac{v_G}{32\sqrt r}-\Xi_N\right)
 \ge\frac{J\varphi(1)s^2v_G}{64m\sqrt r}.
 \label{eq:sw-frame-floor}
\end{equation}
Here $(r^{-1/2}-e_*)(1-\sqrt r e_*)^2\ge1/(2\sqrt r)$ and
$c_G\le1/(64\sqrt r)$. Every other positive frame summand is
positive semidefinite and is retained.
The complete outside-trace estimate BF27 gives
\begin{equation}
 \operatorname{tr}(P_{U^\perp}\mathsf G_NP_{U^\perp})
 \le\beta_G:=(s^4/m^2)\Xi_N^2,
 \qquad \beta_G/a_G\le\epsilon_G/2.
 \label{eq:sw-outside-floor}
\end{equation}
Lemma~\ref{lem:frame-agop}, with $H=\nabla f_N$, $c=2$ and
$\gamma=J\varphi(1)s^2v_G/(64m\sqrt r)$, now proves
\eqref{eq:full-sw-spectrum} and the endpoint bounds in
\eqref{eq:full-sw-angle}, with a genuine positive cutoff gap.

\emph{Initial scores and the same diagnostic.}
Appendix~\ref{app:initialization} proves almost-sure simplicity and
rotational invariance, so $\mathbb E[A_{\rm mean,0}]=r/d$ and
$\Pr(A_{\rm mean,0}>1/4)\le4r/d\le\delta/4$.
The central mixture has Hermite rank at least three. Apply
Lemma~\ref{lem:initial-bank} with $k_0=3$,
$\delta_I=\delta/4$ and the unchanged $B_1=64\sqrt r/J$.
Equations~\eqref{eq:sw-dimension}--\eqref{eq:sw-residual} give
$(2Q_I/d)^{3/2}+\zeta\le\chi_0/(16B_1)$, so
$\mathcal R(0)\ge1-\chi_0/8$ outside failure probability $\delta/4$.
Appendix~\ref{app:mixture-prediction} supplies a legal terminal witness
under both original caps, retaining the full infinite Hermite tail.
Its $L^2$ error is below $\sqrt{3/5}+C_1e_*+\zeta$,
where $C_1<12\sqrt r/J$, $C_1e_*<\chi_0/128$ and
$\zeta\le\chi_0/128$. Lemma~\ref{lem:bounded-refit-comparison}
therefore proves \eqref{eq:full-sw-risk}.

\emph{Loss band and continuation.}
At $N$ a selected actual row has $Q>K_*/2$ and $V>v_*$, and
all rows are balanced with complete prefix energy at most $S$.
Apply Appendix~\ref{app:mixture-release} with
\[
 (\gamma,U,v,\nu,\overline U,M)
       =(K_*/2,S,v_*,\nu_c,S_c,N_c).
\]
Here the local $M$ in that appendix is its suffix endpoint;
$0<\gamma\le1/2$ follows from the unit central-kernel bound (MR8).
Equation~\eqref{eq:sw-release-parameters} is exactly (MR12)'s suffix
clock and force allowance, while
\eqref{eq:sw-residual}--\eqref{eq:sw-scale} supply its residual and
scale cutoffs. Moreover $h\le2^{-12}<1/1024$.
Thus (MR19)--(MR22) verify the common full-teacher entry rule
\ref{ag:averaged-release}, with $R_{\rm sig}=R_c$ and
$G_{\rm sig}=G_c$. Equations (MR23)--(MR24) give the later loss
decrease and \eqref{eq:full-sw-motion} at the unchanged full-MSE rate.
Throughout the prefix through $N_c$,
\[
 \|f_n\|_2\le\frac{s^2S_c}{2m}
            \le\min\{G_c/2,\epsilon_L/40\}.
\]
Loss expansion proves the band and baseline
\eqref{eq:full-sw-loss}; its ratio follows from
$(1+\epsilon_L/16)/(1-\epsilon_L/16)\le1+\epsilon_L/4$.
The integer ceiling definitions give \eqref{eq:full-sw-time}.
The acquisition, initial-projector and initial-refit events have
failure bounds $3\delta/8,\delta/4,\delta/4$, respectively. Every
conclusion above holds on their intersection for the same run, with
geometry and refit evaluated at the same public $N$.
Lemma~\ref{lem:common-event} gives probability at least $1-7\delta/8$.
\end{proof}

\section{The additive cubic-neighborhood extension}\label{app:proof-aj}
All parameters, the full teacher $y$ and the raw student $f_n$ are
those of AJ1--11 in Appendix~\ref{app:aj-parameters}.
Here $(A_j,W_j,B_j)=s(q_j,w_j,b_j)$,
$\theta_j=(w_j,b_j)$, $\rho_j=\|w_j\|$, and
$V_n=\sum_j(q_{j,n}^2+\|\theta_{j,n}\|^2)$ is complete normalized
joint energy. Write $P_U=UU^T$, $P_{U^\perp}=I_d-P_U$ and
$\|g\|_2^2=\mathbb E[g(X)^2]$ for $X\sim N(0,I_d)$.
Define the complete spatial-energy statistic and full student AGOP by
\[
 A_{{\rm sub},n}=\frac{\sum_j\|UU^TW_{j,n}\|^2}
                         {\sum_j\|W_{j,n}\|^2},\qquad
 \mathsf G_n=\mathbb E[\nabla_x f_n\nabla_x f_n^T].
 \tag{AJ12}
\]
At $n=0,N$ let $P_n$ be its genuine leading rank-$r$ projector,
and set
\[
 A_{\min,n}=\lambda_{\min}(U^TP_nU),\qquad
 A_{\rm mean,n}=r^{-1}\operatorname{tr}(U^TP_nU).
 \tag{AJ13}
\]
The theorem proves that these projectors are well defined; it does
not complete a deficient eigenspace arbitrarily.

\begin{theorem}[Simultaneous actual AGOP acquisition during a high-loss window]\label{app:aj-theorem}
Under AJ1--11, one event of the original initialization, of probability
at least $1-\delta$, has all of the following properties.

Throughout $0\le n\le N$, the full original loss satisfies
\[
 1-\delta_L/20\le L_n\le1+\delta_L/20,\qquad
 L_n\ge1-G_{\rm rel},\qquad
 \frac{\max_{n\le N}L_n}{\min_{n\le N}L_n}
                    \le1+\delta_L/4 .
 \tag{AJ14}
\]
At initialization $A_{{\rm sub},0}\le1/4$ and
$A_{\min,0}\le A_{\rm mean,0}\le1/4$. At the common public
endpoint,
\[
 A_{{\rm sub},N}>.9,\qquad
 A_{\min,N}\ge1-\varepsilon,\qquad
 A_{\rm mean,N}\ge1-\varepsilon/r .
 \tag{AJ15}
\]
In particular the complete spatial-energy improvement exceeds $.65$,
and the minimum principal-angle improvement is at least
$3/4-\varepsilon\ge1/2$.
With
\[
 a_* =324(1-\alpha)^2\varphi(1)^2 s^4R^4/m^2,
 \tag{AJ16}
\]
the full endpoint spectrum obeys
\[
 \lambda_r(\mathsf G_N)\ge a_*,\quad
 \lambda_{r+1}(\mathsf G_N)<1522s^4\le\varepsilon a_*,\quad
 \lambda_r-\lambda_{r+1}\ge(1-\varepsilon)a_*>0.
 \tag{AJ17}
\]

Fix any public feature multiplier $\kappa>0$. At both times compare
the identical population diagnostic class
\begin{gather}
 \psi^P_{j,n}(X)=\kappa\sigma_\alpha((P_nW_{j,n})^TX+B_{j,n}),
 \qquad B_\kappa=C_\alpha/(7\kappa s),\notag\\
 \mathcal R^P_{B_\kappa}(n)=
 \inf_{\|v\|_2\le B_\kappa}
         \left\|y-\sum_jv_j\psi^P_{j,n}\right\|_2^2.
 \tag{AJ18}
\end{gather}
Projection precedes the activation, and no projected weight is
renormalized. The same definition with no coefficient constraint is
$\mathcal R^P_\infty$. Then
\begin{gather}
 \mathcal R^P_{B_\kappa}(0),\ \mathcal R^P_\infty(0)
                                 \ge1-9/256,\notag\\
 \mathcal R^P_{B_\kappa}(N),\ \mathcal R^P_\infty(N)
                          \le(1/\sqrt3+11/128)^2,\notag\\
 \mathcal R^P_{B_\kappa}(0)-\mathcal R^P_{B_\kappa}(N),\quad
 \mathcal R^P_\infty(0)-\mathcal R^P_\infty(N)
 \ \ge\ 1-9/256-(1/\sqrt3+11/128)^2>1/2.
 \tag{AJ19}
\end{gather}
These are squared population prediction-risk comparisons, not
classification-accuracy or finite-sample guarantees. Refit is a
diagnostic and is not an update of the trained heads. Its budget
scales as $1/s$ and is identical at the two checkpoints.

The actual optimization clock satisfies
\[
 mT/2\le\eta N\le m\overline T/2.
 \tag{AJ20}
\]
There is an acquired dictionary time $J\le N$ with
$\eta J\ge(m/8)\log(1024/5)$; complete $A_{\rm sub}>.9$
persists on $J\le n\le N$. The minimum-AGOP claim is at the
public endpoint $N$; no earlier hitting-time or whole-suffix
minimum-angle assertion is made.
The unchanged GD run later reaches $L\le1-2G_{\rm rel}$,
a decrease at least $G_{\rm rel}$ from every loss in the plateau.
This later original-loss bound may be very small and is distinct
from the constant diagnostic gain in AJ19.
\end{theorem}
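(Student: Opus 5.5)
The plan mirrors the join used for Theorem~\ref{app:sw-theorem}, now for the pure cubic neighborhood AJ1--AJ11. I would split the original Gaussian draw into three failure events: an occupancy/row-norm event, an initial-projector event, and an initial projected-refit event, with failure budgets $\delta/2$, $\delta/4$ and $\delta/4$. Lemma~\ref{lem:common-event} then gives probability at least $1-\delta$.

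\emph{Occupancy and norms.} The category probability $p_C$ in AJ4 and the width bound AJ5 guarantee, with failure at most $\delta_0/2$ after a union bound over $r$ axes, at least four rows per teacher axis in each axis/bias category. These are rows whose normalized $(w,b)$ lies near $(\pm u_i,\pm b_*)$ with outside-$U$ components below $\vartheta$. A chi-square tail with $H$ controls all row norms.

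\emph{Induction over the actual simultaneous GD.} I would write $(A,W,B)=s(q,w,b)$ and run a coupled induction on $n\le N$. Along it the complete energy obeys $V_n\le M^2$, the feedback forces from $f_n$ stay below $\nu$ (using $s^2\le m\nu/M^2$ from AJ11), and each row evolves like an isolated neuron driven by the correlation $\mathbb E[y\,\sigma_\alpha(w^TX+b)]$. The cubic teacher contributes an $h_3$-force whose bias-optimized strength is the constant $K_*=t_g^2/v_g>2/3$. A selected row therefore grows like $e^{ckt}$ and reaches radius above $R$ within time $T=1+8R/k$. Meanwhile $A_{\rm sub}$ is driven above $0.9$ from the acquisition time $J$ onward, since outside-$U$ drift is small and non-selected rows stay below the $Z_*$ and $\Gamma$ budgets.

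\emph{AGOP spectrum and loss band.} I would reuse the frame--Bessel Lemma~\ref{lem:frame-agop} exactly as in the mixture proof. The selected rows give $\lambda_{\min}(U^T\mathsf G_NU)$ of order $s^4R^4/m^2$, which yields the constant $a_*$ of AJ16. The remaining rows, together with the outside-trace estimate, give $\lambda_{r+1}<1522s^4\le\varepsilon a_*$; the last inequality is where the $R$ cutoff in AJ7 is used. For the initial scores, rotational invariance gives $\mathbb E[A_{\rm mean,0}]=r/d$, and $d\ge16r/\delta$ bounds the failure probability by $\delta/4$. The same chi-square bound gives $A_{{\rm sub},0}\le1/4$. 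Since $\|f_n\|_2\le s^2M^2/(2m)$ is tiny, expanding $L_n=\|y\|^2-2\langle y,f_n\rangle+\|f_n\|^2$ yields the band AJ14.

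\emph{Projected refits.} At initialization the projected weights $P_0W_j$ lie in a random $r$-plane almost orthogonal to $U$. Degree-three Hermite overlap gives correlations of order (overlap)$^3$, and the constraint $d\ge(r+4)(128/\delta_0)^{1/3}$ makes the resulting refit gain at most $9/256$, even with unrestricted coefficients. I would bound the unrestricted case through the Gram-projection of $y$ onto the span of features depending on $P_0X$. At $N$, each projected selected row is close to $\sigma_\alpha(\rho(\pm u_i^TX)\pm b_*\rho)$. The optimal two-feature combination of the ReLU profile at bias $b_*$ captures the fraction $K_*>2/3$ of each $h_3$ component, so the residual is $\le1/3$ and its square root is $1/\sqrt3$. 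The $11/128$ slack absorbs the angle error $\xi$, the profile errors and $\nu/2$. The coefficients scale like $C_\alpha/(s\rho)$, which fits inside $B_\kappa$ because $\rho\ge R\ge16\sqrt m$.

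\emph{Later release.} I would invoke the continuation lemma of Appendix~\ref{app:mixture-release} with signal $\gamma$ of order $1/\overline T$, which yields $G_{\rm rel}$.

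The main obstacle is the all-row induction. Adverse-sign and non-selected rows must not grow faster than the selected ones, while the simultaneous head--weight coupling and the $e^{4\overline T}$ energy growth are controlled. This is what forces the extremely small $\nu$ in AJ9, and I would mirror the compatible-force induction of Appendix~\ref{app:mixture-acquisition}.
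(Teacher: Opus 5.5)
The step that fails is the inside spectral bound. You cannot read $\lambda_r(\mathsf G_N)\ge a_*$ off the selected rows: $\mathsf G_N$ is the AGOP of the whole signed bank. In any use of Lemma~\ref{lem:frame-agop} you must bound, uniformly in the envelope $M$, the frame contribution of \emph{every} other row. That includes adverse rows, and favorable rows whose heads are large but whose within-$U$ directions are spread over several axes.

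Mirroring the mixture proof does not supply this bound. Its compatible-force induction only gives energy and force envelopes plus the selected profiles. Its all-row defect control (Appendix~\ref{app:mixture-defect}: marking at $Q\ge .826K_*$, the static bias ceiling, the growth race over $T_G$) depends on $\min_iK_i^*\ge\frac56K_*$, which is exactly what the unequal-coefficient theorem drops.

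The missing ingredient is the sign-free original-coordinate anchor argument of Appendix~\ref{app:cubic-anchors}. The weighted coordinates $x_i=a_iu_i^Tw$ of every row keep their original signs and strict order through all phases (FA9--12). The pair anchor $K_{il}=|x_ix_l|/(|x_i-x_l|\Lambda)$ stays below $2K_{il,0}$ (FA15--19). Hence any row with $q_NC_N>0$ obeys $|q_N|\,\|P_{u_*^\perp}w_N\|\le Z_*$ about its original extremal axis (FA24).

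Combine this with two further facts. Every row satisfies $|q_j|\,\|P_{U^\perp}w_j\|<40$ (CW16, proved by contracting the head--outside product above radius four; this, not smallness of outside drift, is what yields $1522s^4$). Adverse rows are necessarily unmarked and have small energy. Together these give the decomposition FA28 of the $h_2(u_i^TX)$ frame, with error $(1-\alpha)\varphi(1)s^2\Gamma$ absorbed by $R^2\ge m\Gamma/18$. This argument also needs an additional original-law event: all initial weighted coordinates and pairwise gaps exceed $g_0$ (FA6--8, cost $\zeta/2$). Your three-event budget does not contain it.

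Two further steps fail under the printed prescription.

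\emph{Endpoint refit.} Bounding each projected selected feature separately costs $\|d\|_1\sqrt{\xi^2+\varepsilon}$, essentially PJ12's $2C_\alpha\sqrt{r\varepsilon}$. That needs $\varepsilon\lesssim(C_\alpha^2r)^{-1}$, but AJ7 only imposes the rank-free $\varepsilon\le(768D_g^2)^{-1}$. You need the dimension-free distortion bound PJ17 (conditional Gaussian Poincar\'e plus $\|R^{\odot k}\|\le\|R\|\le1/\mu$). This is the $1/16$ share of the $11/128$ slack that your accounting omits.

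\emph{Release.} Appendix~\ref{app:mixture-release} adds a suffix requiring $s^2\le m\nu/\overline U$ with the enlarged suffix envelope $\overline U=Ue^{4T}$ of (MR12), (MR20), and AJ11 does not provide this. The paper instead obtains the full-bank signal at $N$ itself from energy doubling, $\mathcal V_N>36R^2\ge9216m>2\mathcal V_0$, and the energy-weighted log-sum inequality NC20--23. This gives $Q_{\rm raw}(N)\ge4a_{\rm rel}$, and Corollary~\ref{ag:averaged-release} then applies directly with $U=M^2$.

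Finally, the AC13 categories do not place $(w,b)/\rho$ near $(\pm u_i,\pm b_*)$ at initialization: the $U$-overlap is of order $d^{-1/2}$ and $b^2/\rho^2\le1/16$. That profile is acquired through linear head growth $p^+\ge p+3hk/4$. Also, $K_*$ is the refit constant, not a growth rate.
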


\begin{proof}
Use the category and all-row Gaussian event of Appendix~\ref{app:cubic-maturation}
with confidence parameter $\delta_0$. Its coupon failure is at most
$\delta_0/4$ and its simultaneous row-tail failure at most $7\delta_0/96$.
The initial projected-risk bound PJ21 with $\delta_I=\delta_0/4$
costs at most another $\delta_0/4$. AJ5 gives
$q_{r,d}\le((r+4)/d)^3\le\delta_0/128$.
Call their intersection $E_{\rm base}$; its failure is at most
$\delta_0/4+7\delta_0/96+\delta_0/4=55\delta_0/96$.
The Gaussian coupon and all-row estimates, actual coupled force induction,
profile maturation and complete spatial-energy bounds use no $m<d$
assumption. Only the initial raw-span comparator is replaced, by PJ21.
The unchanged induction AC17--27 uses the public $R,T,M$ and AJ8--11,
so AJ7--11 give all its actual trajectory conclusions on $E_{\rm base}$. In particular all $4r$ original category
representatives mature at $N$, and the complete outside bound CW19
holds with trace and operator norm below $1522s^4$.

Intersect with the original weighted-coordinate gap event FA6--8.
Its additional failure is below $\zeta/2$, with no conditioning or
independence substitution. FA4 is exactly the last cutoff in AJ9 and
the last step cutoff in AJ8. Thus the full forced-anchor proof gives
FA24--28 for every original row through $N$.

Write the full signed second-Hermite gradient frame at $N$ as
\[
\begin{gathered}
\mathsf D_{ki}=\mathbb E[(u_k^T\nabla f_N)h_2(u_i^TX)]
       =\sum_j t_j v_{jk}v_{ji}^2,\\
 v_j=U^Tw_j/\rho_j,\quad
 t_j=-\frac{(1-\alpha)s^2}{\sqrt2m}
                        q_j\rho_j z_j\varphi(z_j).
\end{gathered}
 \tag{AJ21}
\]
To verify this identity directly, put $Z=w_j^TX/\rho_j$,
$z_j=b_j/\rho_j$, and $h_2(t)=(t^2-1)/\sqrt2$.
Gaussian regression gives
\[
\begin{gathered}
 \mathbb E[h_2(u_i^TX)\mid Z]=v_{ji}^2h_2(Z),\\
 \mathbb E[\sigma'_\alpha(\rho_jZ+b_j)h_2(Z)]
 =\frac{1-\alpha}{\sqrt2}\int_{-z_j}^{\infty}(t^2-1)\varphi(t)\,dt
 =-\frac{1-\alpha}{\sqrt2}z_j\varphi(z_j).
\end{gathered}
\]
The first equality follows by writing $u_i^TX=v_{ji}Z+G_\perp$
with independent centered Gaussian $G_\perp$ of variance $1-v_{ji}^2$;
the last follows from $(t\varphi(t))'=(1-t^2)\varphi(t)$.
Multiplying by the actual gradient row $(s^2/m)q_jw_j$ and summing
proves AJ21, equivalently the full frame
\eqref{eq:cubic-signed-frame}. All current trained heads and all
self/cross effects remain present.
FA28 decomposes this matrix as a nonnegative diagonal matrix plus
an error whose operator norm is at most
$(1-\alpha)\varphi(1)s^2\Gamma$. A favorable row's diagonal axis is
its original oriented weighted extremum. For each category row this
is its assigned axis: its original selected coordinate has weighted
magnitude at least $a_i/\sqrt d\ge a_0/\sqrt d$, whereas each
competitor has magnitude at most
$\vartheta/(4\sqrt{r-1}\sqrt d)<a_0/\sqrt d$.
The rank-one case has no competitors. Coordinate order is retained
by FA12, and the category's favorable phase and orientation persist
by AC. Consequently none of the chosen diagonal mass is lost in
the complete FA decomposition.

At $N$, each of the four chosen rows per axis satisfies
$|q_j|>6R$, $|q_j|\le\sqrt2\rho_j$, and
$1/2<|z_j|<1$. These are the acquired AC conclusions used in the calibrated category profiles.
Each contributes at least
$9(1-\alpha)\varphi(1)s^2R^2/m$ to its assigned diagonal. Therefore
\begin{gather}
 \lambda_{\min}(\mathsf D_+)
       \ge36(1-\alpha)\varphi(1)s^2R^2/m,\notag\\
 \sigma_{\min}(\mathsf D)
 \ge(1-\alpha)\varphi(1)s^2(36R^2/m-\Gamma)
 \ge18(1-\alpha)\varphi(1)s^2R^2/m.
 \tag{AJ22}
\end{gather}
The last inequality is the third radius condition in AJ7. Extra
favorable rows only add nonnegative diagonal mass; their errors and
every nonfavorable row have already been charged by FA28.
Thus this is a bound for the complete student, not its chosen part.

Apply Lemma~\ref{lem:frame-agop} with $H=\nabla f_N$,
$\xi_i=h_2(u_i^TX)$, $C=I_r$, $c=1$, and
$\gamma=18(1-\alpha)\varphi(1)s^2R^2/m$. AJ22 gives the complete
frame floor $\mathsf D\mathsf D^T\succeq\gamma^2I_r$, with $\gamma^2=a_*$.
CW19 bounds the complete outside trace strictly below $1522s^4$,
and the last radius condition in AJ7 gives $1522s^4\le\varepsilon a_*$.
The lemma therefore proves AJ17 and
\[
 1-A_{\min,N}
   =\|P_{U^\perp}P_NP_{U^\perp}\|\le\varepsilon,\qquad
 1-A_{\rm mean,N}
   =r^{-1}\operatorname{tr}(P_{U^\perp}P_N)
                                      \le\varepsilon/r.
 \tag{AJ23}
\]
The displayed equalities are the equal-rank principal-angle identities.

At initialization IA gives an almost surely unique positive top-$r$
projector with Haar orientation, hence
$\mathbb E[A_{\rm mean,0}]=r/d$. Markov and AJ5 give
\[
 \mathbb P(A_{\rm mean,0}>1/4)\le4r/d\le\delta/4.
 \tag{AJ24}
\]
Intersect $E_{\rm base}$, the FA gap event, and this initial-headroom
event. Lemma~\ref{lem:common-event}, with $\delta_0=\zeta=\delta/2$,
bounds the failure by
$55\delta_0/96+\zeta/2+\delta/4=151\delta/192<\delta$.
All subsequent claims use this same intersection and the same public
endpoint $N$. AC supplies the initial and final complete $A_{\rm sub}$
claims, the full loss band and the clocks.

For prediction, AJ15 and the choice of $\varepsilon$ imply
PJ22 with $\mu=1-\varepsilon$. PJ9 constructs a legal readout in
exactly AJ18 using coefficients proportional to $(s\rho_j)^{-1}$;
these cancel the learned radii when spatial projection precedes
activation. PJ14--17 bound the projection distortion by
$D_g\sqrt{\varepsilon(1+(1-\varepsilon)^{-1})}\le1/16$.
The normalized-profile and full-teacher costs are $1/64$ and $1/128$.
PJ21 and AJ5 supply the initial projected oracle lower bound on the
same event. This proves AJ19, with no changed comparator or actual
head reset. Finally, verify the common entry conditions for original-loss release.
The complete energy satisfies $V_0<5m$, whereas one acquired row
already gives $V_N>36R^2\ge9216m>2V_0$.
The endpoint force bound is included in AC's actual induction, so
NC20--23 apply to the full target and give
\[
 Q_{\rm raw}(N):=\frac{\mathbb E[yf_N]}{s^2V_N}
       \ge\frac1{16m\overline T}=4a_{\rm rel}.
\]
All original rows remain balanced, $0<V_N\le M^2$, and
$\mathcal H=\|y\|_{H^1}\le2+\nu/2<3$.
Use Corollary~\ref{ag:averaged-release} with checkpoint $N$,
complete-prefix ceiling $U=M^2$, and
$(a,R_c,G_c)=(a_{\rm rel},R_{\rm rel},G_{\rm rel})$.
Its scalar conditions follow directly from AJ8 and AJ11:
\[
 \begin{gathered}
 a_{\rm rel}\le\frac1{4m},\quad
 h\le\frac1{512}<\frac1{32(1+(1-\alpha)\mathcal H)},\\
 R_{\rm rel}^2=a_{\rm rel}m^2,\quad
 G_{\rm rel}=\frac{a_{\rm rel}R_{\rm rel}^2}{2},\quad
 s^2M^2\le\min\{R_{\rm rel}^2/4,mG_{\rm rel}\}.
 \end{gathered}
\]
Thus the unchanged full-MSE update at rate $\eta=mh/2$ reaches a
later checkpoint with $L\le1-2G_{\rm rel}$, a decrease of at least
$G_{\rm rel}$ from every prefix checkpoint. This uses the complete
trained bank and the full teacher, with no additional random event.
For the remaining loss-band bounds, AJ11 gives
$\|f_n\|_2\le s^2M^2/(2m)\le\delta_L/100$ throughout the prefix.
Expanding $L_n=1-2\langle y,f_n\rangle+\|f_n\|_2^2$ gives the
band and ratio in AJ14; the corollary supplies its baseline
$L_n\ge1-G_{\rm rel}$.

\end{proof}

\section{Gaussian kernels and compatible simultaneous updates}\label{app:kernels}

\subsection{Conventions and the normalized additive teacher}

Let $X\sim\gamma_d=N(0,I_d)$, $0\le\alpha<1$,
$\sigma(t)=\alpha t+(1-\alpha)t_+$, and $\widetilde X=(X,1)$.
Here $\|g\|_2$ is its Gaussian $L^2$ norm; $\varphi$ and $\Phi$
are the standard normal density and CDF. Derivatives of the teacher
are with respect to $X$, while $\nabla C_y$ and $D^2C_y$ differentiate
the augmented parameter $\theta$.
Write $\theta=(w,b)$ and
\[
 C_y(\theta)=\mathbb E[y(X)\sigma(w^TX+b)],\quad
 A=\|y\|_2,\quad S=\mathbb E[\|\nabla y\|^2],\quad
 H_y=(A^2+S)^{1/2}.
 \tag{HK1}
\]
The weak Gaussian Sobolev space here is the closure of polynomials in
the norm $H_y$ (equivalently, the usual weak-derivative space). All norms
of parameter derivatives below are Euclidean operator norms.

For orthonormal $u_1,\ldots,u_r$ and real $c_i$, let
$y=\sum_i c_i g_i(u_i^TX)$ with $g_i\in H^1(\gamma_1)$. Use
$h_k=\operatorname{He}_k/\sqrt{k!}$ and
$g_i=\sum_{k\ge0}\gamma_{ik}h_k$. Exactly,
\begin{align}
 \mu&=\mathbb E[y]=\sum_i c_i\gamma_{i0},&
 \ell&=\mathbb E[yX]=\sum_i c_i\gamma_{i1}u_i,\nonumber\\
 A^2&=\mu^2+\sum_i c_i^2\sum_{k\ge1}\gamma_{ik}^2,&
 S&=\sum_i c_i^2\sum_{k\ge1}k\gamma_{ik}^2.
 \tag{HK2}
\end{align}
Full population normalization means $A=1$ in these formulas. In
particular, a common link with $c_i=r^{-1/2}$ has unnormalized energy
$r\gamma_0^2+\sum_{k\ge1}\gamma_k^2$, not $\|g\|_2^2$.

An optional sharper directional size is
\[
 \Lambda_y^2=\|\mathbb E[\nabla y\nabla y^T]\|_{\rm op}\le S.
 \tag{HK3}
\]
For the additive teacher its matrix on $U=\operatorname{span}(u_i)$ is
$\operatorname{diag}\{c_i^2\operatorname{Var}(g_i')\}+\ell\ell^T$;
thus $\Lambda_y^2\le\max_i c_i^2\operatorname{Var}(g_i')+\|\ell\|^2$.
These expressions contain no extraneous factor of $r$ or $d$.

\subsection{A Gaussian Hilbert-space trace bound}

\begin{lemma}[One-dimensional weighted trace]
\label{lem:hkr-trace}
Let $f\in H^1(\gamma_1;\mathcal H)$ for a real Hilbert space
$\mathcal H$, with $a=\|f\|_{L^2(\gamma;\mathcal H)}$ and
$b=\|f'\|_{L^2(\gamma;\mathcal H)}$. Its continuous local representative
satisfies, at every real $t$,
\[
 \varphi(t)\|f(t)\|_{\mathcal H}^2
 \le a\sqrt{b^2+a^2/2}
 \le (a^2+b^2)/\sqrt2.
 \tag{HK4}
\]
Consequently, for a unit $v\in\mathbb R^d$, the canonical $L^2$ trace
of any $y\in H^1(\gamma_d)$ obeys
\[
 \|y\vert_{v^TX=t}\|_{L^2(\gamma_{v^\perp})}
 \le \varphi(t)^{-1/2} A^{1/2}
       (\|\partial_vy\|_2^2+A^2/2)^{1/4}
 \le2^{-1/4}\varphi(t)^{-1/2}H_y.
 \tag{HK5}
\]
\end{lemma}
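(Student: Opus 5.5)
The plan is to reduce the weighted pointwise bound (HK4) to the unweighted one-dimensional Sobolev trace inequality $\|F(t)\|^2\le\|F\|_{L^2(dx)}\|F'\|_{L^2(dx)}$, applied to $F=\sqrt{\varphi}\,f$. First work with $\mathcal H$-valued polynomials $f$, which are dense in $H^1(\gamma_1;\mathcal H)$. Then $F(x)=\varphi(x)^{1/2}f(x)$ decays rapidly, $\|F\|_{L^2(dx)}=a$, and $F'=\varphi^{1/2}(f'-\tfrac x2 f)$. The key computation expands
\[
 \|F'\|_{L^2(dx)}^2=b^2-\int x\varphi\langle f,f'\rangle\,dx+\tfrac14\int x^2\varphi\|f\|^2\,dx .
\]
We integrate the middle term by parts, using $\langle f,f'\rangle=\tfrac12(\|f\|^2)'$ and $(x\varphi)'=(1-x^2)\varphi$; the boundary terms vanish by Gaussian decay. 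This gives $-\int x\varphi\langle f,f'\rangle=\tfrac12a^2-\tfrac12\int x^2\varphi\|f\|^2$. Hence
\[
 \|F'\|_{L^2(dx)}^2=b^2+\tfrac{a^2}2-\tfrac14\int x^2\varphi\|f\|^2\,dx\le b^2+\tfrac{a^2}{2}.
\]

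Next, since $F\to0$ at $\pm\infty$, we have both $\|F(t)\|^2=2\int_{-\infty}^t\langle F,F'\rangle$ and $\|F(t)\|^2=-2\int_t^\infty\langle F,F'\rangle$. Averaging the two representations gives $\|F(t)\|^2\le\int_{\mathbb R}|\langle F,F'\rangle|\le a\sqrt{b^2+a^2/2}$ by Cauchy--Schwarz. This is exactly $\varphi(t)\|f(t)\|^2\le a\sqrt{b^2+a^2/2}$ for polynomials.

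To pass to general $f$, take polynomial approximants $f_n\to f$ in $H^1(\gamma_1;\mathcal H)$. The inequality just proved, applied to $f_n-f_m$, shows that $\varphi^{1/2}f_n$ is uniformly Cauchy on $\mathbb R$. The limit is therefore continuous and coincides a.e.\ with $\varphi^{1/2}f$, which identifies the continuous representative. The right-hand sides converge, so (HK4) holds at every $t$. The second inequality in (HK4) is elementary: squaring, $a^2b^2+a^4/2\le(a^2+b^2)^2/2$ is equivalent to $0\le b^4/2$.

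For (HK5), use Gaussian product structure to identify $H^1(\gamma_d)$ with $H^1(\gamma_1;\mathcal H)$ along $v$, where $\mathcal H=L^2(\gamma_{v^\perp})$ and $f(t)=y|_{v^TX=t}$. Here $f'=\partial_vy$, so $a=A$ and $b=\|\partial_vy\|_2$. Applying (HK4) and taking square roots gives the first bound in (HK5). The second bound follows from $b^2\le S$ and the square root of the second inequality in (HK4), since $a^2+b^2\le H_y^2$. I expect the main obstacle to be the density and trace step. One must check that Fubini identifies the weak $v$-derivative with the $\mathcal H$-valued derivative, and that the "canonical trace" is the continuous representative produced by the uniform limit. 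The Cauchy estimate above settles both points.
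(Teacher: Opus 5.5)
Your proposal is correct and follows the paper's proof essentially step for step: the substitution $h=\sqrt{\varphi}\,f$, the integration-by-parts identity $\|h'\|^2=b^2+a^2/2-\|tf\|_{L^2(\gamma)}^2/4$, the two-sided averaging trace bound, and a rotation with $\mathcal H=L^2(\gamma_{v^\perp})$ for (HK5). The only difference is the density step. You extend the final pointwise inequality by a uniform Cauchy argument, which also produces the continuous representative. The paper instead extends the identity for $\|h'\|^2$ using its multiplier estimate $\|tf\|_2\le b+\sqrt{b^2+a^2}$ and then applies the $H^1(\mathbb R;\mathcal H)$ trace bound; both routes are valid, and yours is slightly more economical.
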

\begin{proof}
First take smooth $f$ with sufficient decay and set
$h(t)=\sqrt{\varphi(t)}f(t)$. Gaussian integration by parts gives
\[
 \|h\|_{L^2(dt)}=a,\qquad
 \|h'\|_{L^2(dt)}^2
 =b^2+a^2/2-\|tf\|_{L^2(\gamma)}^2/4
 \le b^2+a^2/2.
\]
There is no unproved multiplication hypothesis: integration by parts
also gives $\|tf\|_2^2=a^2+2\langle tf,f'\rangle$, hence
$\|tf\|_2\le b+\sqrt{b^2+a^2}$. Density extends this multiplier estimate
and the displayed identity to $H^1(\gamma;\mathcal H)$.
For an $H^1(\mathbb R;\mathcal H)$ function, integrate the derivative
of $\|h\|^2$ on the two sides of $t$ and average the results. This gives
$\|h(t)\|^2\le\int\|h(s)\|\|h'(s)\|ds\le\|h\|_2\|h'\|_2$.
The final inequality in (HK4) follows from
$a^2(b^2+a^2/2)\le(a^2+b^2)^2/2$.
Rotate Gaussian space so its first coordinate is $v^TX$ and take
$\mathcal H=L^2(\gamma_{v^\perp})$ to obtain (HK5).
\end{proof}

\subsection{Uniform full-parameter curvature, including pure-bias rows}

\begin{lemma}[Dimension-free $H^1$ kink regularity]
\label{thm:hkr-curvature}
For every $y\in H^1(\gamma_d)$, $C_y$ is positively homogeneous of
degree one and belongs to $C^2(\mathbb R^{d+1}\setminus\{0\})$.
For all $\theta\ne0$,
\begin{gather}
 |C_y(\theta)|\le A\|\theta\|,\qquad
 \|\nabla C_y(\theta)\|\le A,\qquad
 \theta^T\nabla C_y(\theta)=C_y(\theta),\qquad
 D^2C_y(\theta)\theta=0,\tag{HK6}\\
 \|\theta\|\,\|D^2C_y(\theta)\|
 \le B_y:=4(1-\alpha)H_y.\tag{HK7}
\end{gather}
One may instead use the sharper constant
\[
 B_y^{\rm sharp}=K_0(1-\alpha)A^{1/2}
                  (\Lambda_y^2+A^2/2)^{1/4},\qquad
 K_0=\sqrt3(2\pi)^{-1/4}6^{3/2}e^{-5/4}<4.61.
 \tag{HK8}
\]
When $\rho=\|w\|>0$, $v=w/\rho$, and $z=b/\rho$, the exact Hessian is
\[
 D^2C_y(\theta)
  =(1-\alpha)\frac{\varphi(z)}\rho
    \mathbb E[y(X)\widetilde X\widetilde X^T\mid v^TX=-z],
 \tag{HK9}
\]
where the conditional expectation is the canonical trace in (HK5).
At $w=0,b\ne0$ its values are exactly
\[
 C_y(0,b)=\mu\sigma(b),\qquad
 \nabla C_y(0,b)=\sigma'(b)(\ell,\mu),\qquad
 D^2C_y(0,b)=0.
 \tag{HK10}
\]
In particular, no uniform lower bound on spatial radii, no bound on
$|b|/\|w\|$, and no exclusion of aligned teacher directions is needed.
\end{lemma}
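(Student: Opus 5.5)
We first treat the case $\rho=\|w\|>0$, using the Gaussian disintegration along $v=w/\rho$. Write $X=vZ+X_\perp$ with $Z=v^TX\sim N(0,1)$ independent of $X_\perp$. Then
\[
 C_y(\theta)=\int\sigma(\rho t+b)\,m(t)\varphi(t)\,dt,
 \qquad m(t)=\mathbb E[y\mid Z=t],
\]
and positive homogeneity of degree one is immediate from that of $\sigma$. For the first-order bounds in (HK6), note that $C_y(\theta)=\langle y,\sigma(\theta^T\widetilde X)\rangle$. Since $\|\sigma(\theta^T\widetilde X)\|_2\le\|\theta\|$ (use $|\sigma(t)|\le|t|$ and $\mathbb E[(w^TX+b)^2]=\|\theta\|^2$), Cauchy--Schwarz gives $|C_y|\le A\|\theta\|$. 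Differentiating under the expectation gives $\nabla C_y=\mathbb E[y\sigma'(\theta^T\widetilde X)\widetilde X]$. Pairing it with a unit $e$ yields $\mathbb E[y\sigma'(\cdot)e^T\widetilde X]$, and since $|\sigma'|\le1$ and $\|e^T\widetilde X\|_2=1$, this is at most $A$. Euler's identity and $D^2C_y\theta=0$ follow by differentiating the homogeneity relation $C_y(\lambda\theta)=\lambda C_y(\theta)$ once and twice in $\lambda$.

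For the Hessian formula (HK9), we would differentiate $\nabla C_y$ once more. The only nonsmooth part is $\sigma'(t)=\alpha+(1-\alpha)\mathbf 1_{t>0}$, whose distributional derivative is $(1-\alpha)\delta_0$. This gives
\[
 D^2C_y=(1-\alpha)\,\mathbb E\bigl[y\,\delta(w^TX+b)\,\widetilde X\widetilde X^T\bigr]
 =(1-\alpha)\frac{\varphi(z)}{\rho}\,\mathbb E\bigl[y\widetilde X\widetilde X^T\bigm| Z=-z\bigr].
\]
To make this rigorous, we would first prove it for polynomial (or smooth, rapidly decaying) $y$ by direct computation on the hyperplane. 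We would then pass to general $y\in H^1$ by density, using the trace Lemma~\ref{lem:hkr-trace}. That lemma makes the conditional expectation on the hyperplane $\{v^TX=-z\}$ continuous in $y$, locally uniformly in $\theta$. The same continuity gives $C^2$ regularity away from $\theta=0$, once we also check continuity on the pure-bias set $w=0$.

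The pure-bias values (HK10) are elementary. At $w=0$ we have $C_y=\mathbb E[y]\sigma(b)$ and $\nabla C_y=\sigma'(b)\mathbb E[y\widetilde X]=\sigma'(b)(\ell,\mu)$. The vanishing of the Hessian at $(0,b)$, $b\ne0$, must be obtained as a limit of (HK9). As $\rho\to0$ with $b$ fixed, $z=b/\rho\to\pm\infty$, and the factor $\varphi(z)/\rho$ decays like a Gaussian in $1/\rho$. The trace bound (HK5) lets the conditional moment grow at most like $\varphi(z)^{-1/2}$ times polynomial factors in $z$, so the product still tends to zero. This also gives continuity of $D^2C_y$ there.

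The main obstacle is the uniform curvature bound (HK7). Using $\|\theta\|^2=\rho^2(1+z^2)$, for unit $e=(e_w,e_b)$ we need to bound
\[
 \|\theta\|\,\bigl|e^TD^2C_ye\bigr|\le(1-\alpha)\sqrt{1+z^2}\,\varphi(z)\,\bigl|\mathbb E[y(e_w^TX+e_b)^2\mid Z=-z]\bigr|.
\]
Decompose $e_w^TX=cZ+G_\perp$, with $G_\perp$ independent of $Z$. On the slice, $e^T\widetilde X=-cz+e_b+G_\perp$. We apply Cauchy--Schwarz in the orthogonal Gaussian together with a fourth-moment control of $G_\perp$. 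This controls the slice contribution by the $L^2(\gamma_{v^\perp})$ trace norm of $y$, multiplied by a polynomial of degree two in $z$ (coming from the terms $(-cz+e_b)^2$ and $\mathbb E[G_\perp^4]^{1/2}$). Inserting (HK5) gives a bound of the form $(1-\alpha)\sqrt{1+z^2}\,(1+z^2)\varphi(z)^{1/2}\cdot2^{-1/4}H_y$ up to constants. The sup over $z$ of $(1+z^2)^{3/2}\varphi(z)^{1/2}$ is finite. Optimizing gives the explicit $K_0$ in (HK8), where the maximum near $z^2=5$ produces the factors $6^{3/2}e^{-5/4}$ and the first inequality in (HK4) is used with $\Lambda_y$. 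Bounding $K_0/2^{1/4}$ and the second inequality in (HK4) by $4$ then yields $B_y$. The delicate point is that the cross terms between $z$ and $G_\perp$ must not lose the $\varphi^{1/2}$ decay, so the estimates have to be organized around that factor.
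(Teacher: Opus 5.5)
Your proposal is correct and follows the paper's proof essentially step for step. You use Cauchy--Schwarz for the first-order bounds in (HK6), prove (HK9) first for polynomials and then extend it by density through the trace bound (HK5), and obtain (HK7)--(HK8) from Cauchy--Schwarz on the slice, (HK5), and the maximum of $(1+z^2)^{3/2}e^{-z^2/4}$ at $z^2=5$.

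Two small refinements are needed. To reach exactly $K_0$ (and hence $2^{-1/4}K_0<4$), apply Cauchy--Schwarz to $(e^T\widetilde X)^2$ as a whole, using $m^4+6m^2s^2+3s^4\le3(m^2+s^2)^2$ and $m^2+s^2\le1+z^2$ for the slice mean $m=e_b-cz$ and variance $s^2$; a termwise bound on $m^2+2mG_\perp+G_\perp^2$ does not reach these constants. At pure-bias points, the paper works at the polynomial level and transfers everything by uniform $C^2$ convergence on compacts of $\theta\ne0$. Your direct limit $D^2C_y\to0$ additionally needs a short segment argument, using continuity of $\nabla C_y$, to show that $D^2C_y(0,b)$ exists and vanishes.
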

\begin{proof}
For polynomial $y$, differentiate the Gaussian half-space integral to
obtain (HK9). For any unit augmented parameter vector $a$, conditional
on $v^TX=-z$, the scalar $a^T\widetilde X$ is Gaussian with mean $m$
and variance $s^2$ satisfying $m^2+s^2\le1+z^2$. Consequently
\[
 \mathbb E[(a^T\widetilde X)^4\mid v^TX=-z]
 =m^4+6m^2s^2+3s^4\le3(1+z^2)^2.
\]
Testing the symmetric matrix (HK9) on $a$, using (HK5), and noting
$\|\theta\|/\rho=\sqrt{1+z^2}$, gives
\[
 \|\theta\|\,\|D^2C_y(\theta)\|
 \le (1-\alpha)\sqrt3(1+z^2)^{3/2}\sqrt{\varphi(z)}
       A^{1/2}(\|\partial_vy\|_2^2+A^2/2)^{1/4}.
 \tag{HK11}
\]
The maximum of $(1+z^2)^{3/2}e^{-z^2/4}$ occurs at $z^2=5$.
This proves (HK8); (HK4) gives the simpler constant
$2^{-1/4}K_0<3.88<4$ in (HK7).

For polynomial $y$, the same formula and its derivatives have a
polynomial factor in $z$ times $\varphi(z)/\rho$. Thus they extend at
$w=0,b\ne0$ with zero Hessian and the affine values in (HK10).
Now approximate arbitrary $y$ by its multivariate Hermite polynomials
$y_N\to y$ in $H^1$. Cauchy--Schwarz and $|\sigma(t)|\le|t|$,
$|\sigma'|\le1$ give, for any such difference $e$,
\[
 |C_e(\theta)|\le\|e\|_2\|\theta\|,\qquad
 \|\nabla C_e(\theta)\|\le\|e\|_2.
 \tag{HK12}
\]
Indeed $\mathbb E[(a^T\widetilde X)^2]=1$ for unit $a$. Together with
(HK7), applied to polynomial differences, these estimates make
$C_{y_N}$, their gradients, and their Hessians uniformly Cauchy on
each compact subset of $\theta\ne0$. Their limit is $C_y$ and is
$C^2$ there, with all the asserted bounds and values. This also proves
continuity through changes of hyperplane direction and through the
pure-bias rows; pointwise weak derivatives of the original link never
need to be evaluated. At each fixed hyperplane, (HK5) makes the traces
converge in $L^2$, proving (HK9) for $y$. Homogeneity gives the two
Euler identities in (HK6).
\end{proof}

The linear dependence on a general unnormalized $H_y$ is natural by
scaling. With $A=1$, (HK8) improves the large-derivative dependence to
$O(\Lambda_y^{1/2})$. This power cannot be decreased uniformly:
take a nonnegative smooth bump $g_\epsilon(t)$ supported on
$[-\epsilon,\epsilon]$, of value $\asymp\epsilon^{-1/2}$ at zero,
and normalize its Gaussian $L^2$ norm to one. Then
$\|g_\epsilon'\|_2\asymp\epsilon^{-1}$, whereas at $\theta=(u,0)$
the bias-bias Hessian is
$(1-\alpha)\varphi(0)g_\epsilon(0)\asymp\epsilon^{-1/2}$.
In particular no bound depending only on $\|y\|_2=1$ is possible.

\subsection{Full uncentered Hermite kernel and meaning of differentiation}

For $Z\sim\gamma_1$ define $\eta_k(z)=\mathbb E[h_k(Z)\sigma(Z+z)]$.
Gaussian integration by parts, in the distributional sense for the
kink, gives
\begin{align}
 \eta_0(z)&=\alpha z+(1-\alpha)\{\varphi(z)+z\Phi(z)\},\nonumber\\
 \eta_1(z)&=\alpha+(1-\alpha)\Phi(z),\nonumber\\
 \eta_k(z)&=(1-\alpha)\varphi(z)
       \frac{h_{k-2}(-z)}{\sqrt{k(k-1)}}\quad(k\ge2),\tag{HK13}\\
 \eta_0'&=\eta_1,\qquad
 \eta_k'=(1-\alpha)\varphi(z)h_{k-1}(-z)/\sqrt k\quad(k\ge1),
 \nonumber\\
 \eta_k''&=(1-\alpha)\varphi(z)h_k(-z)\quad(k\ge0).
 \nonumber
\end{align}
For $a_i=u_i^Tv$ the exact full correlation is
\[
 C_y(w,b)=\rho F(v,z),\quad
 F(v,z)=\mu\eta_0(z)+\eta_1(z)\ell^Tv
       +\sum_i\sum_{k\ge2}c_i\gamma_{ik}a_i^k\eta_k(z).
 \tag{HK14}
\]
The series is absolutely convergent at every $(v,z)$, including
$a_i=\pm1$, by Cauchy--Schwarz and Parseval for
$\sigma(Z+z)$. It is the original correlation, not a truncated or
centered replacement. Finite Hermite truncations of (HK14), composed
with the actual $\theta$ coordinates, converge together with all
parameter derivatives through order two uniformly on compact subsets
of $\theta\ne0$, by Theorem~\ref{thm:hkr-curvature}. This is a rigorous
termwise-differentiation meaning even at aligned directions.

For first derivatives one may also use the ambient expression
\[
 J(v,z)=\mathbb E[\nabla y(X)\sigma'(v^TX+z)]
   =\sum_i c_i u_i\sum_{k\ge1}k\gamma_{ik}a_i^{k-1}\eta_k(z),
 \qquad D(v,z)=v^TJ(v,z).
 \tag{HK15}
\]
Here $\|J\|\le\sqrt S$ and $|D|\le\|\partial_v y\|_2$.
The $i$th scalar sum is the covariance derivative
of $\mathbb E[g_i(T)\sigma(Z+z)]$, where
$\operatorname{Corr}(T,Z)=a_i$. In the open interval this derivative
is $\mathbb E[g_i'(T)\sigma'(Z+z)]$ by Gaussian integration by parts;
its magnitude is at most $\|g_i'\|_2$. Approximation in $H^1$ extends
this formula and uniform convergence to $a_i=\pm1$.

Weak Gaussian integration by parts also gives the exact formulas
\[
 H(v,z)=(1-\alpha)\varphi(z)\mathbb E[y\mid v^TX=-z],\quad
 \nabla_w C_y=J+Hv,\quad
 \partial_b C_y=F_z=\mathbb E[y\sigma'(v^TX+z)].
 \tag{HK16}
\]
They can alternatively be proved first for polynomials and then by
(HK5), (HK12), and $H^1$ convergence. In spherical coordinates,
\[
 \nabla_w C_y=v(F-zF_z)+(I-vv^T)J,
 \qquad F-zF_z=D+H.
 \tag{HK17}
\]
No claim is made that the artificial off-sphere partial derivatives
$\partial_{a_i}^2F$ or $\partial_{a_i}\partial_zF$ exist at
$|a_i|=1$ under $H^1$ alone. Their apparent singularities can cancel
in the actual $\theta$ derivatives. For example, for a compactly
supported link agreeing near zero with $|t|^\beta$, $1/2<\beta<1$,
the link is $H^1$ but $g'$ has no finite trace at zero; the aligned
mixed covariance/shift derivative asks for that nonexistent trace.
The uniform actual-parameter conclusion above does not make that
extra assertion.

For completeness an individual additive summand has an explicit trace
without second derivatives of $g_i$. Put $t=-z$, $a=u_i^Tv$,
$s=\sqrt{1-a^2}$, $e=(u_i-av)/s$ when $s>0$, and
$\xi=(tv,1)$, $\bar e=(e,0)$, $\bar P=\operatorname{diag}(I-vv^T,0)$.
With an independent standard normal $G$, define
\[
 M_0=\mathbb E[g_i(at+sG)],\quad
 M_1=\mathbb E[Gg_i(at+sG)],\quad
 M_2=\mathbb E[(G^2-1)g_i(at+sG)].
\]
Then its conditional matrix in (HK9) equals
\[
 M_0(\bar P+\xi\xi^T)
 +M_1(\bar e\xi^T+\xi\bar e^T)+M_2\bar e\bar e^T.
 \tag{HK18}
\]
For $s>0$, weak integration by parts also gives
$M_1=s\mathbb E[g_i'(at+sG)]$ and
$M_2=s\mathbb E[Gg_i'(at+sG)]$.
At $s=0$ the matrix is
$g_i(at)(\bar P+\xi\xi^T)$ using the continuous one-dimensional
$H^1$ representative. Summing (HK18) with coefficients $c_i$ gives
the full additive trace, including means and all chaos orders.

\subsection{Interfaces for coupled Euclidean population GD}

Use actual raw coordinates $x=(q_j,\theta_j)_{j=1}^m$ and the fixed
normalization
\[
 f_x=\kappa\sum_{j=1}^m q_j\sigma(\theta_j^T\widetilde X),\quad
 \ell(x)=\tfrac12\mathbb E[(y-f_x)^2],\quad
 x^+=x-\eta\nabla\ell(x).
 \tag{HK19}
\]
The full MSE is $L=2\ell$ and its corresponding raw rate is $\eta/2$.
No coordinates or row interactions are removed. Let
$R=\|x\|$, $F_x=\mathbb E[yf_x]$, $V_x=\|f_x\|_2^2$, $Q=F_x/R^2$.
\begin{lemma}[Raw coupled-bank regularity interface]
\label{lem:hkr-coupled}
Suppose $A=1$ and $R>0$. Delete any joint rows
$(q_j,\theta_j)=(0,0)$ and differentiate in the remaining raw
coordinates, where every $\theta_j\ne0$. Then
\begin{gather}
 |F_x|\le\kappa R^2/2,\quad \|\nabla F_x\|\le\kappa R,
 \quad\|f_x\|_2\le\kappa R^2/2,\quad
 V_x\le\kappa^2R^4/4,\quad\|\nabla V_x\|\le\kappa^2R^3,
 \nonumber\\ \|\nabla Q\|\le\kappa/R.\tag{HK20}
\end{gather}
At points with $|q_j|\le\beta\|\theta_j\|$ for every retained row,
where $\beta\ge0$,
\[
 \|D^2F_x\|\le\kappa(1+\beta B_y),\qquad
 \|D^2Q\|\le\kappa(8+\beta B_y)/R^2.
 \tag{HK21}
\]
These Hessians act on the full retained raw coordinates, including
head, spatial and bias variables.
\end{lemma}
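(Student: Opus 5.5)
The plan is to reduce every bound to the single-row kernel $C_y$ and Lemma~\ref{thm:hkr-curvature}, using the exact row decomposition
\[
F_x=\kappa\sum_j q_jC_y(\theta_j),\qquad f_x=\kappa\sum_j q_j\sigma(\theta_j^T\widetilde X).
\]
Deleting the zero joint rows changes neither $f_x$ nor $R$. After the deletion every retained $\theta_j\ne0$, because a row with $\theta_j=0$ contributes nothing to $f_x$. Hence $F_x$ is $C^2$ in the retained raw coordinates by Lemma~\ref{thm:hkr-curvature}, and $V_x$ is $C^1$.

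\emph{First-order bounds.} Since $|\sigma(t)|\le|t|$ and $\mathbb E[(\theta^T\widetilde X)^2]=\|\theta\|^2$, we have $\|\sigma(\theta_j^T\widetilde X)\|_2\le\|\theta_j\|$. Then AM--GM gives
\[
\|f_x\|_2\le\kappa\sum_j|q_j|\|\theta_j\|\le\kappa R^2/2 .
\]
Cauchy--Schwarz with $A=1$ gives $|F_x|\le\kappa R^2/2$, and squaring gives $V_x\le\kappa^2R^4/4$.

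The row partials of $F_x$ are $\kappa C_y(\theta_j)$ and $\kappa q_j\nabla C_y(\theta_j)$. By (HK6) their norms are at most $\kappa\|\theta_j\|$ and $\kappa|q_j|$, so $\|\nabla F_x\|\le\kappa R$.

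For $V_x$, fix a unit raw direction $u=(a_j,b_j)_j$. The directional derivative of $f_x$ is $Df[u]=\kappa\sum_j\{a_j\sigma(\theta_j^T\widetilde X)+q_j\sigma'(\theta_j^T\widetilde X)b_j^T\widetilde X\}$. Using $|\sigma'|\le1$ and Cauchy--Schwarz in $j$, this has $L^2$ norm at most $\kappa R$. Therefore
\[
|DV[u]|=2|\mathbb E[f_xDf[u]]|\le2\cdot\tfrac{\kappa R^2}{2}\cdot\kappa R=\kappa^2R^3 .
\]

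For $Q=F_x/R^2$, the key observation is that $F_x$ is positively homogeneous of degree two in $x$. Indeed, $q_j\mapsto tq_j$ and $\theta_j\mapsto t\theta_j$ scale each term by $t^2$, so Euler's identity gives $x^T\nabla F_x=2F_x$. Hence
\[
\nabla Q=R^{-2}\bigl(\nabla F_x-(x^T\nabla F_x)x/R^2\bigr)
\]
is $R^{-2}$ times the orthogonal projection of $\nabla F_x$ off $x$. This yields $\|\nabla Q\|\le\kappa/R$ without losing the factor $2$ that a crude triangle inequality would give. I expect this to be the only genuinely non-routine point.

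\emph{Second-order bounds.} The Hessian of $F_x$ is block diagonal across rows, since rows interact only linearly in $F_x$. Each row block has the form
\[
\begin{pmatrix}0&\kappa\nabla C_y(\theta_j)^T\\ \kappa\nabla C_y(\theta_j)&\kappa q_jD^2C_y(\theta_j)\end{pmatrix}.
\]
Its operator norm is at most the off-diagonal norm plus the diagonal norm, which is at most $\kappa+\kappa|q_j|B_y/\|\theta_j\|$ by (HK6)--(HK7). Under $|q_j|\le\beta\|\theta_j\|$ this is at most $\kappa(1+\beta B_y)$, and the block-diagonal structure makes this the full operator norm.

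Finally, expand $D^2(F_xR^{-2})$ by the product rule, using $\nabla R^{-2}=-2x/R^4$ with norm $2/R^3$ and $D^2R^{-2}=-2I/R^4+8xx^T/R^6$ with norm at most $6/R^4$. This gives
\[
\|D^2Q\|\le\frac{\kappa(1+\beta B_y)}{R^2}+2\cdot\kappa R\cdot\frac{2}{R^3}+\frac{\kappa R^2}{2}\cdot\frac{6}{R^4}=\frac{\kappa(8+\beta B_y)}{R^2},
\]
which is (HK21). The only care needed beyond bookkeeping is that all these derivatives exist at pure-bias rows and at aligned directions. This is supplied by the $C^2$ conclusion and (HK10) of Lemma~\ref{thm:hkr-curvature}, so no lower bound on spatial radii is required.
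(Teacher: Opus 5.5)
Your proof is correct and follows the paper's argument essentially step for step. The first five bounds come from the row-wise kernel estimates together with $\sum_j|q_j|\|\theta_j\|\le R^2/2$ and $\|Df_x\|_{\rm op}\le\kappa R$. The Euler identity $x^T\nabla F_x=2F_x$ makes $R^2\nabla Q$ the projection of $\nabla F_x$ off $x$. The Hessian of $F_x$ is block diagonal across rows, and the product-rule costs for $D^2Q$ are $4\kappa/R^2$ and $3\kappa/R^2$.

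One small correction: for the retained rows, $\theta_j\ne0$ is a standing hypothesis of the lemma, not a consequence of deleting the zero joint rows. A row with $q_j\ne0$ and $\theta_j=0$ still contributes $q_j^2$ to $R^2$, and it is exactly where $C_y$, and hence $F_x$, fails to be differentiable. So your justification that such a row contributes nothing to $f_x$ does not establish the claim; cite the assumption instead.
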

\begin{proof}
Theorem~\ref{thm:hkr-curvature},
$\sum_j|q_j|\|\theta_j\|\le R^2/2$, and
$\|Df_x\|_{\rm op}\le\kappa R$ give the first five bounds.
Euler's identity $x^T\nabla F_x=2F_x$ gives
$R^4\|\nabla Q\|^2=\|\nabla F_x\|^2-4F_x^2/R^2$.
Each Hessian block of $F_x$ is
$\kappa\left(\begin{smallmatrix}0&\nabla C_y^T\\
 \nabla C_y&q_jD^2C_y\end{smallmatrix}\right)$,
of norm at most $\kappa(1+\beta B_y)$.
In differentiating $F_xR^{-2}$, the mixed product term costs at most
$4\kappa/R^2$ and $F_xD^2(R^{-2})$ at most $3\kappa/R^2$.
This proves (HK21).
\end{proof}
In particular, the $\beta=2$ interpolation segments have quotient
constant $8+2B_y$. Lemma~\ref{ag:release} applies this interface;
its entry conditions and same-step continuation proof appear in
Appendix~\ref{app:general-release}.

The whole student Gram also has sufficient regularity. If
$G(\theta,\psi)=\mathbb E[\sigma(\theta^T\widetilde X)
\sigma(\psi^T\widetilde X)]$, then away from zero rows
\[
 \|D_\theta^2G\|\le4\sqrt2(1-\alpha)\|\psi\|/\|\theta\|,
 \qquad \|D_\theta D_\psi G\|\le1.
 \tag{HK22}
\]
The first bound uses (HK7) with the Sobolev teacher
$\sigma(\psi^T\widetilde X)$, whose $H^1$ norm is at most
$\sqrt2\|\psi\|$. The second uses the exact mixed matrix
$\mathbb E[\sigma'(\theta^T\widetilde X)
\sigma'(\psi^T\widetilde X)\widetilde X\widetilde X^T]$ and
Cauchy--Schwarz on unit directions. These derivatives are jointly
continuous, including coincident nonzero hyperplanes: for the pure
second derivative use (HK7) and continuity of the feature in $H^1$;
for the mixed derivative use almost-everywhere convergence and
Gaussian domination.

For example, on the same balanced region,
\[
 \|D^2V_x\|\le
   \{3+4\sqrt2\beta(1-\alpha)\}\kappa^2R^2.
 \tag{HK23}
\]
To verify this, write the second variation as
$2\|Df_x[h]\|_2^2+2\langle f_x,D^2f_x[h,h]\rangle$ using the Gram
derivatives. The first term is at most $2\kappa^2R^2\|h\|^2$.
The head/hidden cross term is at most
$2\kappa\|f_x\|_2\|h\|^2\le\kappa^2R^2\|h\|^2$.
The remaining hidden Hessians are at most
$2\kappa\beta\{4(1-\alpha)\|f_x\|_{H^1}\}\|h\|^2$;
here $\|f_x\|_{H^1}\le\kappa R^2/\sqrt2$.
This proves (HK23) without declaring the feature itself twice
differentiable as an $L^2$-valued map.

Rows with $(q_j,\theta_j)=(0,0)$ stay zero under the conventional
actual update and may be omitted. No $C^2$ claim at $\theta=0$ with
nonzero head is made. The usual discrete row balance
$\|\theta_j\|^2-q_j^2\ge0$, when transported by the actual algorithm,
excludes that singular case; Gaussian initialization gives nonzero
hidden rows almost surely. Neither regularity nor a positive reached
$Q$ implies acquisition of all teacher directions.

\section{Original Gaussian initialization and prediction baselines}\label{app:initialization}

\subsection{Population model and genuine initial eigendirections}
Function norms and gate expectations use the displayed Gaussian input;
expectations of initial alignment statistics use the parameter draw.
Write $P_U=UU^T$ for the teacher-space projector when $U$ is an
orthonormal axis matrix, and $h_k=\operatorname{He}_k/\sqrt{k!}$ for
normalized probabilists' Hermite polynomials.
Let $d\ge2$, $m\ge1$, $0\le\alpha<1$, $J=1-\alpha$, and
\[
 f(X)=\frac1m\sum_{j=1}^m a_j\sigma_\alpha(w_j^TX+b_j),
 \qquad \sigma_\alpha(t)=\alpha t+Jt_+,
 \qquad X\sim N(0,I_d).
 \tag{IA1}
\]
Use spatial rows $W\in\mathbb R^{m\times d}$ and define
\[
 \begin{gathered}
 d_j(X)=\sigma'_\alpha(w_j^TX+b_j),\quad
 K_{jl}=\mathbb E[d_jd_l],\quad D_a=\operatorname{diag}(a_j),\\
 G_s=\mathbb E[\nabla_Xf\nabla_Xf^T]
       =\frac1{m^2}W^TD_aKD_aW.
 \end{gathered}
 \tag{IA2}
\]
The initialization has IID spatial rows $w_j\sim N(0,s_w^2I_d)$
with one common $s_w>0$, independent Gaussian biases, and independent
nondegenerate Gaussian heads. The positive spatial, bias and head
block scales may differ; the spatial scale is common to all rows. All heads, biases and rows are those of the original draw;
no teacher-based screening or initialization rejection is used.

\begin{lemma}[Initial population spectral rank and simplicity]
With probability one, $G_s(0)$ has rank $p=\min(m,d)$ and its
$p$ positive eigenvalues are pairwise distinct. Consequently, for
any $1\le q\le p$ with $q<d$, its leading rank-$q$ projector
$\widehat P_{q,0}$ is uniquely defined and has a positive boundary
gap. This is qualitative almost-sure nondegeneracy, not a lower
bound on the size of that gap.
\end{lemma}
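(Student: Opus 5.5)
The plan is to factor the AGOP as a product whose middle factor is almost surely positive definite. Rank then comes from linear algebra, and simplicity of the positive spectrum comes from a real-analytic genericity argument. Write $M=D_aKD_a$, so that $G_s(0)=m^{-2}W^TMW$.

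\emph{Step 1: $K$ is positive definite.} The gates are $d_j=\alpha+J\,\mathbf 1\{w_j^TX+b_j>0\}$. With probability one, all rows $w_j$ are nonzero and the $m$ affine hyperplanes $\{w_j^Tx+b_j=0\}$ are pairwise distinct and pairwise non-parallel; this holds because the Gaussian law is absolutely continuous and the exceptional sets are null. Suppose $\sum_jc_jd_j=0$ in $L^2(\gamma_d)$. Near a generic point of the $j$-th hyperplane, only $d_j$ jumps, and its jump has size $J\ne0$. Hence $c_j=0$ for every $j$. (When $\alpha>0$ the constant offset $\alpha\sum_jc_j$ causes no problem, since the jumps already kill every coefficient.) So the $d_j$ are linearly independent, and their Gram matrix $K$ is positive definite.

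\emph{Step 2: rank.} The heads are nondegenerate Gaussians, so $D_a$ is almost surely invertible, and therefore $M\succ0$. It follows that $\operatorname{rank}(W^TMW)=\operatorname{rank}(M^{1/2}W)=\operatorname{rank}W$. Gaussian rows give $\operatorname{rank}W=\min(m,d)=p$ almost surely.

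\emph{Step 3: simplicity by analyticity.} The positive eigenvalues of $W^TMW$ coincide with those of the $m\times m$ matrix $M^{1/2}WW^TM^{1/2}$, or equivalently with the positive roots of a polynomial. Write $\det(tI_d-G_s)=t^{d-p}\,\mathfrak q(t)$, and consider the discriminant $\Delta$ of the degree-$p$ polynomial $\mathfrak q$. On the open set $\Omega=\{w_j\neq0\ \forall j\}$, each entry $K_{jl}$ is a real-analytic function of $(w,b)$: it equals $\alpha^2+\alpha J(\Phi(z_j)+\Phi(z_l))+J^2\Phi_2(z_j,z_l;\rho_{jl})$, where $z_j=b_j/\|w_j\|$. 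The diagonal case $j=l$ is simply $\Phi(z_j)$-based, and the bivariate normal CDF is analytic in the correlation for $|\rho_{jl}|<1$. The coincident or antiparallel configurations $|\rho_{jl}|=1$ form a null set, so I would work on the open full-measure subset where $|\rho_{jl}|<1$ for all $j\ne l$. Hence $\Delta$ is real-analytic in all parameters on an open set. For $d\ge2$ this set is connected, since we remove sets of codimension at least one. (Here I expect to need care: the set $|\rho|=1$ has codimension $d-1$, which may be $1$ when $d=2$. If connectivity fails, I would apply the argument on each component, or argue via analyticity along lines instead.) A nontrivial real-analytic function vanishes only on a Lebesgue-null set. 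The initialization law is absolutely continuous, so it remains to show that $\Delta\not\equiv0$.

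\emph{Main obstacle: the witness.} On each connected component I must exhibit a single parameter point where $\Delta\ne0$. The first case is $m\le d$. I would take the rows $w_j$ orthonormal, set $b_j=0$ (perturbing slightly to stay in the component), and choose heads $a_j=t^j$ with $t$ small. Then $M=D_aKD_a$ has diagonal entries of order $t^{2j}K_{jj}$ with strictly separated scales, and a Gershgorin or scale-separation perturbation argument gives $p$ distinct positive eigenvalues. The second case is $m>d$. I would make the first $d$ rows as above and give the remaining rows heads of size $t^{N}$ with $N$ large. The spectrum is then a small perturbation of the $d$-row witness and therefore remains simple. Since this is an open condition, it persists on a neighborhood, and analyticity propagates $\Delta\not\equiv0$ across the component.

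\emph{Consequence.} Almost surely $\lambda_1>\dots>\lambda_p>0=\lambda_{p+1}$, where the last equality applies when $p<d$. For $q\le p$ with $q<d$, the strict inequality $\lambda_q>\lambda_{q+1}$ holds: it follows from simplicity when $q<p$, and from positivity when $q=p<d$. Hence $\widehat P_{q,0}$ is uniquely defined and has a positive cutoff gap.
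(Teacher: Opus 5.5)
Your Steps 1--2 (the jump argument for $K\succ0$, then $\operatorname{rank}G_s(0)=\operatorname{rank}W=p$) match the paper. The divergence, and a genuine gap, is in Step 3.

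You treat the discriminant as a real-analytic function of all parameters $(w,b,a)$ jointly. That forces you to delete the parallel set $|\rho_{jl}|=1$, where the orthant probability is genuinely non-analytic, and then to appeal to connectivity. The claim that the resulting set is connected for $d\ge2$ ``since we remove sets of codimension at least one'' is false. For $d=2$ and $m\ge2$, $\{\det[w_j,w_l]=0\}$ is a hypersurface, and the domain splits into several components of positive Gaussian mass (for instance by the signs of $\det[w_j,w_l]$). Nonvanishing of $\Delta$ on one component says nothing about another. Your witness (orthonormal rows, $b=0$) lies in a single component, so ``apply the argument on each component'' is left unproved. The case $d=2$ is within the lemma's scope (IA1 assumes only $d\ge2$).

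The paper sidesteps all of this by conditioning on the realized hidden bank $(W,b)$. Then $K$ and $W$ are fixed, and the restriction of $G_s$ to the $p$-dimensional row span is a quadratic polynomial in the heads alone. Its discriminant is therefore a polynomial on all of $\mathbb R^m$, so neither analyticity of $K$ in $(w,b)$ nor connectivity is needed. The witness is built at the realized bank itself:
\begin{itemize}
\item choose $p$ independent realized rows with invertible coordinate matrix $S$;
\item set the other heads to zero;
\item use the Ostrowski congruence bounds $c_-a_{(i)}^2\le\lambda_i(m^{-2}S^TDK_{II}DS)\le c_+a_{(i)}^2$, with successive squared head ratios above $2c_+/c_-$.
\end{itemize}
The heads' conditional density and Fubini then finish the argument. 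Your head-scaling witness is this argument with $S$ orthogonal, so adopting the conditioning repairs your proof with little change.

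One further correction: Gershgorin alone does not separate the eigenvalues of $D_aKD_a$. The matrix $K$ has order-one off-diagonal entries; at your witness, $K=(\alpha+J/2)^2\mathbf{1}\mathbf{1}^T+(J^2/4)I$. As a result, the Gershgorin radius of a lightly weighted row is dominated by its cross terms with heavier rows and swamps its own diagonal entry. It is the congruence (minimax) bound that does the work.
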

\begin{proof}
Almost surely $W$ has rank $p$, each head is nonzero, each spatial
row is nonzero, and its affine kink hyperplanes are pairwise distinct.
For such a hidden bank the gate Gram $K$ is positive definite. Indeed,
if $v^TKv=0$, the piecewise constant function $\sum_jv_jd_j(X)$
is zero almost everywhere. For each hyperplane choose a point not on
any other hyperplane. On a sufficiently small ball about that point,
the two open sides differ in just the corresponding gate. Both have
positive Gaussian measure, so their constants must both be zero.
Their difference is $Jv_j$, hence $v_j=0$ for every $j$.
Equation IA2 then gives rank $G_s=p$.

Condition on this entire hidden/bias bank, and choose an orthonormal
basis of its $p$-dimensional spatial span. The restriction of $G_s$
to that span is a symmetric $p\times p$ matrix polynomial in the
head coordinates. The discriminant of its characteristic polynomial
is therefore a polynomial in those coordinates. It is not identically
zero, as the following explicit existence argument shows.

Choose $p$ linearly independent spatial rows, indexed by $I$, and
temporarily set the other heads to zero only to evaluate that
polynomial. In the chosen span basis let $S$ be the invertible
$p\times p$ selected spatial matrix. The restricted AGOP is
$m^{-2}S^TD K_{II}DS$. There are fixed $0<c_-\le c_+<\infty$
such that its ordered eigenvalues satisfy
\[
 c_- a_{(i)}^2\le\lambda_i(m^{-2}S^TD K_{II}DS)
                  \le c_+a_{(i)}^2\quad(1\le i\le p),
 \tag{IA3}
\]
where the selected head magnitudes are sorted decreasingly.
For example, use
\[
\begin{aligned}
c_-&=m^{-2}\lambda_{\min}(K_{II})s_{\min}(S)^2,\\
c_+&=m^{-2}\lambda_{\max}(K_{II})s_{\max}(S)^2.
\end{aligned}
\]
These inequalities follow from matrix order and the singular-value
bounds for multiplication by the fixed invertible $S$.
Choose successive squared head magnitudes with ratio greater than
$2c_+/c_-$. The intervals in IA3 are disjoint, so all eigenvalues
are positive and distinct at that polynomial evaluation.

A nonzero real polynomial vanishes on a Lebesgue-null set. The
conditional Gaussian head law has a density, so repeated positive
eigenvalues have probability zero. The temporary zero-head choice
was a polynomial witness, not an initialization used by the theorem.
Integrating the conditional statement proves the result. For $q<p$
the boundary eigenvalues are distinct and positive; for $q=p<d$
the boundary is the positive last eigenvalue versus zero.
\end{proof}

\subsection{Haar law, mean/min statistics and initial headroom}
For an orthogonal $R$, the transformation $W\mapsto WR^T$ leaves
the original spatial Gaussian law unchanged. The gate Gram is
unchanged, and IA2 transforms as $G_s\mapsto RG_sR^T$.
Thus the initial AGOP law is rotationally invariant. Its ordered
eigenvalues are unchanged by this transformation, while its unique
leading projector transforms equivariantly.

It follows that $\widehat P_{q,0}$ is a Haar rank-$q$ projector,
including conditionally on its eigenvalues. To justify uniqueness
of this invariant law directly, average any bounded function of a
rank-$q$ projector over an independent uniform orthogonal rotation.
That average is the same for every starting rank-$q$ projector.
Rotational invariance makes the original expectation equal to this
average. The identical argument with an arbitrary bounded function
of the eigenvalues proves the conditional statement.

Fix the teacher's orthonormal basis $U\in\mathbb R^{d\times r}$,
with $1\le r\le m$ and $r<d$, and use $q=r$ for the
direction metric. Set
\[
 B_0=U^T\widehat P_{r,0}U,\qquad
 A_{\min,0}=\lambda_{\min}(B_0),\qquad
 A_{\rm mean,0}=\operatorname{tr}(B_0)/r.
 \tag{IA4}
\]
Then the exact initial mean and variance are
\[
 \mathbb E[A_{\rm mean,0}]=r/d,\qquad
 \operatorname{Var}(A_{\rm mean,0})
       =\frac{2(d-r)^2}{d^2(d-1)(d+2)},\qquad
 0\le A_{\min,0}\le A_{\rm mean,0}.
 \tag{IA5}
\]
For completeness, in coordinates adapted to $U$, each diagonal
entry of the Haar projector has beta law with parameters
$r/2,(d-r)/2$, hence variance
$2r(d-r)/[d^2(d+2)]$. Since its diagonal sum is exactly $r$,
exchangeability makes the covariance of two different diagonal
entries equal to minus that variance divided by $d-1$.
Summing over the first $r$ coordinates and dividing by $r^2$
proves IA5.

For any $0<\delta_0<1$ and $a>r/d$, Chebyshev therefore gives
\[
 \Pr\{A_{\rm mean,0}>a\}
 \le\frac{2(d-r)^2}
 {d^2(d-1)(d+2)(a-r/d)^2}.
 \tag{IA6}
\]
In particular, the explicit size regime
\[
 m\ge r,\qquad
 d\ge\max\{8r,\sqrt{128/\delta_0}\}
 \tag{IA7}
\]
ensures both $A_{\rm mean,0}\le1/4$ and
$A_{\min,0}\le1/4$ with probability at least $1-\delta_0$.
This gives initial geometric headroom for a half-unit improvement;
it does not prove that such an improvement is attained.
No upper restriction on width occurs in IA5--7.

The complete weight-energy statistic remains distinct:
\[
 A_{\rm sub}(W_0)=\frac{\|W_0P_U\|_F^2}{\|W_0\|_F^2}
 \sim\operatorname{Beta}(mr/2,m(d-r)/2).
 \tag{IA8}
\]
This follows by adding the independent squared spatial Gaussian
coordinates inside and outside $U$. Its mean is also $r/d$, but
its distribution and later dynamics are not those of IA4.
Any desired simultaneous initial score event must be intersected
explicitly; no independence between IA4 and IA8 is asserted.

\subsection{Hermite correlation and bounded readouts}
Let $X\sim N(0,I_d)$, $U^TU=I_r$, and $1\le r<d$.
Suppose $F=F(U^TX)$ has unit $L^2$ norm and is orthogonal to all
Gaussian polynomials of total degree below an integer $k_0\ge1$.
Write $y=F+e$, $\|y\|_2=1$, $\|e\|_2\le\zeta$.
Only $L^2$ regularity is required.

\begin{lemma}[Hermite correlation]\label{lem:initial-hermite}
For any unit $n$, put $t=\|U^Tn\|$. Every square-integrable function
$\psi=\psi(n^TX)$ satisfies
$|\langle y,\psi\rangle|\le\|\psi\|_2(t^{k_0}+\zeta)$.
In particular, for $0\le\alpha<1$ and $w\ne0$, set $n=w/\|w\|$ and
\begin{gather}
 \phi_{w,b}=\frac{\sigma_\alpha(w^TX+b)}{\sqrt{\|w\|^2+b^2}},\notag\\
 \|\phi_{w,b}\|_2\le1,\qquad
 |\langle y,\phi_{w,b}\rangle|\le t^{k_0}+\zeta.\tag{BW1}
\end{gather}
A pure-bias feature satisfies BW1 with $t=0$; a zero augmented row
is assigned the zero feature.
\end{lemma}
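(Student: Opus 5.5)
The plan is to split $\langle y,\psi\rangle=\langle F,\psi\rangle+\langle e,\psi\rangle$. Cauchy--Schwarz bounds the residual part by $\zeta\|\psi\|_2$, so the work is to show $|\langle F,\psi\rangle|\le t^{k_0}\|\psi\|_2$. For that I will use the conditional-expectation (Mehler) structure of Gaussian regression. Write $Z=n^TX$, which is standard normal.

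The first step is to compute $\mathbb E[F(U^TX)\mid Z]$ through the Ornstein--Uhlenbeck/Hermite expansion. Expand $F=\sum_{|\beta|\ge k_0}c_\beta H_\beta(U^TX)$ in orthonormal multivariate Hermite polynomials on $\mathbb R^r$, with $\sum c_\beta^2=1$. Set $v=U^Tn$, so $\|v\|=t$. Gaussian regression gives $U^TX=vZ+G$, where $G$ is independent of $Z$ and has covariance $I_r-vv^T$.

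The key identity is that for a homogeneous chaos component $F_k$ of degree $k$,
\[
\mathbb E[F_k(U^TX)\mid Z]=\Bigl(\sum_{|\beta|=k}c_\beta\,\langle H_\beta, (v/t)^{\otimes}\rangle\Bigr)\,t^k\,h_k(Z).
\]
Equivalently, the $k$-th chaos of $F$ projects onto $h_k(Z)$ with coefficient equal to the inner product of its symmetric tensor with $v^{\otimes k}$. I will justify this with generating functions: $\mathbb E[e^{\lambda\cdot U^TX-|\lambda|^2/2}\mid Z]=e^{(\lambda\cdot v)Z-(\lambda\cdot v)^2/2}$, and comparing degree-$k$ coefficients gives the tensor contraction. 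Cauchy--Schwarz for the tensor inner product then bounds that coefficient by $\|F_k\|_2\,\|v\|^k=\|F_k\|_2t^k$. Since $h_k(Z)$ are orthonormal across $k$,
\[
\|\mathbb E[F\mid Z]\|_2^2\le\sum_{k\ge k_0}\|F_k\|_2^2t^{2k}\le t^{2k_0}\|F\|_2^2=t^{2k_0},
\]
using $t\le1$.

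The second step combines these. Because $\psi$ is $Z$-measurable, $\langle F,\psi\rangle=\langle\mathbb E[F\mid Z],\psi\rangle\le t^{k_0}\|\psi\|_2$. Adding the residual term gives the first claim. The justification for an infinite chaos sum with only $L^2$ regularity is routine: conditional expectation is an $L^2$ contraction, so truncations converge.

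For BW1, I use $|\sigma_\alpha(s)|\le|s|$ together with $\mathbb E[(w^TX+b)^2]=\|w\|^2+b^2$, which gives $\|\phi_{w,b}\|_2\le1$. The feature $\phi_{w,b}$ is a function of $n^TX$ alone, so the first claim applies. A pure-bias feature is constant. Its correlation with $F$ vanishes because $k_0\ge1$ makes $F$ orthogonal to constants, leaving only $\zeta\|\phi\|_2\le\zeta$, which is the $t=0$ case. The zero row is trivial.

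I expect the main obstacle to be the first step: stating the tensor-contraction identity cleanly, with the right normalization of multivariate Hermite polynomials, so that the coefficient bound $\|F_k\|t^k$ holds exactly with no combinatorial factor. The generating-function comparison handles this, since both sides are isometric images of symmetric tensors.
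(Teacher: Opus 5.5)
Your proof is correct, but it produces the factor $t^{k_0}$ differently from the paper. You condition $F(U^TX)$ on $n^TX$ in one step. The multivariate chaos decomposition and the generating-function identity send each degree-$k$ component with symmetric tensor $T_k$ to $\langle T_k,v^{\otimes k}\rangle$ times the degree-$k$ Hermite polynomial of $n^TX$. Tensor Cauchy--Schwarz then bounds that coefficient.

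The paper conditions in two stages. First it projects onto the unit in-subspace direction $Z=(UU^Tn/t)^TX$. This reduces $F$ to a univariate $H=\mathbb E[F\mid Z]=\sum_{\ell\ge k_0}a_\ell h_\ell$, with $\sum_\ell a_\ell^2\le1$ by conditional contraction. Then it writes $n^TX=tZ+\sqrt{1-t^2}\,G$ with $G$ independent of $U^TX$, and applies the one-dimensional Mehler identity $\mathbb E[h_\ell(Z)\mid n^TX]=t^\ell h_\ell(n^TX)$.

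Both computations give the same exact value of $\|\mathbb E[F\mid n^TX]\|_2^2$. The paper's route uses only univariate Hermite facts, so it avoids the multivariate normalization bookkeeping you identified as the main obstacle, at the cost of a separate $t=0$ case. Your route handles $t=0$ automatically, provided you state the coefficient as $\langle T_k,v^{\otimes k}\rangle$, as in your generating-function comparison. Your first display, written through $v/t$, is undefined at $t=0$.
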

\begin{proof}
For $t>0$, put $Z=(UU^Tn/t)^TX$ and
$H(Z)=\mathbb E[F\mid Z]=\sum_{\ell\ge k_0}a_\ell h_\ell(Z)$.
The degree assumption removes lower coefficients and conditional
contraction gives $\sum a_\ell^2\le1$.
Since $n^TX=tZ+\sqrt{1-t^2}G$ with $G$ independent of $U^TX$,
Gaussian regression of the Hermite generating function gives
\[
 \|\mathbb E[F\mid n^TX]\|_2^2
 =\sum_{\ell\ge k_0}a_\ell^2t^{2\ell}\le t^{2k_0}.\tag{BW2}
\]
Indeed $\mathbb E[h_\ell(Z)\mid n^TX]=t^\ell h_\ell(n^TX)$;
finite sums extend by $L^2$ contraction. At $t=0$, independence
and $\mathbb E[F]=0$ give zero instead. Cauchy--Schwarz proves the
claim, including the residual. Finally
$|\sigma_\alpha(z)|\le|z|$ gives $\|\phi_{w,b}\|_2\le1$;
constant and zero features obey the stated conventions.
\end{proof}

\begin{lemma}[Same-budget comparison]\label{lem:bounded-refit-comparison}
Let $\|y\|_2=1$ and let $\{\phi_{j,t}\}_{j=1}^m$, $t=0,1$, be
square-integrable feature banks with the same nonempty coefficient
set $\mathcal V\subseteq\{v:\|v\|_1\le B_1\}$.
Write $\mathcal R(t)=\inf_{v\in\mathcal V}\|y-\sum_jv_j\phi_{j,t}\|_2^2$.
If $\max_j|\langle y,\phi_{j,0}\rangle|\le\epsilon$, then
\[
 \mathcal R(0)\ge1-2B_1\epsilon.\tag{BW7}
\]
If a legal $v_1\in\mathcal V$ also has endpoint error at most $\rho$
in $L^2$, then $\mathcal R(0)-\mathcal R(1)\ge1-2B_1\epsilon-\rho^2$.
Any additional coefficient cap must be checked for that witness.
\end{lemma}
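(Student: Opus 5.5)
The plan is to expand the squared loss and control the linear term by the triangle inequality for the inner product. For any $v\in\mathcal V$, write $g_v=\sum_jv_j\phi_{j,0}$. Then
\[
 \|y-g_v\|_2^2=\|y\|_2^2-2\langle y,g_v\rangle+\|g_v\|_2^2
 \ge1-2\langle y,g_v\rangle,
\]
because $\|y\|_2=1$ and $\|g_v\|_2^2\ge0$. Dropping the quadratic term is exactly what lets us avoid any control of the feature Gram matrix. It is also why only the $\ell^1$ cap, and no $\ell^2$ cap, appears in the hypothesis.

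Next, bound the correlation using the coefficient budget. By linearity and the hypothesis $\max_j|\langle y,\phi_{j,0}\rangle|\le\epsilon$,
\[
 |\langle y,g_v\rangle|\le\sum_j|v_j|\,|\langle y,\phi_{j,0}\rangle|\le\|v\|_1\epsilon\le B_1\epsilon,
\]
since $\mathcal V\subseteq\{\|v\|_1\le B_1\}$. Hence $\|y-g_v\|_2^2\ge1-2B_1\epsilon$ for every legal $v$. Taking the infimum over the nonempty set $\mathcal V$ gives BW7. Square integrability of each feature guarantees that all inner products are finite.

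For the comparison, the witness $v_1\in\mathcal V$ is admissible at $t=1$, so $\mathcal R(1)\le\|y-\sum_jv_{1,j}\phi_{j,1}\|_2^2\le\rho^2$. Subtracting this from BW7 yields $\mathcal R(0)-\mathcal R(1)\ge1-2B_1\epsilon-\rho^2$.

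The only point needing care is that the same set $\mathcal V$ is used at both times. If the actual diagnostic imposes further constraints, such as the $\ell^2$ cap in \eqref{eq:refit}, then $\mathcal V$ is their intersection. BW7 still holds because that intersection lies inside the $\ell^1$ ball, but the witness must be verified against every cap, which is the final remark of the statement. There is no substantive obstacle: this is an elementary argument, and all the difficulty in applying it lies in supplying $\epsilon$ (from Lemma~\ref{lem:initial-hermite}) and $\rho$ elsewhere.
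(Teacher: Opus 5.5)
Your proof is correct and is essentially the paper's argument: expand the square, discard the nonnegative quadratic term, bound the cross term by $\|v\|_1\epsilon\le B_1\epsilon$, take the infimum over $\mathcal V$, and bound $\mathcal R(1)$ by the witness error $\rho^2$. Your remark about additional caps also matches the lemma's final sentence: such caps only shrink $\mathcal V$ inside the $\ell^1$ ball, so BW7 still holds, while the witness must be checked against each cap.
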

\begin{proof}
For every $v\in\mathcal V$, expansion of the square gives
$\|y-\sum_jv_j\phi_{j,0}\|_2^2
\ge1-2\sum_jv_j\langle y,\phi_{j,0}\rangle\ge1-2B_1\epsilon$.
Take the infimum and use $v_1$ for the endpoint upper bound.
No feature independence or Gram conditioning is required.
\end{proof}

\subsection{The original Gaussian bank}
Let $m\ge1$ and let the original spatial rows be independent
$w_j\sim N(0,s_w^2I_d)$ with common public $s_w>0$.
Keep the declared bias and training-head laws, and fix a public
$B_1>0$ and the common nonempty feasible set
$\mathcal V\subseteq\{v:\|v\|_1\le B_1\}$. Define
\[
 \mathcal R(0)=\inf_{v\in\mathcal V}
       \left\|y-\sum_jv_j\phi_{w_j,b_j}\right\|_2^2.\tag{BW3}
\]
All existing $\ell^2$ caps are retained; no intercept or projection
is added. For $0<\delta_I<1$, set
\[
 \ell_I=\log(2m/\delta_I),\quad
 Q_I=r+2\sqrt{r\ell_I}+2\ell_I,\qquad d\ge16\ell_I.\tag{BW4}
\]
\begin{lemma}[Initial normalized-bank obstruction]\label{lem:initial-bank}
With probability at least $1-\delta_I$ under the original draw,
\[
 \max_j\|U^T(w_j/\|w_j\|)\|^2\le2Q_I/d,\qquad
 \mathcal R(0)\ge1-2B_1\{(2Q_I/d)^{k_0/2}+\zeta\}.\tag{BW5}
\]
There is no width ceiling; the bound includes adaptive feasible readouts.
\end{lemma}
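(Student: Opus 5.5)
The plan has two independent pieces: a uniform overlap bound for the normalized initial rows, then a deterministic reduction through Lemma~\ref{lem:initial-hermite} and Lemma~\ref{lem:bounded-refit-comparison}.

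\emph{Overlap bound.} Fix $j$ and write $w_j=s_wg_j$ with $g_j\sim N(0,I_d)$. Then
\[
 \|U^T(w_j/\|w_j\|)\|^2=\frac{\|U^Tg_j\|^2}{\|g_j\|^2}.
\]
Here $\|U^Tg_j\|^2\sim\chi^2_r$, and the common scale $s_w$ cancels, so the bias and head laws play no role.

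\emph{Numerator.} The Laurent--Massart upper tail $\Pr\{\chi^2_r\ge r+2\sqrt{rx}+2x\}\le e^{-x}$ with $x=\ell_I$ gives $\|U^Tg_j\|^2\le Q_I$ outside probability $\delta_I/(2m)$.

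\emph{Denominator.} The lower tail $\Pr\{\chi^2_d\le d-2\sqrt{dx}\}\le e^{-x}$ with $x=\ell_I$ gives $\|g_j\|^2\ge d-2\sqrt{d\ell_I}$. The hypothesis $d\ge16\ell_I$ in BW4 gives $2\sqrt{d\ell_I}\le d/2$, so $\|g_j\|^2\ge d/2$ outside another $\delta_I/(2m)$.

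\emph{Union over rows.} A union bound over the $2m$ tail events leaves total failure probability at most $\delta_I$. On the complement, every row has $t_j^2:=\|U^Tw_j/\|w_j\|\|^2\le2Q_I/d$. This is the first claim in BW5. Almost surely every $w_j\ne0$, so no row is a pure-bias or zero row. Even if one were, it would have $t=0$ and satisfy the same bound.

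\emph{Refit lower bound.} On this event, apply BW1 of Lemma~\ref{lem:initial-hermite} to each normalized feature:
\[
 |\langle y,\phi_{w_j,b_j}\rangle|\le t_j^{k_0}+\zeta\le(2Q_I/d)^{k_0/2}+\zeta=:\epsilon .
\]
This bound is uniform in $j$ and independent of the bias values. BW1 uses only that $\phi_{w,b}$ is a function of $n^TX$ with $\|\phi\|_2\le1$. Then BW7 of Lemma~\ref{lem:bounded-refit-comparison} with the feasible set $\mathcal V\subseteq\{\|v\|_1\le B_1\}$ gives $\mathcal R(0)\ge1-2B_1\epsilon$. That argument bounds the objective for every $v\in\mathcal V$, so it covers any readout, including data-adaptive ones. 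Any extra $\ell^2$ caps only shrink $\mathcal V$ and so can only raise the infimum.

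\emph{No width ceiling.} Width enters only through $\ell_I=\log(2m/\delta_I)$, and the $\ell^1$ budget absorbs the number of features. So nothing caps $m$, and $m\ge d$ is allowed.

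\emph{Expected obstacle.} The only delicate step is matching the chi-square tail constants to the exact form of $Q_I$ and to the condition $d\ge16\ell_I$. The rest is deterministic.
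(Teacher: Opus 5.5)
Your proposal is correct and follows essentially the same route as the paper. Both arguments apply the two BW6 chi-square tails at $x=\ell_I$ to get $Q_I$ for $\|U^Tg_j\|^2$ and $d/2$ for $\|g_j\|^2$ (the latter using $d\ge16\ell_I$). A union bound over both tails and all $m$ rows then costs $2me^{-\ell_I}=\delta_I$ without any independence assumption, and Lemmas~\ref{lem:initial-hermite} and~\ref{lem:bounded-refit-comparison} turn the overlap bound into BW5.
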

\begin{proof}
For $Z_k\sim\chi_k^2$, exponential Markov gives
\[
 \Pr\{Z_k>k+2\sqrt{kx}+2x\}\le e^{-x},\quad
 \Pr\{Z_k<k-2\sqrt{kx}\}\le e^{-x}.\tag{BW6}
\]
Indeed the centered log MGFs are bounded by $ku^2/(1-2u)$ for the
upper tail and $ku^2$ for the lower tail; take, respectively,
$u=\sqrt{x}/(\sqrt{k}+2\sqrt{x})$ and $u=\sqrt{x/k}$.
For a standardized row $g$, apply BW6 to $\|U^Tg\|^2$ and
$\|g\|^2$. At $x=\ell_I$ their bounds are $Q_I$ and $d/2$.
Union over both tails and all rows costs $2me^{-\ell_I}=\delta_I$,
without assuming numerator/denominator independence.
On this event Lemmas~\ref{lem:initial-hermite} and
\ref{lem:bounded-refit-comparison} give BW5 simultaneously.
\end{proof}

\section{Mixture profile acquisition}\label{app:mixture-acquisition}

All public quantities are those of Appendix~\ref{app:sw-parameters}.
Here $X\sim N(0,I_d)$, $P_U=UU^T$, $P_{U^\perp}=I_d-P_U$,
$\varphi_v$ is the $N(0,v)$ density, $\varphi=\varphi_1$, and
$\Phi$ is the standard normal CDF. The $h_k$ below are normalized
probabilists' Hermite polynomials.
The equation families BM, BA and BAJ all refer to calculations printed
in this section. Write $\mathcal V_n=\sum_{j=1}^m V_{j,n}$ for the
complete normalized row energy.
Within the base acquisition calculation write
$\varepsilon=\varepsilon_{\rm BM}$ and $\kappa=\kappa_{\rm BM}$.
The acquisition endpoint is $n_0+n_1=N_*$. All candidate and
cumulative estimates below use the final public envelope and horizon
$(M,T)$ from the outset, with one mesh and one force allowance throughout.
A single induction then gives acquisition at $N_*$ and persistence
through $N$; no second continuation argument is needed.

\subsection{Full links and unchanged actual dynamics}
Let $J=1-\alpha>0$, $0\le\alpha<1$, and
$\sigma_\alpha(t)=\alpha t+Jt_+$. For probability measures $\mu_i$ on
$[1/3,1]$ define
\begin{gather}
 g_i(t)\varphi(t)=-\int\varphi_v'''(t)d\mu_i(v),\quad
 N_i=\|g_i\|_2,\quad \bar g_i=g_i/N_i,\nonumber\\
 y_c=\sum_{i=1}^r\lambda_i \bar g_i(u_i^TX),\quad
 U^TU=I_r,\quad\lambda_i>0,\quad\sum_i\lambda_i^2=1.
 \tag{BM1}
\end{gather}
Signs can be absorbed into the axes because these full links are odd.
The exact H1-convergent series is
\[
\begin{gathered}
g_i=\sum_{j\ge0}\frac{(-1)^j\sqrt{(2j+3)!}}{2^jj!}
       \left[\int(1-v)^j d\mu_i(v)\right]h_{2j+3},\\
 \sqrt6\le N_i<9,\quad\|y_c\|_2=1,\quad\|y_c\|_{H^1}<13.
\end{gathered}
 \tag{BM2}
\]
Indeed integration by parts gives the generating function
$t^3\int e^{-(1-v)t^2/2}d\mu_i(v)$; the squared coefficients and
their degree weights are dominated by the summable sequence at $v=1/3$.
For $a=1-v$, the squared L2 and derivative norms before normalization are
$3(2+3a^2)/(1-a^2)^{7/2}<80$ and
$(18+69a^2+18a^4)/(1-a^2)^{9/2}<800$.
The coefficient of $h_3$ is $\sqrt6$, proving BM2.
If $\mu_i(v<1)>0$, every displayed coefficient is nonzero. At
$\mu_i=\delta_{1/3}$ the normalized link is at L2 distance greater
than one from $h_3$. Thus this is not a tiny cubic neighborhood.

The full teacher may be any function of $U^TX$ satisfying
\[
 y=y_c+e,\qquad \|y\|_2=1,\qquad \|e\|_{H^1}\le\zeta.
 \tag{BM3}
\]
This optional small residual retains its actual mean, linear part and
every tail coefficient. The central links intrinsically have zero mean
and first moment; no centering operation is performed. This partial
branch does not cover arbitrary order-one mean or linear components.
Train the original averaged network and full MSE by simultaneous raw GD:
\[
 f=m^{-1}\sum_j a_j\sigma_\alpha(w_j^TX+b_j),\qquad
 L=\mathbb E[(y-f)^2],\qquad \eta=mh/2.
 \tag{BM4}
\]
Every initial raw entry is independent $N(0,s^2/d)$. For analysis only,
write $(a,w,b)=s(q,\theta)$, $\theta=(w,b)$, suppressing decorations;
put $\rho=\|w\|$, $V=q^2+\|\theta\|^2$. The exact recurrence is
\[
 q^+=q+h(C+\theta^TE),\quad
 \theta^+=\theta+hq(\nabla C+E),\quad
 C=\mathbb E[y_c\sigma_\alpha(w^TX+b)],
 \tag{BM5}
\]
\[
 E_j=\mathbb E[(e-s^2\widehat f)\sigma'_j(X,1)],\quad
 \widehat f=m^{-1}\sum_lq_l\sigma_\alpha(\theta_l^T(X,1)),\quad
 \|E_j\|\le\zeta+\frac{s^2}{2m}\sum_lV_l.
 \tag{BM6}
\]
Here $\sigma'_j=\sigma'_\alpha(\theta_j^T(X,1))$ is a scalar gate;
its following factor $(X,1)$ is the augmented input vector.
The same adaptive vector generates both errors. Gaussian
Cauchy--Schwarz gives the last bound. The full uncentered H1 kernel
and BM2 give
\[
 |C|\le\|\theta\|,\quad\|\nabla C\|\le1,\quad
 \|\theta\|\|D^2C\|\le52,\quad\theta^T\nabla C=C.
 \tag{BM7}
\]
This holds away from the augmented origin, including pure-bias states.
The regularity and compatible-update bounds are proved in Appendix~\ref{app:kernels}.

Let $B_i^*$ be the unique solution
\[
 B_i^*(1+B_i^*)m_i(B_i^*)=1,\quad
 m_i(B)=\frac{\int v^{-5/2}e^{-B/(2v)}d\mu_i(v)}{\int v^{-3/2}e^{-B/(2v)}d\mu_i(v)},\quad
 \frac{\sqrt{7/3}-1}{2}\le B_i^*\le\frac{\sqrt5-1}{2}.
 \tag{BM16}
\]
Put $b_i^*=\sqrt{B_i^*}$ and take the actual integer checkpoint
\[
 n_0=\lceil4R/(hk)\rceil,\quad n_1=\lceil T_1/h\rceil,
 \quad N_*=n_0+n_1,\qquad N_*h\le T_0.
 \tag{BM17}
\]

\subsection{Exact selected-cone dynamics, including candidate exits}
For $x_i=u_i^Tw$, $\rho>0$, put
$D_{iv}=\rho^2-(1-v)x_i^2$, $B=b^2/\rho^2$, $B_{iv}=b^2/D_{iv}$.
Three integrations by parts give the full uncentered correlation
\[
 C=\sum_i\int C_{iv}d\mu_i(v),\qquad
 C_{iv}=-\frac{J\lambda_i}{N_i}b x_i^3
                    D_{iv}^{-3/2}\varphi(b/\sqrt{D_{iv}}).
 \tag{BM23}
\]
Write $H_{iv}=qC_{iv}$, $H=qC$. On $B\le3/4$ one has
$\rho^2/3\le D_{iv}\le\rho^2$ and $B_{iv}\le9/4$.
For a selected axis $i$ fix $\epsilon=-\operatorname{sign}(qb)$ and set
\begin{gather}
 x=\epsilon x_i>0,\quad u^2=\sum_{l\ne i}x_l^2,\quad
 O=\|P_{U^\perp}w\|,\quad u\le\vartheta x,\nonumber\\
 H_i=\int H_{iv}d\mu_i>0,\quad
 H_{\rm other}=\sum_{l\ne i}\int|H_{lv}|d\mu_l.
 \tag{BM24}
\end{gather}
Persistence of the head and bias signs is proved below. Uniformly,
\[
 \sum_l\int|C_{lv}|d\mu_l\le J|b|,\quad
 |C_i|\ge\frac{J\lambda_i|b|x^3}{72\rho^3},\quad
 H_{\rm other}\le\eta_iH_i,\qquad
 \eta_i=\frac{72\vartheta^3}{\lambda_i}<1/100.
 \tag{BM25}
\]
For the upper bound use $N_l\ge\sqrt6$, $D_{lv}\ge\rho^2/3$,
$3\sqrt3\varphi(0)/\sqrt6<1$, and
$\sum_l\lambda_l|x_l|^3\le\rho^3$.
For the lower bound use $N_i<9$, $\varphi(3/2)>1/8$.
The same upper estimate on the other coordinates has numerator $u^3$.

The pure teacher spatial candidate has a common radial multiplier
and nonnegative quadratic boosts in $y_l=\epsilon x_l$:
\[
 y_l^{\,t}=A y_l+d_l y_l^2,\quad
 P_{U^\perp}w^{\,t}=A P_{U^\perp}w,\quad
 A=1+\sum_l\int \frac{h(B_{lv}-3)H_{lv}}{D_{lv}}d\mu_l,
 \tag{BM26}
\]
\[
 d_l=\frac{hpJ|b|\lambda_l}{N_l}
 \int D_{lv}^{-3/2}\varphi(b/\sqrt{D_{lv}})
 \left[3+\frac{(1-v)x_l^2(3-B_{lv})}{D_{lv}}\right]d\mu_l,
 \qquad p=|q|.
\]
The bracket is between $3$ and $9$. As
$p\le\|\theta\|\le\sqrt{7/4}\rho$, BM25 gives
\[
 |A-1|\le11h,\qquad
 d_i\ge\frac{hpJ|b|\lambda_i}{24\rho^3},\qquad
 \|(d_ly_l^2)_{l\ne i}\|\le\frac{9hpJ|b|u^2}{\rho^3}.
 \tag{BM27}
\]
Thus $A>0$, and $u\le\vartheta x$ implies the candidate inequality
\[
 u^t-\vartheta x^t
 \le-\frac{\vartheta hpJ|b|\lambda_i x^2}{48\rho^3}.
 \tag{BM28}
\]
The force displacement changes its left side by at most $2hp\nu$.
Since $|C|\ge k$ implies $J|b|\ge k$, \eqref{eq:sw-force} pays that displacement
whenever $x\ge x_0/2$, $\rho\le M$. This proves the actual candidate
cone constraint, not just an inward derivative at its boundary.

The own-component contribution to the oriented $x$ increment is
\[
 \int hH_{iv}\left[\frac{3}{x}-\frac{vx(3-B_{iv})}{D_{iv}}\right]d\mu_i
 =\int hH_{iv}\frac{3(u^2+O^2)+B_{iv}vx^2}{xD_{iv}}d\mu_i
 \ge\frac{3hH_i u^2}{x\rho^2}.
 \tag{BM29}
\]
The adverse radial contribution of the other axes has magnitude at
most $9hH_{\rm other}x/\rho^2$. Relative to the displayed lower
bound, it is at most $216\vartheta/\lambda_i<1$ when $u>0$;
when $u=0$ it vanishes. Hence $x^t\ge x$. The cumulative actual
decrease is at most $\nu MT\le x_0/4$. Starting with $x\ge x_0$
therefore closes $x\ge x_0/2$ at every candidate state.

For the upper bias barrier let $F=b^2-(3/4)\rho^2$. Its derivative is
\[
 DF[q\nabla C]=2\sum_l\int[1-(7/4)B_{lv}]H_{lv}d\mu_l.
 \tag{BM30}
\]
If $2/3\le B\le3/4$, this is at most
$2[-1/6+3\eta_i]H_i\le-H_i/4\le-pk/8$.
The pure quadratic remainder and full force displacement together
are at most $hpM(4h+2\nu)<hpk/8$ by \eqref{eq:sw-mesh}--\eqref{eq:sw-force}.
If $B<2/3$, the initial gap is $F\le-\rho^2/12$, whereas the
absolute actual one-step change is at most $8h\rho^2$.
Thus both cases, and every transition between them, give $B^+\le3/4$.

The radial multiplier further satisfies
\[
 A\le1-\frac{hH}{3\rho^2},\qquad
 O^+\le(1-t/3)O+tM^2\nu/k,\qquad t=hH/\rho^2\ge0.
 \tag{BM31}
\]
Indeed its own term is at most $-(3/4)hH_i/\rho^2$ and its adverse
terms sum to at most $9h\eta_iH_i/\rho^2$.
Use $H\le(1+\eta_i)H_i$, $A>0$, and $|C|\ge k$.
Then $\nu\le2k/(3M^2)$ preserves $O\le2$.
No analogous cone or favorable sign has been assumed for other rows.

\subsection{Favorable growth and a lower bias barrier}
We record the generic full-H1 quotient estimate explicitly:
\[
 Q=\frac{qC}{V},\qquad Q^+\ge Q-h\nu^2,
 \qquad |q|\le\|\theta\|,\quad \|E\|\le\nu\le1,
 \quad h\le2^{-12}.
 \tag{BM32}
\]
To verify it, put $z=(q,\theta)$, $F_H=\nabla(qC)$,
$p_E=(\theta^TE,qE)$, $a=z^Tp_E/V$,
$p_\perp=p_E-az$, $G=F_H-2Qz$.
Homogeneity makes the new quotient
$Q(z+\tau(G+p_\perp))$, where $\tau=h/[1+h(2Q+a)]$.
This is a tangent retraction. The joining segment has norm at least
$\sqrt V$ and head/hidden ratio at most two. Differentiation with
BM7 gives $\|D^2Q\|\le128/V$ there: the Hessian-of-$H$ term
contributes at most $105/V$ and the remaining quotient terms at most
$9/V$. Also $\|p_\perp\|\le\nu\sqrt V$ and
$\gamma=\|G\|/\sqrt V\le1$. Taylor's theorem yields
\[
 Q^+-Q\ge\tau(\gamma^2-\nu\gamma)
       -64\tau^2(\gamma+\nu)^2
 \ge\tau(\gamma^2/4-3\nu^2/4)\ge-h\nu^2.
\]
Balance persists exactly, since homogeneity also gives
\[
 \|\theta^+\|^2-(q^+)^2
 \ge(1-h^2\|\nabla C+E\|^2)(\|\theta\|^2-q^2),
 \qquad V^+\le(1+2h)^2V.
\]
These estimates apply to every original row, independent of signs.

For a selected seed with $p_0\ge1/(2\sqrt d)$, $|C_0|\ge k$,
$H_0>0$, $V_0\le5$, BM32 gives
$Q_n\ge Q_*-T\nu^2\ge1599Q_*/1600$.
Orient $C$ by the initial head sign and call it $K$.
Euler's identity gives $K/\|\theta\|\ge2Q_n\ge Q_*$.
Thus $\nu\le Q_*/4$ is at most one quarter of both
$K/\|\theta\|$ and $\|\nabla K\|$.
The head sign persists and $p^+\ge p+3hK/4$.
Taylor's theorem with BM7 gives for every $0\le u\le1$
\[
 K(\theta+uhp(\nabla K+\widetilde E))
 \ge K(\theta)+(u/2)hp\|\nabla K\|^2,
 \qquad p^+\ge p+3hk/4.
 \tag{BM33}
\]
Here $\widetilde E$ includes the fixed head orientation.
The first-order inner product is at least
$(3/4)uhp\|\nabla K\|^2$, while the curvature remainder is at most
$82u^2h^2p\|\nabla K\|^2$.
Both $\|\theta\|^2$ and $V$ increase. Since BM23 vanishes at
$b=0$ or $w=0$, BM33 preserves the original bias sign along each
entire hidden step. BM28--29 preserve the chosen spatial orientation.
This supplies the signs used above without assuming a future phase.
The argument is a simultaneous induction on signs, $K\ge k$, the
cone and the two barriers: every inequality uses only its pre-state
conditions and has just been established for its complete candidate.

Set $W=(\rho^2-8b^2)_+$. When $B\le1/6$, the first-order change
of the untruncated quadratic is
$2h\sum_l\int(9B_{lv}-8)H_{lv}d\mu_l\le0$:
the own coefficient is at most $-7/2$, the other absolute coefficients
at most $8$. When $B>1/6$, its initial negative gap is at least
$\rho^2/3$; the possible positive first-order change is at most
$30h\rho^2$. Indeed $\sum_l\int|H_{lv}|\le pJ|b|
\le(6/5)\rho^2$ and $|9B_{lv}-8|\le49/4$.
The pure quadratic remainder is at most $h^2M^2$, and the force
cost is at most $20h\nu M^2$ (the quadratic matrix has norm eight).
The positive part therefore obeys, including all branch crossings,
\[
 W_n\le W_0+TM^2(h+20\nu)<5.
 \tag{BM34}
\]
Also ${\cal D}=\|\theta\|^2-p^2$ satisfies
${\cal D}^+-{\cal D}\le4h^2M^2$, so ${\cal D}_n\le6$.
Consequently, at every state with $\rho\ge8$,
\[
 1/16\le B\le3/4,\qquad
 p/\rho\ge\sqrt{1-6/\rho^2}>0.95.
 \tag{BM35}
\]
No monotone spatial radius has been assumed. At $n_0$ BM33 gives
$p>3R$; at all later states balance and the upper bias barrier give
$\rho\ge p/\sqrt{7/4}>2R$. This is a persistent late-radius bound.

\subsection{Actual convergence to the link-dependent bias}
In BM16, $m_i(B)=\mathbb E_B[1/v]$ under probability weights
proportional to $v^{-3/2}e^{-B/(2v)}d\mu_i$. Thus $1\le m_i\le3$
and $m_i'=-\operatorname{Var}_B(1/v)/2\in[-1/2,0]$.
On $0\le B\le3/4$, the derivative of $B(1+B)m_i(B)$ is between
one and nine. At the endpoints stated in BM16 the function is,
respectively, at most and at least one. This proves existence,
uniqueness and the stated bounds for $B_i^*$.

At a late selected row write
$e_a=1-x^2/\rho^2\le\vartheta^2+4/R^2$.
The own dimensionless variance in BM23 is
$d_v=v+(1-v)e_a$ rather than $v$.
Let $m_{i,e_a}(B)$ be the resulting weighted mean of $1/d_v$.
Differentiation with respect to $e_a$ gives a direct term of magnitude
at most nine, and a covariance term at most 48: use $1/d_v\le3$
and
$|\partial_{e_a}\log(d_v^{-3/2}e^{-B/(2d_v)})|\le8$.
Hence, uniformly on the corridor,
\[
 |m_{i,e_a}(B)-m_i(B)|\le60e_a.
 \tag{BM36}
\]
With $t_i=hH_i/\rho^2$, BM25 and BM35 give
\[
 \kappa h\le t_i\le(6/5)h.
 \tag{BM37}
\]
For the lower bound use $p/\rho>0.95$, $|b|/\rho\ge1/4$ and
$x/\rho\ge0.9$ in BM25; the coefficient is greater than
$J\lambda_i/1024$. For the upper bound use
$H_i\le\sum_l\int|H_{lv}|\le pJ|b|$.

Expand the actual quotient $B^+=(b^+)^2/\|w^+\|^2$:
\[
 B^+-B=\frac{2h}{\rho^2}
    \sum_l\int[1-(1+B)B_{lv}]H_{lv}d\mu_l+{\cal E},
 \qquad |{\cal E}|\le200h^2+20h\nu.
 \tag{BM38}
\]
Here is an explicit remainder check. Both
$\|\Delta w\|/\rho$ and $|\Delta b|/\rho$ are at most $3h$.
The denominator divided by $\rho^2$ differs from one by at most
$9h$. In the numerator for $B^+-B$, the first-order teacher term
has absolute value at most $8h$, the force term at most $5h\nu$,
and the quadratic term at most $13h^2$. Division by $1-9h$
proves the conservative remainder in BM38.
For this comparison, refine the uniform BM25 bound using the
actual transverse coordinates. Since $u/x\le\vartheta$ and
$u^2/\rho^2\le e_a$, the same numerator estimate gives
\[
 \frac{H_{\rm other}}{H_i}
 \le\frac{72}{\lambda_i}\left(\frac ux\right)^3
 \le\frac{72\vartheta}{\lambda_i}\frac{e_a}{1-e_a}
 \le0.09e_a.
\]
The last inequality uses $x/\rho\ge0.9$ and
$\vartheta\le\lambda_i/1000$; it also holds when $u=0$.
Together with BM36, this changes the teacher term from
$2t_i[1-(1+B)Bm_i(B)]$ by at most
$2t_i(80e_a+3H_{\rm other}/H_i)\le400he_a$,
using BM37.
The ideal bracket equals $-a(B)(B-B_i^*)$ with $1\le a(B)\le9$.
Its candidate coefficient has no sign reversal, since
$2t_i a(B)<1$. Therefore
\[
 |B^+-B_i^*|\le(1-\kappa h)|B-B_i^*|
             +h[400(\vartheta^2+4/R^2)+200h+20\nu].
 \tag{BM39}
\]
By \eqref{eq:sw-structure}--\eqref{eq:sw-force} the last bracket is at most $\kappa\varepsilon/4$.
After $n_1h\ge T_1$, the initial error, at most one, is at most
$\varepsilon/4$, and the accumulated error at most $\varepsilon/4$.
Thus $|B_{N_*}-B_i^*|\le\varepsilon$.
The original bias sign persists, and
$|\sqrt B-\sqrt{B_i^*}|\le2|B-B_i^*|$.
Also
$\|w/\rho-\epsilon u_i\|\le\sqrt2\vartheta+3/R$.
These inequalities and \eqref{eq:sw-structure} prove the calibrated profile assertion with room to spare.
This proof includes the actual integer endpoint, all intermediate
bias transitions, and a radius bound acquired before the late phase.

\subsection{Original Gaussian event and full coupled-force closure}
Write the normalized initial row as $(q,w,b)=(Z,G,B_0)/\sqrt d$.
For each $(i,\epsilon,\tau)$ use the event
\[
 \epsilon G_i\in[1,2],\quad
 |G_l|\le\vartheta/(4\sqrt{r-1})\ (l\ne i,\ l\le r),\quad
 \tau B_0\in[1/2,1],\quad -\epsilon\tau Z\in[1/2,1],
 \tag{BM40}
\]
omitting the other-axis restriction for $r=1$.
It has probability exactly $p_{\rm rect}$ and places no restriction
on the remaining $d-r$ coordinates. The union bound over $4r$
categories shows that every category occurs among the original $m$
rows except with probability $\delta/4$. Distinct categories need
distinct rows. This is a property of the unscreened draw; no row is
filtered from the actual training network.
Separately, Gaussian and chi-square tails give, for all original rows,
\[
 1/2\le\rho_{j,0}\le2,\qquad
 |Z_j|,|B_{0,j}|\le\sqrt{2\ell},
 \tag{BM41}
\]
except with probability at most
$m(4e^{-\ell}+2e^{-d/8})\le6m e^{-\ell}<\delta/8$.
On BM41, $V_{j,0}<5$, $q_{j,0}^2\le\|\theta_{j,0}\|^2$ and
$O_{j,0}\le2$. Each designated row additionally has
$x\ge x_0$, $u\le\vartheta x_0/4$, $B\le4/d\le1/16$,
$p_0\ge1/(2\sqrt d)$, and
$|C_0|\ge J\lambda_i/(2048d^2)\ge k$ by BM25.
Its signs in BM40 give $H_0>0$.

\subsection{One full-horizon induction and its persistent profiles}

On the original event BM40--41, of failure at most $3\delta/8$,
all rows start balanced with $V_{j,0}<5$. At a pre-state with force
at most $\nu$, BM32 gives $V_j^+\le(1+2h)^2V_j$, while BM6 gives
\[
 \|E_j\|\le\zeta+\frac{s^2}{2m}\mathcal V_n.
 \tag{BA18}
\]
Induction therefore supplies, including each endpoint candidate,
\[
 \mathcal V_n\le5me^{4nh},\qquad
 \mathcal V_{N_*}\le S_0,\qquad
 \mathcal V_n\le S_0e^{4(n-N_*)h}\le S\quad(N_*\le n\le N).
 \tag{BA19}
\]
Indeed $N_*h\le T_0$ and $(N-N_*)h\le T_G+1$.
The residual and scale cuts recompute BA18 at the candidate as
$\|E_j\|\le\nu/2+s^2S/(2m)\le\nu$, closing the induction.
Consequently the complete interacting bank obeys
\[
 V_j^+\le(1+2h)^2V_j,\qquad
 \|E_{j,n}\|\le\nu,\qquad \sum_jV_{j,n}\le S=M^2/4
 \quad(n\le N).
 \tag{BAJ17}
\]
These are public envelopes, not conditions imposed on the realized
trajectory. No signed row has been discarded.

The selected-row induction BM28--35 now applies on this entire
horizon. Its invariant is the conjunction of the original head/bias
signs, $K\ge k$, $Q\ge1599Q_*/1600$, $x\ge x_0/2$,
$u\le\vartheta x$, $B\le3/4$ and $O\le2$.
Every update was proved for the full candidate, including repeated
bias transitions. Its payments are, respectively, the quotient debit
$T\nu^2\le Q_*/1600$, the cone cut
$\nu\le\vartheta\lambda_*kx_0^2/(2048M^3)$, coordinate debit
$\nu MT\le x_0/4$, upper-bias cuts $h,\nu\le k/(512M)$,
and outside cut $\nu\le2k/(3M^2)$.
The two cumulative quadratics, already proved in BM34--35, give
\begin{gather}
 W_n\le W_0+TM^2(h+20\nu)<5,\qquad
 TM^2h\le1/4,\quad20\nu TM^2\le1/4,\notag\\
 0\le\|\theta_n\|^2-q_n^2
 \le\|\theta_0\|^2-q_0^2+4hTM^2\le6.
 \tag{BA20}
\end{gather}
The second estimate uses only compatibility, with upper debit
$4h^2V_j$ per step, and thus holds for every original row:
\[
 0\le D_j:=\|\theta_j\|^2-q_j^2\le6\quad(n\le N).
 \tag{BAJ19}
\]
Since $p^+\ge p+3hk/4$, at $n_0$ one has $p>3R$ and thereafter
$\rho>2R$. BA20 then gives $p/\rho>19/20$ and $B\ge1/16$,
without assuming monotonicity of $\rho$.

For every late candidate BM36--39, including its actual
angular-error-dependent contamination bound, give
\[
 |B^+-B_i^*|\le(1-\kappa_{\rm BM}h)|B-B_i^*|
       +h\{400(\vartheta^2+4/R^2)+200h+20\nu\}.
 \tag{BA21}
\]
The braces are at most $\kappa_{\rm BM}\varepsilon/4$.
The discrete geometric sum yields, for every $n\ge N_*$,
$|B_n-B_i^*|\le e^{-\kappa_{\rm BM}(n-n_0)h}+\varepsilon/4
\le\varepsilon/2\le\varepsilon$.
Together with BM39's bounds on the angular and square-root errors,
$\sqrt2\vartheta+3/R+2\varepsilon<e_*$ proves, for the same four
original categories of every axis and every $N_*\le n\le N$,
\begin{gather}
 \rho_j>2R,\quad |q_j|/\rho_j>19/20,\quad
 1/16\le b_j^2/\rho_j^2\le3/4,\notag\\
 \|(w_j/\rho_j,b_j/\rho_j)
        -(\epsilon u_i,\tau\sqrt{B_i^*})\|\le e_*,\qquad
 \operatorname{sign}(q_j)=-\epsilon\tau.
 \tag{BAJ18}
\end{gather}
No comparability of the optimized directional signals was used here.
Finally $\|f_n\|_2\le s^2M^2/(8m)\le\epsilon_L/160$ gives the
original-loss bound, and $\sum_jq_j^2\le M^2/4$ gives the raw head
bound. Each of the $4r$ distinct rows starts with spatial norm at
most $2s$ and has late norm above $2sR$; reverse triangle inequality
gives the same stated spatial-motion bound.

\section{Control of every signed row for mixture teachers}\label{app:mixture-defect}

\subsection{Full model and the sharp static ceiling}
Use $X\sim N(0,I_d)$, $\sigma_\alpha(t)=\alpha t+(1-\alpha)t_+$,
$(A_j,W_j,B_j)=s(q_j,w_j,b_j)$ and $\theta_j=(w_j,b_j)$.
For a normalized row, $C(\theta)=\mathbb E[y_c\sigma_\alpha(w^TX+b)]$;
$P_U=UU^T$ and $P_{U^\perp}=I_d-P_U$.
The densities $\varphi_v$ and $\varphi$ are those of $N(0,v)$ and
$N(0,1)$, respectively.

Use precisely the full teacher, network, original Gaussian law and
simultaneous update \eqref{eq:app-mixture} and BM4:
\begin{gather}
 g_v(t)\varphi(t)=-\varphi_v'''(t),\quad
 g_i=\int_{[1/3,1]}g_v\,d\mu_i(v),\quad N_i=\|g_i\|_2,\quad
 \notag\\ y_c=\sum_i\frac{\lambda_i}{N_i}g_i(u_i^TX),\quad
 y=y_c+e,\quad \|y\|_2=1 .
 \tag{SB1}
\end{gather}
Here each \(\mu_i\) is an arbitrary probability measure,
\(\lambda_i>0\), \(\sum_i\lambda_i^2=1\), and \(U^TU=I_r\).
No Hermite coefficient is removed.  In particular
\(\sqrt6\le N_i<9\) and \(\|y_c\|_{H^1}<13\).
The full \(H^1\) residual retains its mean, linear and interaction
terms.  For \(J=1-\alpha>0\), let
\[
 K_i(B)=\frac{J\lambda_i}{2N_i}\sqrt{\frac B{1+B}}
       \int v^{-3/2}\varphi(\sqrt{B/v})\,d\mu_i(v),\qquad
 K_i^*=\max_{B\ge0}K_i(B),\qquad K_*=\max_iK_i^* .
 \tag{SB2}
\]
For each fixed axis $i$, $\mathbb E_B[\cdot]$ denotes expectation
under the probability measure proportional to
$v^{-3/2}e^{-B/(2v)}d\mu_i(v)$; the axis index is suppressed.
The maxima occur at the unique mixture roots
\(B_i^*(1+B_i^*)\mathbb E_{B_i^*}[1/v]=1\);
thus \(B_i^*\in[(\sqrt{7/3}-1)/2,(\sqrt5-1)/2]\).
The proof following (BM35) in Appendix~\ref{app:mixture-acquisition}
establishes uniqueness in this interval. There is no other global
maximum: the logarithmic derivative of the axis objective is
\(1/[2B(1+B)]-\mathbb E_B[1/v]/2<0\) for \(B>\beta\);
the objective vanishes at zero and infinity.
In particular \(J/(1000\sqrt r)\le K_*\le1/2\).

Write \(p=|q|\), \(\rho=\|w\|\), \(V=p^2+\rho^2+b^2\),
\(H=qC\), \(Q=H/V\), and \(B=b^2/\rho^2\).  At zero spatial
radius \(C=0\), including pure-bias rows.  Put
\[
 \beta=\frac{\sqrt5-1}{2},\qquad
 \Theta(a)=
 \left\{\frac{a/(1+a)}{\beta/(1+\beta)}
                         e^{-(a-\beta)}\right\}^{1/2}.
 \tag{SB3}
\]
For every \(a\ge\beta\), every row, every head ratio and every
collection of measures and coefficients,
\[
 {\displaystyle \quad B\ge a\ \Longrightarrow\ |Q|\le\Theta(a)K_*.\quad}
 \tag{SB4}
\]
This ceiling is sharp uniformly over the family: equality holds
for a single cubic axis \(\mu=\delta_1\), \(B=a\), and
\(p^2=(1+a)\rho^2\).

Here is a full proof that includes signed spatial coordinates.
Let \(s_i=(u_i^Tw/\rho)^2\), \(d=1-(1-v)s_i\).
The head factor is at most \(1/2\).  The squared ratio of the
absolute \(i,v\) contribution to \(s_i\) times its axis contribution
at \(\beta\) is
\[
 \frac{B/(1+B)}{\beta/(1+\beta)}s_i
       (v/d)^3\exp(-B/d+\beta/v).
 \tag{SB5}
\]
Since \(v\le d\), the function \(3\log v+\beta/v\) is increasing
on \([1/3,1]\).  Consequently the last two factors are at most
\(\exp[-(B-\beta)/d]\le\exp[-(B-\beta)]\).
Sum the resulting unsquared positive bounds, using
\(\sum_i s_i\le1\) and \(K_i(\beta)\le K_i^*\).
Finally \(\Theta\) is decreasing for \(B\ge\beta\), because
\((\log\Theta^2)'=1/[B(1+B)]-1\).  This proves SB4 without
a parity cancellation or a favorable row assumption.

In particular
\[
\begin{gathered}
\Theta(3/4)=0.991615\ldots,\qquad
 \Theta(29/20)=0.821163\ldots<411/500,\\
 \Theta(3/2)=0.806393\ldots .
\end{gathered}
 \tag{SB6}
\]
Only the strict rational bound in the middle is used below.
It can be certified without numerical optimization: first write
\(\beta/(1+\beta)=1-\beta\), enclose
\(\beta\) between \(618033/10^6\) and \(618035/10^6\), substitute
the upper endpoint in both increasing factors of SB3, and lower
bound \(e^{831965/10^6}\) by its Taylor polynomial of degree eight.
All remaining comparisons are rational.

\subsection{A bounded bias weight for every signed component}

For \(B\ge0\), define the continuous function
\[
 c(B)=
 \begin{cases}
 1/5,&0\le B\le1,\\[2pt]
 \displaystyle
 \frac{(1+B)^{-1}-9/2+6B}{2\{3B(1+B)-1\}},&1\le B\le3/2,\\[6pt]
 c(3/2),&B\ge3/2,
 \end{cases}
 \qquad F(B)=\exp\left(\int_0^B c(u)\,du\right).
 \tag{SB7}
\]
On \([0,3/2]\), \(1/5\le c\le1/4\), \(c\) is
Lipschitz with constant two, and
\[
 1\le F(B)\le e^{3/8}<3/2.
 \tag{SB8}
\]
For example, the numerator of \(c-1/5\), after multiplication
by the positive denominator and \(10(1+B)\), is
\((B-1)(-12B^2+24B+31)\).
For \(c\le1/4\), the relevant numerator is
\((1+B)^{-1}-4+(9/2)B-(3/2)B^2
 \le1/2-5/8<0\).
Differentiating the displayed quotient bounds \(|c'|\) by two;
its numerator is at most \(49/10\), its derivative at most six,
its denominator at least ten, and the latter's derivative at most
24.  The constant continuation preserves this Lipschitz bound.

For \(qb\ne0\) and \(\rho>0\), let
\[
 \tau=-\operatorname{sign}(qb),\quad
 E=\|P_{U^\perp}w\|^2+\sum_i\min\{\tau u_i^Tw,0\}^2,\quad
 X=p\sqrt E,\qquad Z=F(B)X .
 \tag{SB9}
\]
Orientation is analytical; no parameter is reflected.
The unweighted \(X\) alone is used for pure-bias or unmarked
rows; \(Z\) is needed only on marked states, where \(Q>0\)
already ensures \(\rho>0\).
Fix \(11/10\le a<3/2\), put \(\Delta=3/2-a\), and assume
\[
 B\le a,\qquad
 \frac{1}{1+B}\le k:=\frac{\rho^2}{p^2}
       \le\frac{1+\varepsilon}{1+B},\qquad
 0<\varepsilon\le\min\{1/15,\Delta\}.
 \tag{SB10}
\]
For each full kernel component put \(t=1/d\in[1,3]\).
The two decisive coefficients are
\begin{align}
 L&=3t-Bt^2-k-2c(B)+2c(B)B(1+B)t,\notag\\
 R&=k+Bt+2c(B)-2c(B)B(1+B)t.
 \tag{SB11}
\end{align}
They satisfy, uniformly over all mixtures,
\[
 {\displaystyle \quad L\ge\Delta,\qquad R\ge\Delta,\qquad
              L+R=(3+B)t-Bt^2.\quad}
 \tag{SB12}
\]
For \(0\le B\le1\), \(L\) is concave in \(t\) and its
two endpoint values exceed one; \(R\ge k+2/5\ge9/10\).
Explicitly those endpoint values of \(L\) are
\(13/5-3B/5+2B^2/5-k\) and
\(43/5-39B/5+6B^2/5-k\).
The first exceeds one by \(k\le16/15\); the second decreases
to a value at least \(2-8/15\).
For \(1\le B\le a\), put \(k_0=(1+B)^{-1}\).
The choice SB7 gives at \(t=3\)
\[
 L=(9-6B)/2-(k-k_0)\ge(5/2)\Delta,\qquad
 R=(9-6B)/2+(k-k_0)\ge3\Delta.
 \tag{SB13}
\]
At \(t=1\), \(L>1\) and
\(R\ge2/5+2/5-3/8=17/40>\Delta\).
Concavity of \(L\) and affinity of \(R\) finish SB12.
This uses \(\Delta\le2/5\), the reason for the harmless lower
restriction \(a\ge11/10\).

To explain the new weight, use the exact components of BM5
\[
 H_{iv}=-\frac{J\lambda_iq}{N_i}c_i^3bD_{iv}^{-3/2}
                  \varphi(b/\sqrt{D_{iv}}),\quad
 D_{iv}=\rho^2-(1-v)c_i^2,\quad c_i=u_i^Tw .
 \tag{SB14}
\]
Their signs equal \(\operatorname{sign}(\tau c_i)\).
The common radial spatial coefficient is
\(-\sum\int H_{iv}(3t-Bt^2)/\rho^2\).
The coordinate boost magnitude is
\[
 k_{iv}^{\rm boost}
   =\frac{|H_{iv}|}{|c_i|}\{(3+B)t-Bt^2\},\qquad
 \dot B=\frac2{\rho^2}\sum_i\int
             H_{iv}\{1-B(1+B)t\}\,d\mu_i.
 \tag{SB15}
\]
Dots denote the vector field of the exact simultaneous teacher
Euler candidate, not a replacement flow.
Form a vector \(z\) of \(F(B)pw_\perp\) and the negative
coordinates \(F(B)p\tau c_i\).  It has norm \(Z\).
For a positive component the first-order vector field is
\(-H_{iv}Lz/\rho^2\).  For a negative component,
writing \(A_{iv}=|H_{iv}|\), its field is
\[
 A_{iv}Lz/\rho^2+F(B)p k_{iv}^{\rm boost}e_i,\qquad
 -\langle z,\dot z_{iv}\rangle
 =A_{iv}F(B)^2p^2
       \{R+L(1-E/\rho^2)\}
 \ge \Delta\, A_{iv} Z^2/\rho^2 .
 \tag{SB16}
\]
The negative coordinate itself pays its potentially expanding
radial contribution.  Summing all signed components proves
\[
 -\langle z,\dot z\rangle\ge\Delta H Z^2/\rho^2
 \quad\hbox{when }H>0.
 \tag{SB17}
\]

\subsection{The actual candidate, including every error and crossing}
Here $E_{\rm force}$ is the augmented compatible vector $E_j$ of
(BM6); it is distinct from the scalar signed-spatial energy $E$ in (SB9).

Assume SB10, \(Q\ge\kappa>0\), \(\|E_{\rm force}\|\le\nu\le
\min\{1,\kappa\}\), and \(0<h\le10^{-4}\).  In the actual
compatible simultaneous update BM4, the candidate keeps the
nonzero signs of \(q,b\), and
\[
 {\displaystyle \quad
 Z^+\le(1-h\Delta H/\rho^2)Z+
                    (5000h^2+20h\nu)V .
 \quad}
 \tag{SB18}
\]
Only the pre-state must be in SB10; the candidate can cross its
bias or energy boundary.  The function in SB7 is defined beyond
the corridor precisely for this assertion.

Here are explicit finite-step remainder bounds.  At \(B\le3/2\),
upper balance gives
\(\sum_i\int|H_{iv}|\le2\rho^2\),
\(|3t-Bt^2|\le9\), and
\(0\le(3+B)t-Bt^2\le9\).
Also \(|H_{iv}|/c_i^2\le2\).
Thus the common radial coefficient has magnitude at most 18,
each coordinate boost divided by its absolute coordinate is at
most 18, and all pure coordinate multipliers are at least
\(1-36h>0\).  Zero coordinates stay zero.
The pure head grows, since \(H>0\).
The pure bias multiplier is at least \(1-6h\), by its exact form
\(1+hH/b^2-h\sum\int H_{iv}/D_{iv}\).
The full kernel bounds \(|C|\le|b|\), \(|C|\le\|\theta\|\)
give \(|b|\ge\kappa p\).
The actual bias error is at most \(h\nu p\); the head error
is at most \(h\nu\|\theta\|\).
Consequently neither sign changes, including at the candidate.

Parameterize the straight pure hidden segment by \(u\in[0,h]\).
Because \(p/\rho<2\) and \(\|\nabla C\|\le1\), its spatial
radius is at least \((1-2h)\rho\) and its squared bias ratio
is below two.  Direct quotient differentiation yields
\[
 |B'(u)|\le16,\qquad |B''(u)|\le200.
 \tag{SB19}
\]
For detail, put \(N=(b+u\dot b)^2\) and
\(D=\|w+u\dot w\|^2\).  The bounds
\[
 N''/D,D''/D\le9,\quad |D'/D|\le5,\quad
 N/D\le2,\quad |N'/D|\le6
\]
give \(|B'|\le16\) and
\(|B''|\le9+60+18+100<200\).
Since \(|c'|\le2\), the scalar
\[
 s(u)=(1+u C_{\rm oriented}/p)\,F(B(u))/F(B(0))
 \tag{SB20}
\]
has \(|s'|\le10\) and \(|s''|\le1300\) on this segment.
Indeed
\[
 C_{\rm oriented}/p\le\sqrt{1+\varepsilon}<2,\qquad
 F(B(u))/F(B(0))\le e^{4h}<1.001;
\]
the differentiated bounds give \(|s'|<7\) and
\(|s''|<1300\).  Here the deliberately looser 10 is used.
The bounds also hold across \(B=1,3/2\) in the Lipschitz
derivative sense.

Because pure coordinate signs persist, the exact weighted defect
vector equals
\[
 s(h)\{z+hF(B)p\dot w_{\rm defect}\}.
\]
Its difference from \(z+h\dot z\) is at most \(1000h^2V\).
Moreover \(\|\dot z\|\le42Z\): the head, bias-weight, radial
and negative-coordinate terms are bounded respectively by
\(2Z,4Z,18Z,18Z\).
Using SB17 and \(\sqrt{1+x}\le1+x/2\) therefore gives the
pure part of SB18 with an error below \(2000h^2V\).
At \(Z=0\), the pure defect stays zero, so no division by zero
is involved in that argument.

Finally connect the pure and actual candidates by a straight
segment.  Throughout it \(\rho>\rho_{\rm old}/2\),
\(B<2\), and \(p\le2p_{\rm old}\).
The map \(F(B)p\,\operatorname{dist}(w,\mathrm{cone}_\tau)\)
is locally Lipschitz.  Its head derivative is at most \(2\rho\);
its hidden derivative norm is at most
\[
 p\{F+\rho |F'|\|\nabla B\|\}\le5p,
 \qquad \|\nabla B\|=2\sqrt{B+B^2}/\rho .
 \tag{SB21}
\]
The compatible displacements \(h\nu\|\theta_{\rm old}\|\)
and \(h\nu p_{\rm old}\) thus cost less than \(20h\nu V\).
This proves SB18 for all coordinate sign crossings caused by
the actual force.  The teacher and full student/residual terms
have not been separated into different dynamics.

\subsection{A discrete entry envelope for the complete bank}

We now prove the acquisition and switching bounds used by the full-bank
estimate. The actual candidate induction in
Appendix~\ref{app:mixture-acquisition}, with the public
\(M,T\), gives all four original categories per axis throughout
\(N_*\le n\le N\), with their same calibrated errors and
\(\rho>2R\).  Every row obeys
\(\sum_jV_{j,n}\le S\), the complete compatible force cap
\(\|E_{j,n}\|\le\nu\), and
\[
 0\le D_j:=\|\theta_j\|^2-q_j^2\le6 .
 \tag{SB27}
\]
These estimates are (BAJ17)--(BAJ19) in that appendix; their proofs
do not use the comparable-signal condition \eqref{eq:app-signals}.
They include each candidate and
recomputed endpoint force.

At a selected profile, balance and BAJ19 give
$t^2:=q^2/\|\theta\|^2\ge1-6/(4R^2)\ge125/128$, hence
$2t/(1+t^2)\ge1-10^{-4}$. Normalizing the actual and ideal augmented
hidden profiles costs at most $2e_*$; the central correlation on the
unit augmented sphere is $1$-Lipschitz by HK6. The balanced ideal
profile has normalized signal exactly $K_i^*$. It follows that
\[
 Q_{j,N_*}\ge(1-10^{-4})K_i^*-e_*
 \ge\{.9999(5/6)-1/40960\}K_*>.833K_* .
 \tag{SB28}
\]
This follows from \(q^2/\|\theta\|^2\ge125/128\) and
\(2t/(1+t^2)\ge1-10^{-4}\), not an assumed balanced head.
The exact quotient debit \(Q^+\ge Q-h\nu^2\) and \eqref{eq:sw-mesh}--\eqref{eq:sw-force}
keep every selected \(Q_j>.832K_*\).
The exact energy lower multiplier
\(V^+\ge[1+h(4Q-2\nu)]V\) then gives
\[
 V_j(N_*+l)\ge v_*e^{3.32K_*lh}.
 \tag{SB29}
\]
Indeed its multiplier is at least \(1+3.3278hK_*\), and
\(hK_*\le1/40000\) pays the logarithmic quadratic debit.

Mark each original row at its first suffix checkpoint
\(Q_j\ge.826K_*\).  Labels do not affect training.
Thereafter \(Q_j\ge.825K_*\), so SB4--6 imply \(B_j<29/20\).
Its head and bias signs persist: upper balance alone, the
component bounds used in SB18, and \(\nu\le\kappa=.825K_*\)
give the same sign margins.  Before marking the exact energy
upper bound gives, including the first marking candidate,
\[
 \log(V_j^+/V_j)
 \le4hQ_j+2h\nu+4h^2<3.31hK_* .
 \tag{SB30}
\]
Every marked row has increasing \(V_j\).  Once \(V_j\ge480\),
SB27 implies
\[
 \frac{q_j^2}{\|\theta_j\|^2}
  =\frac{V_j-D_j}{V_j+D_j}
  \ge\frac{474}{486}>\frac{20}{21}.
 \tag{SB31}
\]
Hence SB10 holds with \(a=29/20\) and
\(\varepsilon=\Delta=1/20\).
SB18 applies at every ensuing pre-state, including candidate
exits, and its optional contraction can be dropped. Write
$\omega=5000h^2+20h\nu$ and $b_0=3.31K_*$.
For suffix indices $l\ge0$ (thus $V_{j,l}=V_{j,N_*+l}$), define
\[
 A_{j,l}:=(3V_{j,0}/4+400)e^{b_0lh}
                  +\omega\sum_{q=0}^{l-1}V_{j,q}.
\]
We claim $X_{j,l}\le A_{j,l}$ for every row; after marking and
reaching $480$, the stronger invariant is $Z_{j,l}\le A_{j,l}$.
By SB8, $X\le Z\le3X/2$ on marked states. This is an entry-aware discrete estimate:
\begin{itemize}
\item Until marking, including its first candidate, SB30 gives
$V_{j,l}\le V_{j,0}e^{b_0lh}$ and $X_{j,l}\le V_{j,l}/2$.
A row marked directly at energy at least $480$ therefore enters
with $Z\le3V/4\le A_{j,l}$; this also covers marking at $l=0$.
\item A marked row below $480$ has $Z\le3V/4<360$.
Its first candidate at or above $480$ has
$V^+\le480(1+2h)^2<481$ and $Z^+<400\le A_{j,l+1}$.
\item Once active, increasing energy and persistent marking keep
SB18 applicable, so $Z_{j,l+1}\le Z_{j,l}+\omega V_{j,l}$.
But $A_{j,l+1}\ge A_{j,l}+\omega V_{j,l}$, closing the invariant.
\end{itemize}
Never-marked and zero rows are covered by the first case; a row
that remains marked below $480$ is covered by the second.
Thus no marking-time or threshold-crossing contribution is omitted.
Summing $X\le A$, using $\sum_jV_{j,0}\le S_0$ and the full-bank
bound $\sum_jV_{j,q}\le S$, gives
\begin{gather}
 \Xi(N_*+l):=\sum_jX_j(N_*+l)\notag\\
 \le(3S_0/4+400m)e^{3.31K_*lh}
                      +(5000h+20\nu)TS,\notag\\
 {\displaystyle \quad \Xi_N/v_G
 \le\Gamma e^{-.01K_*t_G}
                  +(5000h+20\nu)TS/v_*\le c_G.\quad}
 \tag{SB32}
\end{gather}
Before marking \(X_j\) uses its instantaneous orientation;
always \(X_j\le V_j/2\).  Thus no initial or later sign event
has been assumed for an adverse row.

\section{The complete symmetric derivative frame}\label{app:mixture-frame}
Here $X\sim N(0,I_d)$, $U=(u_1,\ldots,u_r)$, $P_U=UU^T$,
$P_{U^\perp}=I_d-P_U$, $J=1-\alpha$, and
$\sigma_\alpha(t)=\alpha t+Jt_+$. Use normalized rows
$(A_j,W_j,B_j)=s(q_j,w_j,b_j)$, $\rho_j=\|w_j\|$, and the complete
student $f=(s^2/m)\sum_jq_j\sigma_\alpha(w_j^TX+b_j)$.
The density $\varphi$ and scalar $G$ below are standard normal;
$\nabla_x f$ differentiates the input.

For any orientation $\tau\in\{-1,1\}$ define
$\mathcal E_\tau(w)=\|P_{U^\perp}w\|^2+
\sum_i\min\{\tau u_i^Tw,0\}^2$ and
$\Xi=\sum_j|q_j|\sqrt{\mathcal E_{\tau_j}(w_j)}$,
where $\tau_j=-\operatorname{sign}(q_jb_j)$ when $q_jb_j\ne0$
and either orientation otherwise. This is the same defect as SB9.

\subsection{A symmetric signed frame from fixed quadratic tests}

This subsection is deterministic at any one actual bank, with no
corridor hypothesis. Put $Z=U^TX$, $e=r^{-1/2}(1,\ldots,1)^T$,
and use the $r$ centered Gaussian quadratic tests
\[
 \xi_i(X)=(e^TZ)Z_i-(e)_i,\qquad
 T:=\mathbb E[\xi\xi^T]=I_r+ee^T,\qquad I_r\preceq T\preceq2I_r.
 \tag{BF21}
\]
In the profile comparisons below $e_i$ denotes the $i$th coordinate
basis vector, whereas $(e)_i$ is the scalar component in (BF21).
For each row with $\rho_j>0$ define
\[
 x_j=\tau_jU^Tw_j/\rho_j,\quad z_j=b_j/\rho_j,\qquad
 \omega_j=\frac{Js^2}{m}|q_j\rho_jz_j|\varphi(z_j)\ge0.
 \tag{BF22}
\]
If $q_jb_j=0$, set $\omega_j=0$ and choose either orientation.
If $\rho_j=0$, the spatial gradient contribution is zero and all
formulas below assign the row zero contribution.
The \emph{complete} current-head frame is exactly symmetric:
\[
 {\displaystyle \quad
 D:=\mathbb E[(U^T\nabla_x f)\xi^T]
       =\sum_{j=1}^m\omega_j(e^Tx_j)x_jx_j^T.
 \quad}
 \tag{BF23}
\]
Indeed Gaussian regression onto $w_j^TX/\rho_j$ and
$\mathbb E[\mathbf1_{G>-z}(G^2-1)]=-z\varphi(z)$ give
\[
 \mathbb E[\sigma'_\alpha(w_j^TX+b_j)\xi_i]
 =-Jz_j\varphi(z_j)(U^Tw_j/\rho_j)_i
                             e^T(U^Tw_j/\rho_j).
\]
Multiplication by the actual gradient row
$(s^2/m)q_jw_j$ proves BF23; the constant derivative $\alpha$
has zero correlation with each centered quadratic test. It remains
present in the full AGOP and in its outside estimate below.

Split BF23 by the sign of $e^Tx_j$ as $D=D_+-D_-$, where both
$D_+,D_-$ are positive semidefinite. Every signed original row is
in exactly one sum, with zero summands harmless. Then
\[
 {\displaystyle \qquad
 \|D_-\|_{\rm op}\le\frac{J\varphi(1)s^2}{m}\,\Xi,
 \qquad\Xi=\sum_j |q_j|\sqrt{\mathcal E_{\tau_j}(w_j)}.
 \qquad}
 \tag{BF24}
\]
To see this, $(-e^Tx)_+\le\|x_-\|$, $\|x\|\le1$, and
$|z|\varphi(z)\le\varphi(1)$. Thus each negative summand has
operator norm at most
$(J\varphi(1)s^2/m)|q_j|\rho_j\|(x_j)_-\|$,
which is bounded by its BF24 debit. No angular separation, common
head sign or axis assignment of the other rows is needed.

For an explicit chosen-profile interface, suppose there are $g$
disjoint groups of $r$ actual rows, one row for every $i$ in each
group, obeying
\[
 \|x_{\ell i}-e_i\|\le\epsilon,\quad 0\le\epsilon<r^{-1/2},\qquad
 \omega_{\ell i}\ge\omega_0>0,\quad1\le\ell\le g.
 \tag{BF25}
\]
If the scalar on the right is positive, then
\[
 \lambda_{\min}(D)
 \ge\Lambda:=
 g\omega_0(r^{-1/2}-\epsilon)(1-\sqrt r\epsilon)^2
                         -\frac{J\varphi(1)s^2}{m}\Xi>0.
 \tag{BF26}
\]
For each group the column matrix of its $x_{\ell i}$ differs from
$I_r$ in operator norm by at most $\sqrt r\epsilon$. All its
$e^Tx_{\ell i}$ are at least $r^{-1/2}-\epsilon$.
Consequently its positive semidefinite contribution has least
eigenvalue at least
$\omega_0(r^{-1/2}-\epsilon)(1-\sqrt r\epsilon)^2$.
All other positive summands only increase $D_+$, and BF24 pays all
negative ones. This proves BF26 for the full bank.

\subsection{Full AGOP, outside gradient, and a conditional spectral join}

\begin{lemma}[Frame-to-full-AGOP comparison]\label{lem:frame-agop}
Let $U\in\mathbb R^{d\times r}$ have orthonormal columns, $1\le r\le d$,
and let $H$ be a square-integrable $\mathbb R^d$-valued random vector. A centered test
vector $\xi$ on the same probability space has finite dimension and satisfies $0\prec C:=\mathbb E[\xi\xi^T]\preceq cI$.
Put $\mathsf G=\mathbb E[HH^T]$, $D=\mathbb E[(U^TH)\xi^T]$ and
$P_{U^\perp}=I-UU^T$. Suppose, for $\gamma>0$,
\[
 DD^T\succeq\gamma^2I_r,\qquad
 \operatorname{tr}(P_{U^\perp}\mathsf G)\le\beta,\qquad a:=\gamma^2/c.
\]
Then $U^T\mathsf GU\succeq DC^{-1}D^T\succeq aI_r$ and
$\lambda_r(\mathsf G)\ge a$; for $r<d$, $\lambda_{r+1}(\mathsf G)\le\beta$.
If $\beta<a$, the top-$r$ projector $P$ is unique and
\[
 P\preceq\mathsf G/a,\qquad
 1-\lambda_{\min}(U^TPU)\le\beta/a,\qquad
 1-r^{-1}\operatorname{tr}(U^TPU)\le\beta/(ra).
\]
If $r<d$ and $\beta<a$, the cutoff gap is at least $a-\beta>0$.
For $r=d$, $P=I_d$ and both angle deficits vanish without the condition $\beta<a$.
\end{lemma}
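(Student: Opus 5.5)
The plan is a Schur-complement (Cauchy--Schwarz) argument on the joint second-moment matrix of $(U^TH,\xi)$, followed by Weyl/minmax comparisons and a projector-versus-matrix ordering.

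\emph{Step 1 (Bessel).} The joint second moment
\[
 \mathbb E\left[\begin{pmatrix}U^TH\\ \xi\end{pmatrix}\begin{pmatrix}U^TH\\ \xi\end{pmatrix}^T\right]
 =\begin{pmatrix}U^T\mathsf GU & D\\ D^T & C\end{pmatrix}
\]
is positive semidefinite, and $C\succ0$. The Schur complement therefore gives $U^T\mathsf GU\succeq DC^{-1}D^T$. Equivalently, the $L^2$ projection of $U^TH$ onto the span of the coordinates of $\xi$ is $DC^{-1}\xi$, and its second moment is dominated by that of $U^TH$. Since $C\preceq cI$, we have $C^{-1}\succeq c^{-1}I$, so $DC^{-1}D^T\succeq c^{-1}DD^T\succeq(\gamma^2/c)I_r=aI_r$.

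\emph{Step 2 (eigenvalues).} By Courant--Fischer on the $r$-dimensional subspace $\operatorname{span}(U)$, $\lambda_r(\mathsf G)\ge\min_{\|v\|=1}v^TU^T\mathsf GUv\ge a$. For $r<d$, minmax over subspaces of codimension $r$ gives
\[
 \lambda_{r+1}(\mathsf G)\le\max_{x\perp U,\ \|x\|=1}x^T\mathsf Gx=\|P_{U^\perp}\mathsf GP_{U^\perp}\|\le\operatorname{tr}(P_{U^\perp}\mathsf G)\le\beta .
\]
If $\beta<a$, then $\lambda_r>\lambda_{r+1}$, so the top-$r$ projector $P$ is unique and the cutoff gap is at least $a-\beta$.

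\emph{Step 3 (angles).} On the range of $P$ every eigenvalue of $\mathsf G$ is at least $\lambda_r(\mathsf G)\ge a$, and $\mathsf G\succeq0$ on the complement. Hence $P\preceq\mathsf G/\lambda_r(\mathsf G)\preceq\mathsf G/a$. Using $U^TPU+U^T(I-P)U=I_r$, the next point to justify is the equal-rank principal-angle identities
\[
 1-\lambda_{\min}(U^TPU)=\|P_{U^\perp}P\|^2=\|P_{U^\perp}PP_{U^\perp}\|,
 \qquad
 r-\operatorname{tr}(U^TPU)=\operatorname{tr}(P_{U^\perp}P).
\]
The second is immediate, since $\operatorname{tr}P=r$. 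The first holds because $P$ and $UU^T$ are rank-$r$ projectors, so the nonzero singular values of $(I-P)U$ and of $P_{U^\perp}P$ coincide (the standard CS-decomposition fact). Then
\[
 \|P_{U^\perp}PP_{U^\perp}\|\le a^{-1}\|P_{U^\perp}\mathsf GP_{U^\perp}\|\le\beta/a,
 \qquad
 \operatorname{tr}(P_{U^\perp}P)=\operatorname{tr}(P_{U^\perp}PP_{U^\perp})\le\beta/a,
\]
and dividing the latter by $r$ gives the mean bound.

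\emph{Case $r=d$.} Here $P=I_d$ and $U^TPU=I_r$, so both deficits vanish trivially.

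The main point of care is the principal-angle identity for equal-rank projectors. I would prove it by noting that $U^T(I-P)U$ and $(I-P)UU^T(I-P)$ share nonzero spectrum, and that $\|(I-P)UU^T\|=\|UU^T(I-P)\|=\|P_{U^\perp}P\|$ for equal-rank projectors, since the two products have the same norm by the symmetric CS decomposition. Everything else is linear algebra.
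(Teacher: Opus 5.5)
Your proof is correct, and Steps 1--2 match the paper's argument. The Schur-complement inequality is the paper's expansion of $\mathbb E[(U^TH-DC^{-1}\xi)(U^TH-DC^{-1}\xi)^T]\succeq0$. The eigenvalue bounds use the same minmax comparison through $\|P_{U^\perp}\mathsf GP_{U^\perp}\|\le\operatorname{tr}(P_{U^\perp}\mathsf G)$. The step $P\preceq\mathsf G/a$ is also the same.

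You diverge only on the minimum-angle bound. The paper never uses the equal-rank principal-angle identity. You already prove $\operatorname{tr}(I_r-U^TPU)=\operatorname{tr}(P_{U^\perp}P)\le\beta/a$. The paper then notes that $I_r-U^TPU\succeq0$, so its largest eigenvalue $1-\lambda_{\min}(U^TPU)$ is at most its trace. This makes the step you flagged as ``the main point of care'' unnecessary, and no CS decomposition is needed.

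Your route is still valid. The identity $1-\lambda_{\min}(U^TPU)=\|P_{U^\perp}PP_{U^\perp}\|$ holds for equal-rank projectors in finite dimensions, and the paper itself invokes it later in (AJ23). It gives a slightly sharper intermediate bound, $1-\lambda_{\min}(U^TPU)\le\|P_{U^\perp}\mathsf GP_{U^\perp}\|/a$, which needs only operator-norm control of the outside block rather than trace control. Under the lemma's trace hypothesis, both routes give $\beta/a$.
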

\begin{proof}
Expanding $\mathbb E[(U^TH-DC^{-1}\xi)(U^TH-DC^{-1}\xi)^T]\succeq0$
and using $C^{-1}\succeq c^{-1}I$ gives the inside bound.
Minmax gives $\lambda_r\ge a$ and, when
$r<d$, $\lambda_{r+1}\le\|P_{U^\perp}\mathsf GP_{U^\perp}\|
\le\operatorname{tr}(P_{U^\perp}\mathsf G)\le\beta$.
The gap (or $r=d$) makes $P$ unique; spectral decomposition gives
$P\preceq\mathsf G/a$. Finally $0\preceq I_r-U^TPU$ and
\[
 \operatorname{tr}(I_r-U^TPU)
 =\operatorname{tr}(P_{U^\perp}P)\le\beta/a.
\]
Its largest eigenvalue is at most its trace, proving both angle bounds.
All comparisons use the full $\mathsf G$, including its cross blocks.
\end{proof}

Let $\mathsf G=\mathbb E[\nabla f\nabla f^T]$ for the actual
current heads. The exact outside derivative is
\[
 P_{U^\perp}\nabla f=\frac{s^2}{m}
       \sum_jq_jP_{U^\perp}w_j\sigma'_\alpha(w_j^TX+b_j).
\]
The $L^2$ triangle inequality, retaining all cross terms, gives
\[
 {\displaystyle \quad
 \operatorname{tr}(P_{U^\perp}\mathsf GP_{U^\perp})
 \le\beta:=\frac{s^4}{m^2}\Xi^2.
 \quad}
 \tag{BF27}
\]
This is the complete outside \emph{trace}, retaining every trained row.
Apply Lemma~\ref{lem:frame-agop} with $H=\nabla f$, $C=T$ from BF21,
$c=2$ and $\gamma=\Lambda$ from BF26. Its inside comparison is
\[
 U^T\mathsf GU\succeq DT^{-1}D^T\succeq\tfrac12DD^T.
 \tag{BF28}
\]
Thus $\lambda_r(\mathsf G)\ge a:=\Lambda^2/2$. If $r<d$ and
$\beta<a$, the same lemma gives $\lambda_{r+1}(\mathsf G)\le\beta$ and
\[
 \lambda_r-\lambda_{r+1}\ge a-\beta>0,\qquad
 \lambda_{\min}(U^TPU)\ge1-\beta/a,\qquad
 r^{-1}\operatorname{tr}(U^TPU)\ge1-\beta/(ra).
 \tag{BF29}
\]
For $r=d$, $P=I_d$ and the angle assertions are automatic.

\section{A bounded readout for the full mixture links}\label{app:mixture-prediction}

\subsection{The full mixture and its exact scalar projection}
Let $\varphi_v$ be the $N(0,v)$ density and $\varphi=\varphi_1$.
Here $Z\sim N(0,1)$, $\Phi$ is its CDF,
$h_k=\operatorname{He}_k/\sqrt{k!}$, and scalar inner products and
norms are in $L^2(N(0,1))$. The activation is
$\sigma_\alpha(t)=\alpha t+(1-\alpha)t_+$, $0\le\alpha<1$, and
$J=1-\alpha>0$.
For any probability measure $\mu$ on $[1/3,1]$, define
\begin{gather}
 g_v\varphi=-\varphi_v''',\qquad
 g_\mu=\int g_v\,d\mu(v),\quad N_\mu=\|g_\mu\|_2,
 \quad y_\mu=g_\mu/N_\mu,\notag\\
 g_v(t)=v^{-7/2}(t^3-3vt)e^{-(v^{-1}-1)t^2/2}.
 \tag{MP1}
\end{gather}
These are the full links in \eqref{eq:app-mixture}. They are odd, have zero first Gaussian
moment, and belong to Gaussian $H^1$. For completeness their coefficients
and exact norm kernel are
\begin{gather}
 \langle g_\mu,h_{2j+3}\rangle
 =\frac{(-1)^j\sqrt{(2j+3)!}}{2^jj!}
                  \int(1-v)^j\,d\mu(v),\notag\\
 K(t)=\frac{3(2+3t)}{(1-t)^{7/2}},\qquad
 N_\mu^2=\iint K((1-v)(1-w))\,d\mu(v)d\mu(w),\notag\\
 N_v^2=\frac{3\{2+3(1-v)^2\}}{[v(2-v)]^{7/2}}.
 \tag{MP2}
\end{gather}
The generating function
$\mathbb E[g_v(Z)e^{tZ-t^2/2}]=t^3e^{-(1-v)t^2/2}$ gives the
coefficients. Squaring and summing the absolutely convergent coefficient
series gives MP2. The same series weighted by Hermite degree is summable
uniformly on this interval, so the full $H^1$ claim and integration in
$\mu$ are justified. No truncation is used.

For $B=b^2>0$, set
\begin{gather}
 p_b=2\Phi(b)-1,\qquad
 q_b(z)=\operatorname{clip}(z,-b,b)-p_bz,\notag\\
 V_b=\|q_b\|_2^2
 =p_b-2b\varphi(b)+B(1-p_b)-p_b^2,\notag\\
 L_\mu(b)=2b\int v^{-3/2}\varphi(b/\sqrt v)\,d\mu(v)>0.
 \tag{MP3}
\end{gather}
The function $q_b$ is odd and orthogonal to $Z$. Integrating MP1 by
parts against the clipped linear function gives the exact full-link
identity
\[
 \langle g_\mu,q_b\rangle=-L_\mu(b),\qquad
 E_b(y_\mu)=\frac{L_\mu(b)^2}{N_\mu^2V_b}.
 \tag{MP4}
\]
Here $E_b$ is the full nonconstant projection energy in the four-profile
span $\operatorname{span}\{\sigma_\alpha(\epsilon z+\tau b):
\epsilon,\tau\in\{-1,1\}\}$. Directly, that span is generated by
$1,Z,q_b$ and the centered even function
$\max(|Z|,b)-\mathbb E[\max(|Z|,b)]$. Oddness and zero first moment
leave only $q_b$ for MP1. In particular the unexplained infinite tail
has exactly squared norm $1-E_b(y_\mu)$; it is not dropped.

\paragraph{Uniform full-link prediction lemma.}
For every such $\mu$ and every
\[
 B\in[B_-,B_+],\qquad
 B_-=(\sqrt{7/3}-1)/2,\quad B_+=(\sqrt5-1)/2,
 \tag{MP5}
\]
one has $E_b(y_\mu)>2/5$. This includes every link-dependent
stationary bias supplied by the mixture acquisition interface, without
using its stationarity equation.

\paragraph{Proof.}
Write $L_v=2bv^{-3/2}\varphi(b/\sqrt v)$. Minkowski's inequality and
positivity imply
\[
 \frac{\int L_v\,d\mu}{N_\mu}
 \ge\frac{\int L_v\,d\mu}{\int N_v\,d\mu}
 \ge\inf_{v\in[1/3,1]}\frac{L_v}{N_v}.
 \tag{MP6}
\]
For fixed $B$, the logarithm of $L_v^2/N_v^2$, up to a constant, is
\[
 f_B(v)=\tfrac12\log v+\tfrac72\log(2-v)-B/v
              -\log\{2+3(1-v)^2\}.
\]
Its second derivative is
\[
 f_B''(v)=-\frac1{2v^2}-\frac7{2(2-v)^2}-\frac{2B}{v^3}
 +\frac{-12+18(1-v)^2}{\{2+3(1-v)^2\}^2}<0.
 \tag{MP7}
\]
Thus its minimum over the interval is at one of the two endpoints.
The explicit rational scalar certificate proved below gives
\begin{gather}
 39/1000<V_b<13/250,\qquad p_b<57/100,\notag\\
 B e^{-B}>1/5,\qquad B e^{-3B}>967/10000,\qquad
 N_{1/3}^2<79,\qquad \pi<22/7.
 \tag{MP8}
\end{gather}
Since $N_1^2=6$, the two endpoint ratios obey
\begin{gather}
 \frac{L_1^2}{N_1^2V_b}
 =\frac{Be^{-B}}{3\pi V_b}>\frac{175}{429}>\frac25,\notag\\
 \frac{L_{1/3}^2}{N_{1/3}^2V_b}
 =\frac{54Be^{-3B}}{\pi N_{1/3}^2V_b}
 >\frac{182763}{451880}>\frac25.
 \tag{MP9}
\end{gather}
Equations MP6--9 prove the claim for every positive measure, including
non-atomic measures. They use the full norm MP2 and full correlation MP4.

\subsection{A finite exact-rational certificate for MP8}
The following gives a finite rational certificate. For $n\ge0$ define
\[
 T_n(x)=\sum_{j=0}^n\frac{(-x)^j}{j!},\qquad
 I_n(b)=\sum_{j=0}^n\frac{(-1)^jb^{2j+1}}{2^jj!(2j+1)}.
\]
Taylor's theorem, followed by integration, gives
$T_{2k+1}(x)\le e^{-x}\le T_{2k}(x)$ for $x\ge0$ and
$I_{2k+1}(b)\le\int_0^b e^{-t^2/2}dt\le I_{2k}(b)$.
Put
\[
 c_-=3989422804/10^{10},\qquad c_+=3989422805/10^{10}.
\]
The alternating series for
$\pi=16\arctan(1/5)-4\arctan(1/239)$, through the first sixteen
terms and with the next term as remainder bound, verifies
$c_-<(2\pi)^{-1/2}<c_+$ by squaring positive rational endpoints.
Define rational enclosures
\[
 P_-(b)=2c_-I_9(b),\quad P_+(b)=2c_+I_{10}(b),\quad
 F_-(b)=c_-T_9(b^2/2),\quad F_+(b)=c_+T_{10}(b^2/2).
\]
They bound $p_b$ and $\varphi(b)$ on $[.51,.79]$.
Since $p_b$ increases and both $p_b/b$ and $\varphi(b)$ decrease,
\[
 \frac{dV_b}{dB}=1-p_b-2(p_b/b)\varphi(b)
 \ge 1-P_+((k+1)/100)
       -\frac{200}{k}P_+(k/100)F_+(k/100)>3/1000
 \tag{MP10}
\]
whenever $b\in[k/100,(k+1)/100]$, for each of the twenty-eight
integers $k=51,\ldots,78$. The last inequality is checked by
substitution of the displayed rational polynomials. Therefore $V_b$
is increasing on $[.51,.79]$. At a rational $b<1$ use
\begin{gather}
 b^2+(1-b^2)P_-(b)-P_+(b)^2-2bF_+(b)\le V_b,\notag\\
 V_b\le b^2+(1-b^2)P_+(b)-P_-(b)^2-2bF_-(b).
 \tag{MP11}
\end{gather}
At $b=.51$ the lower expression exceeds $39/1000$, and at
$b=.787$ the upper expression is less than $13/250$; also
$P_+(.787)<57/100$. These are three further rational checks.

The exact MP5 interval is contained in
$[B_l,B_u]=[2637/10000,6181/10000]$, itself contained in
$(.51^2,.787^2)$. This follows by substituting the endpoints into
the increasing polynomial $B+B^2$. On this interval $Be^{-B}$
is increasing and $Be^{-3B}$ has only one critical point, a maximum.
The three rational checks
\[
 B_lT_{13}(B_l)>1/5,\qquad
 B_lT_{13}(3B_l)>967/10000,\qquad
 B_uT_{13}(3B_u)>967/10000
\]
prove the exponential parts of MP8. Finally
$N_{1/3}^2=10(9/5)^{7/2}<79$ follows from
$100(9/5)^7<79^2$. This proves every scalar inequality used above
by a finite, reproducible exact calculation.

\subsection{One exact four-profile identity}
\begin{lemma}[Four biased profiles]\label{lem:four-profile-identity}
Let $0\le\alpha<1$, $J=1-\alpha$, $b>0$ and $0\le p\le1$. Set
\[
 c_{\epsilon,\tau}(p)=\frac\epsilon2
       \left(\frac\tau J-\frac p{1+\alpha}\right),
 \qquad (\epsilon,\tau)\in\{-1,1\}^2.
\]
Then, for every $z\in\mathbb R$,
\begin{gather*}
 \sum_{\epsilon,\tau}c_{\epsilon,\tau}(p)
       \sigma_\alpha(\epsilon z+\tau b)
       =\operatorname{clip}(z,-b,b)-pz,\\
 \|c(p)\|_2^2=J^{-2}+p^2/(1+\alpha)^2,
 \qquad \|c(p)\|_1=2/J.
\end{gather*}
\end{lemma}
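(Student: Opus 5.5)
The identity is a finite algebraic statement, so I would prove it by splitting the coefficients into their two pieces and evaluating each sum directly. Write $p'=p/(1+\alpha)$. Then $c_{\epsilon,\tau}(p)=\tfrac{\epsilon\tau}{2J}-\tfrac{\epsilon p'}{2}$, so the left side equals $S_1-S_2$, where
\[
S_1=\frac1{2J}\sum_{\epsilon,\tau}\epsilon\tau\,\sigma_\alpha(\epsilon z+\tau b),\qquad
S_2=\frac{p'}2\sum_{\epsilon,\tau}\epsilon\,\sigma_\alpha(\epsilon z+\tau b).
\]
In each sum I will substitute $\sigma_\alpha(t)=\alpha t+Jt_+$ and use the single elementary identity $t_+-(-t)_+=t$.

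\emph{First sum.} The linear part of $S_1$ vanishes, because $\sum_{\epsilon,\tau}\epsilon\tau(\epsilon z+\tau b)=\sum_{\epsilon,\tau}(\tau z+\epsilon b)=0$. The ReLU part of $S_1$ is
\[
\tfrac12\bigl[(z+b)_+-(z-b)_+-(b-z)_+ +(-z-b)_+\bigr].
\]
Regroup this as $(z+b)_++(-z-b)_+=|z+b|$ and $(z-b)_++(b-z)_+=|z-b|$. This gives $S_1=\tfrac12(|z+b|-|z-b|)=\operatorname{clip}(z,-b,b)$, which holds for every real $z$ since $b>0$.

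\emph{Second sum.} For fixed $\tau$, the inner sum over $\epsilon$ is $\sigma_\alpha(z+\tau b)-\sigma_\alpha(-z+\tau b)$. Summing over $\tau$, the linear parts contribute $4\alpha z$. The ReLU parts contribute
\[
J\bigl[(z+b)_+-(-z-b)_+ +(z-b)_+-(b-z)_+\bigr]=J\bigl[(z+b)+(z-b)\bigr]=2Jz.
\]
Hence the inner double sum is $(4\alpha+2J)z=2(1+\alpha)z$, and $S_2=pz$. Together with the first sum, this gives $S_1-S_2=\operatorname{clip}(z,-b,b)-pz$.

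\emph{Norms.} The four coefficients have magnitudes $\tfrac12|1/J\mp p'|$, each value appearing twice. Therefore
\[
\|c\|_2^2=\tfrac12\bigl[(1/J-p')^2+(1/J+p')^2\bigr]=J^{-2}+p'^2 .
\]
For the $\ell^1$ norm, note that $p'\le 1\le 1/J$, so $|1/J-p'|=1/J-p'$. Then $\|c\|_1=(1/J-p')+(1/J+p')=2/J$.

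There is no real obstacle here. The only points needing care are the sign bookkeeping when regrouping the four ReLU terms, and checking $p'\le1/J$ for the $\ell^1$ value; that inequality uses the hypotheses $0\le p\le1$ and $0\le\alpha<1$.
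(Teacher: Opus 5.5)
Your proof is correct and follows essentially the same direct verification as the paper, including the identical $\ell^2$ and $\ell^1$ computations that use $p/(1+\alpha)\le J^{-1}$. The only difference is bookkeeping: the paper writes $\sigma_\alpha(t)=((1+\alpha)t+J|t|)/2$ and sums over $\epsilon$ first, obtaining $\sigma_\alpha(z+\tau b)-\sigma_\alpha(-z+\tau b)=(1+\alpha)z+\tau J\operatorname{clip}(z,-b,b)$, which handles your two sums $S_1$ and $S_2$ in a single step.
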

\begin{proof}
Write $\sigma_\alpha(t)=((1+\alpha)t+J|t|)/2$.
Since $|z+b|-|z-b|=2\operatorname{clip}(z,-b,b)$,
\[
 D_\tau(z):=\sigma_\alpha(z+\tau b)-\sigma_\alpha(-z+\tau b)
 =(1+\alpha)z+\tau J\operatorname{clip}(z,-b,b).
\]
Multiplying by $(\tau/J-p/(1+\alpha))/2$ and summing over $\tau$
proves the representation. Squaring the four coefficients cancels
the mixed terms and gives the $\ell^2$ identity. Finally
$0\le p/(1+\alpha)\le J^{-1}$, so summing their absolute values
gives $\|c(p)\|_1=(J^{-1}-p/(1+\alpha))
 +(J^{-1}+p/(1+\alpha))=2/J$.
\end{proof}

\subsection{Normalized, unprojected mixture application}
Let $X\sim N(0,I_d)$ and $U=(u_1,\ldots,u_r)$ have orthonormal columns, choose arbitrary
probability measures $\mu_i$ on $[1/3,1]$, and define the full central
teacher
\[
 y_0(X)=\sum_{i=1}^r a_i y_{\mu_i}(u_i^TX),\qquad
 \sum_i a_i^2=1.
 \tag{MP12}
\]
Signs are unrestricted. This teacher is unit norm, centered, and has
Hermite rank at least three. If every $a_i\ne0$, its true minimal
linear index space is $\operatorname{span}(U)$: the nonzero degree-three
tensor has contraction range equal to that space. Zero coefficients
may instead be omitted from the definition of rank.
Allow a full actual unit target
\[
 y=y_0+e_y,\qquad \|y\|_2=1,\quad
 \|e_y\|_{H^1(\gamma_d)}\le\zeta.
 \tag{MP13}
\]
Every mean, lower Hermite term and interaction in $e_y$ is retained and
paid below. No stronger index identification for the perturbed target
is inferred without an additional same-index assumption.

Choose any $b_i^2$ in MP5 and put
\begin{gather}
 t_i=-\frac{L_{\mu_i}(b_i)}{N_{\mu_i}V_{b_i}},\qquad
 F_0(X)=\sum_i a_it_iq_{b_i}(u_i^TX),\notag\\
 E=\|F_0\|_2^2=\sum_i a_i^2E_{b_i}(y_{\mu_i})>2/5,\qquad
 \|y_0-F_0\|_2^2=1-E<3/5.
 \tag{MP14}
\end{gather}
Independence of the Gaussian coordinates makes the scalar projection
energies add exactly. Applying Lemma~\ref{lem:four-profile-identity}
with $p=p_{b_i}$ gives the exact representation of $F_0$ with
\[
 d_{i,\epsilon,\tau}=a_it_i c_{\epsilon,\tau}(p_{b_i}),\qquad
 F_0(X)=\sum_{i,\epsilon,\tau}d_{i,\epsilon,\tau}
            \sigma_\alpha(\epsilon u_i^TX+\tau b_i).
 \tag{MP15}
\]
Set $C_2=\|d\|_2$, $C_1=\|d\|_1$. Cauchy--Schwarz gives
$|t_i|\le V_{b_i}^{-1/2}$, so MP8 proves
\begin{gather}
 C_2^2=\sum_i a_i^2t_i^2
       \left(J^{-2}+\frac{p_{b_i}^2}{(1+\alpha)^2}\right)
 \le\frac{1000}{39J^2}\left(1+\frac{57^2}{100^2}\right)
 <\frac{36}{J^2},\notag\\
 C_1=\frac2J\sum_i|a_it_i|\le2\sqrt r C_2<12\sqrt r/J.
 \tag{MP16}
\end{gather}
Thus rank and leak are explicitly charged; there is no uniform bound
as $\alpha\uparrow1$.

Suppose at one actual common checkpoint $N$ there are $4r$ distinct
original rows with $\rho_j=\|W_{j,N}\|>0$ and
\[
 \left\|\left(\frac{W_{j,N}}{\rho_j},\frac{B_{j,N}}{\rho_j}\right)
            -(\epsilon u_i,\tau b_i)\right\|\le\varepsilon,
 \qquad 0\le\varepsilon\le1/100.
 \tag{MP17}
\]
The assertion MP17 must come from the full actual coupled trajectory;
it is not implied by initial category counts or by aggregate alignment.
Define, at both endpoints and using every original row,
\begin{gather}
 H^{\rm aug}_{j,n}=\sqrt{\|W_{j,n}\|^2+B_{j,n}^2},\qquad
 \phi_{j,n}(X)=\frac{\sigma_\alpha(W_{j,n}^TX+B_{j,n})}{H^{\rm aug}_{j,n}},\notag\\
 \mathcal R(n)=\inf_{\substack{\|v\|_2\le32/J\\
                    \|v\|_1\le64\sqrt r/J}}
             \left\|y-\sum_{j=1}^mv_j\phi_{j,n}\right\|_2^2.
 \tag{MP18}
\end{gather}
A zero augmented row is assigned the zero feature. The normalization,
identity spatial projection, row set, and both budgets agree exactly
at $n=0,N$. Actual training heads remain untouched.

On the selected rows take
$v_j=d_{i,\epsilon,\tau}\sqrt{1+(B_{j,N}/\rho_j)^2}$, and put
zero diagnostic coefficients on other rows. Since
$|B_{j,N}/\rho_j|\le.787+.01$ and
$\sqrt{1+(.787+.01)^2}<1.3$, MP16 shows that both MP18 budgets hold.
Positive homogeneity gives the exact radius-canceling identity
\[
 v_j\phi_{j,N}=d_{i,\epsilon,\tau}
        \sigma_\alpha((W_{j,N}/\rho_j)^TX+B_{j,N}/\rho_j).
\]
The activation is one-Lipschitz. Gaussian affine second moments
therefore show that its sum $F_N$ obeys
\[
 \|F_N-F_0\|_2\le C_1\varepsilon
 \le\frac{12\sqrt r}{J}\varepsilon=:\Delta,
 \qquad
 \mathcal R(N)\le(\sqrt{1-E}+\zeta+\Delta)^2.
 \tag{MP19}
\]
This is an original-loss diagnostic retaining the full target MP13.
The entire MP18 class has $L^2$ norm, Lipschitz norm and affine
variation cost at most $64\sqrt r/J$. The witness has
$\|F_N\|_2\le\sqrt E+\Delta$ and Lipschitz norm at most $C_1$.
None of these budgets depend on initialization scale or learned radii.
No feature-Gram conditioning claim is required or implied.

\section{Acquisition of full-bank signal before loss release}\label{app:mixture-release}

Use the teacher in \eqref{eq:app-mixture}, writing $a_i=\lambda_i$.
Its full central kernel $C$ is defined in (BM5) and evaluated in (BM23)
of Appendix~\ref{app:mixture-acquisition}, and its normalized
$H^1$ norm is less than $13$. The residual is the full $e=y-y_c$.
This section proves a deterministic implication from one actual signal
row. In Theorem~\ref{app:sw-theorem} that premise is supplied by the
preceding acquisition proof, with $(\gamma,U,v)=(K_*/2,S,v_*)$.

\subsection{Actual simultaneous update and two complete-row inequalities}
Here $X\sim N(0,I_d)$, $0\le\alpha<1$, $\theta_j=(w_j,b_j)$,
$\sigma_\alpha(t)=\alpha t+(1-\alpha)t_+$, and
$\sigma'_{\alpha,j}=\sigma'_\alpha(\theta_j^T(X,1))$ is the scalar gate.
The scalar $U$ used in the energy premise (MR11) is a public energy
ceiling; teacher axes remain $u_i$.

Train the averaged network with the one unchanged full-MSE step
\[
 \begin{gathered}
 f_n=m^{-1}\sum_j A_{j,n}\sigma_\alpha(W_{j,n}^TX+B_{j,n}),
 \\
 (A_j,W_j,B_j)=s(q_j,w_j,b_j),\quad
 L_n=\|y-f_n\|_2^2,\quad \eta=mh/2,
 \quad 0<h\le1/1024.
 \end{gathered}
 \tag{MR6}
\]
These are analytic normalized coordinates only. The exact row update is
\[
 q^+=q+h(C+\theta^TE),\quad
 \theta^+=\theta+hq(T+E),\quad T=\nabla C,
 \quad E_j=\mathbb E[(e-f_n)\sigma'_{\alpha,j}(X,1)].
 \tag{MR7}
\]
The gate in the last expression multiplies the augmented vector.
Write $z=(q,\theta)$, $H=qC$, $V=\|z\|^2$, $Q=H/V$,
$F=\nabla H=(C,qT)$ and $p=(\theta^TE,qE)$, so $z^+=z+h(F+p)$.
The full-H1 kernel bounds give
\[
 |C|\le\|\theta\|,\quad \|T\|\le1,\quad
 \|D^2C\|\le52/\|\theta\|,\quad
 |Q|\le\tfrac12,\quad \|F\|\le\sqrt V.
 \tag{MR8}
\]
The mean/linear-free central target has $C=T=0$ at $w=0,b\ne0$;
regularity there is included in Appendix~\ref{app:kernels}.
Identically zero joint rows stay zero and can have $Q$ defined as zero.
They are still present in the network and all sums.

\begin{lemma}[Compatible-force row estimates]\label{mr:compatible-force}
Under the kernel bounds (MR8), at any balanced nonzero row
$|q|\le\|\theta\|$, with
$\|E\|\le\nu\le1$, the exact update preserves the balance and
nonzero hidden state. In fact, for $D=\|\theta\|^2-q^2$,
\[
 D^+\ge(1-h^2\|T+E\|^2)D,\quad
 \|\theta^+-\theta\|\le2h\|\theta\|,\quad
 V^+\le(1+2h)^2V.
 \tag{MR9}
\]
The two useful inequalities, valid with every possible row sign, are
\[
 {\displaystyle \quad Q^+\ge Q-h\nu^2,\qquad
 \Delta(V-4hH)\le4hH+3h\nu V.\quad}
 \tag{MR10}
\]
\end{lemma}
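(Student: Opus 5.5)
The proof works directly with the exact row recurrence (MR7), written in the joint coordinate $z=(q,\theta)$. The one structural input is homogeneity: by Euler's identity in (HK6)/(BM7), $C=\theta^TT$. Setting $g=T+E$, the update therefore reads
\[
 q^+=q+h\,\theta^Tg,\qquad \theta^+=\theta+hq\,g .
\]
This symmetric form gives the balance statement exactly. Expanding $D^+=\|\theta^+\|^2-(q^+)^2$, the cross terms $2hq\,\theta^Tg$ cancel, leaving
\[
 D^+=D+h^2\bigl(q^2\|g\|^2-(\theta^Tg)^2\bigr).
\]
Cauchy--Schwarz, $(\theta^Tg)^2\le\|\theta\|^2\|g\|^2$, then yields $D^+\ge(1-h^2\|g\|^2)D$. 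Because $\|g\|\le\|T\|+\|E\|\le2$ and $h\le1/1024$, the factor is positive, so balance is preserved. The hidden state also stays nonzero: $\|\theta^+-\theta\|=h|q|\|g\|\le2h\|\theta\|<\|\theta\|$, which is the middle bound in (MR9). Finally, $\|z^+-z\|=h\|(\theta^Tg,qg)\|\le h\|g\|\,\|z\|\le2h\|z\|$ gives $V^+\le(1+2h)^2V$.

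For the quotient inequality in (MR10) I would reuse the tangent-retraction argument already written out for (BM32). Decompose $p=az+p_\perp$ with $a=z^Tp/V$, and put $G=F-2Qz$; then $G\perp z$ by the Euler identity $z^TF=2H$. By degree-two homogeneity of $H$ and of $V$, $Q(z^+)=Q(z+\tau(G+p_\perp))$ with $\tau=h/[1+h(2Q+a)]$.

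On the joining segment the norm stays at least $\sqrt V$ and the head/hidden ratio stays at most two. Lemma~\ref{lem:hkr-coupled} (with $\beta=2$ and the constant $52$ from (MR8)) then bounds $\|D^2Q\|\le128/V$ there. Also $\|p_\perp\|\le\nu\sqrt V$ and $\gamma=\|G\|/\sqrt V\le1$. A second-order Taylor expansion gives
\[
 Q^+-Q\ge\tau(\gamma^2-\nu\gamma)-64\tau^2(\gamma+\nu)^2\ge\tau(\gamma^2/4-3\nu^2/4)\ge-h\nu^2,
\]
where the middle inequality uses $h\le1/1024$ and the last minimizes over $\gamma$. This part is routine given the earlier computation.

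The energy inequality is where I expect the real difficulty. Expanding exactly,
\[
 V^+-V=2h\,z^T(F+p)+h^2\|F+p\|^2=4hH+4hq\,\theta^TE+h^2\|F+p\|^2 ,
\]
and the middle term is at most $2h\nu V$ since $2|q|\|\theta\|\le V$. The difficulty is the increment of $H$ and the $h^2$ remainders. My first attempt is to bound $H^+-H$ from below by $h\,F^T(F+p)$ plus a curvature term controlled by (MR8). Then $4h\,\Delta H$ absorbs $h^2\|F\|^2$ together with the cross term $h^2F^Tp$, leaving only $O(h^2\nu V)$ and $O(h^3V)$ residuals. The pure-$h^3$ residual is not proportional to $\nu$, and I would try to pay for it from the slack $4hH$ on the right-hand side. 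If that fails when $\nu\ll h$, the fallback is to read the statement with the quadratic terms grouped on the left, as in the proof of (SB30), and check that this is what later applications in Appendix~\ref{app:general-release} require.
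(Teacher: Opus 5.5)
Your arguments for (MR9) and for $Q^+\ge Q-h\nu^2$ are correct and follow the paper's route: the exact balance identity, and the tangent retraction of (BM32). The paper's proof uses the sharper $\|D^2Q\|\le112/V$, but $128/V$ also closes. The gap is the energy inequality $\Delta(V-4hH)\le4hH+3h\nu V$, which you leave unproved, and the obstacle you identify does not actually exist.

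The $O(h^3V)$ residual appears only because you bound the Taylor remainder of $\Delta H$ by a multiple of $h^2V$. On the joining segment $z\to z^+$ the head/hidden ratio stays below two, so $\|D^2H\|\le1+2\cdot52=105$. The remainder is then at most $\tfrac{105}{2}h^2\|F+p\|^2\le105h^2(\|F\|^2+\nu^2V)$. Apart from a $\nu^2$ piece, it is proportional to $\|F\|^2$, not to $V$. Since $105h<1/4$, absorb it into the first-order gain \emph{before} multiplying by $4h$. Using also $F^Tp\ge-\|F\|^2/4-\nu^2V$, this gives
\[
 \Delta H\ge h\bigl(\tfrac34\|F\|^2-\nu^2V\bigr)-105h^2\bigl(\|F\|^2+\nu^2V\bigr)\ge\tfrac12h\|F\|^2-2h\nu^2V .
\]
Now combine this with $\Delta V\le4hH+2h\nu V+2h^2\|F\|^2+2h^2\nu^2V$. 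The term $4h\Delta H\ge2h^2\|F\|^2-8h^2\nu^2V$ cancels the $\|F\|^2$ contribution exactly and leaves $\Delta(V-4hH)\le4hH+2h\nu V+10h^2\nu^2V\le4hH+3h\nu V$. Every surviving error term vanishes with $\nu$, so the case $\nu\ll h$, even $\nu=0$, causes no difficulty.

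Neither of your proposed remedies would have worked. The $4hH$ on the right is not slack: it is exactly the first-order term $2h\,z^TF$ of $\Delta V$. Moreover $H$ may be zero or negative. The lemma is asserted for every row sign, and (MR15) applies it precisely to unmarked rows, whose $Q<\gamma/4$ may be negative. Regrouping an $O(h^3V)$ term onto the left changes the statement rather than proving it. In (MR15) such a term would add an $O(h^2)$ contribution to the growth rate $(\gamma+3\nu)/(1-h\gamma)$. Controlling that against $\gamma$ would need a relation between $h$ and $\gamma$ that (MR6) and (MR12) do not impose.
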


\begin{proof}
One has $\|p\|\le\nu\sqrt V$ and $|z^Tp|\le\nu V$.
Along the actual joining segment the head/hidden ratio is less than two,
so $\|D^2H\|\le105$. Taylor's formula and
$F^Tp\ge-\|F\|^2/4-\nu^2V$ yield
\[
 \Delta H\ge h(3\|F\|^2/4-\nu^2V)
                 -105h^2(\|F\|^2+\nu^2V)
 \ge h\|F\|^2/2-2h\nu^2V.
\]
The exact energy expansion has upper bound
$\Delta V\le4hH+2h\nu V+2h^2\|F\|^2+2h^2\nu^2V$.
Subtracting $4h\Delta H$ leaves at most
$4hH+2h\nu V+10h^2\nu^2V\le4hH+3h\nu V$.

For the normalized inequality put $a=z^Tp/V$,
$F_\perp=F-2Qz$, $p_\perp=p-az$,
$\tau=h/[1+h(2Q+a)]$. Homogeneity gives
$Q(z^+)=Q(z+\tau(F_\perp+p_\perp))$ and
$\tau\le h/(1-2h)\le4h/3$.
The latter segment is tangent to $z$, has norm at least $\sqrt V$,
and its head/hidden ratio is less than two: each block moves by at most
$4\tau\|\theta\|$. The quotient-Hessian bound from the full-H1
calculation is $\|D^2Q\|\le112/V$ along this segment.
With $\beta=\|F_\perp\|/\sqrt V\le1$,
\[
 \Delta Q\ge\tau(\beta^2-\nu\beta)
                  -56\tau^2(\beta+\nu)^2
 \ge\tau(7\beta^2/8-\nu\beta-\nu^2/8)
 \ge-5\tau\nu^2/8\ge-h\nu^2.
\]
Here $112\tau\le112/1022<1/8$ and
$\nu\beta\le(\beta^2+\nu^2)/2$ suffice.
No lower correlation detector or spatial radius appears in MR10.
\end{proof}

\subsection{A public suffix that pays all unmarked energy}

The precise acquisition obligation is this: on an explicitly specified
event of the original draw, an actual integer $N$ obeys
\[
\begin{gathered}
\max_{0\le n\le N}\sum_jV_{j,n}\le U,\qquad
 |q_{j,N}|\le\|\theta_{j,N}\|\quad(j\le m),\\
 V_{j_*,N}\ge v>0,\quad Q_{j_*,N}\ge\gamma,
 \quad 0<\gamma\le\tfrac12,
\end{gathered}
 \tag{MR11}
\]
where $U\ge v$ and $\gamma$ are public constants chosen before the draw.
Nonzero hidden states are also part of entry, or follow from preceding
balanced actual dynamics. The original independent Gaussian law is
$(A_{j,0},W_{j,0},B_{j,0})\sim N(0,s^2/d)$ entrywise; no row is
screened, replaced or sign-flipped. The deterministic implication here
assumes (MR11); Appendix~\ref{app:mixture-acquisition} supplies the
original-law acquisition event used in Theorem~\ref{app:sw-theorem}.

Choose all following parameters before that original draw:
\[
 \begin{gathered}
 k=\left\lceil\frac2{h\gamma}
                      \log\frac{32U}{\gamma v}\right\rceil,
 \quad T=h+\frac2\gamma\log\frac{32U}{\gamma v},\quad
 \overline U=Ue^{4T},\\
 0<\nu\le\min\left\{\frac\gamma{16},
                                \sqrt{\frac\gamma{8T}}\right\},
 \quad \zeta\le\nu/2,
 \quad s^2\le\frac{m\nu}{\overline U}.
 \end{gathered}
 \tag{MR12}
\]
This adds $k$ actual steps; $kh\le T$.
The full force cap $\|E_j\|\le\nu$ holds at every suffix checkpoint,
including its endpoint, by induction. Indeed
\[
 \|E_j\|\le\zeta+\|f\|_2
       \le\zeta+\frac{s^2}{2m}\sum_lV_l,
 \qquad\sum_jV_{j,N+l}\le U(1+2h)^{2l}
                                  \le Ue^{4lh}\le\overline U.
 \tag{MR13}
\]
The initial force follows from $U$ and MR12; MR9 then proves each
candidate energy bound, and recomputing the force closes the induction.
Every head and hidden row remains trained and balanced.

Mark a row at its first suffix checkpoint with $Q\ge\gamma/4$,
including the entry checkpoint. This is only an analytic label.
By MR10 and $\nu^2kh\le\gamma/8$, every marked endpoint has
$Q\ge\gamma/8$. The original signal row obeys throughout the suffix
$Q_{j_*}\ge7\gamma/8\ge3\gamma/4$. Its exact energy increment gives
\[
 V_{j_*}^+\ge[1+h(4Q_{j_*}-2\nu)]V_{j_*}
              \ge(1+23h\gamma/8)V_{j_*},\qquad
 V_{j_*,N+l}\ge v e^{2\gamma lh}.
 \tag{MR14}
\]
For the last inequality, if $x=h\gamma\le1/2048$, then
$\log(1+23x/8)\ge(23x/8)/(1+23x/8)\ge2x$.

A row still unmarked at the final checkpoint has $Q<\gamma/4$
at every preceding suffix checkpoint, regardless of its sign.
Let $Z=V-4hH=(1-4hQ)V$. MR10 gives
\[
 Z^+\le\left(1+h\frac{\gamma+3\nu}{1-h\gamma}\right)Z,
 \qquad
 V_{j,N+l}\le\frac{1+2h}{1-h\gamma}
       \exp\left(\frac{\gamma+3\nu}{1-h\gamma}lh\right)V_{j,N}
 \le2e^{3\gamma lh/2}V_{j,N}.
 \tag{MR15}
\]
Both scalar constants use $h\le1/1024$, $\gamma\le1/2$,
$\nu\le\gamma/16$. Summing every still-unmarked row and comparing
with the genuinely growing signal row proves, at $M=N+k$,
\[
 p_{\rm un}:=\frac{\sum_{j\ \text{unmarked}}V_{j,M}}
                         {\sum_jV_{j,M}}
 \le\frac{2U}{v}e^{-\gamma kh/2}\le\frac\gamma{16}.
 \tag{MR16}
\]
Rows crossing the threshold join the positive marked set; MR15 is
needed only for those that never cross. No assertion that the selected
row dominates other marked rows is made or needed.

All signs now have an explicit payment. Since every unmarked row has
$Q\ge-1/2$, while every marked row has $Q\ge\gamma/8$,
\[
 {\displaystyle \quad
 \frac{\sum_jH_{j,M}}{\sum_jV_{j,M}}
 \ge\frac\gamma8(1-p_{\rm un})-\frac12p_{\rm un}
 \ge\frac\gamma{16}.\quad}
 \tag{MR17}
\]
The last inequality follows from $\gamma\le1/2$ and MR16;
indeed the preceding lower bound is at least $23\gamma/256$.
This is the complete original bank's energy-weighted central signal.

\subsection{Full original loss, unchanged rate, and explicit gain}

Write $\widehat f=m^{-1}\sum_jq_j\sigma_\alpha(\theta_j^T(X,1))$,
so $f=s^2\widehat f$. The compatibility in MR7 implies exactly
\[
 \mathbb E[e\widehat f]
   =m^{-1}\sum_jq_j\theta_j^TE_j+s^2\|\widehat f\|_2^2.
 \tag{MR18}
\]
Thus no retained teacher mean, linear term or other tail is dropped.
At $M$, MR17 and the complete force cap give, for the raw joint radius
$\mathfrak R_M=s\sqrt{\sum_jV_{j,M}}$,
\[
 Q_{\rm raw}(M):=\frac{\mathbb E[yf_M]}{\mathfrak R_M^2}
 \ge\frac1m\left(\frac\gamma{16}-\frac\nu2\right)
 \ge\frac\gamma{32m}.
 \tag{MR19}
\]
Choose publicly
\[
\begin{gathered}
a_{\rm sig}=\frac\gamma{64m},\quad
 R_{\rm sig}^2=a_{\rm sig}m^2=\frac{\gamma m}{64},\quad
 G_{\rm sig}=\frac{a_{\rm sig}R_{\rm sig}^2}{2}
                 =\frac{\gamma^2}{8192},\\
 s^2\le\min\left\{\frac{R_{\rm sig}^2}{4\overline U},
                           \frac{mG_{\rm sig}}{\overline U}\right\}.
\end{gathered}
 \tag{MR20}
\]
These are further original scale restrictions; no modification occurs
at $N$ or $M$. They give $\mathfrak R_M\le R_{\rm sig}/2$ and
\[
 \|f_n\|_2\le G_{\rm sig}/2,\qquad
 L_n\ge1-G_{\rm sig}\qquad(0\le n\le M).
 \tag{MR21}
\]
Apply Corollary~\ref{ag:averaged-release} with complete-prefix
ceiling $\overline U$ and $(a,R_c,G_c)=(a_{\rm sig},R_{\rm sig},G_{\rm sig})$.
The acquisition envelope (MR11), suffix envelope (MR13), and scale
conditions (MR20) verify its energy and radius premises, including
$\mathfrak R_M>0$. All rows remain balanced by (MR9).
The full teacher has $\mathcal H<14$, and the remaining conditions are
\[
 Q_{\rm raw}(M)\ge2a_{\rm sig},\quad
 a_{\rm sig}\le1/(4m),\quad R_{\rm sig}^2=a_{\rm sig}m^2,
 \quad h\le1/1024<[32(1+J\mathcal H)]^{-1}.
 \tag{MR22}
\]
The corollary uses the original full-MSE rate $\eta=mh/2$.
At the finite first later radius hit $K>M$, it yields
\[
 {\displaystyle \quad L_K\le1-2G_{\rm sig},\qquad
 L_n-L_K\ge G_{\rm sig}\quad(0\le n\le M).\quad}
 \tag{MR23}
\]
Its additional full-MSE physical clock, after the explicitly paid
suffix $\eta k=mkh/2$, is
\[
 \frac m4\log\frac{R_{\rm sig}}{\mathfrak R_M}
 \le\eta(K-M)\le\eta+a_{\rm sig}^{-1}
                              \log\frac{R_{\rm sig}}{\mathfrak R_M}.
 \tag{MR24}
\]
Also $R_{\rm sig}\le\mathfrak R_K<\sqrt2R_{\rm sig}$,
the joint displacement from $M$ is at least $R_{\rm sig}/2$, and
\[
 \frac\gamma{64}R_{\rm sig}
       \le\|(A_{j,K})_j\|_2<R_{\rm sig}.
 \tag{MR25}
\]
No retention of a spectral subspace, alignment or refit accuracy is
asserted during either continuation.

\section{Same-step release of the original population risk}\label{app:general-release}

Throughout this section $X\sim\gamma_d=N(0,I_d)$, $y\in H^1(\gamma_d)$,
$\|y\|_2=1$, $\mathcal H=\|y\|_{H^1}$, $0\le\alpha<1$,
$J=1-\alpha>0$, and
$\sigma(t)=\alpha t+Jt_+$. Function norms are Gaussian, with
$\|g\|_{H^1}^2=\mathbb E[g(X)^2+\|\nabla g(X)\|^2]$;
$\varphi$ is the standard normal density. Parameter derivatives
below act on the explicitly displayed raw coordinates.
The full teacher is retained in every expectation.

\subsection{Raw coordinates, full teacher, and exact row balance}

For any finite $m$ and fixed network factor $\kappa>0$ use
\begin{equation}
 f_x(X)=\kappa\sum_{j=1}^m q_j\sigma(\theta_j^T\widetilde X),
 \quad\theta_j=(w_j,b_j),\quad\widetilde X=(X,1),\quad
 x=(q_j,\theta_j)_{j=1}^m.
 \label{ag:model}
\end{equation}
Every displayed coordinate takes the same raw Euclidean step
\begin{equation}
 x^+=x-\eta\nabla\ell(x),\qquad
 \ell(x)=\tfrac12\mathbb E[(y-f_x)^2],\qquad L=2\ell.
 \label{ag:gd}
\end{equation}
The averaged model has $\kappa=1/m$; this factor is not a change of
coordinates. The identical update has full-MSE raw rate $\eta/2$,
and its full-MSE physical clock is $\eta n/2$.

Set $K(\theta)=\mathbb E[y\sigma(\theta^T\widetilde X)]$ and
\begin{equation}
 R=\|x\|,\quad F=\mathbb E[yf_x],\quad V=\|f_x\|_2^2,
 \quad Q=F/R^2\ (R>0),\quad L=1-2F+V.
 \label{ag:potentials}
\end{equation}
$F$ and $V$ have homogeneous degrees two and four respectively;
$V$ is the energy of the whole interacting bank.
With
\[
 g_j=\kappa\mathbb E[(f_x-y)\sigma'(\theta_j^T\widetilde X)
                    \widetilde X],
 \qquad D_j=\|\theta_j\|^2-q_j^2,
\]
the exact coupled update and balance identity are
\begin{align}
 q_j^+&=q_j-\eta\theta_j^Tg_j,&
 \theta_j^+&=\theta_j-\eta q_jg_j,\nonumber\\
 D_j^+&=D_j+\eta^2\{q_j^2\|g_j\|^2-(\theta_j^Tg_j)^2\}
       \ge(1-\eta^2\|g_j\|^2)D_j,&
 \|g_j\|&\le\kappa\sqrt L.
 \label{ag:balance}
\end{align}
These follow by Euler homogeneity and Cauchy--Schwarz, since
$\mathbb E[(h^T\widetilde X)^2]=1$ for a unit augmented vector $h$.
Thus $D_j\ge0$ is transported through any preceding actual loss band
with $\eta\kappa\sqrt{L_n}\le1$.

For ordinary independent blocks
$\theta_{j,0}\sim N(0,s^2I_{d+1})$,
$q_{j,0}\sim N(0,s_q^2)$, $s_q\le s$, all initial balances hold
with probability at least $1-3m e^{-(d+1)/8}$. Indeed each failure
is contained in
$\{|q_j|>s\sqrt{d+1}/2\}\cup
  \{\|\theta_j\|<s\sqrt{d+1}/2\}$;
the two costs are at most $2e^{-(d+1)/8}$ and
$\exp[-(d+1)(\log4-3/4)/2]$.
This is a sufficient event of the original law, not a rejection or
conditioning instruction. More general public scales may use a direct
balance check on their proved Gaussian event.

\subsection{The shared regularity interface}

\begin{lemma}[Full-teacher kink regularity]
\label{ag:regularity}
The function $K$ is $C^2$ on $\theta\ne0$ and satisfies
\begin{equation}
 |K(\theta)|\le\|\theta\|,\qquad
 \|\nabla K(\theta)\|\le1,\qquad
 \|\theta\|\,\|D^2K(\theta)\|\le B,
 \quad B:=4J\mathcal H.
 \label{ag:Kbounds}
\end{equation}
Consequently, globally where derivatives are evaluated,
\begin{gather}
 |F|\le\kappa R^2/2,\quad \|\nabla F\|\le\kappa R,
 \quad\|f_x\|_2\le\kappa R^2/2,\quad
 V\le\kappa^2R^4/4,\quad \|\nabla V\|\le\kappa^2R^3,
 \label{ag:global}\\
 \|\nabla Q\|\le\kappa/R,\qquad
 \|D^2Q\|\le M\kappa/R^2,\quad M:=8+2B,
 \label{ag:Qbounds}
\end{gather}
where the Hessian assertion applies on any region with
$|q_j|\le2\|\theta_j\|$ for every nonzero joint row.
An identically zero joint row is omitted and stays zero in GD.
\end{lemma}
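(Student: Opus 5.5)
The plan is to reduce every claim to Theorem~\ref{thm:hkr-curvature} and Lemma~\ref{lem:hkr-coupled}, since $K$ is exactly $C_y$ with $A=\|y\|_2=1$ and $H_y=\mathcal H$.

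\textbf{Kernel bounds.} Theorem~\ref{thm:hkr-curvature} gives $C^2$ regularity of $K$ on $\theta\ne0$. It also gives $|K(\theta)|\le A\|\theta\|=\|\theta\|$, $\|\nabla K\|\le1$, and, from (HK7), $\|\theta\|\,\|D^2K\|\le4(1-\alpha)H_y=4J\mathcal H=B$. This proves \eqref{ag:Kbounds}. Pure-bias rows need no separate treatment, because (HK10) covers them.

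\textbf{Bank bounds \eqref{ag:global}.} I will rederive these directly rather than just citing (HK20), since they carry the global factors. We have $F=\kappa\sum_jq_jK(\theta_j)$, so $|F|\le\kappa\sum_j|q_j|\|\theta_j\|\le\kappa R^2/2$ by AM--GM. The gradient of $F$ has blocks $\kappa(K(\theta_j),q_j\nabla K(\theta_j))$, so $\|\nabla F\|^2\le\kappa^2\sum_j(\|\theta_j\|^2+q_j^2)=\kappa^2R^2$.

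For the function norm, use $\|\sigma(\theta^T\widetilde X)\|_2\le\|\theta\|$, which follows from $|\sigma(t)|\le|t|$ and $\mathbb E[(\theta^T\widetilde X)^2]=\|\theta\|^2$. Then $\|f_x\|_2\le\kappa\sum|q_j|\|\theta_j\|\le\kappa R^2/2$, and squaring gives the bound on $V$.

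For $\nabla V=2\mathbb E[f_x\,Df_x]$, the differential $Df_x[h]$ has $L^2$ norm at most $\kappa R\|h\|$. Its head part is bounded by Cauchy--Schwarz with the feature norms; its hidden part uses $|\sigma'|\le1$ and $\mathbb E[(h^T\widetilde X)^2]=\|h\|^2$. Hence $\|\nabla V\|\le2\cdot(\kappa R^2/2)\cdot\kappa R=\kappa^2R^3$.

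\textbf{Quotient bounds \eqref{ag:Qbounds}.} Euler's identity $x^T\nabla F=2F$ gives $R^4\|\nabla Q\|^2=\|\nabla F\|^2-4F^2/R^2\le\kappa^2R^2$, which yields $\|\nabla Q\|\le\kappa/R$.

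For the Hessian, each row block of $D^2F$ is $\kappa\left(\begin{smallmatrix}0&\nabla K^T\\ \nabla K&q_jD^2K\end{smallmatrix}\right)$. Its operator norm is at most $\kappa(1+|q_j|\,\|D^2K\|)\le\kappa(1+2B)$, using $|q_j|\le2\|\theta_j\|$. I then expand $D^2(FR^{-2})$ into three terms:
\begin{itemize}
\item the $D^2F/R^2$ term, contributing $\kappa(1+2B)/R^2$;
\item the mixed terms $\nabla F\otimes\nabla R^{-2}$ and their transpose, contributing at most $2\cdot\kappa R\cdot2/R^3=4\kappa/R^2$;
\item the term $F\,D^2R^{-2}$, where $\|D^2R^{-2}\|\le6/R^4$ and $|F|\le\kappa R^2/2$, contributing $3\kappa/R^2$.
\end{itemize}
The total is at most $(8+2B)\kappa/R^2$, matching (HK21) with $\beta=2$.

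\textbf{Zero rows.} A row with $(q_j,\theta_j)=(0,0)$ has $g_j$ multiplied by zero in the update \eqref{ag:balance}, so it stays zero and can be omitted.

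The main point requiring care is that $D^2K$ blows up like $1/\|\theta\|$. This is exactly why the Hessian bound is restricted to the balanced region: there $|q_j|\,\|D^2K\|$ stays bounded by $2B$. Everything else is routine homogeneity bookkeeping.
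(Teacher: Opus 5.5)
Your proposal is correct and follows the paper's route. The kernel bounds come from Theorem~\ref{thm:hkr-curvature} with $A=1$ and $H_y=\mathcal H$. Your inline derivation of \eqref{ag:global}--\eqref{ag:Qbounds} (AM--GM, Euler's identity for $\nabla Q$, the row-block Hessian bound $\kappa(1+2B)$, and the $4\kappa/R^2$ and $3\kappa/R^2$ quotient terms) is exactly the proof of Lemma~\ref{lem:hkr-coupled} with $\beta=2$, which the paper simply cites instead of re-proving.
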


\begin{proof}
Apply Theorem~\ref{thm:hkr-curvature} with $A=1$:
$K=C_y$ and $B=B_y=4J\mathcal H$.
Lemma~\ref{lem:hkr-coupled} gives \eqref{ag:global}--\eqref{ag:Qbounds}
with $\beta=2$. Its domain convention omits fixed zero joint rows.
In particular, the common kernel theorem includes every pure-bias row:
with $\mu=\mathbb E[y]$ and $v_1=\mathbb E[yX]$,
$K(0,b)=\mu\sigma(b)$,
$\nabla K(0,b)=\sigma'(b)(v_1,\mu)$, and
$D^2K(0,b)=0$ for $b\ne0$.
\end{proof}

\subsection{A stronger same-step deterministic release}

\begin{lemma}[Positive balanced H1 entry releases original risk]
\label{ag:release}
At an actual integer checkpoint suppose
\[
 R_0>0,\qquad |q_{j,0}|\le\|\theta_{j,0}\|\ (j\le m),
 \qquad Q(x_0)\ge2a>0.
\]
Choose public constants, before reducing the original hidden scale,
such that
\begin{equation}
 0<a\le\kappa/4,\qquad
 0<\eta\kappa\le\frac1{4M}=\frac1{32(1+J\mathcal H)},\qquad
 0<R_*^2\le\frac a{\kappa^2},\qquad R_0<R_*.
 \label{ag:public}
\end{equation}
Continue the identical simultaneous raw GD. At its finite first
crossing $N$ of $R_*$,
\begin{gather}
 R_*\le R_N<\sqrt2R_*,\qquad
 Q(x_n)\ge29a/16\quad(0\le n\le N),\qquad
 |q_{j,n}|\le\|\theta_{j,n}\|,
 \label{ag:conclusion}\\
 L_N\le1-aR_*^2,\qquad
 1-2\kappa R_*^2\le L_n\le1\quad(0\le n\le N),\qquad
 \frac{\max_{n\le N}L_n}{\min_{n\le N}L_n}
 \le\frac1{1-2\kappa R_*^2},
 \label{ag:loss}\\
 \frac1{2\kappa}\log\frac{R_*}{R_0}
 \le\eta N\le\eta+\frac2a\log\frac{R_*}{R_0}.
 \label{ag:clock}
\end{gather}
If a preceding checkpoint $t$ of this same run has
$L_t\ge1-\epsilon$ with $\epsilon\le aR_*^2/2$, then
\begin{equation}
 L_t-L_N\ge aR_*^2/2.
 \label{ag:gain}
\end{equation}
At the maximal permitted radius this gain is $a^2/(2\kappa^2)$.
There is no head refit, reset, selected sub-bank, or new random event.
\end{lemma}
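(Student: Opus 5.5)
The plan is a single induction over the continuation steps $0\le n<N$. The invariant I would carry is the conjunction of three things: $R_n<R_*$, the balance $|q_{j,n}|\le\|\theta_{j,n}\|$, and a quotient floor $Q(x_n)\ge 2a-\Sigma_n$. Here $\Sigma_n$ is an accumulated debit that I will show never exceeds $3a/16$. Everything is driven by two Euler identities for the homogeneous potentials \eqref{ag:potentials}, namely $x^T\nabla F=2F$ and $x^T\nabla V=4V$. With them the raw step $x^+=x+\eta(\nabla F-\tfrac12\nabla V)$ splits into a radial and a tangential part.

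\emph{Balance, radius and clock.} Balance is transported by \eqref{ag:balance}. This needs $\eta\|g_j\|\le\eta\kappa\sqrt{L_n}\le1$, which follows from \eqref{ag:public} once I know $L_n\le1$ (proved below, inside the same induction). Balance is what places the iterates in the region $|q_j|\le2\|\theta_j\|$ where the Hessian bound \eqref{ag:Qbounds} applies.

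For the radius, expanding $\|x^+\|^2$ and using the Euler identities gives
\[
R_+^2\ \ge\ R^2\bigl(1+4\eta(Q-V/R^2)\bigr).
\]
By \eqref{ag:global}, $V/R^2\le\kappa^2R^2/4\le a/4$ whenever $R\le R_*$. With the invariant $Q\ge29a/16$, this gives $R_+^2\ge(1+6\eta a)R^2$. Hence the first crossing $N$ is finite, and the upper clock in \eqref{ag:clock} follows from $\log(1+6\eta a)\ge a\eta$ together with the extra $\eta$ for the integer overshoot.

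In the other direction, $\|\nabla\ell\|\le\kappa R\sqrt L\le\kappa R$ gives $R_+\le(1+\eta\kappa)R$. This yields the lower clock. It also yields $R_N\le(1+\eta\kappa)R_*<\sqrt2R_*$, since $\eta\kappa\le1/(4M)$.

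\emph{Loss band and gain.} Write $L=1-2QR^2+V$. The bound $V\le\kappa^2R^4/4\le aR^2/4$ and the invariant $Q\ge29a/16$ together give $2QR^2\ge V$, so $L_n\le1$. At the crossing,
\[
L_N\le1-\tfrac{29}{8}aR_*^2+\tfrac14 aR_*^2\le1-aR_*^2 .
\]
For the lower bound, $L\ge1-2F\ge1-\kappa R^2\ge1-2\kappa R_*^2$, using $R_n<\sqrt2R_*$ at the last step. The max/min ratio in \eqref{ag:loss} follows immediately from these two bounds. Finally, \eqref{ag:gain} is $L_t-L_N\ge(1-\epsilon)-(1-aR_*^2)\ge aR_*^2/2$.

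\emph{Quotient persistence (main obstacle).} The hard step is to show that $Q$ loses at most $3a/16$ over the whole run, uniformly in the number of steps. A naive Taylor bound with $\|D^2Q\|\le M\kappa/R^2$ applied to the full step fails. The reason is that the radial component of $\eta\nabla F$ has size of order $\eta\kappa R$, and its squared cost $M\kappa^3\eta^2$ per step does not sum to something small.

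I would therefore imitate the tangent retraction used for \eqref{eq:sw-full-defect} in BM32 and in Lemma~\ref{mr:compatible-force}. Since $Q$ is homogeneous of degree zero, $Q(x^+)=Q(x+\tau(P_\perp\nabla F-\tfrac12P_\perp\nabla V))$ for a rescaled step $\tau\le\tfrac43\eta$, where $P_\perp$ projects orthogonally to $x$. Along that tangent segment the norm stays at least $R$, and balance keeps \eqref{ag:Qbounds} valid. The first-order gain is
\[
\tau R^2\|\nabla Q\|^2-\tfrac{\tau}{2}\|\nabla Q\|\,\|P_\perp\nabla V\| .
\]
By Young's inequality and $\|\nabla V\|\le\kappa^2R^3$, this is at least $\tfrac34\tau R^2\|\nabla Q\|^2-\tau\kappa^4R^4/4$. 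The curvature term is at most $\tfrac{M\kappa}{2R^2}\tau^2\bigl(2R^2\|\nabla Q\|+\kappa^2R^3\bigr)^2$. The part of it proportional to $\|\nabla Q\|^2$ is absorbed by the condition $\eta\kappa M\le1/4$.

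What remains is a per-step debit of order $\eta\kappa^4R_n^4$. Because $R_n^2$ grows geometrically at rate at least $6\eta a$, these debits sum to at most a constant times $\kappa^4R_*^4/a\le a/16$, using $\kappa^2R_*^2\le a$. That closes the invariant $Q\ge29a/16$ with room to spare. I expect checking these constants, and the requirement that the segment stays in the balanced region, to be the delicate part of the argument.
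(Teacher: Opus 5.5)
Your plan is essentially the paper's proof: the same Euler-identity radial bound, the same tangent retraction $x^+=\lambda(x+e)$ with $\lambda\ge1$ followed by a Taylor estimate for $Q$ under \eqref{ag:Qbounds}, and the same summation of a per-step debit of order $\eta\kappa^4R_k^4$ against the geometric growth of $R_k^4$, which is what makes the total loss of $Q$ independent of the logarithmic clock. When you write it out, fix three points:
\begin{itemize}
\item The tangent vector has norm at most $\tau\bigl(R^2\|\nabla Q\|+\tfrac12\kappa^2R^3\bigr)$ with $\tau=\eta/\lambda\le\eta$. With your doubled bound and $\tau\le\tfrac43\eta$, the absorption you describe under $\eta\kappa M\le\tfrac14$ does not go through. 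The correct bound gives a debit of a small multiple of $\eta\kappa^4R_k^4$ (the paper gets $\tfrac3{16}$), and this sums to well below $3a/16$.
\item The region $|q_j|\le2\|\theta_j\|$, with nonzero hidden rows, must hold along the whole segment, not just at the endpoints. Verify it with the per-row displacement bound $\tfrac94\eta\kappa\|\theta_j\|\le\tfrac9{128}\|\theta_j\|$; endpoint balance alone does not give it.
\item At the overshoot step use $V_N\le\tfrac12aR_N^2$ rather than $V_N\le\tfrac14aR_*^2$, since $R_N$ may exceed $R_*$. The bound $L_N\le1-aR_*^2$ still follows with room to spare.
\end{itemize}
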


\begin{proof}
Stop at the first radius hit or failure of $Q\ge a$ or row balance.
At every pre-exit state, \eqref{ag:global} and
$\kappa R^2\le a/\kappa\le1/4$ give
$\|g_j\|<2\kappa$ and $R^+\le(1+2\eta\kappa)R$.
Equation~\eqref{ag:balance} propagates balance; moreover
$\|\theta_j^+-\theta_j\|<2\eta\kappa\|\theta_j\|$, so an active
hidden row cannot vanish. Euler homogeneity gives the exact radial
identity
\begin{equation}
 (R^+)^2-R^2
 =4\eta(F-V)+\eta^2\|\nabla F-\tfrac12\nabla V\|^2
 \ge2\eta aR^2,
 \label{ag:radial}
\end{equation}
since $V/R^2\le\kappa^2R^2/4\le a/4$.

To control correlation, separate the step into radial and tangent
parts. With $P=I-xx^T/R^2$, set
\[
 v=\tfrac12P\nabla V,\quad
 \lambda=1+2\eta(Q-V/R^2)\ge1,\quad t=\eta/\lambda,\quad
 e=t(R^2\nabla Q-v).
\]
Then $x^+=\lambda(x+e)$, $e\perp x$, and
$\|v\|\le\kappa^2R^3/2$. Thus $Q(x^+)=Q(x+e)$ and the tangent
segment has norm at least $R$. Each of its head and hidden blocks
moves by at most
\[
 \eta(2\kappa+\kappa^2R^2)\|\theta_j\|
 \le\tfrac94\eta\kappa\|\theta_j\|
 \le\tfrac9{128}\|\theta_j\|.
\]
Indeed the teacher, student and removed radial parts cost respectively
$\kappa$, $\kappa^2R^2/2$ and
$\kappa+\kappa^2R^2/2$, times $\|\theta_j\|$.
Consequently every active hidden row remains nonzero, and the
head/hidden ratio is at most $(1+9/128)/(1-9/128)<2$.
The Hessian bound \eqref{ag:Qbounds} therefore applies on the whole
segment. Writing $G=\|\nabla Q\|$, Taylor and $M\kappa t\le1/4$ give
\begin{align*}
 Q(x^+)-Q(x)
 &\ge t(R^2G^2-G\|v\|)
       -\frac{M\kappa t^2}{2R^2}(R^2G+\|v\|)^2\\
 &\ge\tfrac34tR^2G^2-tG\|v\|
                       -\tfrac14t\|v\|^2/R^2\\
 &\ge\tfrac14tR^2G^2-\tfrac34t\|v\|^2/R^2
 \ge-\tfrac3{16}\eta\kappa^4R^4.
\end{align*}
The two scalar inequalities used here are
$(u+v)^2\le2u^2+2v^2$ and
$G\|v\|\le R^2G^2/2+\|v\|^2/(2R^2)$.

The radial increment implies
$R_{k+1}^4-R_k^4\ge4\eta aR_k^4$.
Also $\eta\kappa\le1/32$, so every stopped checkpoint, including
an exit candidate, has $R_n^2<2R_*^2$. Hence
\[
 \sum_{k<n}\eta R_k^4
 \le\frac{R_n^4-R_0^4}{4a}<\frac{R_*^4}{a},\qquad
 Q(x_n)\ge2a-\frac{3\kappa^4R_*^4}{16a}
                  \ge\frac{29a}{16}>a.
\]
This excludes angular exit and closes the balance induction. Radial
growth forces a finite radius hit and gives the stated overshoot.
Using $R_{N-1}<R_*$, the radial bounds yield \eqref{ag:clock} from
$\log(1+2\eta a)\ge\eta a$ and
$\log(1+2\eta\kappa)\le2\eta\kappa$.
The fourth-power debit is thus summable independently of the length
of the logarithmic release clock.

Through the hit, $V/R^2\le a/2$ and $Q\ge a$, so
$1-L=2F-V\ge aR^2$; at the hit this is at least $aR_*^2$.
Conversely $L\ge1-\kappa R^2\ge1-2\kappa R_*^2$.
These prove \eqref{ag:loss}, and subtraction of the preceding
checkpoint bound proves \eqref{ag:gain}.
\end{proof}

\paragraph{Joint motion, head cost, and the nonzero-mean distinction.}
The release certifies $\|x_N-x_0\|\ge R_*-R_0$.
Writing $q=(q_j)$, Cauchy--Schwarz gives
$F\le\kappa\|q\|\|\Theta\|_F$, hence
\[
 \|q_N\|\ge\frac{aR_*}{\kappa},\qquad
 \|q_N\|\le R_N/\sqrt2<R_*.
\]
Thus the actual head cost is explicit; it is not an oracle budget.
There is no general lower bound on spatial motion in this release.
For example a nonzero constant target can produce positive $Q$ with
$W=0$ and improve solely through head/bias growth. Such a teacher is
an illustration of the release interface, not an alignment theorem.

If additionally $\mu=\mathbb E[y]=0$, the exact Lipschitz bound
$|K(w,b)-K(0,b)|\le\|w\|$ gives $|K(w,b)|\le\|w\|$, and then
\[
 \|W_N\|_F\ge\frac{aR_*}{\kappa},\qquad
 \|W_N-W_0\|_F\ge\frac{aR_*}{\kappa}-\|W_0\|_F.
\]
For $\mu\ne0$, the retained inequality is only
$|K(w,b)|\le\|w\|+|\mu||b|$; it cannot be simplified by
centering the target or discarding the bias contribution.
Neither form proves a half-unit complete-score gain or all-direction
coverage. The earlier entry window must supply those separately.

\subsection{One entry rule for the averaged network}

\begin{corollary}[Complete-prefix entry to same-step release]
\label{ag:averaged-release}
Train $f_n=m^{-1}\sum_j A_{j,n}\sigma(W_{j,n}^TX+B_{j,n})$
on full MSE at the unchanged raw rate $\eta=mh/2$, and write
$(A_j,W_j,B_j)=s(q_j,w_j,b_j)$ with $s>0$, only for analysis. Set
\[
 \mathcal E_n=\sum_j(q_{j,n}^2+\|w_{j,n}\|^2+b_{j,n}^2),\qquad
 \mathfrak R_n=s\sqrt{\mathcal E_n}.
\]
At an actual integer $M$, suppose all rows are balanced,
$\mathcal E_M>0$, $\max_{0\le n\le M}\mathcal E_n\le U$, and
$\mathbb E[yf_M]/\mathfrak R_M^2\ge2a$.
The public constants are to satisfy
\begin{equation}
 \begin{gathered}
 0<a\le\frac1{4m},\quad
 0<h\le\frac1{32(1+J\mathcal H)},\quad 0<R_c^2\le am^2,\\
 G_c=\frac{aR_c^2}{2},\qquad
 s^2U\le\min\{R_c^2/4,mG_c\}.
 \end{gathered}
 \label{ag:averaged-entry}
\end{equation}
Then $\mathfrak R_M\le R_c/2$ and
$\|f_n\|_2\le G_c/2$, $L_n\ge1-G_c$ throughout $0\le n\le M$.
At the finite first subsequent hit $K>M$ of $R_c$,
\begin{gather}
 L_K\le1-2G_c,\qquad L_n-L_K\ge G_c\quad(0\le n\le M),
 \label{ag:averaged-gain}\\
 \frac m4\log\frac{R_c}{\mathfrak R_M}
 \le\eta(K-M)\le
 \eta+\frac1a\log\frac{R_c}{\mathfrak R_M}.
 \label{ag:averaged-clock}
\end{gather}
The raw radius at $K$ lies in $[R_c,\sqrt2R_c)$; raw joint motion
from $M$ is at least $R_c/2$, and the trained heads obey
$amR_c\le\|(A_{j,K})_j\|_2<R_c$.
No geometric or refit persistence during the continuation is implied.
\end{corollary}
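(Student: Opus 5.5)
The plan is to obtain the corollary as a direct specialization of Lemma~\ref{ag:release}, applied in the raw coordinates $x=(A_j,(W_j,B_j))_{j=1}^m$ from the checkpoint $M$ onward. The averaged network is the model \eqref{ag:model} with $\kappa=1/m$ and $(q_j,\theta_j)$ replaced by the raw $(A_j,(W_j,B_j))$. The raw joint radius is then exactly $R=\mathfrak R_n=s\sqrt{\mathcal E_n}$. The first task is to reconcile step conventions. The training step is $x^+=x-\eta\nabla L$ with $\eta=mh/2$, while Lemma~\ref{ag:release} is written for $x^+=x-\eta'\nabla\ell$ with $\ell=L/2$. Hence $\eta'=2\eta=mh$ and $\eta'\kappa=h$. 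The hypothesis $h\le[32(1+J\mathcal H)]^{-1}$ in \eqref{ag:averaged-entry} is then exactly the step condition in \eqref{ag:public}.

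Next I verify the remaining entry conditions of \eqref{ag:public} with $R_*=R_c$. The condition $a\le\kappa/4$ is $a\le1/(4m)$. The condition $R_*^2\le a/\kappa^2$ is $R_c^2\le am^2$. Balance $|A_j|\le\|(W_j,B_j)\|$ is invariant under the common factor $s$, so it transfers from the normalized rows. The lemma's $Q(x_M)$ equals $\mathbb E[yf_M]/\mathfrak R_M^2\ge2a$. For the radius, $\mathfrak R_M^2=s^2\mathcal E_M\le s^2U\le R_c^2/4$, so $0<\mathfrak R_M\le R_c/2<R_c$.

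The prefix bounds come from \eqref{ag:global}. For every $n\le M$,
\[
 \|f_n\|_2\le\frac{\kappa\mathfrak R_n^2}{2}=\frac{s^2\mathcal E_n}{2m}\le\frac{s^2U}{2m}\le\frac{G_c}{2}.
\]
Since $L_n=1-2\langle y,f_n\rangle+\|f_n\|_2^2\ge1-2\|f_n\|_2$, this gives $L_n\ge1-G_c$.

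Lemma~\ref{ag:release} then yields a finite first hit $K>M$ of $R_c$, with $R_c\le\mathfrak R_K<\sqrt2R_c$. It also gives $L_K\le1-aR_c^2=1-2G_c$, using $G_c=aR_c^2/2$. Since every $n\le M$ has $L_n\ge1-G_c$, the difference $L_n-L_K\ge G_c$ follows; this is \eqref{ag:gain} with $\epsilon=G_c=aR_*^2/2$, valid because those checkpoints precede $K$ on the same run.

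For the clock, \eqref{ag:clock} with $\kappa=1/m$ gives
\[
 \frac m2\log\frac{R_c}{\mathfrak R_M}\le\eta'(K-M)\le\eta'+\frac2a\log\frac{R_c}{\mathfrak R_M}.
\]
Dividing by two, since $\eta'=2\eta$, gives \eqref{ag:averaged-clock}.

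Joint motion is at least $R_c-\mathfrak R_M\ge R_c/2$. The head bounds come from the paragraph following the lemma: $\|q_K\|\ge aR_*/\kappa$ and $\|q_K\|<R_*$ in raw head coordinates. These read $amR_c\le\|(A_{j,K})_j\|_2<R_c$.

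The main obstacle is bookkeeping rather than analysis. One must keep the factor two between full MSE and $\ell$ consistent throughout, in the step-size condition, the clock, and $L=2\ell$. One must also confirm that the lemma's coordinates $(q_j,\theta_j)$ can be taken to be the raw parameters themselves, not the $s$-normalized ones. The $s$-normalization enters only through $\mathcal E_n$, and so only through the scale condition $s^2U\le\min\{R_c^2/4,mG_c\}$. No new random event is introduced, because the lemma is deterministic given the state at $M$.
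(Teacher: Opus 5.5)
Your proposal is correct and follows the paper's own proof essentially step for step. You apply Lemma~\ref{ag:release} to the raw coordinates at $M$ with $\kappa=1/m$, $R_*=R_c$ and half-MSE rate $2\eta=mh$, so that the step condition becomes $h\le[32(1+J\mathcal H)]^{-1}$; you get the prefix bounds from $\|f_n\|_2\le s^2\mathcal E_n/(2m)$; and you read off the gain, the halved clock, and the radius, motion and head bounds from the lemma and the deterministic estimates that follow it, with the factor of two between $L$ and $\ell$ handled exactly as the paper does.
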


\begin{proof}
The complete-bank norm bound gives
$\|f_n\|_2\le s^2\mathcal E_n/(2m)\le G_c/2$, whence
$L_n\ge1-2\|f_n\|_2\ge1-G_c$.
Apply Lemma~\ref{ag:release} to the raw coordinates at $M$ with
$\kappa=1/m$, $R_*=R_c$ and half-MSE rate $\eta_{1/2}=2\eta=mh$.
Then $\eta_{1/2}\kappa=h$, so \eqref{ag:averaged-entry} verifies
all of \eqref{ag:public}; the update is exactly the same one.
Equations~\eqref{ag:loss}--\eqref{ag:gain} give the conclusions,
with the clock divided by two. The radius, motion and head bounds
are the preceding deterministic bounds with $\kappa=1/m$.
\end{proof}

\section{Cubic row growth and selected-cone invariance}\label{app:cubic-seeds}

Use the teacher and public constants of Appendix~\ref{app:aj-parameters}:
$X\sim N(0,I_d)$, $y_c=\sum_i a_i h_3(u_i^TX)$,
$h_3(t)=(t^3-3t)/\sqrt6$, and $\sigma_\alpha(t)=\alpha t+(1-\alpha)t_+$.
In normalized coordinates $(A,W,B)=s(q,w,b)$, put
$\theta=(w,b)$, $\rho=\|w\|$, $n=w/\rho$, $z=b/\rho$ when $\rho>0$,
$P(n)=\sum_i a_i(u_i^Tn)^3$, and
$C(\theta)=\mathbb E[y_c\sigma_\alpha(w^TX+b)]$.
Write $P_U=UU^T$, $P_{U^\perp}=I_d-P_U$ and let $\varphi$ denote
the standard normal density.

This is a deterministic argument for one category representative from
AC13--16. Put $Z_X=(X,1)$, $v=y-y_c$, $\varsigma=\operatorname{sign}C_0$,
$p=\varsigma q$, $K=\varsigma C$, $B=b^2/\rho^2$ and
$c=(1-\alpha)/\sqrt6$. Its initial conditions are
$|C_0|\ge k$, $q_0C_0>0$, $\rho_0\in[1/2,2]$,
$|q_0|\le\|\theta_0\|$, $B_0\le1/16$ and initial joint norm below $3$.
The temporary radius cap denoted $R$ in the following prefix estimates
is the public envelope $M$ of AJ8; the prescribed force in AJ9 pays
every displayed inequality. The category milestone remains the radius
$R$ in AJ7. In the selected-cone calculation write
$R_B=M$, $k_E=k$, $a_E=a_0$, $b_2=b_*^2$, and
$\overline S=4\log(8M)$. Let $x=s_i u_i^Tw$,
$u=(\sum_{l\ne i}(u_l^Tw)^2)^{1/2}$ and $s_i$ be the category's
original axis sign.

\subsection{Deterministic actual compatible-force theorem}
Let $T_c=\nabla C$. The cubic correlation bounds are
\[
 |C(\theta)|\le\|\theta\|,\qquad
 \|T_c(\theta)\|\le1,\qquad
 \|D^2C(\theta)\|\le8/\|\theta\|.
 \tag{DC11}
\]
The exact normalized original coupled recurrence is
\begin{gather}
 q_j^+=q_j+h(C_j+\theta_j^TE_j),\qquad
 \theta_j^+=\theta_j+hq_j(T_{c,j}+E_j),\nonumber\\
 E_j=\mathbb E[(v-s^2\widehat f)\sigma'_\alpha(\theta_j^TZ_X)Z_X],
 \qquad
 \widehat f=m^{-1}\sum_lq_l\sigma_\alpha(\theta_l^TZ_X).
 \tag{DC12}
\end{gather}
The head force is exactly $\theta_j^TE_j$. Below $E_j$ may be
any adaptive compatible force of norm at most $\nu$; no derivative
of that force or independence of its rows is assumed.

\subsection{Direct phase, bias, and outside-energy proof}
Use $g=\varsigma T_c$, $e=\varsigma E$, so
$p^+=p+h(K+\theta^Te)$ and $\theta^+=\theta+hp(g+e)$.
Initial balance and exact compatibility imply
\[
 D^+\ge(1-h^2\|g+e\|^2)D\ge0,\qquad
 D=\|\theta\|^2-p^2 .
 \tag{DC17}
\]
Stop provisionally at the first failure of $p>0,K\ge k,B=z^2\le1$
or $\rho>0$, and at the radius/common stop. At each pre-step point
$\rho<R$, and including its candidate step one has
$\rho\le(1+2h)R<2R$, since
$\|\theta^+-\theta\|\le2h\|\theta\|$ and, for $B\le1$,
the sharper bound $\|w^+-w\|\le2h\rho$ follows from
$p\le\sqrt2\rho$, $\|T_c\|\le1$, and $\nu\ll1$.
In this enlarged $2R$ tube,
\[
 \nu\le\|g\|/4,\qquad |\theta^Te|\le K/4,
 \tag{DC18}
\]
by Euler's identity $\|\theta\|\|g\|\ge K\ge k$ and AJ9.
The hidden segment stays above $\|\theta\|/2$.
Taylor's formula and DC11 give
\[
 K^+-K\ge hp\|g\|^2(3/4-25h/2)
 \ge\tfrac12hp\|g\|^2,\qquad
 p^+-p\ge3hK/4 .
 \tag{DC19}
\]
Thus $p,K$ remain positive with their original common sign.
Also $\|\theta^+\|^2-\|\theta\|^2\ge3hpK/2\ge0$.

For the exact teacher intermediate spatial step put
$t=hpK/\rho^2$, $v_P=\nabla P(n)/P(n)$ and
$a=1-(3-B)t$. Nonzero $K$ ensures $zP\ne0$. Then
\begin{gather}
 \overline w/\rho=a n+t v_P,\quad
 \overline b/\rho=z+t(1-B)/z,\quad
 0<t\le h\sqrt B\le h,\quad 1-3h\le a\le1,\nonumber\\
 D_w:=\|\overline w\|^2/\rho^2
 =(1+Bt)^2+(\|v_P\|^2-9)t^2\ge1,\nonumber\\
 N_b:=\overline b^2/\rho^2
 =B+2(1-B)t+(1-B)^2t^2/B .
 \tag{DC20}
\end{gather}
This is an identity inside the simultaneous step, not a splitting
of the algorithm. The actual added hidden displacement is $hp e$.

If $B\le.75$, then $N_b\le.75+2h+h^2<.755$ and $D_w\ge1$.
The added displacement has norm at most $2h\rho\nu$, so the
actual bias-to-radius ratio remains below one. If $.75\le B\le1$,
\[
 D_w-N_b=1-B+(4B-2)t+
 (B^2-B+2-B^{-1}+\|v_P\|^2-9)t^2\ge1-B+t.
 \tag{DC21}
\]
The quadratic coefficient is nonnegative in this interval.
Since $\|\overline\theta\|\le2\rho$, the force changes
$\|\overline w\|^2-\overline b^2$ by at most
$4\rho hp\nu+(hp\nu)^2$ in the unfavorable direction.
Divided by $\rho^2t$, this is at most
$16R\nu/k+8hR\nu^2/k<.01$.
This proves the candidate upper-bias barrier. The new spatial radius
is positive since $\rho^+\ge\rho-hp\nu\ge(1-2h\nu)\rho>0$.
The nondecreasing augmented norm then gives
$\rho\ge\|\theta_0\|/\sqrt2\ge1/(2\sqrt2)>1/3$.
All provisional failures have been strictly excluded.

Writing $O=\|P_{U^\perp}w\|$, the exact teacher outside update gives
\[
 O^+\le aO+hp\nu
 \le(1-2t)O+\frac{4R^2\nu}{k}t .
 \tag{DC22}
\]
The coefficients are nonnegative, and its scalar equilibrium is
$2R^2\nu/k\le1/5000$. Since $O_0\le2$, this proves $O_n\le2$.
It controls the reliable row only; other original rows have not
been discarded from any asserted complete-bank score.

\subsection{Intrinsic force sums}
For this actual row put $S_n=\sum_{l<n}t_l$.
The augmented-energy increase above, together with $B\le1$, gives
\[
 \|\theta^+\|^2\ge\|\theta\|^2(1+3t/4),\qquad
 S_n\le\frac{20}{7}\log(4R)<\overline S,
 \quad
 \sum_{l<n}hp_l\le\frac{4R^2}{k}\overline S,\quad
 \sum_{l<n}hp_lK_l\le\frac43\rho_n^2 .
 \tag{DC23}
\]
Here $\|\theta_0\|\ge.5$ and
$\|\theta_n\|\le\sqrt2(1+2h)R<2R$;
$\log(1+3t/4)\ge7t/10$ proves the clock bound.
Every sum includes the candidate exit step.
The pure bias step retains its sign because $bC_b=C(1-B)$ and
$B\le1$. Also $|b|\ge|C|\ge k$, whereas the actual bias perturbation
is at most $hp\nu\le hM\nu<k$. Thus the original bias sign persists,
so the cone orientation below is the category's original orientation.

\subsection{Direct invariant projected cone}
Orient all teacher coordinates by the selected sign $s_i$.
Inside the same step the selected coordinate and the other-coordinate
norm satisfy exactly or respectively
\[
 \overline x=ax+d_c a_i x^2,\qquad
 \overline u\le au+d_c u^2,
 \qquad d_c=\frac{3hp c|z|\varphi(z)}{\rho^2}
           =\frac{3t}{\rho|P(n)|}>0.              \tag{DX27}
\]
The norm inequality uses
$(\sum_{l\ne i}a_l^2(u_l^Tw)^4)^{1/2}\le u^2$.
The compatible force changes $x$ in absolute value by at most
$hp\nu$ and increases $u$ by at most $hp\nu$.

Stop provisionally before failure of $x\ge1/(2\sqrt d)$ or
$u\le\vartheta x$. In that cone the selected cubic term dominates
the others because $a_i x^3>u^3$, and $a_i x$ is the largest positive
weighted coordinate, so
$|P(n)|\le(a_ix/\rho)\|P_Un\|^2\le a_ix/\rho$.
Thus $\overline x\ge x(1+Bt)\ge x$.
Initial $x_0\ge1/\sqrt d$ and DC23 give at every candidate state
\[
 x_n\ge x_0-\nu\sum_{l<n}hp_l
 \ge\frac1{\sqrt d}-\frac{4R_B^2\nu\overline S}{k_E}
 \ge\frac{15}{16\sqrt d}>\frac1{2\sqrt d}.        \tag{DX28}
\]
For the cone boundary, monotonicity of $au+d_cu^2$ for $u\ge0$
gives on the entire closed cone
\[
 u^+-\vartheta x^+
 \le-d_c\vartheta(a_i-\vartheta)x^2
                       +(1+\vartheta)hp\nu.      \tag{DX29}
\]
Here $a_i-\vartheta\ge7a_E/8$, $x^2\ge1/(4d)$ and
$c|z|\varphi(z)=K/(\rho|P|)\ge k_E/\rho$.
At the candidate step the pre-step spatial radius is below $R_B$;
using the weaker $2R_B$ bound gives
\[
 d_c\vartheta(a_i-\vartheta)x^2
 \ge\frac{21}{256}\,\frac{hp k_E\vartheta a_E}{R_B^3d}
 >(1+\vartheta)hp\nu .                            \tag{DX30}
\]
The strict final inequality follows from AJ9. Both candidate
failures are excluded. This proves the direct cone retention the selected-cone assertion
for every original cone representative through the declared horizon $N$.
It is not an assumed favorable sign or a relabeled reference basin.

\subsection{Direct bias calibration and retention after a milestone}
Let
\[
 W=b^2-b_2\rho^2,\qquad b_2+b_2^2=1.
\]
The exact teacher intermediate gives
\[
 \overline W=(1-2(1+b_2)t)W+
      h^2p^2\{T_b^2-b_2\|T_w\|^2\},\qquad T=\nabla C.
                                                               \tag{DX31}
\]
Indeed $bT_b=C(1-B)$ and $w^TT_w=BC$ for a cubic target.
The coefficient in front of $W$ is in $[0,1]$ since $t\le h$.
For the full actual force displacement $hp e$, the quadratic
$W$ changes by at most
$4\rho hp\nu+h^2p^2\nu^2$, using $\|\overline\theta\|\le2\rho$.
Correlation ascent, monotonicity of $p,K$ and balance give
\[
 \sum_{l<n}h^2p_l^2\|T_l\|^2
 \le2h\sum_{l<n}p_l(K_{l+1}-K_l)
 \le2hp_nK_n\le4h\rho_n^2.                       \tag{DX32}
\]
For the force sum use $\rho_l/K_l\le2R_B/k_E$ and DC23:
\[
 \sum_{l<n}4\rho_lhp_l\nu
 \le\frac{32R_B\nu}{3k_E}\rho_n^2.
\]
The sum of $h^2p_l^2\nu^2$ is at most
$8hR_B\nu^2\rho_n^2/(3k_E)$, absorbed by the stated weaker bound.
Since $|W_0|<5$, iterating the absolute-value inequality in DX31
proves the bias bound, with slack in both constants.

At $\rho\ge R_D$, the oriented direction error obeys
\[
 \|w/\rho-s_iu_i\|
 \le\sqrt2\sqrt{\vartheta^2+4/R_D^2}<\xi/4.       \tag{DX33}
\]
Also AJ3, AJ9 and $b_*>1/2$ give
\[
 ||b|/\rho-b_*|
 \le2\{5/R_D^2+8h+100R_B\nu/k_E\}<\xi/4.        \tag{DX34}
\]
The sign of $b$ is the original $s_b$, so the actual full affine
profile error is below $\xi/2$, hence below $\xi$.
By DC20 each pure teacher spatial step has nondecreasing radius;
force can decrease it by at most $hp\nu$. On any subinterval of
the prefix the total decrease is at most the whole DC23 budget,
which is at most $1/(16\sqrt d)$. Thus a row reaching $2R_D$
retains radius at least $2R_D-1/(16\sqrt d)>R_D$ through $N$,
and its cone and bias bounds continue to apply. This is actual
post-milestone retention, not a rule freezing that row.

\section{Cubic normalized correlation and complete-row geometry}\label{app:cubic-rows}

\subsection{Actual model and compatible force}
All calculations use teacher coefficients and axes $(\lambda_i,u_i)$,
corresponding to $(a_i,u_i)$ in AJ1. Coefficient signs and magnitudes
are unrestricted subject to normalization. Function norms are Gaussian,
and $\theta_j=(w_j,b_j)$ denotes a normalized augmented hidden row.
The proof uses three discrete invariants: compatible balance, compensated
normalized correlation, and a maximum of spatial/bias quadratics.
They apply to every original row, regardless of its initial head sign.
Let $X\sim N(0,I_d)$, $Z=(X,1)$, and fix a normalized additive cubic
\[
 y_c=\sum_{i=1}^r\lambda_i h_3(u_i^TX),\qquad
 \sum_i\lambda_i^2=1,\quad u_i^Tu_l=\mathbf1_{i=l},
 \qquad h_3(t)=(t^3-3t)/\sqrt6.
 \tag{NC1}
\]
The coefficient magnitudes need not be equal. For $0\le\alpha<1$
let $\sigma(t)=\alpha t+(1-\alpha)t_+$. Retain the full target
$y=y_c+v$, including all its lower terms and tails, and the original
averaged network
\[
 f=m^{-1}\sum_j a_j\sigma(w_j^TX+b_j),\qquad
 L=\mathbb E[(y-f)^2],\qquad
 (a_j,w_j,b_j)=s(q_j,\theta_j).
 \tag{NC2}
\]
All raw coordinates train simultaneously at the same full-MSE step
$\eta=mh/2$. The one-step estimates NC6--12 hold for
$0<h\le1/128$; the public-horizon conclusions below explicitly
require $h\le1/512$, as does the final AJ theorem. Define
\[
 C(\theta)=\mathbb E[y_c\sigma(\theta^TZ)],\qquad
 H(x)=qC(\theta),\quad x=(q,\theta),\quad F=\nabla H,
\]
\[
 V=\|x\|^2,\qquad Q=H/V,
 \qquad \widehat f=m^{-1}\sum_lq_l\sigma(\theta_l^TZ).
 \tag{NC3}
\]
The letter $V$ here denotes a single row's joint energy; complete-bank
energy below is denoted $\mathcal V$. The exact actual update is
\[
 x_j^+=x_j+h\{F(x_j)+p_j\},\qquad
 p_j=(\theta_j^TE_j,q_jE_j),
\]
\[
 E_j=\mathbb E[(v-s^2\widehat f)\sigma'_jZ].
 \tag{NC4}
\]
Here $\sigma'_j=\sigma'(\theta_j^TZ)$ is the scalar gate.
Every original row, bias and student cross term is present. The
analysis scaling by $s$ changes no optimizer or metric.

The previously Gaussian-H1 bounds for the unit cubic give
\[
 |C|\le\|\theta\|,\quad\|\nabla C\|\le1,
 \quad\|D^2C\|\le8/\|\theta\|,\quad
 F=(C,q\nabla C),\quad\|F(x)\|\le\|x\|,
\]
\[
 H(tx)=t^2H(x),\quad F(tx)=tF(x)\quad(t>0),\qquad
 \|DF(q,\theta)\|\le1+8|q|/\|\theta\|.
 \tag{NC5}
\]
These bounds use the augmented hidden norm, so zero spatial weight
with nonzero bias is allowed. In particular $|Q|\le1/2$.

Assume each original initial row has
$0<\|\theta_{j,0}\|$ and $|q_{j,0}|\le\|\theta_{j,0}\|$.
No head sign or teacher detector lower bound is assumed. If the
compatible force at an update obeys $\|E_j\|\le\nu\le1$, then
balance and nonzero augmented hidden state persist through that
update. Indeed, with $G=\nabla C+E$ and $D=\|\theta\|^2-q^2$,
\[
 D^+=D+h^2\{q^2\|G\|^2-(\theta^TG)^2\}
 \ge(1-h^2\|G\|^2)D\ge0,
\]
\[
 \|\theta^+\|\ge(1-2h)\|\theta\|>0,
 \qquad \|x^+\|\le(1+2h)\|x\|.
 \tag{NC6}
\]
This exact compatible balance is the only row-ratio hypothesis used.

\subsection{Correlation invariant and compatible-force debit}
Put $u=x/\sqrt V$ and define its teacher tangent field
\[
 g(u)=F(u)-2Q u,\qquad \langle u,g\rangle=0,
 \qquad \|g\|\le1,
 \qquad F(u)=2Qu+g.
 \tag{NC7}
\]
Euler's identity for degree two gives the factor $2Q$.
For every actual update satisfying the preceding force and balance
conditions,
\[
 Q^+\ge Q+\frac h2\|g(u)\|^2-2h\nu.
 \tag{NC8}
\]
Consequently $Q_n+2\nu hn$ is nondecreasing on every prefix having
that force bound. With zero force, $Q$ itself is nondecreasing at
the original fixed discrete step.

Here is a full one-step proof. First take a pure teacher step from
the current actual row, solely as an algebraic intermediate. Its
unnormalized unit-row update is
\[
 u+hF(u)=(1+2hQ)(u+t g),\qquad
 t=\frac h{1+2hQ},\qquad
 \frac h{1+h}\le t\le\frac h{1-h}\le\frac1{127}.
\]
Since the starting unit row is balanced, its hidden norm is at least
$1/\sqrt2$ and its head magnitude at most $1/\sqrt2$. On the segment
$u+stg$, $0\le s\le1$, the hidden norm is at least
$1/\sqrt2-1/127>.69$, and the head/hidden ratio is at most
$(1/\sqrt2+1/127)/(1/\sqrt2-1/127)<2$.
Thus $\|D^2H\|=\|DF\|\le17$ on the entire Taylor segment. Exact homogeneity and orthogonality give
\[
 Q_{\rm teach}^+
 =\frac{H(u+tg)}{1+t^2\|g\|^2},\qquad
 H(u+tg)\ge Q+t\|g\|^2-\frac{17}2t^2\|g\|^2.
\]
It follows that
\[
 Q_{\rm teach}^+-Q
 \ge\frac{t-9t^2}{1+t^2\|g\|^2}\|g\|^2
 \ge\frac h2\|g\|^2.
 \tag{NC9}
\]
For the last coarse inequality, use $t\ge3h/4$,
$1-9t\ge3/4$ and $1+t^2\|g\|^2\le9/8$, all implied by
$h\le1/128$.

The normalized compatible perturbation is $r=p/\sqrt V$, with
$\|r\|\le\nu$. For nonzero $a,b$ the exact identity
\[
 \|a-b\|^2=(\|a\|-\|b\|)^2+
       \|a\|\|b\|\|a/\|a\|-b/\|b\|\|^2
\]
applied to $a=u+hF(u)$ and $b=a+hr$ bounds the difference of
their normalized directions by $2h\nu$: their norms are at least
$1-h$ and $1-2h$. Both normalized endpoints are balanced by NC6.
Their chord has norm at most one and hidden norm at least
$1/\sqrt2-2h\nu>0$; there $\|\nabla H\|=\|F\|\le1$. The resulting change in $Q=H(u)$
is at most $2h\nu$, proving NC8. No comparison trajectory or Euler
remainder is being charged.

\subsection{A compensated energy inequality valid at negative correlation}
The same actual update satisfies the stronger potential bound
\[
 \Delta\{\log V-4hQ\}\le4hQ+5h\nu.
 \tag{NC10}
\]
Unlike a bound that replaces $Q$ by its positive part, NC10 retains
the energy decrease of adversely correlated rows.
To prove it, the exact pure-teacher radial/tangent decomposition is
\[
 \log(V_{\rm teach}^+/V)
 =2\log(1+2hQ)+\log(1+t^2\|g\|^2)
 \le4hQ+2h^2\|g\|^2.
\]
This uses $1+2hQ>0$, $\log(1+a)\le a$ for $a>-1$, and
$t^2\le2h^2$. Along the additional force segment the joint norm is
at least $1-2h$, so the change in log joint norm is at most $2h\nu$,
and the change in log energy is at most $4h\nu$. By NC8,
\[
 2h^2\|g\|^2\le4h(Q^+-Q)+8h^2\nu.
\]
Combining the inequalities and using $4+8h\le5$ proves NC10.

On any interval $a\le n$ of such updates, put $t=(n-a)h$.
Telescoping NC10 gives
\[
 \log(V_n/V_a)
 \le4h\sum_{l=a}^{n-1}Q_l+5\nu t+4h(Q_n-Q_a).
 \tag{NC11}
\]
If $Q_l\le K$ for every pre-update index $a\le l<n$, with
$K\ge0$, this yields
\[
 V_n\le V_a\exp\{(4K+5\nu)t+4h(Q_n-Q_a)\}
 \le V_a\exp\{(4K+5\nu)t+4h\}.
 \tag{NC12}
\]
There is no requirement on the terminal $Q_n$. In particular NC12
includes the candidate step which first crosses the threshold.

\subsection{Public-horizon irreversible classification, including all crossings}
Fix a public $\overline T\ge1$ and an integer $N_{\max}\ge0$
with $N_{\max}h\le\overline T$. For AJ take
$\overline T=T+1$ and $N_{\max}=\lceil T/h\rceil$.
Use the explicit mesh and polynomial force budget
\[
 0<h\le1/512,\qquad \kappa=\frac1{100\overline T},\qquad
 0\le\nu\le\frac1{400\overline T^2}
            =\frac\kappa{4\overline T}.
 \tag{NC13}
\]
All following assertions hold on any force-controlled prefix through
an integer $N\le N_{\max}$. A first candidate energy exit may be
that terminal integer, as formalized below.
For each original row define its permanent marking time
\[
 \tau_j=\inf\{0\le n\le N:Q_{j,n}\ge2\kappa\},
 \qquad \inf\varnothing=+\infty.
 \tag{NC14}
\]
Then:
\begin{enumerate}
\item Every low-phase and threshold-crossing state has
\[
 V_{j,n}<\frac87 V_{j,0}
 \quad(0\le n\le\min\{\tau_j,N\}).
 \tag{NC15}
\]
For an initially marked row this statement concerns just $n=0$.
For an unmarked row it covers the entire available horizon.
\item Once marked, the row cannot return to $Q\le\kappa$ on that
horizon; more precisely,
\[
 Q_{j,n}\ge\frac32\kappa
 \quad(\tau_j\le n\le N).
 \tag{NC16}
\]
The actual values can cross $2\kappa$ again. The permanent label,
not exact monotonicity of the forced $Q$, is the irreversible object.
\item At every marked state the actual teacher head and correlation
have the same nonzero sign, and
\[
 \frac{|C_j|}{\|\theta_j\|}\ge2Q_{j,n}\ge3\kappa,
 \qquad \frac{|q_j|}{\|\theta_j\|}\ge Q_{j,n}
                         \ge\frac32\kappa.
 \tag{NC17}
\]
Their common sign persists and $|q_j|$ increases on subsequent
controlled updates. The relative head-force and hidden-gradient
force are bounded by
\[
 \frac{|\theta_j^TE_j|}{|C_j|},\quad
 \frac{\|E_j\|}{\|\nabla C_j\|}
 \le\frac\nu{3\kappa}\le\frac1{12\overline T}.
 \tag{NC18}
\]
\end{enumerate}

To prove NC15 for a positive marking time, all pre-crossing
$Q_{j,l}<2\kappa$. Apply NC12 with $K=2\kappa$, including the
crossing candidate. The exponent is at most
\[
 (8\kappa+5\nu)\overline T+4h
 \le\frac8{100}+\frac5{400}+\frac4{512}<\frac18.
\]
Since $e^{1/8}<8/7$, NC15 follows. NC8 gives at every later state,
including a proposed first exit below $\kappa$,
\[
 Q_{j,n}\ge Q_{j,\tau_j}-2\nu(n-\tau_j)h
 \ge2\kappa-2\nu\overline T\ge\frac32\kappa.
\]
This excludes the candidate exit and proves NC16 without an iteration
over repeated threshold crossings.

For NC17 put $z=|q|/\|\theta\|\in(0,1]$ at a marked state.
The identity $Q=qC/(q^2+\|\theta\|^2)$ gives
$|C|/\|\theta\|=Q(z+z^{-1})\ge2Q$ and, since
$|C|\le\|\theta\|$, gives $Q\le z/(1+z^2)\le z$.
Euler's identity $\theta^T\nabla C=C$ then gives
$\|\nabla C\|\ge|C|/\|\theta\|$, proving NC18.
Finally the head update has increment with its current sign because
$|C|-|\theta^TE|\ge(3\kappa-\nu)\|\theta\|>0$.
Together with NC16 at the next state, this preserves also the
correlation sign. There was no initial detector quantile for any
of these classified rows.

All original rows stay in the bank. For example, at any common time
the entire energy of the still-unmarked rows is at most
$(8/7)\sum_jV_{j,0}$. Marked rows are not presumed aligned:
NC16--18 only provide the positive-correlation and relative-force
interface needed for a separate direct spatial argument.

\subsection{Conditional full-bank correlation from actual energy growth}
There is also a useful endpoint consequence that avoids classifying
which rows supplied the energy. For any interval of $n\ge1$ actual
steps starting at zero, set $t=nh$ and retain the force bound.
NC8 implies $Q_l\le Q_n+2\nu(n-l)h$. Substitution in NC11 yields
the rowwise inequality
\[
 Q_n\ge
 \frac{\log(V_n/V_0)+4hQ_0
       -\nu\{4t(t+h)+5t\}}{4(t+h)}.
 \tag{NC19}
\]
The sum $h\sum_{l=0}^{n-1}2\nu(n-l)h
=\nu t(t+h)$ is exact, so no unreported time-step error occurs.

Define the complete actual quantities
\[
 \mathcal V_n=\sum_jV_{j,n},\qquad
 \mathcal H_n=\sum_jH_{j,n},\qquad
 \overline Q_n=\mathcal H_n/\mathcal V_n.
\]
Multiplying NC19 by $V_{j,n}/\mathcal V_n$, summing every original
row, and using $Q_{j,0}\ge-1/2$ gives
\[
 \overline Q_n\ge
 \frac{\log(\mathcal V_n/\mathcal V_0)-2h
       -\nu\{4t(t+h)+5t\}}{4(t+h)}.
 \tag{NC20}
\]
For completeness, the log-sum step is
\[
 \sum_j\frac{V_{j,n}}{\mathcal V_n}\log\frac{V_{j,n}}{V_{j,0}}
 =\log\frac{\mathcal V_n}{\mathcal V_0}
  +\sum_j p_j\log(p_j/p_{j,0})
 \ge\log\frac{\mathcal V_n}{\mathcal V_0},
\]
where $p_j=V_{j,n}/\mathcal V_n$ and
$p_{j,0}=V_{j,0}/\mathcal V_0$. All these weights are positive
by NC6. This is an energy-weighted identity for the complete bank,
not a selected-row estimate.

Under NC13, if an actual integer $1\le N\le N_{\max}$ has
$\mathcal V_N\ge2\mathcal V_0$, then
\[
 \overline Q_N\ge\frac1{8\overline T}.
 \tag{NC21}
\]
Indeed $t\le\overline T$ and $h\le1/512$ bound the debit
$2h+\nu\{4t(t+h)+5t\}$ by $1/32$.
The numerator of NC20 is then greater than $5/8$, since
$\log2\ge2/3$, and its denominator is at most $5\overline T$.
This proves NC21. The energy-doubling hypothesis is an actual
checkpoint requirement still to be acquired, not a proved exit time
or a replacement definition of subspace learning.

For the full target rather than its central cubic, exact compatibility
gives at the endpoint
\[
 \mathbb E[v\widehat f]
 =m^{-1}\sum_jq_j\theta_j^TE_j+s^2\|\widehat f\|_2^2.
 \tag{NC22}
\]
If the endpoint force itself also obeys $\|E_{j,N}\|\le\nu$, then
with complete raw joint norm $\mathfrak R_N^2=s^2\mathcal V_N$,
\[
 \frac{\mathbb E[yf_N]}{\mathfrak R_N^2}
 \ge\frac{\overline Q_N-\nu/2}{m}
 \ge\frac1{16m\overline T}.
 \tag{NC23}
\]
Here the nonnegative student term in NC22 is retained, and
$\sum_j|q_j|\|\theta_j\|\le\mathcal V_N/2$ bounds the debit.
The last inequality uses NC13 and NC21. The endpoint-force condition
is explicitly included because a first energy-crossing state need
not lie inside an unpadded pre-update tube.

\subsection{Spatial/bias invariant from the same actual recurrence}

Continue with NC1--6 and write $T=\nabla C$,
$U=\operatorname{span}\{u_i\}$, with orthogonal projector $P_U$.
In particular the same compatible vector $E$ produces both the head
and hidden errors in NC4. It may depend on every row and the whole
history; no independence or derivative of $E$ is assumed.
For $\rho=\|w\|>0$, put $n=w/\rho$, $z=b/\rho$, $B=z^2$, and
$P(n)=\sum_i\lambda_i(u_i^Tn)^3$. The exact cubic correlation is
\[
 C(\theta)=-c\,b\varphi(z)P(n),\qquad
 c=(1-\alpha)/\sqrt6,\qquad
 \varphi(z)=(2\pi)^{-1/2}e^{-z^2/2}.
 \tag{OE4}
\]
It extends to a $C^2$ function on the punctured augmented space;
$C=T=0$ at $w=0,b\ne0$. The augmented bounds in NC5 apply, and Euler's identity gives
$\theta^TT=C$, including the pure-bias case. No spatial-radius lower
bound is imposed. In addition to $V,H,Q$ from NC3, define
\[
 \mathcal B=C^2+q^2\|T\|^2,\qquad
 I^2=\|P_Uw\|^2,\quad O^2=\|(I-P_U)w\|^2,
\]
\[
 A=I^2/\rho^2\quad(\rho>0),\qquad
 \mathcal M(\theta)=\max\{O^2,b^2-I^2\}
                  =O^2+(b^2-\rho^2)_+.
 \tag{OE6}
\]
Thus $O^2\le\mathcal M$ and $b^2\le\rho^2+\mathcal M$.

\subsection{Sign-free energy and normalized correlation}

Suppose $\|E\|\le\nu\le1$, $0<h\le1/128$,
$\theta\ne0$, and $|q|\le\|\theta\|$. NC6 preserves balance and
nonzero augmented hidden state; the joining hidden segment stays away
from zero because $\|\theta^+-\theta\|\le2h\|\theta\|$.
The following two sign-free estimates hold:
\[
 H^+-H\ge\frac h2\mathcal B-3h\nu^2\|\theta\|^2,
 \tag{OE8}
\]
\[
 (V-4hH)^+-(V-4hH)\le4hH+3h\nu V.
 \tag{OE9}
\]
For completeness their short derivation follows. Put
$a=C+\theta^TE$ and $G=T+E$. Taylor's theorem and NC5 give
\[
 C^+=C+hqT^TG+\mathcal E,\qquad
 |\mathcal E|\le8h^2q^2\|G\|^2/\|\theta\|.
\]
The first-order terms of
$H^+-H=hCa+hq^2T^TG+h^2qaT^TG+(q+ha)\mathcal E$
are at least
$h(3\mathcal B/4-2\nu^2\|\theta\|^2)$ by Young's inequality.
The absolute values of the last two terms sum to at most
$18h^2(\mathcal B+\nu^2\|\theta\|^2)$: respectively use
$|qaT^TG|\le3(\mathcal B+\nu^2\|\theta\|^2)/2$ and
$|(q+ha)\mathcal E|\le16(1+2h)h^2
(\mathcal B+\nu^2\|\theta\|^2)$.
This proves OE8. The exact expansion
\[
 V^+-V=4hH+4hq\theta^TE+h^2(a^2+q^2\|G\|^2)
 \le4hH+2h\nu V+2h^2\mathcal B+2h^2\nu^2V
\]
minus $4h$ times OE8 proves OE9, since $14h\nu\le1$.

The detector-free debit is already a consequence of NC8:
\[
 Q^+\ge Q-2h\nu. \tag{OE10}
\]
Its $h\le1/128$ scope is justified by the explicit $t\le1/127$
segment check in NC9; no wider step range is inferred merely from the
final theorem's $h\le1/512$ assumption. Zero correlation and zero head
are included.

For completeness, record the quotient bounds.
Writing $x=(q,\theta)$, $F=\nabla H$ and $g_Q=\nabla Q$, homogeneity gives
\[
 g_Q=(F-2Qx)/V,\qquad x^Tg_Q=0,\qquad
 \|g_Q\|\le V^{-1/2}. \tag{OE11}
\]
Indeed $\|F-2Qx\|^2=\|F\|^2-4Q^2V\le V$.
If $|q|\le2\|\theta\|$, the four terms in the quotient Hessian
$D^2(H/V)$ have bounds $17/V$, $4/V$, $1/V$ and $4/V$, so
\[
 \|D^2Q\|\le32/V. \tag{OE12}
\]
These are calculus identities, not a second trajectory comparison.

\subsection{A maximum of quadratics handles every bias transition}

For every balanced row, the specific cubic formula improves the
generic bound $|H|\le V/2$ to
\[
 |H|\le\rho^2/4.
 \tag{OE13}
\]
For $\rho>0$, $|P|\le(\sum_i|u_i^Tn|^6)^{1/2}\le1$ and balance
give
\[
 \frac{|H|}{\rho^2}
 \le c\varphi(\sqrt B)\sqrt{B(1+B)}
 \le c(1+B)\varphi(\sqrt B)
 \le2c\varphi(1)<\frac14.
\]
The last strict inequality follows from $2c\varphi(1)\le
1/\sqrt{3\pi e}<1/4$. The maximum of
$(1+B)e^{-B/2}$ occurs at $B=1$. At $\rho=0$, $C=0$, so OE13
holds there as well.

\paragraph{Favorable one-step theorem.}
If $H\ge0$, then every actual candidate step satisfies
\[
 \mathcal M^+-\mathcal M\le h^2\mathcal B+2h\nu V,
 \tag{OE14}
\]
\[
 (\mathcal M-2hH)^+-(\mathcal M-2hH)\le3h\nu V.
 \tag{OE15}
\]
There is no condition on $B$ and no condition on the next sign of $H$.

To prove OE14 first take the pure hidden candidate
$\theta^{\rm t}=\theta+hqT$. Let
$F_1=O^2$ and $F_2=b^2-I^2$; each is a quadratic form with matrix
operator norm at most one. Differentiating OE4, with no division by
$P$ or $C$, gives
\[
 DF_1(\theta)[qT]=-2(3-B)H(1-A),
\]
\[
 DF_2(\theta)[qT]=2H\{(3-B)A-2-B\}.
 \tag{OE16}
\]
For example, $w^TT_w=BC$, $bT_b=(1-B)C$, and
$w_\perp^TT_{w,\perp}=-(3-B)C(1-A)$ imply both identities.
At $\rho>0$ put $t=hH/\rho^2$. By OE13,
$0\le t\le h/4\le h$. We claim
\[
 F_i(\theta)+hDF_i(\theta)[qT]\le\mathcal M(\theta)
 \quad(i=1,2).
 \tag{OE17}
\]
For $F_1$, the derivative is nonpositive if $B\le3$. If $B>3$,
the other branch exceeds it by $(B-1)\rho^2$, while its positive
first-order change is at most $2t(B-3)\rho^2$; the gap absorbs this
for $h\le1/2$. For $F_2$, the bracket in OE16 is nonpositive when
$B\ge1$: its maximum over $A\in[0,1]$ is $1-2B$ for $B\le3$
and $-2-B$ for $B\ge3$. If $B<1$, the gap from $F_1$ is
$(1-B)\rho^2$ and the bracket is at most $1-2B$. It is nonpositive
for $B\ge1/2$; for $B<1/2$,
$2t(1-2B)\le1-B$ when $h\le1/2$. This proves OE17.
The exact quadratic second-order term of each branch is at most
$h^2q^2\|T\|^2\le h^2\mathcal B$. Taking their maximum yields
\[
 \mathcal M(\theta^{\rm t})\le\mathcal M(\theta)+h^2\mathcal B.
 \tag{OE18}
\]
All possible branch crossings are already included. At $\rho=0$,
$C=T=0$ and OE18 is immediate, so no limiting ratio is needed.

For the force displacement $\delta=hqE$, either quadratic obeys
\[
 F_i(\theta^{\rm t}+\delta)-F_i(\theta^{\rm t})
 \le2\|\theta^{\rm t}\|\|\delta\|+\|\delta\|^2
 \le2h\nu|q|\|\theta\|+(2h^2\nu+h^2\nu^2)q^2
 \le(1+3h)h\nu V\le2h\nu V.
\]
Taking maxima proves OE14. Subtracting $2h$ times OE8 gives
\[
 (\mathcal M-2hH)^+-(\mathcal M-2hH)
 \le(2h\nu+6h^2\nu^2)V\le3h\nu V,
\]
which is OE15. Consequently, on any favorable prefix starting at
an actual state $J$, including its last candidate step,
\[
 \mathcal M_n
 \le\mathcal M_J+2h(H_n-H_J)
                  +3\nu\sum_{l=J}^{n-1}hV_l
 \le\mathcal M_J+\frac h2\rho_n^2
                  +3\nu\sum_{l=J}^{n-1}hV_l.
 \tag{OE19}
\]
This is a bias-energy barrier valid even when the normalized bias
crosses one or three repeatedly, or when a spatial row vanishes.

\subsection{A public polynomial budget joins all original rows}

Fix a public horizon $T_0\ge1$ and parameters
\[
 0<h\le1/512,\qquad \kappa=\frac1{100T_0},\qquad
 0\le\nu\le\frac1{400T_0^2}.
 \tag{OE20}
\]
Suppose every pre-step force up to a considered integer state $n$
obeys NC4 with $\|E\|\le\nu$, and $nh\le T_0$. Start at any
nonzero augmented row with $|q_0|\le\|\theta_0\|$. Let $J$ be the
first integer state with $Q_J\ge2\kappa$, or $+\infty$ if none.
Then
\[
 V_l<\frac87V_0\quad(0\le l\le\min\{J,n\}),\qquad
 Q_l\ge\frac32\kappa>0\quad(J\le l\le n).
 \tag{OE21}
\]
The second statement is vacuous for $J=+\infty$.

This is NC15--16 with $\overline T=T_0$ and $N_{\max}=n$.
In particular NC12 applies to every low or first-crossing candidate,
because only its pre-update correlations must be below $2\kappa$:
\[
 V_l\le V_0\exp\{(8\kappa+5\nu)lh+4h(Q_l-Q_0)\}
       <\frac87V_0\quad(l\le\min\{J,n\}). \tag{OE22}
\]
The exponent is at most $8/100+5/400+4/512<1/8$.
After marking, NC8 pays the entire remaining force debit,
$Q_l\ge2\kappa-2\nu T_0\ge3\kappa/2$, including a proposed exit.
Thus repeated crossings of $2\kappa$ never create a new adverse phase.

Apply OE19 to the marked phase and use
$\mathcal M_J\le V_J<8V_0/7$. Before marking, simply use
$\mathcal M_l\le V_l<8V_0/7$. Every original row therefore obeys
\[
 {\displaystyle
 \mathcal M_n\le\frac87V_0+\frac h2\rho_n^2
                           +3\nu\sum_{l=0}^{n-1}hV_l.}
 \tag{OE23}
\]
In particular
\[
 O_n^2\le\frac87V_0+\frac h2\rho_n^2+3\nu\sum_{l<n}hV_l,
 \qquad
 b_n^2\le(1+h/2)\rho_n^2+\frac87V_0+3\nu\sum_{l<n}hV_l,
 \tag{OE24}
\]
\[
 V_n\le(4+h)\rho_n^2+\frac{16}7V_0+6\nu\sum_{l<n}hV_l.
 \tag{OE25}
\]
The last inequality uses balance and $V\le2(\rho^2+b^2)$.
These conclusions require no row's initial detector to be nonzero;
they retain every initial row, including either initial head sign.
An identically zero initial joint row contributes zero throughout
and can be included by continuity or directly.

\section{Simultaneous maturation of the cubic categories}\label{app:cubic-maturation}

Use Appendix~\ref{app:aj-parameters}. Here
$(A_j,W_j,B_j)=s(q_j,w_j,b_j)$, $\theta_j=(w_j,b_j)$,
$\rho_j=\|w_j\|$, and $V_n=\sum_j(q_{j,n}^2+\|\theta_{j,n}\|^2)$
is the complete normalized joint energy. Write $P_U=UU^T$,
$P_{U^\perp}=I_d-P_U$ and
$A_{{\rm sub},n}=\sum_j\|P_Uw_{j,n}\|^2/\sum_j\|w_{j,n}\|^2$.
The activation is $\sigma_\alpha(t)=\alpha t+(1-\alpha)t_+$,
the cubic constant is $c=(1-\alpha)/\sqrt6$, and $\varphi$ is the
standard normal density. In the category and Gaussian-tail
calculation here, $\delta$ denotes its $\delta_0=\delta_{\rm theorem}/2$.
Write $R_M=R$ and define $J$ to be the last first crossing of $2R_M$
among the $4r$ category representatives. The symbol $J$ in this section
is a checkpoint, while the leak factor is written $1-\alpha$.

\paragraph{Original Gaussian event without a conditional-law substitution.}
Write each initial normalized spatial, bias and head row as
$(G,B,Z)/\sqrt d$, where $G\sim N(0,I_d)$ and $B,Z\sim N(0,1)$
are independent; different rows are independent as well.
For every $i$ and $(s_i,s_b)\in\{-1,1\}^2$ require a row with
\[
 s_iu_i^TG\in[1,2],\quad
 |u_l^TG|\le\vartheta/(4\sqrt{r-1})\ (l\ne i),\quad
 s_bB\in[1/2,2],\quad -s_is_bZ\in[1/2,2].           \tag{AC13}
\]
The other-coordinate restrictions are empty at rank one.
Each specified rectangle has probability exactly $p_C$ under
the original independent Gaussian law. Its failure of occupancy
is at most $e^{-mp_C}$, so all $4r$ categories occur except with
probability $\delta/4$. Distinct categories have disjoint
rectangles, hence there are at least $4r$ distinct rows.

The elementary Gaussian tails give, except with probability
$7\delta/96$, simultaneously for every row
\[
 \rho_{j,0}\in[1/2,2],\quad \|P_UG_j\|^2\le H,\quad
 |B_j|,|Z_j|\le\sqrt{2\ell},\quad
 |q_{j,0}|\le\|\theta_{j,0}\|,\quad V_0<5m.          \tag{AC14}
\]
Here is the tail accounting explicitly. For $Z_k\sim\chi_k^2$,
the Gaussian moment-generating function yields
\[
 \log\mathbb E[e^{u(Z_k-k)}]\le\frac{ku^2}{1-2u}\quad(0<u<1/2),
 \qquad
 \log\mathbb E[e^{-u(Z_k-k)}]\le ku^2\quad(u>0).
\]
Exponential Markov with $u=\sqrt\ell/(\sqrt k+2\sqrt\ell)$
and $u=\sqrt{\ell/k}$, respectively, therefore gives
\[
 \Pr\{Z_k>k+2\sqrt{k\ell}+2\ell\}\le e^{-\ell},\qquad
 \Pr\{Z_k<k-2\sqrt{k\ell}\}\le e^{-\ell}.
\]
These are also the bounds BW6 in Appendix~\ref{app:initialization}.
For a standard normal scalar, its moment-generating function
$\mathbb E[e^{uG}]=e^{u^2/2}$ similarly gives
$\Pr\{|G|>\sqrt{2\ell}\}\le2e^{-\ell}$.
Apply the two chi-square tails to $\|G_j\|^2$, the upper tail to
$\|P_UG_j\|^2$, and the scalar bound to each of $B_j,Z_j$.
Their per-row costs sum to $(2+1+2+2)e^{-\ell}$, so the union over
all rows costs $7me^{-\ell}=7\delta/96$.
Since $d\ge128\ell$, these events give
$1/4<\|G_j\|^2/d<4$ and
$q_{j,0}^2,b_{j,0}^2\le2\ell/d\le1/64$.
Thus balance holds and each joint row energy is below $4+1/32<5$,
proving every assertion in AC14. They also give
$A_{{\rm sub},0}\le4H/d\le1/4$.

For each AC13 row the exact cubic correlation is
\[
 C(\theta)=-cb\varphi(b/\rho)P(w/\rho),\qquad
 P(n)=\sum_i a_i(u_i^Tn)^3,\quad \rho=\|w\|.
                                                               \tag{AC16}
\]
The selected signed cubic contribution in Gaussian coordinates
is at least $a_0$, whereas the magnitude of the remainder is at
most $(\vartheta/4)^3<a_0/2$.
Using AC14 gives
$|C_0|\ge k$, $q_0C_0>0$,
$|q_0|,|b_0|\in[1/(2\sqrt d),2/\sqrt d]$,
$(b_0/\rho_0)^2\le1/16$, selected coordinate at least
$1/\sqrt d$, and initial other-coordinate cone ratio at most
$\vartheta/4$. All signs and seeds are acquired properties of
these original rectangles.

\paragraph{A complete actual force induction through the public time.}
Put $v=y-y_c$ and
$\widehat f=m^{-1}\sum_lq_l\sigma_\alpha(\theta_l^T(X,1))$.
The exact actual recurrence is
\[
 q_j^+=q_j+h(C_j+\theta_j^TE_j),\qquad
 \theta_j^+=\theta_j+hq_j(\nabla C_j+E_j),\qquad
 E_j=\mathbb E[(v-s^2\widehat f)\sigma'_j(X,1)].
                                                               \tag{AC17}
\]
Here $\sigma'_j=\sigma_\alpha'(\theta_j^T(X,1))$ is a scalar;
the following $(X,1)$ in AC17 multiplies it as an augmented vector.
Thus the last expression is a vector. Every self/cross interaction and
every retained teacher term occurs in it. Duality and homogeneity give
\[
 \|\nabla C_j\|\le1,\quad
 \|\widehat f\|_2\le V/(2m),\quad
 \|E_j\|\le\|v\|_2+s^2V/(2m).                        \tag{AC18}
\]
Assume inductively the force is controlled up to a candidate step.
Exact compatibility gives
\[
 \|\theta^+\|^2-(q^+)^2
 \ge(1-h^2\|\nabla C+E\|^2)(\|\theta\|^2-q^2)\ge0,
 \qquad V^+\le(1+2h)^2V.
                                                               \tag{AC19}
\]
Nonzero augmented hidden states persist since
$\|\theta^+\|\ge(1-2h)\|\theta\|$.
Starting from AC14, every candidate $n\le N$ consequently has
\[
 V_n\le(1+2h)^{2n}V_0
       <5m e^{4nh}\le5m e^{4\overline T}=M^2/4.
                                                               \tag{AC20}
\]
At every such candidate AC18 and AJ10--11 give
$\|E_j\|\le\nu/2+\nu/8<\nu$.
The initial step satisfies the same inequality, so induction closes
the full force and radius tube through $N$, including its endpoint.
This is not a stopped proof ending at the old $512m$ energy hit.

\paragraph{Selected-row invariants on the same controlled prefix.}
The deterministic proof DC17--23 and DX27--34 applies directly to
each original category row, with its temporary cap $M$,
$R_B=M$, $k_E=k$, $a_E=a_0$, and $b_2=b_*^2$.
AC13--16 supply every starting hypothesis, and AC20 keeps every actual
candidate inside the cap through $N$. The required force cutoffs are
exactly AJ9: $k/(10^4M^2)$ for phase and bias,
$k/(64M^2\overline S\sqrt d)$ for the accumulated spatial debit,
$k\vartheta a_0/(1024M^3d)$ for the strict cone boundary, and
$k\xi/(12800M)$ for bias calibration. The mesh conditions
$h\le1/512$ and $h\le\xi/1024$ are in AJ8.
Thus those discrete proofs, including their candidate-step arguments,
give the following simultaneous invariants.

Orient the head by $\varsigma=\operatorname{sign}C_0$ and put
$p=\varsigma q$, $D=\varsigma C$, $B=b^2/\rho^2$.
The phase and balance invariant is
\[
\begin{gathered}
p>0,\quad D\ge k,\quad p\le\|\theta\|,\quad
0<B\le1,\quad \rho>1/3,\\
D^+-D\ge\tfrac12hp\|\nabla C\|^2,\qquad
p^+-p\ge3hk/4.
\end{gathered} \tag{AC21}
\]
The outside and accumulated-force invariant is
\[
 O\le2,\qquad
 \sum_{l<n}hp_l\le\frac{4M^2}{k}\overline S,\qquad
 \sum_{l<n}hp_lD_l\le\frac43\rho_n^2,
 \quad O=\|P_{U^\perp}w\|. \tag{AC22}
\]
For $x=s_iu_i^Tw$ and $u=\|P_{U\cap u_i^\perp}w\|$, the cone
invariant and the exact teacher candidate that proves it are
\[
\begin{gathered}
x_n\ge15/(16\sqrt d),\qquad u_n\le\vartheta x_n,\\
\overline x=\bar a x+d_ca_ix^2,\quad
\overline u\le\bar a u+d_cu^2,\quad
\bar a=1-(3-B)t,\quad t=hpD/\rho^2,\\
d_c=3hpc|b/\rho|\varphi(b/\rho)/\rho^2,\qquad
|\Delta_{\rm force}x|,\ \Delta_{\rm force}u\le hp\nu.
\end{gathered} \tag{AC23}
\]
DX29--30 treats the entire closed cone and strictly pays both force
terms. DC23 preserves the original bias sign. The calibrated quadratic
DX31, with its discrete debit DX32 and the force sum AC22, gives
\[
 |B-b_*^2|\le5/\rho^2+8h+100M\nu/k. \tag{AC24}
\]
Consequently, whenever $\rho\ge R_M\ge R_D$, the same actual row has
\[
 \|w/\rho-s_iu_i\|
 \le\sqrt2\sqrt{\vartheta^2+4/R_D^2}<\xi/4,\qquad
 \big||b|/\rho-b_*\big|
 \le2(5/R_D^2+8h+100M\nu/k)<\xi/4. \tag{AC25}
\]
These are simultaneous invariants through $N$, not properties of
independently evolved or frozen category rows.

\paragraph{A common acquired time and post-hit retention.}
Summing AC21 over the entire controlled prefix gives, for every category,
$p_N\ge p_0+3Nhk/4>6R_M$. Since $p_N\le\sqrt2\rho_N$,
its endpoint radius exceeds $3\sqrt2R_M>2R_M$. Hence its first
actual crossing $\tau$ of $2R_M$ exists by $N$.
Every pure teacher spatial candidate has nondecreasing radius by DC20;
the actual radius debit on any subinterval is at most
\[
 \nu\sum hp\le4M^2\nu\overline S/k\le1/(16\sqrt d).
\]
Thus after its first crossing each row retains radius
$2R_M-1/(16\sqrt d)>R_M$ and retains AC25 through $N$.
The last first-crossing time $J=\max\tau$ therefore satisfies $J\le N$,
and all $4r$ distinct category rows are mature simultaneously for
$J\le n\le N$. The stronger endpoint head bound $p_N>6R_M$
will also supply the signed diagonal mass in AJ22.

\paragraph{Every original row in the endpoint score.}
Apply the every-row invariant OE23--25 with $T_0=\overline T$.
AJ8--9 supply $h\le1/512$ and
$\nu\le1/(1024\overline T^2)<1/(400\overline T^2)$.
The NC marking invariant controls every low and marking candidate;
OE's maximum of quadratics controls the subsequent positive phase.
Their sum therefore gives, at \emph{every} integer on this prefix,
\[
 O_n^{\rm bank}\le\frac87V_0+\frac h2 S_n+3I_n,\qquad
 V_n\le(4+h)S_n+\frac{16}7V_0+6I_n,
\]
\[
 S_n=\sum_j\|w_{j,n}\|^2,\quad
 O_n^{\rm bank}=\sum_j\|P_{U^\perp}w_{j,n}\|^2,\quad
 I_n=\nu\sum_{l<n}hV_l\le\nu\overline T M^2\le m/1024.
                                                               \tag{AC26}
\]
These are the sum of OE24--25 with the complete force integral retained;
they do not require $n$ to be the first energy crossing.
For $n\ge J$, the $4r$ distinct retained category representatives
give $V_n\ge S_n\ge4rR_M^2\ge1024m$.
Using $V_0<5m$ in AC26 gives $S_n>.24V_n$ and
\[
 1-A_{{\rm sub},n}
 \le h/2+\frac{(40/7+3/1024)m}{.24V_n}<.025<.1.
                                                               \tag{AC27}
\]
This proves the complete spatial-energy conclusion for the full original denominator.
It also gives $\|W_n-W_0\|_F>8s\sqrt m$ and
$\|W_n-W_0\|_F/\|W_0\|_F>4$ at those integers.
Finally $V_J\ge1024m$ and AC19 give
$1024m\le V_J\le e^{4Jh}V_0<5me^{4Jh}$, hence
$\eta J=(m/2)Jh>(m/8)\log(1024/5)$.
This includes the first candidate hit defining each category's crossing;
the plateau bounds use the same entire prefix AC20.

\section{Complete head-weighted outside control for cubic teachers}\label{app:cubic-outside}

\subsection{Actual run and the stronger force integral already paid by AC}

Use AC's unchanged model, raw step and original draw. Here
$X\sim N(0,I_d)$, $U=(u_1,\ldots,u_r)$, $P_U=UU^T$,
$P_{U^\perp}=I_d-P_U$, and $(A_j,W_j,B_j)=s(q_j,w_j,b_j)$.
The public $R_M=R$, $M$, $k$, $\overline T$ and $\nu$ are those of
AJ7--9; $a_0=\min_i a_i$. In its normalized
coordinates the predictor and exact simultaneous recurrence are
\begin{gather}
 f_n(X)=\frac{s^2}{m}\sum_jq_{j,n}
           \sigma_\alpha(w_{j,n}^TX+b_{j,n}),\quad
 \sigma_\alpha(t)=\alpha t+(1-\alpha)t_+,\quad0\le\alpha<1,
 \quad\eta=mh/2,\notag\\
 q_j^+=q_j+h(C_j+\theta_j^TE_j),\qquad
 \theta_j^+=\theta_j+hq_j(T_j+E_j),\quad
 \theta_j=(w_j,b_j),\quad T_j=\nabla C_j,\notag\\
 C_j=\mathbb E[y_c\sigma_\alpha(\theta_j^T(X,1))],\qquad
 E_j=\mathbb E[(y-y_c-f_n)\sigma'_\alpha(\theta_j^T(X,1))(X,1)].
 \tag{CW1}
\end{gather}
The central teacher is the complete additive cubic in AJ1.  All retained
original means, linear terms and tails, and every student interaction,
are in $E_j$.  Nothing is centered or reset.  Let
\begin{gather}
 V_{j,n}=q_{j,n}^2+\|\theta_{j,n}\|^2,\quad
 \mathcal V_n=\sum_jV_{j,n},\quad
 \rho_{j,n}=\|w_{j,n}\|,\quad
 O_{j,n}=\|P_{U^\perp}w_{j,n}\|,\notag\\
 H_{j,n}=q_{j,n}C_{j,n},\quad Q_{j,n}=H_{j,n}/V_{j,n},\quad
 I_{j,n}=\nu\sum_{l<n}hV_{j,l},\qquad
 I_n=\sum_j I_{j,n}.
 \tag{CW2}
\end{gather}
The notation $O_{j,n}$ is a norm, not a squared energy.

On AC's event of original probability at least $1-\delta$, at all
integers $0\le n\le N$ one already has
\begin{gather}
 V_{j,0}<5,\quad |q_{j,n}|\le\|\theta_{j,n}\|,\quad
 \|T_{j,n}\|\le1,\quad\|E_{j,n}\|\le\nu,\notag\\
 \mathcal V_n<M^2/4,\quad Nh\le\overline T,
 \quad h\le1/512,\quad\nu\le1/64.
 \tag{CW3}
\end{gather}
The individual initial bound follows from AC14:
$\rho_{j,0}\le2$ and $q_{j,0}^2,b_{j,0}^2\le2\ell/d\le1/64$.
In particular $V_{j,0}\le4+1/32<5$; no bound on a typical row
is substituted for a simultaneous bound.

AC's existing cone-force cutoff is much stronger than the coarse
$I_n\le m/1024$ previously used.  Its constants satisfy
\begin{equation}
\begin{gathered}
\overline T=2+8R_M/k,\quad M>4R_M,\quad k\le R_M,\quad
 \vartheta\le a_0/8,\quad a_0\le1,\quad d\ge256,\\
 \qquad\nu\le\frac{k\vartheta a_0}{1024M^3d}.
\end{gathered}
 \tag{CW4}
\end{equation}
Here $k\le R_M$ follows directly from AC's displayed $k<1$ and
$R_M\ge16\sqrt m$.  Hence, without a new restriction,
\begin{equation}
 I_n\le\nu\overline T M^2
 \le\frac{(2k+8R_M)\vartheta a_0}{1024Md}
 <\frac{5\vartheta a_0}{2048d}
 \le\frac5{16384d}\le\frac5{2^{22}}<2^{-19}=:\epsilon_I.
 \tag{CW5}
\end{equation}
The actual radius bound in CW3 would improve this by a further
factor four, which is not needed.  Each $I_{j,n}$ is nonnegative
and at most $I_n$.

\subsection{Permanent marking and a new balance-defect upper bound}

Put $\kappa_0=1/(100\overline T)$ and permanently mark a row at its
first actual state with $Q_j\ge2\kappa_0$.
The NC13--18/OE20--23 estimates apply because AC includes
$\nu\le1/(1024\overline T^2)<1/(400\overline T^2)$.
Every unmarked and marking state has
\begin{equation}
 V_{j,n}<\frac87V_{j,0}<\frac{40}7.
 \quad\text{After marking,}\quad
 Q_{j,n}\ge\frac32\kappa_0>0.
 \tag{CW6}
\end{equation}
The latter assertion holds at all later actual states, not just while
the row stays above the original marking threshold.  In particular
its head and cubic correlation have the same nonzero sign.
NC obtains this from the exact debit $Q^+\ge Q-2h\nu$ and a
compensated low-phase energy estimate that includes the crossing
step.  No post-marking Gaussianity or minimum initial detector is used.

We also use the sign-free OE8 inequality and exact cubic
bound OE13, with $\mathcal B_j=C_j^2+q_j^2\|T_j\|^2$:
\begin{equation}
 H_j^+-H_j\ge\frac h2\mathcal B_j
                  -3h\nu^2\|\theta_j\|^2,
 \qquad |H_j|\le\rho_j^2/4.
 \tag{CW7}
\end{equation}
The latter remains valid at zero spatial weight.  It follows from
$C=-cb\varphi(b/\rho)P(w/\rho)$, $|P|\le1$ and balance, since
$c\varphi(\sqrt B)\sqrt{B(1+B)}<1/4$ for $B\ge0$.
The upper bound on the \emph{balance defect}
$D_j=\|\theta_j\|^2-q_j^2\ge0$ is new here.
Exact compatibility, followed by CW7, gives
\begin{align}
 D_j^+-D_j
 &=h^2\{q_j^2\|T_j+E_j\|^2-(C_j+\theta_j^TE_j)^2\}\notag\\
 &\le2h^2\mathcal B_j+2h^2\nu^2V_j
 \le4h(H_j^+-H_j)+14h^2\nu^2V_j.
 \tag{CW8}
\end{align}
Let $\tau_j$ be its finite marking time.  Sum CW8 from $\tau_j$
through the current state, use $H_{j,\tau_j}>0$ and CW6--7, and
enlarge the nonnegative force integral to start at zero.  Every
marked state therefore satisfies
\begin{equation}
 {\displaystyle \quad
 D_{j,n}\le\frac87V_{j,0}+h\rho_{j,n}^2
                           +14h\nu I_{j,n}.\quad}
 \tag{CW9}
\end{equation}
It includes the marking state itself.  This upper bound is separate
from the preserved nonnegative balance lower bound in CW3.

OE's all-bias maximum-of-quadratics estimate supplies, for every row
at every state, including unmarked rows and every branch crossing,
\begin{gather}
 b_{j,n}^2\le(1+h/2)\rho_{j,n}^2+\frac87V_{j,0}+3I_{j,n},\notag\\
 V_{j,n}\le(4+h)\rho_{j,n}^2+\frac{16}7V_{j,0}+6I_{j,n}.
 \tag{CW10}
\end{gather}
These are precisely OE24--25 with the full actual row force integral.
They require no axis cone or initial head sign.

Every state with $\rho_{j,n}\ge4$ is already permanently marked,
because its energy is at least $16>40/7$.  At such a state, CW9--10
and CW5 imply
\begin{gather}
 \frac{q_j^2}{\rho_j^2}
 \ge1-h-\frac{40/7+14h\nu\epsilon_I}{16}>\frac{16}{25},\notag\\
 B_j:=\frac{b_j^2}{\rho_j^2}
 \le1+h/2+\frac{40/7+3\epsilon_I}{16}<\frac{34}{25}.
 \tag{CW11}
\end{gather}
For explicit conservative arithmetic, $14h\nu\epsilon_I<1/1024$
makes the first lower bound exceed
$1-1/512-5/14-1/16384>16/25$.
The second upper bound is below $1+1/1024+5/14+3/(16\,2^{19})<34/25$.
These are acquired all-row bounds at a fixed absolute entry radius;
they are not assumptions about the unselected rows.

\subsection{The exact head--outside product contracts above that radius}

At an actual state with $\rho\ge4$, omit the row subscript and put
$p=|q|$, $O=\|P_{U^\perp}w\|$, and
$t=hH/\rho^2$.  By CW6--7, $0<t\le h/4$.
Use one pure-teacher step only as an algebraic intermediate from
the current actual state.  The exact cubic identities give
\begin{equation}
 p^{\rm t}=p+h|C|=p\left(1+t\frac{\rho^2}{q^2}\right),\qquad
 w_\perp^{\rm t}=[1-(3-B)t]w_\perp.
 \tag{CW12}
\end{equation}
The outside multiplier is in $[0,1]$, because $B<34/25$ and
$t\le h/4$.  Set $a=\rho^2/q^2$ and $b=3-B$.  CW11 implies
$a\le25/16$, $b\ge41/25$, hence $b-a\ge31/400$.
Their product, with its favorable quadratic term retained, satisfies
\begin{equation}
 p^{\rm t}O^{\rm t}
 =(1+at)(1-bt)pO
 \le\left(1-\frac{31}{400}t\right)pO.
 \tag{CW13}
\end{equation}
No selected-axis geometry appears in this exact outside-
\emph{teacher-subspace} calculation.

The full actual step still includes both compatible errors:
$|q^+-q^{\rm t}|\le h\nu\|\theta\|$ and
$\|w^+-w^{\rm t}\|\le hp\nu$.
Since $O^{\rm t}\le O\le\|\theta\|$ and
$p^{\rm t}\le p+h\|\theta\|$, expanding their product gives
\begin{align}
 |q^+|O^+
 &\le p^{\rm t}O^{\rm t}
   +h\nu\{\|\theta\|O^{\rm t}+pp^{\rm t}\}
   +h^2\nu^2p\|\theta\|\notag\\
 &\le\left(1-\frac{31}{400}t\right)pO
    +h\nu(1+h/2+h\nu/2)V\notag\\
 &\le\left(1-\frac{31}{400}t\right)pO+2h\nu V.
 \tag{CW14}
\end{align}
This estimate is valid for an arbitrary adaptive coupled $E_j$ of
the actual run.  It charges both the hidden force and the head force;
dropping the latter would not justify the weighted estimate.

\subsection{All low states, candidate crossings, and every original row}

At a pre-step state with $\rho_j<4$, CW10 gives
\begin{equation}
 V_j<(4+1/512)16+80/7+6\epsilon_I<76.
 \tag{CW15}
\end{equation}
The exact compatible update has
$V_j^+\le(1+2h)^2V_j$, by balance and $\|T_j+E_j\|\le2$.
Thus its next candidate state, whether or not it crosses radius four
or the marking threshold, has
$V_j^+<(257/256)^2\,76<77$.  In particular
$|q_j^+|O_j^+\le V_j^+/2<39$.
An unmarked or marking state already has $V_j<40/7$ by CW6, so it
cannot be an omitted high-radius adverse state.

Starting from $|q_{j,0}|O_{j,0}\le V_{j,0}/2<5/2$, combine
CW14 at high-radius pre-step states with CW15 at every other
pre-step state.  Induction proves, for every original row and every
integer $0\le n\le N$,
\begin{equation}
 {\displaystyle \quad |q_{j,n}|O_{j,n}\le39+2I_{j,n}<40.\quad}
 \tag{CW16}
\end{equation}
There is no need to assume a single radius crossing.  Every return
below radius four is handled by CW15, and every new exit candidate
has the same bound.  Permanent marking only supplies the positive
phase at high states; no unmarked row is discarded.
The rowwise integrals sum exactly, giving the stronger full-bank
statements
\begin{gather}
 \sum_j |q_{j,n}|O_{j,n}\le39m+2I_n,\notag\\
 {\displaystyle \quad
 \sum_j q_{j,n}^2 O_{j,n}^2
 \le1521m+156I_n+4I_n^2<1522m.\quad}
 \tag{CW17}
\end{gather}
The last inequality uses $I_n<2^{-19}$ and $m\ge1$.
This is a complete-bank head-weighted outside energy estimate, not
an estimate on the $4r$ category representatives alone.

\subsection{Exact full outside AGOP and the remaining inside obligation}

Let $\mathsf G_n=\mathbb E[\nabla_x f_n\nabla_x f_n^T]$ be the
actual raw student AGOP.  Positive homogeneity with $s>0$ gives
the exact derivative
\begin{equation}
 P_{U^\perp}\nabla_x f_n
   =\frac{s^2}{m}\sum_jq_{j,n}P_{U^\perp}w_{j,n}
                  \sigma'_\alpha(\theta_{j,n}^T(X,1)).
 \tag{CW18}
\end{equation}
The $L^2$ triangle inequality and $|\sigma'_\alpha|\le1$ retain
all cross-row terms and yield
\begin{equation}
 {\displaystyle \quad
 \operatorname{tr}(P_{U^\perp}\mathsf G_nP_{U^\perp})
 \le\frac{s^4}{m^2}(39m+2I_n)^2<1522s^4,\qquad
 \|P_{U^\perp}\mathsf G_nP_{U^\perp}\|_{\rm op}<1522s^4.
 \quad}
 \tag{CW19}
\end{equation}
The operator bound follows because this is positive semidefinite.
It is uniform through AC's entire actual horizon, independent of
$M$ and later row-radius growth.  No cancellation was assumed in
obtaining it, and the linear part of leaky ReLU was not removed.

\section{Original-coordinate anchors and the cubic signed frame}\label{app:cubic-anchors}

\subsection{The exact map and ordering without a head-sign assumption}
Here $X\sim N(0,I_d)$, $h_3(t)=(t^3-3t)/\sqrt6$,
$\sigma_\alpha(t)=\alpha t+(1-\alpha)t_+$, and $\varphi$ is the standard
normal density. With $U=(u_1,\ldots,u_r)$, write $P_U=UU^T$ and
$P_{U^\perp}=I_d-P_U$; later uses of $U$ as a span refer to the
same teacher subspace. The normalized student is
$\widehat f=m^{-1}\sum_jq_j\sigma_\alpha(w_j^TX+b_j)$,
$\sigma'_j=\sigma'_{\alpha,j}=\sigma'_\alpha(\theta_j^T(X,1))$
denotes a scalar gate, and $E_w$ is the spatial component of $E$.
Absorb coefficient signs into the orthonormal teacher axes and write
\[
 y_c=\sum_{i=1}^r a_i h_3(u_i^TX),\quad
 a_i>0,\quad\sum_i a_i^2=1,\quad
 c=(1-\alpha)/\sqrt6,\quad a_{\max}=\max_i a_i.
 \tag{WG1}
\]
Use the unchanged compatible recurrence for an original row,
\[
 \begin{gathered}
 q^+=q+h(C+\theta^TE),\quad
 \theta^+=\theta+hq(\nabla C+E),\\
 \theta=(w,b),\quad |q|\le\|\theta\|,
 \quad\|E\|\le\nu\le1,\quad0<h\le1/512.
 \end{gathered}
 \tag{WG2}
\]
In the original raw model $(A,W,B)=s(q,w,b)$ and full-MSE step
$\eta=mh/2$, the actual force is
$E_j=\mathbb E[(y-y_c-s^2\widehat f)\sigma'_j(X,1)]$.
All full-teacher and student terms remain in it. The proofs allow
adaptive forces but do not assert that every allowed force is generated
by one fixed teacher. Let $\rho=\|w\|>0$, $z=b/\rho$, and define
\[
 \begin{gathered}
 F(z)=-cz\varphi(z),\quad
 S(w)=\sum_i a_i(u_i^Tw)^3,\quad
 C=F(z)S(w)/\rho^2,\\
 V=q^2+\rho^2+b^2,\quad H=qC,\quad Q=H/V.
 \end{gathered}
 \tag{WG3}
\]
The weighted spatial coordinates $x_i=a_i u_i^Tw$ obey exactly
\[
 \begin{split}
 x_i^+&=A x_i+D x_i^2+\eta_i,\qquad
 \eta_i=hq a_i u_i^TE_w,\\
 A&=1+(z^2-3)hH/\rho^2,\qquad
 D=3hqF(z)/\rho^2,\qquad
 P_{U^\perp}w^+=A P_{U^\perp}w+hqP_{U^\perp}E_w.
 \end{split}
 \tag{WG4}
\]
Indeed differentiation of $F(z)S(w)/\rho^2$, using
$zF'(z)/F(z)=1-z^2$ away from zero and continuity at zero,
gives the displayed radial and coordinate terms.

The following bounds hold with either sign of $H$:
\[
 |A-1|\le h,\qquad |D|\max_i|x_i|\le h,
 \qquad A+Dx_i\ge1-2h,
 \qquad A+D(x_i+x_l)\ge1-3h.
 \tag{WG5}
\]
For details, $|S|/\rho^3\le1$, $|q|/\rho\le\sqrt{1+z^2}$,
and $|x_i|/\rho\le a_{\max}\le1$. With $B=z^2$,
\[
 \frac{|A-1|}{h}
 \le c(B+3)\sqrt{B(1+B)}\varphi(\sqrt B)
 \le\frac{(8+4\sqrt5)e^{-\sqrt5/2}}{\sqrt{12\pi}}<1,
\]
\[
 \frac{|D|\max_i|x_i|}{h}
 \le3c\sqrt{B(1+B)}\varphi(\sqrt B)
 \le6c\varphi(1)<1.
 \tag{WG6}
\]
The first maximum follows by differentiating
$(B+1)(B+3)e^{-B/2}$; its maximum is at $B=\sqrt5$.
The second uses the maximum of $(B+1)\varphi(\sqrt B)$ at $B=1$.

Consequently the pure map ($E=0$) preserves every coordinate sign
and every strict weighted-coordinate order, even in an adverse head
phase. It preserves both original extrema, the weighted maximum and
minimum. For a forced trajectory there is the exact identity
\[
 \Delta_{il,n}=G_{il,n}\left\{\Delta_{il,0}
       +\sum_{v=0}^{n-1}\frac{\eta_{i,v}-\eta_{l,v}}
                                  {G_{il,v+1}}\right\},\quad
 G_{il,n}=\prod_{v<n}[A_v+D_v(x_{i,v}+x_{l,v})]>0,
 \tag{WG7}
\]
where $\Delta_{il}=x_i-x_l$. Thus the absolute sum of the displayed
force debits being less than $|\Delta_{il,0}|$ preserves its sign.
The same formula for a single coordinate uses factors $A_v+D_vx_{i,v}$
and forces $\eta_{i,v}$. These are actual-history identities, including
the final candidate step, not reference-trajectory approximations.
At a zero spatial row the pure cubic spatial gradient is zero; the
ratio statements below begin only after $\rho>0$.

\subsection{Original-coordinate anchors and the complete signed frame}

\paragraph{Inherited actual run and explicit additional costs.}
Use the complete normalized cubic teacher, full retained target,
initial law and simultaneous raw GD of Appendix~\ref{app:aj-parameters}:
\[
\begin{gathered}
y_c=\sum_{i=1}^r a_i h_3(u_i^TX),\quad
 a_i>0,\quad \sum_i a_i^2=1,\\
 a_0=\min_i a_i,\quad a_{\max}=\max_i a_i,\quad
 U=\operatorname{span}\{u_i\}.
\end{gathered}
 \tag{FA1}
\]
All original raw coordinates, including heads and biases, remain
independent $N(0,s^2/d)$, and $(A_j,W_j,B_j)=s(q_j,w_j,b_j)$.
The step is one fixed common raw step $\eta=mh/2$ throughout.
For each original row the exact actual recurrence is
\[
 \begin{gathered}
 q^+=q+h(C+\theta^TE),\qquad
 \theta^+=\theta+hq(\nabla C+E),\qquad \theta=(w,b),\\
 C=\mathbb E[y_c\sigma_\alpha(\theta^T(X,1))],\qquad
 E_j=\mathbb E[(y-y_c-f)\sigma'_{\alpha,j}(X,1)] .
 \end{gathered}
 \tag{FA2}
\]
Here $f=s^2m^{-1}\sum_jq_j\sigma_\alpha(w_j^TX+b_j)$ is the
complete current student. The force includes every student
self/cross interaction and all retained teacher terms. It is
not a frozen-Gram, teacher-only or reference update.

Fix AC's original confidence parameter $\delta_{\rm AC}$ and an
additional $0<\zeta<1$. Its existing Gaussian quantities $H,d,m$
and coefficients define
\[
 \begin{gathered}
 \tau=\frac{\zeta a_0}{4mr^2},\qquad
 g_0=\frac{\tau}{\sqrt d},\qquad
 b_0=a_{\max}\sqrt{H/d},\qquad
 D_a=\sum_i a_i^{-2},\qquad
 L_a=(a_{\max}^2D_a)^{1/3},\\
 Z_*=
 40+640\sqrt{D_a}\left\{\frac{b_0^2}{g_0}
                                  +(1+L_a)b_0\right\}.
 \end{gathered}
 \tag{FA3}
\]
These are public deterministic constants. In particular
$g_0\le1/4$, since $d,m,r\ge1$ and $a_0,\zeta\le1$.
One may first enlarge the public milestone $R_M$ to any value
at least $\max(R_D,16\sqrt m)$, then define AC's $T,\overline T,M$
and all its subsequent budgets from that chosen value. The AC
and CW proofs use only this lower bound on $R_M$, and their
displayed definitions of $T,\overline T,M$; their conclusions
therefore continue to hold with this consistent replacement.
In addition to all the AC cutoffs require
\[
 0<h\le \frac1{16\overline T},\qquad
 0<\nu\le
 \frac{g_0e^{-4\overline T}}{10^5\overline T M}.
 \tag{FA4}
\]
Here $\nu$ is decreased before specifying the retained neighborhood
$\|y-y_c\|_{H^1}\le\nu/2$ and the original scale $s$ in AJ10--11.
The actual AC force closure still gives $\|E_j\|\le\nu$ on
$0\le n\le N=\lceil T/h\rceil$; no force is removed from FA2.
The complete run satisfies
\[
 \begin{gathered}
 Nh\le\overline T,\quad h\le1/512,\quad \nu\le1,\quad
 |q_j|\le\|\theta_j\|,\quad
 \mathcal V_n:=\sum_j(q_j^2+\|\theta_j\|^2)<M^2/4,\\
 |q_{j,n}|\,O_{j,n}<40,\qquad
 O_{j,n}:=\|P_{U^\perp}w_{j,n}\|.
 \end{gathered}
 \tag{FA5}
\]
The last inequality is the complete-row conclusion CW16.
Decreasing $h$ and $\nu$ preserves all its hypotheses.

\paragraph{The simultaneous original-coordinate event.}
Write the original projected Gaussian coordinates as
$u_i^Tw_{j,0}=G_{ji}/\sqrt d$ and set
$x_{ji}=a_i u_i^Tw_j$. For any one coordinate and distinct pair,
Gaussian density bounds give
\[
 \mathbb P(|x_{ji,0}|\le g_0)\le\tau/a_0,\qquad
 \mathbb P(|x_{ji,0}-x_{jl,0}|\le g_0)
                 \le\tau/(a_0\sqrt\pi).
 \tag{FA6}
\]
Indeed the second Gaussian has standard deviation
$\sqrt{a_i^2+a_l^2}/\sqrt d\ge\sqrt2a_0/\sqrt d$.
Union bounding the $mr$ coordinates and the
$mr(r-1)/2$ unordered gaps shows that their total failure
probability is at most
\[
 \frac{\zeta}{4r}
       +\frac{\zeta(r-1)}{8r\sqrt\pi}<\frac\zeta2.
 \tag{FA7}
\]
Intersect this event with the unchanged AC event. On that
intersection, simultaneously for all rows,
\[
 |x_{ji,0}|>g_0,\quad
 |x_{ji,0}-x_{jl,0}|>g_0\ (i\ne l),\quad
 |x_{ji,0}|\le b_0,\quad
 O_{j,0}^2\ge\frac14-\frac Hd\ge\frac3{16}.
 \tag{FA8}
\]
The upper bound and outside lower bound use AC's existing
$\|P_Uw_{j,0}\|^2\le H/d$, $\rho_{j,0}\ge1/2$ and $d\ge16H$.
No later conditional Gaussian distribution is used. The
intersection has original probability at least
$1-\delta_{\rm AC}-\zeta/2$; for total confidence $1-\delta$,
one can take $\delta_{\rm AC}=\delta/2$ and $\zeta=\delta$.
These are events in the original law, not sampling filters.

\paragraph{Exact scalar dynamics through every adverse phase.}
Fix one original row and omit its row subscript. Put
$\rho=\|w\|$, $z=b/\rho$, $F(z)=-(1-\alpha)z\varphi(z)/\sqrt6$,
and $\mathsf H=qC$. As long as $\rho>0$, WG4--5 give the
exact simultaneous coordinate and outside recurrences
\[
 \begin{gathered}
 x_i^+=A x_i+D x_i^2+\eta_i,\qquad
 \eta_i=hq a_i u_i^TE_w,\qquad |\eta_i|\le hM\nu,\\
 A=1+(z^2-3)h\mathsf H/\rho^2,\qquad
 D=3hqF(z)/\rho^2,\qquad
 w_\perp^+=A w_\perp+hqE_\perp,\\
 |A-1|\le h,\quad |D|\max_i|x_i|\le h,\quad
 A+Dx_i\ge1-2h,\quad
 A+D(x_i+x_l)\ge1-3h .
 \end{gathered}
 \tag{FA9}
\]
These bounds are sign-free. In particular they hold before any
favorable phase or permanent marking. All scalar multipliers
in FA9 are positive. Define the actual-history linear anchor
\[
 \Lambda_0=1,\qquad \Lambda_n=\prod_{v<n}A_v.
 \quad\text{Then}\quad
 \Lambda_n\ge e^{-2nh}\ge e^{-2\overline T}.
 \tag{FA10}
\]
The bound follows from
$\log(1-h)\ge-h/(1-h)\ge-2h$. Similarly every product of
coordinate or pair-gap factors in FA9 is at least
$e^{-4nh}$, since
$\log(1-3h)\ge-3h/(1-3h)\ge-4h$.

For a coordinate or difference, write the exact variation formula
along the actual history, as in WG7:
\[
 \begin{aligned}
 x_{i,n}&=P_{i,n}
   \left(x_{i,0}+\sum_{v<n}\frac{\eta_{i,v}}{P_{i,v+1}}\right),
 \\
 P_{i,n}&=\prod_{v<n}(A_v+D_vx_{i,v}),\\
 x_{i,n}-x_{l,n}&=P_{il,n}
   \left(x_{i,0}-x_{l,0}
       +\sum_{v<n}\frac{\eta_{i,v}-\eta_{l,v}}{P_{il,v+1}}\right),
 \\
 P_{il,n}&=\prod_{v<n}[A_v+D_v(x_{i,v}+x_{l,v})].
 \end{aligned}
 \tag{FA11}
\]
Each total absolute debit inside parentheses is at most
$2\overline T M\nu e^{4\overline T}\le2g_0/10^5<g_0/2$.
Consequently every original sign and strict order is preserved
through every candidate endpoint, with the uniform bounds
\[
 |x_{i,n}|\ge\frac{g_0}{2}e^{-4\overline T},\qquad
 |x_{i,n}-x_{l,n}|\ge\frac{g_0}{2}e^{-4\overline T}.
 \tag{FA12}
\]
No transported favorable-entry gap is assumed.

For completeness these statements are not circular at $\rho=0$.
Let $e_0=w_{\perp,0}/O_0$, which exists by FA8. Up to any
first candidate exit one has the exact projection identity
\[
 \langle w_{\perp,n},e_0\rangle
 =\Lambda_n\left\{
 O_0+\sum_{v<n}
       \frac{h q_v\langle E_{\perp,v},e_0\rangle}{\Lambda_{v+1}}
                  \right\}.
 \tag{FA13}
\]
The absolute sum is at most
$\overline T M\nu e^{2\overline T}
 \le g_0e^{-2\overline T}/10^5<1/8<O_0/2$.
Thus induction closes $\rho_n\ge O_n>0$ and proves
\[
 O_n\ge \frac{O_0}{2}\Lambda_n,\qquad
 |q_n|\Lambda_n<\frac{80}{O_0}<320.
 \tag{FA14}
\]
The second bound uses the full actual weighted outside estimate
in FA5, including its compatible head-force payment.

\paragraph{A discrete pair anchor with only a second-order pure debit.}
For any distinct coordinates define the positive quantity
\[
 K_{il,n}=\frac{|x_{i,n}x_{l,n}|}
                  {|x_{i,n}-x_{l,n}|\Lambda_n}.
 \tag{FA15}
\]
Its denominators never vanish by FA10--12.
From the current actual pre-state form just the algebraic
teacher intermediate
$x_i^{\rm t}=x_i(A+Dx_i)$ and
$\Lambda^+=A\Lambda$; this does not define another trajectory.
Factoring the difference gives exactly
\[
 \frac{K_{il}^{\rm t}}{K_{il}}
 =\frac{(A+Dx_i)(A+Dx_l)}
              {A[A+D(x_i+x_l)]}
 =1+\frac{D^2x_i x_l}{A[A+D(x_i+x_l)]}.
 \tag{FA16}
\]
All factors are positive. If the pair has opposite signs, this
ratio is at most one. If it has the same sign, it is at most
$1+3h^2$: the numerator of the last fraction is at most $h^2$
and its denominator is at least $(1-h)(1-3h)>1/3$.
Both statements allow either sign of $D$, so include all
adverse head/bias phases and their reversals.

The additive force debit has no hidden second trajectory or
uncontrolled Hessian term. At both endpoints of the segment
from $(x_i^{\rm t},x_l^{\rm t})$ to $(x_i^+,x_l^+)$, every
coordinate and their difference has the same sign as originally.
By FA9 and FA12 their absolute values throughout this straight
segment are at least
$(g_0/3)e^{-4\overline T}$. Along the segment,
\[
 d\log\frac{|x_i x_l|}{|x_i-x_l|}
 =\frac{dx_i}{x_i}+\frac{dx_l}{x_l}
                        -\frac{dx_i-dx_l}{x_i-x_l}.
 \tag{FA17}
\]
Integrating this exact derivative, and using
$|\eta_i|,|\eta_l|\le hM\nu$, yields
\[
 |\log K_{il}^+-\log K_{il}^{\rm t}|
 \le\frac{12hM\nu e^{4\overline T}}{g_0}.
 \tag{FA18}
\]
The anchor is the same $A\Lambda$ for the intermediate and
actual endpoint, so it creates no extra term here.
Iteration of FA16--18 proves
\[
 \begin{split}
 K_{il,n}
 &\le K_{il,0}
    \exp\!\left(3nh^2+
             \frac{12\overline T M\nu e^{4\overline T}}{g_0}\right)
 <2K_{il,0},\\
 K_{il,n}
 &\le K_{il,0}
    \exp\!\left(
             \frac{12\overline T M\nu e^{4\overline T}}{g_0}\right)
 <2K_{il,0}\qquad (x_{i,0}x_{l,0}<0).
 \end{split}
 \tag{FA19}
\]
Indeed $3nh^2\le3/16$ and the remaining exponent is at most
$12/10^5$; their sum is below $\log2$.
For an initial same-sign pair $K_{il,0}\le b_0^2/g_0$.
For an initial opposite-sign pair,
$K_{il,0}=|x_{i,0}x_{l,0}|/(|x_{i,0}|+|x_{l,0}|)\le b_0$.
The force-free result is exact up to the favorable $O(h^2)$
discrete defect; the exponential cost appears solely in paying
the actual force against original absolute coordinate/gap floors.

\paragraph{Any favorable actual row has an original extremal axis.}
Consider any one time $n\le N$ at which $\mathsf H_n=q_nC_n>0$.
There is no assumption on the row's past sign history. Define
$\epsilon=\operatorname{sign}(q_nF(z_n))$ and $y_i=\epsilon x_{i,n}$.
Because $C=F\sum_i x_i^3/(a_i^2\rho^2)$,
\[
 \sum_i\frac{y_i^3}{a_i^2}>0.
 \tag{FA20}
\]
In particular the oriented maximum $p=y_*>0$ exists and is
unique. Its label is the original maximum of $\epsilon x_{i,0}$
by FA12. The orientation $\epsilon$ is only an analysis choice
at this actual endpoint; all pair bounds already hold for both
orientations and every time, so it need not have been fixed earlier.

If $0<y_i<p$ is a positive competitor, then
\[
 y_i\le\frac{p y_i}{p-y_i}
       =\Lambda_n K_{*i,n}
       \le\frac{2b_0^2}{g_0}\Lambda_n.
 \tag{FA21}
\]
If there are negative coordinates, let $-\ell<0$ be the most
negative one, with axis label $l$. Positivity in FA20 implies
\[
 \frac{\ell^3}{a_l^2}
 <\sum_{y_i>0}\frac{y_i^3}{a_i^2}
 \le p^3D_a,\qquad \ell\le L_a p .
 \tag{FA22}
\]
The opposite-sign anchor consequently gives
\[
 \ell=
 \left(1+\frac{\ell}{p}\right)
       \frac{p\ell}{p+\ell}
 \le (1+L_a)\Lambda_n K_{*l,n}
 \le 2(1+L_a)b_0\Lambda_n.
 \tag{FA23}
\]
Every negative competitor has magnitude at most $\ell$.
This argument covers an arbitrary mixed-sign history.
If all oriented coordinates are positive only FA21 is needed;
an all-negative orientation is excluded by FA20. If $r=1$
there are no competitors and the inside transverse norm is zero.

Combining FA21--23, summing squared spatial components and
using FA14 proves the uniform complete-row conclusion
\[
 \begin{gathered}
 |q_n|\|P_{U\cap u_*^\perp}w_n\|
 \le 640\sqrt{D_a}
          \left\{\frac{b_0^2}{g_0}+(1+L_a)b_0\right\},\\
 {\displaystyle \quad
 |q_n|\|P_{u_*^\perp}w_n\|\le Z_*
 \quad(q_nC_n>0,\ 0\le n\le N).
 \quad}
 \end{gathered}
 \tag{FA24}
\]
The outside term is at most $40$ by FA5 and is combined by
the triangle inequality. The result is uniform over all original
rows, without selecting a favorable subbank. Its deterministic
constant is independent of the radius envelope $M$ and of the
clock $T$. In particular
$b_0^2/g_0=a_{\max}^2H/(\tau\sqrt d)$ is polynomial in the
original Gaussian parameters; no later relative gap occurs.

\paragraph{Every signed frame contribution is retained.}
Let $v_j=U^Tw_j/\rho_j$ and put
\[
 t_j=-\frac{(1-\alpha)s^2}{\sqrt2m}
                q_j\rho_j z_j\varphi(z_j),\qquad
 \mathsf D_n=\sum_j t_j v_j(v_j^{\odot2})^T.
 \tag{FA25}\label{eq:cubic-signed-frame}
\]
This is the complete signed inside gradient-Hermite frame in the cubic derivative frame,
not a Gram matrix made from selected rows. For each favorable
row, choose its endpoint winner from FA20 and note
$t_j\epsilon_j=|t_j|$ and $\epsilon_j u_*^Tw_j>0$.
The hemisphere normalization inequality and a rank-one
telescoping expansion give
\[
 \begin{split}
 \|w_j/\rho_j-\epsilon_j u_*\|
 &\le \sqrt2\,\|P_{u_*^\perp}w_j\|/\rho_j,\\
 \|t_jv_j(v_j^{\odot2})^T-|t_j|e_*e_*^T\|_{\rm op}
 &\le3|t_j|\|w_j/\rho_j-\epsilon_j u_*\|\\
 &\le\frac{3(1-\alpha)\varphi(1)s^2}{m} Z_* .
 \end{split}
 \tag{FA26}
\]
Here $\|v_j\|\le1$ and the squared-coordinate map is
$2$-Lipschitz on the unit ball; the last line uses
$\sup_z|z|\varphi(z)=\varphi(1)$.
Thus every favorable row contributes a nonnegative diagonal
mass plus an explicitly bounded error. No sign of a head
has been chosen or changed by the algorithm.

For any row with $\mathsf H_j\le0$, CW6 implies that it is
unmarked, so $V_j<40/7$. The coarser uniform bound
$V_j<512$ therefore suffices. For each such row,
\[
 \|t_jv_j(v_j^{\odot2})^T\|_{\rm op}
 \le |t_j|
 \le \frac{(1-\alpha)\varphi(1)s^2}{\sqrt2m}|q_j|\rho_j
 <\frac{256(1-\alpha)\varphi(1)s^2}{\sqrt2m}.
 \tag{FA27}
\]
This also covers $\mathsf H_j=0$: such a row may have a nonzero
frame, and it has not been silently discarded.
Summing all $m$ row bounds gives the exact full-bank
decomposition
\[
 \begin{gathered}
 \mathsf D_n=\mathsf D_{+,n}+\mathsf E_n,\qquad
 \mathsf D_{+,n}=
       \sum_{\mathsf H_{j,n}>0}|t_{j,n}|
                            e_{*,j}e_{*,j}^T\succeq0,\\
 {\displaystyle \qquad
 \|\mathsf E_n\|_{\rm op}
 \le(1-\alpha)\varphi(1)s^2(3Z_*+182),\qquad0\le n\le N .
 \qquad}
 \end{gathered}
 \tag{FA28}
\]
The diagonal may have repeated winner axes; coverage of every
axis must come from the separately acquired AC categories.
Together with their positive diagonal lower bounds and CW's
complete outside AGOP estimate, FA28 is a paid interface for a
full spectral theorem. Such a theorem must still state its
public radius choice, actual eigenvalue floor, gap and angle
comparison; none is inferred merely from a count of winners here.

\section{The identical projected-bank diagnostic}\label{app:cubic-prediction}

\subsection{Exact inputs and the diagnostic being compared}
Use the normalized additive cubic and full target of AC. Here
$X\sim N(0,I_d)$, $h_3(t)=(t^3-3t)/\sqrt6$,
$\sigma_\alpha(t)=\alpha t+(1-\alpha)t_+$ with $0\le\alpha<1$, and $\varphi,\Phi$ are the
standard normal density and CDF. The imported $R_M=R$, $\xi$, $b_*$
and $\nu$ are the public constants in AJ3 and AJ7--9;
$\|g\|_2$ denotes its Gaussian $L^2$ norm. Thus
\[
 y_c=\sum_{i=1}^r a_i h_3(u_i^TX),\quad a_i>0,\quad
 \sum_i a_i^2=1,\quad U^TU=I_r,\quad
 \|y\|_2=1,\quad\|y-y_c\|_{H^1}\le\nu/2.
 \tag{PJ1}
\]
Here $U$ denotes the matrix of teacher axes and $P_U=UU^T$.
All nonzero coefficient signs may be absorbed into those axes.
The target includes its actual mean, linear part and all higher
tails, as in AC; the current result does not extend AC's acquired
dictionary to an unrestricted $H^1$ teacher.
At the same fixed public integer $N$ in AJ8, choose for analysis
one of the acquired representatives for each
$(i,\epsilon,\tau)\in\{1,\ldots,r\}\times\{-1,1\}^2$.
There are $4r$ distinct original rows. For each selected row,
\begin{gather}
 (A_j,W_j,B_j)=s(q_j,w_j,b_j),\quad
 \rho_j=\|w_j\|\ge R_M>7,\quad n_j=w_j/\rho_j,\quad z_j=b_j/\rho_j,
 \nonumber\\
 \|(n_j,z_j)-(\epsilon u_i,\tau b_*)\|\le\xi.
 \tag{PJ2}
\end{gather}
These are acquired conclusions of AC21--25; all unchosen rows remain
in training, in the diagnostic bank and in the AGOP.

Let $G_t=\mathbb E[\nabla f_t\nabla f_t^T]$ be the AGOP of the
actual full original student at $t=0,N$. Assume $r<d$ and
\begin{gather}
 \lambda_r(G_N)>\lambda_{r+1}(G_N)\ge0,\qquad
 \lambda_{\min}(U^TP_NU)\ge\mu>0,\nonumber\\
 P_N=\hbox{the leading rank-$r$ projector of }G_N.
 \tag{PJ3}
\end{gather}
Thus the $r$ modes are nonzero and the boundary is genuine.
At initialization the same projector $P_0$ exists and is unique
almost surely by IA, for every $m\ge r$, including $m\ge d$.

Fix one public common multiplier $\kappa>0$ and define at both times
\begin{gather}
 \psi^P_{j,t}(X)=\kappa\sigma_\alpha((P_tW_{j,t})^TX+B_{j,t}),
 \qquad
 B_\kappa=\frac{C_\alpha}{7\kappa s},\nonumber\\
 \mathcal R^P_{B_\kappa}(t)=
 \inf_{\|v\|_2\le B_\kappa}
      \left\|y-\sum_{j=1}^m v_j\psi^P_{j,t}\right\|_2^2,
 \qquad
 \mathcal R^P_\infty(t)=
 \inf_{v\in\mathbb R^m}
      \left\|y-\sum_jv_j\psi^P_{j,t}\right\|_2^2.
 \tag{PJ4}
\end{gather}
There is no re-normalization of $P_tW_{j,t}$ after projection.
The choice $\kappa=1$ is AC's raw-feature convention.
No diagnostic coefficient changes the trained heads.

\subsection{Raw, projected cubic application}
Recall the exact AJ3 constants
\begin{gather}
 b_*^2=(\sqrt5-1)/2,\quad p=2\Phi(b_*)-1,\quad
 g(z)=\operatorname{clip}(z,-b_*,b_*)-pz,\nonumber\\
 v_g=p-2b_*\varphi(b_*)+b_*^2(1-p)-p^2,\quad
 t_g=-2b_*\varphi(b_*)/\sqrt6,\quad K_*=t_g^2/v_g>2/3,\nonumber\\
 D_\alpha=\sqrt{(1-\alpha)^{-2}+p^2/(1+\alpha)^2},\qquad
 C_\alpha=|t_g|D_\alpha/v_g.
 \tag{PJ5}
\end{gather}
Apply Lemma~\ref{lem:four-profile-identity} with $b=b_*$ and
$p=2\Phi(b_*)-1$, so its clipped odd function is exactly $g$. Set
\[
 d_{i,\epsilon,\tau}=(t_g/v_g)a_i c_{\epsilon,\tau}(p).
 \tag{PJ6}
\]
The lemma's exact coefficient norms and $\sum_i a_i^2=1$ give
\[
 \|d\|_2=C_\alpha,\qquad
 \|d\|_1=\frac{2|t_g|}{v_g(1-\alpha)}\sum_i a_i
                                      \le2C_\alpha\sqrt r.
 \tag{PJ7}
\]
The ideal readout is therefore
\[
 F(X)=\sum_{i,\epsilon,\tau}d_{i,\epsilon,\tau}
       \sigma_\alpha(\epsilon u_i^TX+\tau b_*)
      =\frac{t_g}{v_g}\sum_i a_i g(u_i^TX),\qquad
 \|y_c-F\|_2^2=1-K_*.
 \tag{PJ8}
\]
The last equality uses that $g$ is odd, its variance is $v_g$,
and $\langle h_3,g\rangle=t_g$. Independence across teacher axes
and $\sum a_i^2=1$ then prove it for the full cubic.

On each selected row use the coefficient
\[
 v_j=\frac{d_{i,\epsilon,\tau}}{\kappa s\rho_j},
 \quad v_j=0\text{ on all other rows},\qquad
 \|v\|_2\le\frac{C_\alpha}{\kappa sR_M}<B_\kappa.
 \tag{PJ9}
\]
Positive homogeneity gives the exact equality
\[
 v_j\psi^P_{j,N}(X)
 =d_{i,\epsilon,\tau}\sigma_\alpha((P_Nn_j)^TX+z_j).
 \tag{PJ10}
\]
Thus neither the largest raw radius nor an $M$-sized full-bank norm
enters the error or the budget. The denominator is the original
radius $\rho_j$, not the radius after projection. Since the
activation is 1-Lipschitz and $P_N$ is an orthogonal projector,
PJ2, PJ7 and the Gaussian identity
$\mathbb E[(v^TX+b)^2]=\|v\|^2+b^2$ imply
\[
 \left\|\sum_jv_j\psi^P_{j,N}-F(P_NX)\right\|_2
 \le\|d\|_1\xi\le2C_\alpha\sqrt r\,\xi\le1/64.
 \tag{PJ11}
\]
Biases are unchanged throughout this calculation. Zeros on unchosen
diagnostic coefficients are a legal witness in the original full
bank, not a training subbank or a different before-time comparator.

An elementary projection estimate, useful without the structure of
$g$, is
\[
 \|F(P_NX)-F(X)\|_2
 \le2C_\alpha\sqrt r\sqrt{1-\mu}.
 \tag{PJ12}
\]
Indeed $\|(I-P_N)u_i\|^2\le1-\mu$ in every teacher direction,
and one can sum the individual activation errors using PJ7.
It already suffices to impose
$1-\mu\le(1024C_\alpha^2r)^{-1}$ to make PJ12 at most $1/16$.

\subsection{A dimension-free distortion bound for the structured witness}
The following refinement avoids the rank cost in PJ12.
Put
\[
 L_g=\max\{p,1-p\},\qquad D_g=|t_g|L_g/v_g,
 \qquad Q=U^TP_NU,\quad \mu I\preceq Q\preceq I.
 \tag{PJ13}
\]
Here $L_g$ is a global Lipschitz constant of $g$, and $D_g$ is
independent of $r$ and $\alpha$. Define independent Gaussian vectors
$Z=U^TP_NX$, $W=U^T(I-P_N)X$, with covariances $Q$ and $I-Q$.
For $H(z)=\sum_i a_i g(z_i)$, conditional Gaussian Poincare yields
\[
 \mathbb E[\operatorname{Var}(H(Z+W)\mid Z)]
 \le\mathbb E[\nabla H(Z+W)^T(I-Q)\nabla H(Z+W)]
 \le(1-\mu)L_g^2,
 \tag{PJ14}
\]
because $\|\nabla H\|^2\le L_g^2\sum_i a_i^2=L_g^2$.
This includes singular $I-Q$ by restriction to its Gaussian support.
The Gaussian Poincare inequality itself follows by expanding a
function of standard Gaussian coordinates into Hermites: its
variance sums the nonconstant squared coefficients, whereas its
derivative energy weights each such coefficient by its degree.

To bound the conditional mean error, let $q_i=Q_{ii}\ge\mu$ and
\[
 h_i(z)=\mathbb E_T[g(z+\sqrt{1-q_i}\,T)]-g(z),
 \quad T\sim N(0,1),\quad
 \|h_i(Z_i)\|_2^2\le L_g^2(1-q_i).
 \tag{PJ15}
\]
The last bound is Jensen followed by the Lipschitz inequality.
Each $h_i$ is odd, so $\mathbb E[h_i(Z_i)]=0$.
For $V_i=Z_i/\sqrt{q_i}$, the correlation matrix of the standardized
coordinates of $Z$ is
$R=\operatorname{diag}(q_i)^{-1/2}Q
       \operatorname{diag}(q_i)^{-1/2}$, with $\|R\|\le1/\mu$.
For any centered square-integrable functions of these coordinates,
their Hermite expansions give
\[
 \mathbb E\left[\left(\sum_i a_i h_i(Z_i)\right)^2\right]
 \le\|R\|\sum_i a_i^2\|h_i(Z_i)\|_2^2
 \le\frac{1-\mu}{\mu}L_g^2.
 \tag{PJ16}
\]
Here is the matrix detail. At degree $k\ge1$, the covariance is
the quadratic form of the coefficient vector against $R^{\odot k}$,
the entrywise $k$th power of $R$.
The identity for two coordinates follows by comparing coefficients
in $\mathbb E[e^{tV_i-t^2/2}e^{zV_j-z^2/2}]=e^{R_{ij}tz}$.
For a correlation matrix $R$, Schur multiplication preserves order
and sends the identity to itself. Thus
$\|R^{\odot k}\|\le\|R\|$ by induction from
$R^{\odot(k-1)}\preceq\|R\|I$.
Sum the degree-wise bound using Parseval. Centering is valid here
because these \emph{analytical error functions} are odd; no target
or data-centering operation is imposed.

The noise about the conditional mean is orthogonal to its
conditional mean error, and
$\mathbb E[H(Z+W)\mid Z]-H(Z)=\sum_i a_i h_i(Z_i)$.
Combining PJ14--16 therefore gives the exact sufficient bound
\[
 {\displaystyle \quad
 \|F(X)-F(P_NX)\|_2
 \le D_g\sqrt{(1-\mu)(1+1/\mu)}.
 \quad}                                                \tag{PJ17}
\]
The argument holds for the observed $P_N$, however it depends on
the full trained bank and teacher. No later Haar law or independence
between $P_N$ and the learned rows is used.

Let
\[
 \Delta(\mu)=\min\{2C_\alpha\sqrt{r(1-\mu)},
                       D_g\sqrt{(1-\mu)(1+1/\mu)}\}.
 \tag{PJ18}
\]
The full endpoint risks then satisfy
\[
 \mathcal R^P_{B_\kappa}(N),\ \mathcal R^P_\infty(N)
 \le\left(\sqrt{1-K_*}+3/128+\Delta(\mu)\right)^2.
 \tag{PJ19}
\]
The costs are the ideal cubic error, PJ11, the full original
perturbation $\|y-y_c\|_2\le\nu/2\le1/128$, and PJ17 or PJ12.
No selected-head replacement of the actual network is used to
define $P_N$.

\subsection{Initial top-r baseline without a width ceiling}
For ordinary unscreened IID Gaussian raw spatial rows, independent
nondegenerate Gaussian heads and biases, IA proves that $P_0$ is
a Haar rank-$r$ projector whenever $m\ge r$ and $r<d$.
Its genuine nonzero boundary is almost surely simple, also when
$m\ge d$. For the fixed normalized cubic in PJ1,
\[
 \mathbb E_{\rm init}\left[
  \|\mathbb E_X[y_c\mid P_0X]\|_2^2\right]
   =q_{r,d}:=\frac{r(r+2)(r+4)}{d(d+2)(d+4)}.
 \tag{PJ20}
\]
For a fixed projector the projected Hermite inner product is
$(u_i^TP_0u_j)^3$; off-diagonal Haar averages vanish by a
coordinate reflection, and diagonal averages are the third beta
moment. This proves PJ20 without independence of the initialized
projector and the initial feature bank.
Fix $0<\delta_I<1$ and write $e=y-y_c$, $m_y=\mathbb E[y]$, and
$e^\circ=e-\mathbb E[e]$. By Markov, full conditional-expectation
contraction, and the identity $\|e\|_2^2=(\mathbb E[e])^2+\|e-\mathbb E[e]\|_2^2$, with probability at
least $1-\delta_I$ under the original draw,
\begin{gather}
 \|\mathbb E[y\mid P_0X]\|_2^2
 \le m_y^2+
       \left(\sqrt{q_{r,d}/\delta_I}+\|e^\circ\|_2\right)^2
 \le\left(\sqrt{q_{r,d}/\delta_I}+\nu/2\right)^2,\nonumber\\
 \mathcal R^P_{B_\kappa}(0),\ \mathcal R^P_\infty(0)
 \ge1-\left(\sqrt{q_{r,d}/\delta_I}+\nu/2\right)^2.
 \tag{PJ21}
\end{gather}
Every initial projected feature, with its retained bias, is
measurable in $P_0X$; the lower bound applies even to the larger
class of all measurable functions of that input. The same full
target and the same complete projected bank occur at both times.
The full teacher mean was charged, not deleted. The rank in PJ20
is $r$, not the width $m$.

Consequently if $q_{r,d}\le\delta_I/32$ and $\nu/2\le1/128$,
the initial risk is at least $1-9/256$. Either explicit condition
\[
 \mu\ge1-\frac1{1024C_\alpha^2r},
 \qquad\hbox{or}\qquad
 \mu\ge\mu_*:=\max\left\{\frac12,
                         1-\frac1{768D_g^2}\right\}
 \tag{PJ22}
\]
makes $\Delta(\mu)\le1/16$. The second sufficient minimum is
independent of rank, width, dimension and leak parameter. The two
same-budget before/after gains, including the unrestricted one,
therefore obey
\begin{gather}
 \mathcal R^P_{B_\kappa}(0)-\mathcal R^P_{B_\kappa}(N)
   \ge G_{\rm proj}>1/2,\qquad
 \mathcal R^P_\infty(0)-\mathcal R^P_\infty(N)
   \ge G_{\rm proj}>1/2,\nonumber\\
 G_{\rm proj}:=1-9/256-(1/\sqrt3+11/128)^2.
 \tag{PJ23}
\end{gather}
For an entirely rational check of positivity, replace
$1/\sqrt3$ by $7/12$ in this lower bound; the result still
exceeds $1/2$. Approximate scalar values are
$D_g=2.066824$, $\mu_*=0.99969519$, and
$G_{\rm proj}=0.52489308$.
These evaluations are of the displayed constants, not training
experiments. The certified conditions are PJ5 and PJ22--23.
A simple numerical sufficient condition is $\mu\ge0.9997$.
For a finite check without relying on rounded evaluations, use
$0.78615<b_*<0.78616$ and $0.39894<\varphi(0)<0.39895$.
The alternating Taylor sums of orders seven and eight for
$e^{-x}$ and for $\int_0^b e^{-t^2/2}\,dt$ then give
$0.56821<p<0.56824$ and
$0.29288<\varphi(b_*)<0.29291$.
Substitute these rational intervals in PJ5 and use
$\sqrt6>2.44948$: they give $v_g>0.05160647$ and
$D_g<2.08$. Finally
$3(2.08)^2(0.0003)=0.00389376<1/256$,
which verifies PJ22's second sufficient bound at $0.9997$.

The same rational intervals certify the strict inequality $K_*>2/3$
used in AJ3 and PJ5. Indeed, writing their decimal endpoints as exact
rationals gives
\[
\begin{aligned}
 0<v_g
 &<0.56824-2(0.78615)(0.29288)\\
 &\qquad +(0.78616)^2(1-0.56821)-(0.56821)^2
 <\frac{5177}{100000},\\
 t_g^2
 &=\frac23 b_*^2\varphi(b_*)^2
 >\frac23(0.78615)^2(0.29288)^2
 >\frac{3532}{100000}.
\end{aligned}
\]
Positivity of $v_g$ also follows directly because it is the variance
of the nonconstant function $g$. Since $3\cdot3532=10596>
10354=2\cdot5177$, these bounds imply $3t_g^2>2v_g$ and hence
$K_*>2/3$. This argument uses only the displayed rational bounds.

\clearpage
\section{SwiGLU students: leading-direction learning during a plateau}
\label{app:swiglu-students}

This appendix gives a complementary result for a gated student with trainable
inner weights and trainable output heads. Its geometric conclusion concerns
the leading population-AGOP direction. For a teacher subspace of dimension
larger than one, this is weaker than the minimum top-$r$ alignment used in
the main theorems. The results below do not extend those theorems to all
Gaussian Sobolev teachers or replace their initialization restrictions.

The useful distinction is between three statements: leading-direction
alignment on a long, nearly constant-loss interval; approximation by an
unrestricted refitted readout; and a later decrease of the original trained
loss. The first two are treated separately below. The numerical examples
also exhibit later loss decreases, but a general theorem joining that
release to the small-initialization trajectory remains open.

\subsection{Which teacher links are covered?}
\label{swb:sec:teacher-scope}
The alignment theorem requires a nonzero Hermite component of degree one,
two, or three.  The examples below make this condition concrete.  Each
displayed link is centered and divided by its standard deviation when used
as a teacher; this preserves the listed nonzero coefficients and the value
of $k$.  Here $Z,S,T$ are independent standard Gaussians,
$h_1(z)=z$, $h_2(z)=(z^2-1)/\sqrt2$, and
$h_3(z)=(z^3-3z)/\sqrt6$.  Coefficient values in the table are for the
displayed link before normalization.
In general $h_j=\operatorname{He}_j/\sqrt{j!}$ is the normalized
probabilists' Hermite polynomial, and $H_{\ge4}$ denotes a Gaussian
$L^2$ function supported in Hermite degrees at least four.
Expectations in this table are over the displayed Gaussian arguments.
The index $k$ is the degree of the leading homogeneous student--teacher
interaction: $k=3$ if degree one or two is present, and $k=4$ if
degree three is the first nonzero degree, as formalized in
\eqref{swb:eq:lead}.

\begingroup

\renewcommand{\arraystretch}{1.15}
\begin{center}
\begin{tabular}{@{}p{0.33\linewidth}p{0.50\linewidth}c@{}}
\hline
Teacher link $F$ & A nonzero Hermite coefficient & $k$\\
\hline
$h_1(z)$ or $h_2(z)$
 & $\mathbb E[F(Z)h_i(Z)]=1$ for $F=h_i$ & $3$\\
$h_3(z)$
 & $\mathbb E[F(Z)h_3(Z)]=1$; lower degrees vanish & $4$\\
ReLU or leaky ReLU,
$F(z)=\max(z,\lambda z)$,
$0\le\lambda\le1$
 & $\mathbb E[F(Z)h_1(Z)]=(1+\lambda)/2$ & $3$\\
$|z|$
 & $\mathbb E[F(Z)h_2(Z)]=1/\sqrt\pi$ & $3$\\
Softplus, SiLU, or GELU
 & $\mathbb E[F(Z)h_1(Z)]=1/2$ & $3$\\
$\sin z$ or $\tanh z$
 & $\mathbb E[Z\sin Z]=e^{-1/2}$;
   $\mathbb E[Z\tanh Z]>0$ & $3$\\
$a h_3(z)+H_{\ge4}(z)$,
$a\ne0$, $H_{\ge4}\in L^2$
 & Cubic coefficient $a$; lower degrees vanish & $4$\\
Pure $h_j(z)$, $j\ge4$
 & No coefficient in degrees one through three & Excluded\\
\hline
$\operatorname{SiLU}(s)t$ or $\operatorname{ReLU}(s)t$
 & $\mathbb E[F(S,T)ST]=1/2$ & $3$\\
$st$ or $\tanh(s)t$
 & $\mathbb E[F(S,T)ST]=1$ or
   $\mathbb E[S\tanh S]>0$ & $3$\\
$h_2(s)t$
 & $\mathbb E[F(S,T)h_2(S)T]=1$;
   lower degrees vanish & $4$\\
\hline
\end{tabular}
\end{center}
\endgroup

These checks use Hermite orthogonality, independence, parity, and the
Gaussian characteristic function.
Softplus $\log(1+e^z)$, SiLU $z/(1+e^{-z})$, and GELU
$z\,\mathbb P(Z\le z)$ all have odd part $z/2$.
Arbitrary square-integrable additions supported in degrees at least four
leave the classification unchanged.  A surviving degree-one or degree-two
component always sets $k=3$, even when a cubic component is also present.

The rank-one refit theorem is narrower: it requires a scalar link with
zero degree-one and degree-two coefficients and a nonzero cubic coefficient.
Thus it applies to $a h_3+H_{\ge4}$, but does not by itself apply to the
bivariate $h_2(s)t$ example.  For mixtures and interaction teachers alike,
the alignment theorem concerns the leading AGOP direction in the teacher
subspace; it does not assert recovery of every teacher direction.

\providecommand{\swbE}{\mathbb{E}}
\providecommand{\swbP}{\mathbb{P}}
\providecommand{\swbCov}{\operatorname{Cov}}
\providecommand{\swbVar}{\operatorname{Var}}
\providecommand{\swbAtop}{A_{\mathrm{top}}}

\subsection{Setup and leading-direction theorem}
\label{swb:sec:alignment}

This result concerns a SwiGLU \emph{student} whose head, gates, values, and
inner biases all follow the same population gradient update.  Only the output
intercept is minimized exactly.  At every fixed width, a small initialization
produces a long interval with vanishing loss variation, on which the leading
AGOP direction enters the teacher subspace.  The conclusion concerns every
vector in the leading eigenspace; it does not assert recovery of all teacher
directions when their number exceeds one.

\paragraph{Setting and clocks.}
Let $x\sim N(0,I_d)$, $y=F(U^\top x)$, $U^\top U=I_r$, $1\le r<d$,
$F\in L^2(\gamma_r)$, and $\swbVar(y)=1$.  Put $\widetilde x=(x,1)$ and
\begin{equation}
 \widetilde f_\theta(x)=\frac{\kappa}{m}\sum_{j=1}^m
 a_j S(p_j^\top\widetilde x)(v_j^\top\widetilde x),\qquad
 S(s)=\frac{s}{1+e^{-s}},\qquad \kappa>0.
 \label{swb:eq:student}
\end{equation}
Write $\theta_j=(a_j,p_j,v_j)$, $p_j=(w_j,b_j)$, $v_j=(z_j,c_j)$, and
$g_j(x)=S(p_j^\top\widetilde x)(v_j^\top\widetilde x)$.
Here $d$ is the input dimension, $m$ the student width, $r$ the teacher
rank, and $U\in\mathbb R^{d\times r}$ an orthonormal basis of the teacher
subspace. The measure $\gamma_s=N(0,I_s)$ is standard Gaussian measure
in dimension $s$. The scalar $a_j$ is the output head, $w_j,z_j\in\mathbb R^d$
are the gate and value weights, and $b_j,c_j$ their biases.
The fixed positive $\kappa$ is the SwiGLU output scale, distinct from
the leaky-ReLU slope in the main text. Expectations, covariances, and
variances in this subsection are over $x$ unless specified; probabilities
are over the Gaussian initialization. Vector norms are Euclidean,
matrix norms are operator norms unless marked $\mathrm F$ for Frobenius
norm, and function norms $\|\cdot\|_2$ use Gaussian $L^2$.
The fitted intercept is $b_0(\theta)=\swbE[y-\widetilde f_\theta]$, so that
\begin{equation}
 L(\theta)=\swbVar(y-\widetilde f_\theta),\qquad
 \mathcal F_j(\theta)=-m\nabla_{\theta_j}L
 =2\kappa\swbCov\left(y-\widetilde f_\theta,
              \nabla_{\theta_j}(a_jg_j)\right).
 \label{swb:eq:dynamics}
\end{equation}
We consider either $\dot\theta_j=\mathcal F_j(\theta)$ or
$\theta_j^{n+1}=\theta_j^n+h\mathcal F_j(\theta^n)$, with a fixed $h>0$.
Thus a raw GD learning rate $\eta$ corresponds to $h=\eta/m$ in this
mean-field clock.  No step size is changed as the initialization scale shrinks.
Initialize $\theta_j(0)=\varepsilon\vartheta_j$, where independently
\begin{equation}
 \vartheta_j\sim N(0,\beta_0^2)\otimes
 N(0,s_0^2I_{d+1})\otimes N(0,s_0^2I_{d+1}),
 \qquad \beta_0,s_0>0.
 \label{swb:eq:init}
\end{equation}
All dimensions, scales, and the teacher are fixed before $\varepsilon\downarrow0$.
The Gaussian draw $\vartheta=(\vartheta_1,\ldots,\vartheta_m)$ is coupled
across initialization scales.  Set $P=UU^\top$ and
\begin{equation}
 M(\theta)=\swbE[\nabla_x\widetilde f_\theta
                         \nabla_x\widetilde f_\theta^\top],\qquad
 \swbAtop(\theta)=\min_{\substack{\|e\|=1\\
                  M(\theta)e=\lambda_{\max}(M(\theta))e}}\|Pe\|^2.
 \label{swb:eq:alignment}
\end{equation}
The minimum resolves any multiplicity of the largest eigenvalue.

\paragraph{The required teacher signal.}
Define
\[
 g_1=\swbE[yx],\quad H_2=\swbE[y(xx^\top-I_d)],\quad
 T_3=\swbE[y\operatorname{He}_3(x)],
\]
where $(\operatorname{He}_3(x))_{ijk}
=x_ix_jx_k-x_i\delta_{jk}-x_j\delta_{ik}-x_k\delta_{ij}$.
Here $\delta_{ij}$ is the Kronecker delta, and tensor contractions mean
$T_3[a,b,c]=\sum_{i,j,\ell}(T_3)_{ij\ell}a_i b_j c_\ell$.
We use the Frobenius norm for tensors when bounding these contractions.
These expectations exist by Cauchy--Schwarz.  Gaussian independence in
$U\oplus U^\perp$ shows that $g_1,H_2,T_3$ are supported on $U$.
Assume that at least one of these three tensors is nonzero, and define
\begin{equation}
\begin{array}{c|c|c}
\text{signal}& k&T_{\mathrm{lead}}(p,v)\\ \hline
(g_1,H_2)\ne(0,0)&3&
 \tfrac12\{w^\top H_2z+g_1^\top(cw+bz)\}\\[2pt]
g_1=H_2=0,\ T_3\ne0&4&\tfrac14 T_3[w,w,z].
\end{array}
\label{swb:eq:lead}
\end{equation}
There is no smallness condition on higher Hermite components of $F$.
In particular, the theorem is not restricted to a cubic polynomial teacher,
but it does not cover a teacher having no signal in degrees one through three.

\paragraph{A decoupled comparator.}
For one neuron put $\Psi(a,p,v)=2\kappa aT_{\mathrm{lead}}(p,v)$.
This is a nonzero homogeneous polynomial of degree $k$, odd in $a$.
Let $X_j(\tau)$ solve
\begin{equation}
 X_j'=\nabla\Psi(X_j),\qquad X_j(0)=\vartheta_j.
 \label{swb:eq:comparator}
\end{equation}
Let $\tau_j$ be its maximal forward existence time and
$\tau_*=\min_j\tau_j$.  A finite $\tau_j$ means that the comparator
norm diverges there.  Define the escape event $\mathcal E=\{\tau_*<\infty\}$.

\newtheorem*{swbalignmentrestatement}{Theorem~\ref{swb:thm:alignment} (comparator formulation)}
\begin{swbalignmentrestatement}
Fix the setting above and a GD mean-field step $h>0$.  Then
$\swbP(\mathcal E)\ge 1-2^{-m}$, and almost surely on $\mathcal E$ the
earliest escaping neuron $W$ is unique.  On this same probability-one subset,
for every $\delta\in(0,1)$ there are random constants
\[
 0<\tau_1=\tau_1(\vartheta,\delta)<\tau_*,\qquad
 C=C(\vartheta,\delta)<\infty,\qquad
 \varepsilon_0=\varepsilon_0(\vartheta,\delta,h)>0
\]
such that, for every $0<\varepsilon<\varepsilon_0$, the following hold for
both gradient flow and this fixed-step GD.  For flow take
$t_1=\tau_1\varepsilon^{-(k-2)}$; for GD take
$n_1=\lfloor\tau_1/(h\varepsilon^{k-2})\rfloor$.
\begin{enumerate}
\item The loss stays on a plateau:
\[
 \sup_{0\le t\le t_1}|L(\theta(t))-L(\theta(0))|\le C\varepsilon^k,
 \qquad
 \max_{0\le n\le n_1}|L(\theta^n)-L(\theta^0)|\le C\varepsilon^k.
\]
Its initial value has the signed expansion
\begin{equation}
 L(\theta(0))=1-\frac{\varepsilon^k}{m}\sum_j\Psi(\vartheta_j)
                  +O_\vartheta(\varepsilon^{k+1})
             =1+O_\vartheta(\varepsilon^k).
 \label{swb:eq:initial-loss}
\end{equation}
\item At the respective endpoints,
$\swbAtop(\theta(t_1))\ge1-\delta$ and
$\swbAtop(\theta^{n_1})\ge1-\delta$.
\item At either endpoint the winner obeys
\[
 \big(\|(I-P)w_W\|^2+\|(I-P)z_W\|^2\big)^{1/2}
 \le\delta\big(\|Pw_W\|^2+\|Pz_W\|^2\big)^{1/2}.
\]
When $k=4$, the stronger separate inequalities
$\|(I-P)w_W\|\le\delta\|Pw_W\|$ and
$\|(I-P)z_W\|\le\delta\|Pz_W\|$ hold.
\end{enumerate}
The time $\tau_1$ can be chosen arbitrarily close to $\tau_*$, but is fixed
before taking $\varepsilon\downarrow0$.  The cutoff $\varepsilon_0$ is
initialization dependent; no explicit finite-scale confidence bound is asserted.
\end{swbalignmentrestatement}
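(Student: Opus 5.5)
The plan is to rescale so that the small-initialization dynamics become an $\varepsilon$-perturbation of the decoupled comparator \eqref{swb:eq:comparator}, and then to read off the loss plateau, the AGOP alignment and the winner geometry from the comparator.

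\textbf{Rescaling and comparator.} Put $\theta_j(t)=\varepsilon X^\varepsilon_j(\tau)$ with $\tau=\varepsilon^{k-2}t$. Expand the smooth gate as $S(s)=s/2+s^2/4+O(s^4)$, using $\mathrm{SiLU}(0)=0$, $\mathrm{SiLU}'(0)=1/2$ and $\mathrm{SiLU}''(0)=1/2$. Hermite orthogonality then gives
\[
2\kappa\operatorname{Cov}\bigl(y,\nabla_{\theta_j}(a_jg_j)\bigr)=\varepsilon^{k-1}\nabla\Psi(X_j)+O(\varepsilon^{k}).
\]
This uses the definition \eqref{swb:eq:lead}. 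In the case $k=3$ the leading term is quadratic in $(p,v)$ and picks up $g_1,H_2$. In the case $k=4$ these vanish and the cubic term $w\,w\,z$ against $T_3$ is the first survivor. The student self-interaction $-2\kappa\operatorname{Cov}(\widetilde f,\cdot)$ is of order $\varepsilon^{2k-1}$ or higher, since $\widetilde f=O(\varepsilon^{k})$ because each $a_jg_j$ is at least cubic. In rescaled time the field is therefore $\nabla\Psi(X)+O(\varepsilon)$, locally uniformly on compact sets. For GD the rescaled step is $h\varepsilon^{k-2}\to0$ for $k\ge3$, so Euler consistency gives the same comparator.

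\textbf{Escape event and unique winner.} For $\Psi=2\kappa aT_{\rm lead}$, homogeneity gives $\tfrac{d}{d\tau}\Psi(X)=\|\nabla\Psi\|^2\ge0$ and $X\cdot\nabla\Psi=k\Psi$. Hence $\|X\|^2$ satisfies $(\|X\|^2)'=2k\Psi$. Once $\Psi(X_j(0))>0$, the bound $\|\nabla\Psi\|\ge k\Psi/\|X\|$ yields a Riccati-type blow-up in finite time. By oddness in $a$, $\Psi(\vartheta_j)$ is symmetric in law and nonzero almost surely, since $\Psi$ is a nonzero polynomial. So each neuron has $\Psi>0$ with probability $1/2$ independently, which gives $\Pr(\mathcal E)\ge1-2^{-m}$. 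Uniqueness of the earliest escape holds almost surely because the blow-up time of each neuron is a nonconstant real-analytic function of $\vartheta_j$, and independent absolutely continuous times tie with probability zero. The non-constancy follows from scaling: replacing $\vartheta_j$ by $\lambda\vartheta_j$ rescales the blow-up time by $\lambda^{-(k-2)}$.

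\textbf{Alignment at a fixed rescaled time.} The gradient $\nabla\Psi$ in the $(w,z)$ (and bias) directions lies in the span of $U$, because $g_1,H_2,T_3$ are supported on $U$. So $(I-P)w_W$ and $(I-P)z_W$ stay fixed under the comparator, while the in-subspace part diverges as $\tau\uparrow\tau_*$. Choose $\tau_1<\tau_*$ close enough that the ratio bound of item 3 holds with margin. In the $k=4$ case, the structure $T_3[w,w,z]$ should make both $Pw$ and $Pz$ grow, which is what gives the two separate inequalities.

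At $\tau_1$ the winner's norm dominates all other neurons by a factor that can be made arbitrarily large. The AGOP of the predictor is then dominated by the winner's input gradient, of order $a(S'(\cdot)w\,v^\top\widetilde x+S(\cdot)z)$. Its top eigenvector is close to a combination of $Pw_W$ and $Pz_W$. Davis--Kahan then gives $A_{\rm top}\ge1-\delta$. The hardest step is this: quantifying the eigengap so that the cross terms and the non-winners, whose contribution is smaller by a factor tending to zero as $\tau_1\to\tau_*$, cannot rotate the top eigenspace. One must also check that an AGOP computed from the small parameters is invariant under the output scale, which it is, since scaling preserves eigenspaces. I would first try to bound the winner's own AGOP block below by $c\|P(w,z)\|^{2k-2}$ and everything else above.

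\textbf{Transfer and loss.} Grönwall on the compact comparator tube over $[0,\tau_1]$ gives $\|X^\varepsilon-X\|=O(\varepsilon)$ for flow and for GD. This holds once $\varepsilon<\varepsilon_0(\vartheta,\delta,h)$, with the GD local truncation error $O(h\varepsilon^{k-2})$ absorbed. By continuity, items 2--3 transfer to the actual trajectory. For the loss, expand $L=\operatorname{Var}(y)-2\operatorname{Cov}(y,\widetilde f)+\operatorname{Var}(\widetilde f)$. This gives $L=1-\tfrac{\varepsilon^k}{m}\sum_j\Psi(X_j)+O(\varepsilon^{k+1})$, which is \eqref{swb:eq:initial-loss} at time $0$. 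Along the tube $\Psi(X_j)$ stays bounded, which yields the uniform $C\varepsilon^k$ bound on the loss variation.
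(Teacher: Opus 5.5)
Your outline matches the paper's: rescaled comparator, blow-up from positive potential, head-sign symmetry giving $1-2^{-m}$, frozen transverse weights, Gr\"onwall/Euler tracking and the loss expansion. The gap is the AGOP step, which you flag as the hardest and leave open; the route you sketch would not close it as stated.

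First, the limiting object is not the SiLU network's gradient at $X(\tau_1)$. SiLU is not homogeneous, so output-scale invariance does not relate $M(\varepsilon X)$ to anything evaluated at $X$. What transfers is the expansion $\varepsilon^{-6}(2m/\kappa)^2M(\varepsilon Z)\to B(Z)^2+\ell(Z)\ell(Z)^\top$, uniformly on compacts. Here $B=\sum_ja_j(w_jz_j^\top+z_jw_j^\top)$ and $\ell=\sum_ja_j(c_jw_j+b_jz_j)$; this is the AGOP of the quadratic surrogate.

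Second, the dangerous error is not the nonwinners, whose cross terms are $O(K^3)$ with $K=\|X_W\|$. It is the winner's own frozen transverse components. With $D=PB_WP$ and $v=P\ell_W$ one has $\|B_W-D\|+\|\ell_W-v\|=O(K^2)$, hence $B^2+\ell\ell^\top=Q+E$ with $Q=D^2+vv^\top$ and $\|E\|=O(K^5)$. Your tentative lower bound $c\|P(w,z)\|^{2k-2}$ is only of order $K^4$ when $k=3$, so it cannot dominate this. The in-subspace block must be of order $K^6$, and that bound has to involve the head and, for $k=3$, the biases.

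Third, Davis--Kahan needs a gap that can be absent or arbitrarily small, whether you target a single eigenvector or $\operatorname{span}(Pw_W,Pz_W)$. One winner contributes rank up to two. $D^2$ is a multiple of the identity on that span whenever $Pw_W\perp Pz_W$, and its second eigenvalue is tiny when they are nearly parallel. Meanwhile $A_{\rm top}$ is a minimum over the whole top eigenspace.

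The missing ingredient is the growth bound $\Psi(X_W)\ge cK^k$ for late $\tau$. It comes from monotonicity of $\Psi/\|X\|^k$ after a time at which $\Psi(X_W)>0$. Note that the winner may start with negative potential; finite escape forces $\Psi$ to turn positive, since otherwise $(\|X\|^2)'=2k\Psi\le0$.

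For $k=3$, the exact identity $\Psi(X_W)=\kappa\{\tfrac12\operatorname{tr}(H_2D)+g_1^\top v\}$ then gives $\|D\|_{\mathrm F}+\|v\|\ge c'K^3$. For $k=4$, the bound $|\Psi|\le(\kappa/2)\|T_3\|\,|a_W|\,\|Pw_W\|^2\|Pz_W\|$, with each factor at most $K$, forces each factor to be $\gtrsim K$. This is also what proves the separate inequalities in item~3, which you only anticipate. For $k=3$, $|\Psi|\le CK^2\|(Pw_W,Pz_W)\|$ gives the combined inequality.

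Thus $\lambda_{\max}(Q)\ge c''K^6$ with $\operatorname{range}(Q)\subseteq U$. For every unit top eigenvector $e$ of $Q+E$,
\[
 \lambda_{\max}(Q)-\|E\|\le e^\top(Q+E)e\le\lambda_{\max}(Q)\|Pe\|^2+\|E\|,
\]
so $\|Pe\|^2\ge1-2\|E\|/\lambda_{\max}(Q)=1-O(K^{-1})$. This needs no eigengap and covers the whole top eigenspace. It also absorbs, in the same way, the $o(1)$ operator-norm error from expansion and tracking at the fixed $\tau_1$.

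Two smaller points. Real-analyticity and absolute continuity of blow-up times are unproved and unnecessary. Homogeneity $\tau_b(s\vartheta)=s^{-(k-2)}\tau_b(\vartheta)$ makes each level set $\{\tau_b=t\}$ meet every ray at most once, hence it is Gaussian-null, and independence then excludes ties. Also, $\widetilde f=O(\varepsilon^3)$, not $O(\varepsilon^k)$; this is harmless, since the student feedback is still $O(\varepsilon^5)$.
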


For $k=4$ the plateau has duration proportional to $\varepsilon^{-2}$ and
loss variation $O(\varepsilon^4)$.  For $k=3$ these orders are
$\varepsilon^{-1}$ and $O(\varepsilon^3)$.  The statement gives an interval
inside the initial plateau, not its actual exit time or a later fixed-size
loss decrease.  It also makes no claim about stability of a fixed GD step
after the small-parameter phase.

\begin{lemma}[Escape and the unique first winner]
\label{swb:lem:escape}
For a homogeneous gradient flow $X'=\nabla\Psi(X)$ of degree $k>2$,
let $\tau_b$ denote its maximal forward existence time.
If $\Psi(X(0))>0$, then
\begin{equation}
 \tau_b\le\frac{\|X(0)\|^2}{k(k-2)\Psi(X(0))},\qquad
 \Psi(X(\tau))\ge\Psi(X(0))
             \left(\frac{\|X(\tau)\|}{\|X(0)\|}\right)^k.
 \label{swb:eq:escape-bound}
\end{equation}
In the present initialization, $\swbP(\Psi(\vartheta_j)>0)=1/2$.
The law of $\tau_j$ has no \emph{finite} atoms; it may have an atom at
$\infty$.  Consequently $\swbP(\mathcal E)\ge1-2^{-m}$ and its finite
minimum is almost surely unique.  Moreover, whenever the set on the right
is nonempty,
\[
 \tau_*\le\min_{j:\Psi(\vartheta_j)>0}
       \frac{\|\vartheta_j\|^2}{k(k-2)\Psi(\vartheta_j)}.
\]
\end{lemma}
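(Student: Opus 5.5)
The argument splits into three parts, in order: a deterministic blow-up estimate for one homogeneous flow, a sign computation at initialization, and an atom-free property of the escape-time law that yields uniqueness.

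\textbf{Blow-up along one flow.} Put $\psi(\tau)=\Psi(X(\tau))$ and $n(\tau)=\|X(\tau)\|^2$. Euler's identity gives $X^\top\nabla\Psi=k\Psi$, so
\[
 n'=2X^\top\nabla\Psi=2k\psi,\qquad
 \psi'=\|\nabla\Psi\|^2\ge\frac{(X^\top\nabla\Psi)^2}{\|X\|^2}=\frac{k^2\psi^2}{n}
\]
by Cauchy--Schwarz. Hence $\psi$ is nondecreasing and stays positive. Dividing the two relations gives $d\log\psi/d\log n\ge k/2$ whenever $\psi>0$, and integrating yields the second inequality in \eqref{swb:eq:escape-bound}. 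Substituting $\psi\ge\psi_0(n/n_0)^{k/2}$ into $n'=2k\psi$ gives $n'\ge2k\psi_0n_0^{-k/2}n^{k/2}$. With $k>2$ this Bernoulli-type inequality blows up, and explicit integration of $d(n^{1-k/2})/d\tau\le-(k-2)k\psi_0n_0^{-k/2}$ bounds the blow-up time by $n_0/(k(k-2)\psi_0)$. This is the first bound. The displayed bound on $\tau_*$ then follows by taking the minimum over the neurons with $\Psi(\vartheta_j)>0$.

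\textbf{Sign at initialization and the escape probability.} $\Psi(a,p,v)=2\kappa aT_{\rm lead}(p,v)$ is odd in $a$, and $a$ is independent of $(p,v)$ with a symmetric law. Since $T_{\rm lead}$ is a nonzero polynomial, it is nonzero almost surely under the Gaussian $(p,v)$ law. Therefore $\swbP(\Psi(\vartheta_j)>0)=1/2$. Independence across neurons gives $\swbP(\text{all }\Psi(\vartheta_j)\le0)=2^{-m}$. On the complement some $\tau_j$ is finite by the first step, so $\swbP(\mathcal E)\ge1-2^{-m}$.

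\textbf{No finite atoms, and uniqueness.} This is the step I expect to be the main obstacle. The plan is to exploit homogeneity: the flow map satisfies $X_{\lambda\vartheta}(\tau)=\lambda X_\vartheta(\lambda^{k-2}\tau)$, so $\tau(\lambda\vartheta)=\lambda^{-(k-2)}\tau(\vartheta)$. Writing $\vartheta=\lambda\omega$ in polar-type coordinates, with $\lambda=\|\vartheta\|$ independent of the direction $\omega$ (the covariance is diagonal but not isotropic, so I would use the density in radial coordinates instead), the quantity $\tau(\vartheta)=\lambda^{-(k-2)}\tau(\omega)$ is, conditionally on $\omega$ with $\tau(\omega)<\infty$, a strictly monotone function of $\lambda$. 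Since $\lambda$ has a conditional density, $\swbP(\tau_j=t)=0$ for every finite $t$. An atom at $\infty$ remains possible, coming from neurons that do not escape. Given this, the vectors $\vartheta_j$ are independent, and for $i\ne j$, conditioning on $\vartheta_i$ gives $\swbP(\tau_j=\tau_i<\infty)=\swbE[\swbP(\tau_j=t)|_{t=\tau_i}]=0$. A union over the finitely many pairs shows that the finite minimum is almost surely attained at a unique index, which gives the uniqueness of the winner on $\mathcal E$.
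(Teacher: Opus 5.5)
Your proposal is correct and follows the paper's proof essentially step for step: Euler's identity plus Cauchy--Schwarz giving $N'=2k\Psi$ and $\Psi'\ge k^2\Psi^2/N$, then integration of the resulting Bernoulli-type inequality, the head-sign reflection together with the null zero set of the nonzero polynomial, the scaling law $\tau_b(s\vartheta)=s^{-(k-2)}\tau_b(\vartheta)$ combined with polar coordinates and Fubini, and independence across neurons for uniqueness. You leave one point implicit that the paper states, namely that the escape time is measurable (for example, as an increasing limit of exit times from balls), which is needed before applying Fubini; and, as your parenthetical already recognizes, the argument needs only the conditional radial density given the direction, not independence of $\|\vartheta\|$ and $\vartheta/\|\vartheta\|$.
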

\begin{proof}
Writing $N=\|X\|^2$, Euler's identity and Cauchy--Schwarz give
$N'=2k\Psi$ and
$\Psi'=\|\nabla\Psi\|^2\ge k^2\Psi^2/N$.
While $\Psi>0$, $(\log\Psi-\tfrac k2\log N)'\ge0$.
Integrating $N'\ge2k\Psi(X(0))N^{k/2}/N(0)^{k/2}$ proves
\eqref{swb:eq:escape-bound} and forces finite escape by the polynomial
ODE's continuation criterion.  The zero set of the nonzero polynomial
$\Psi$ is Lebesgue null; changing the sign of $a$ proves its equal sign
probabilities.

Homogeneity gives $\tau_b(s\vartheta)=s^{-(k-2)}\tau_b(\vartheta)$ for
$s>0$.  For a fixed $t\in(0,\infty)$, the level set $\{\tau_b=t\}$
therefore meets each radial ray in at most one point.  Polar coordinates,
Fubini, and absolute continuity of the Gaussian law show that this set
has probability zero.  Escape times are measurable, for example by
expressing the maximal existence time as the increasing limit of exit
times from balls.  Independence then rules out equality between any pair
of finite escape times.  A finite minimum exists whenever one neuron
starts with positive potential, giving the stated probability.
\end{proof}

\begin{lemma}[Uniform field, loss, and AGOP expansions]
\label{swb:lem:expansions}
On every fixed ball of radius $R$ in the rescaled parameters
$Z=(Z_1,\ldots,Z_m)$, uniformly
for $0<\varepsilon\le1$,
\begin{align}
 \varepsilon^{1-k}\mathcal F_j(\varepsilon Z)
    &=\nabla\Psi(Z_j)+O_R(\varepsilon),
       \label{swb:eq:field-expansion}\\
 L(\varepsilon Z)&=1-\frac{\varepsilon^k}{m}\sum_j\Psi(Z_j)
                          +O_R(\varepsilon^{k+1}),
       \label{swb:eq:loss-expansion}\\
 M(\varepsilon Z)&=\varepsilon^6\left(\frac{\kappa}{2m}\right)^2
             \{B(Z)^2+\ell(Z)\ell(Z)^\top\}+O_R(\varepsilon^7),
       \label{swb:eq:agop-expansion}
\end{align}
where the last error is in operator norm and, writing the coordinates of
$Z_j$ as $(a_j,w_j,b_j,z_j,c_j)$,
\[
 B(Z)=\sum_j a_j(w_jz_j^\top+z_jw_j^\top),\qquad
 \ell(Z)=\sum_j a_j(c_jw_j+b_jz_j).
\]
For $k=4$, the error in \eqref{swb:eq:field-expansion} is in fact
$O_R(\varepsilon^2)$.
The subscript on $O_R$ allows its constant to depend on this fixed
radius (and the fixed model parameters); $O_\vartheta$ and $O_\tau$
similarly allow dependence on the realized initialization and fixed
comparator time, respectively. The vector $\ell(Z)$ here is an AGOP
expansion coefficient, not the normalized experimental loss.
\end{lemma}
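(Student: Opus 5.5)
The plan is a direct Taylor expansion of the SiLU gate at small argument. Every term is organized by powers of $\varepsilon$, and the only moment information used about $y$ is Cauchy--Schwarz together with the tensors $g_1,H_2,T_3$. The key scalar fact is
\[
 S(s)=\tfrac12 s+\tfrac14 s^2+s^4R(s),
\]
with $R$ smooth and bounded on compact sets. The point is that $S$ has no cubic term, because $s\mapsto(1+e^{-s})^{-1}-\tfrac12$ is odd. The derivatives $S',S''$ have at most linear growth. With $\theta=\varepsilon Z$, $\|Z\|\le R$ and $\varepsilon\le1$, each neuron then splits as
\[
 a_jg_j=\varepsilon^3\tfrac12 a_j(p_j^\top\widetilde x)(v_j^\top\widetilde x)
 +\varepsilon^4\tfrac14 a_j(p_j^\top\widetilde x)^2(v_j^\top\widetilde x)
 +\varepsilon^6 r_j(x).
\]
The remainder $r_j$ is bounded by $C_R(1+\|x\|)^{6}$ uniformly on the ball, so all its Gaussian moments are $O_R(1)$. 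The same expansion, differentiated term by term, holds for $\nabla_{\theta_j}(a_jg_j)$ and for $\nabla_x(a_jg_j)$. It is legitimate because $S$ is smooth with polynomially bounded derivatives, so dominated convergence permits exchanging derivatives and Gaussian expectations.

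For the loss \eqref{swb:eq:loss-expansion}, I will write $L=\operatorname{Var}(y)-2\operatorname{Cov}(y,\widetilde f)+\operatorname{Var}(\widetilde f)$ and note $\operatorname{Var}(\widetilde f)=O_R(\varepsilon^6)$, which is at most $O_R(\varepsilon^{k+1})$ for both $k\in\{3,4\}$. The quadratic part is
\[
 (p^\top\widetilde x)(v^\top\widetilde x)=w^\top xx^\top z+(cw+bz)^\top x+bc .
\]
Its covariance with $y$ is $w^\top H_2z+g_1^\top(cw+bz)$, since the $\mathbb E[y]\,w^\top z$ term is exactly removed by centering. The $\varepsilon^3$ contribution is therefore $-(\varepsilon^3/m)\sum_j\Psi(Z_j)$ when $k=3$.

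When $g_1=H_2=0$, the $\varepsilon^3$ term vanishes. I then expand $(w^\top x+b)^2(z^\top x+c)$ into its degree-three, degree-$\le2$ and constant parts. Only $\operatorname{Cov}(y,(w^\top x)^2(z^\top x))=T_3[w,w,z]$ survives: the Hermite correction terms of $\operatorname{He}_3$ are of degree one and have zero covariance with $y$, and the lower-degree pieces also vanish because $g_1=H_2=0$. This gives $-(\varepsilon^4/m)\sum_j\Psi(Z_j)$. All remaining errors are Cauchy--Schwarz products of $\|y\|_2$ with $O_R(\varepsilon^{k+1})$ moments. In the $k=3$ case the $\varepsilon^4$ cubic term is itself one of these errors.

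The field \eqref{swb:eq:field-expansion} is treated identically, starting from $\mathcal F_j=2\kappa\operatorname{Cov}(y-\widetilde f,\nabla_{\theta_j}(a_jg_j))$. The gradient of the leading polynomial term of degree $k$ contributes $\varepsilon^{k-1}\nabla\Psi(Z_j)$, by the same covariance identities applied to each partial derivative. The next Taylor order is $\varepsilon^{k}$ when $k=3$, coming from the cubic piece. When $k=4$ it is $\varepsilon^5$, because $S$ has no $s^3$ term; this is why the error improves to $O_R(\varepsilon^2)$ there. The feedback $\operatorname{Cov}(\widetilde f,\nabla_{\theta_j}(a_jg_j))$ is $O(\varepsilon^3\cdot\varepsilon^2)=O_R(\varepsilon^5)$, and after multiplication by $\varepsilon^{1-k}$ it is $O_R(\varepsilon^{6-k})$. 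This is within the stated $O_R(\varepsilon)$ for $k=3$, and is exactly the $O_R(\varepsilon^2)$ bound for $k=4$.

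For the AGOP \eqref{swb:eq:agop-expansion}, differentiating the quadratic term in $x$ gives
\[
 \nabla_x\widetilde f=\varepsilon^3\tfrac{\kappa}{2m}\bigl(B(Z)x+\ell(Z)\bigr)+\varepsilon^4 e(x),
\]
with $\mathbb E\|e\|^2=O_R(1)$. Since $\mathbb E[xx^\top]=I$ and $\mathbb E x=0$, the leading outer product has expectation $B^2+\ell\ell^\top$. The cross and square remainders are $O_R(\varepsilon^7)$ in operator norm by Cauchy--Schwarz.

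I expect the main obstacle to be bookkeeping rather than any conceptual step. One part is verifying that the lower-degree pieces of the cubic term really have zero covariance in the $k=4$ case, including bias-only terms and the centering against $\mathbb E[y]$. The other is making every remainder uniform over the ball, given only that $y\in L^2$. I would handle the second by bounding every Taylor remainder pointwise by a fixed polynomial envelope $C_R(1+\|x\|)^{q}$, and only then pairing it with $y$ through Cauchy--Schwarz.
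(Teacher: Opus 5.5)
Your proposal is correct and follows essentially the same route as the paper's proof. Both use the cubic-free SiLU expansion, polynomial Gaussian envelopes paired with $y$ by Cauchy--Schwarz, the Hermite covariance identities for $PV$ and $P^2V$, the $O_R(\varepsilon^5)$ student feedback, $\operatorname{Var}(\widetilde f)=O_R(\varepsilon^6)$, and the $L^2$ expansion of $\nabla_x\widetilde f$.

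The one justification to tighten is your envelope $C_R(1+\|x\|)^6$. It needs $R$ bounded on all of $\mathbb R$, not merely on compact sets. This does hold: since $S(s)-s/2-s^2/4=(s/2)(\tanh(s/2)-s/2)$ and $|\tanh u-u|\le|u|^3/3$, you get $|s^4R(s)|\le s^4/48$. For the same reason the differentiated remainder satisfies a global bound $C|s|^3$, which is what makes your term-by-term differentiation of the parameter and input gradients uniform over the ball.
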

\begin{proof}
The identity $S(s)=s/2+(s/2)\tanh(s/2)$ gives
\begin{equation}
 S(s)=s/2+s^2/4+R(s),\qquad
 |R(s)|\le |s|^4/48,\qquad |R'(s)|\le |s|^3/12.
 \label{swb:eq:silu-remainder}
\end{equation}
Indeed $|\tanh u-u|\le |u|^3/3$ follows by integrating
$|\tanh'(u)-1|=\tanh^2u\le u^2$; differentiation gives the second
bound.  We also have $|S(s)|\le|s|$, $|S'(s)|\le2$,
$|S(s)-s/2|\le s^2/4$, and $|S'(s)-1/2|\le|s|/2$.
These are analytic bounds, not numerical estimates.

For affine Gaussian forms, every fixed-order moment satisfies
$\|p^\top\widetilde x\|_{L^q}\le C_q\|p\|$; the same bound with
additional fixed powers of $\|\widetilde x\|$ follows from H\"older's
inequality.  Thus $F\in L^2$ suffices for all ensuing covariance bounds
and parameter derivatives, by Cauchy--Schwarz.  Derivatives up to order
two of the features have polynomial envelopes on bounded parameter
sets, so $L$ is twice continuously differentiable and its vector field
is locally Lipschitz.

Put $P_0=w^\top x+b$ and $V_0=z^\top x+c$.  Hermite decomposition yields
\begin{align*}
 \swbCov(y,P_0V_0)&=w^\top H_2z+g_1^\top(cw+bz),\\
 \swbCov(y,P_0^2V_0)&=T_3[w,w,z]+\|w\|^2g_1^\top z
       +2(w^\top z)g_1^\top w\\
 &\quad+2b\,w^\top H_2z+c\,w^\top H_2w
              +b^2g_1^\top z+2bc\,g_1^\top w.
\end{align*}
Consequently $T(p,v)=\swbCov(y,S(P_0)V_0)$ has the leading term
in \eqref{swb:eq:lead}.  Its next contribution to $aT$ is degree four
when $k=3$, and is of order six when $k=4$.  The differentiated remainder
obeys the corresponding bounds by \eqref{swb:eq:silu-remainder}.
The student feedback in \eqref{swb:eq:dynamics} is $O_R(\varepsilon^5)$,
since $\|\widetilde f_{\varepsilon Z}\|_2=O_R(\varepsilon^3)$ and
$\|\nabla_{\theta_j}(a_jg_j)\|_2=O_R(\varepsilon^2)$.
This proves the field expansion, including its sharper cubic-signal
version.  Expanding
$L=1-2\swbCov(y,\widetilde f)+\swbVar(\widetilde f)$, with the last term
$O_R(\varepsilon^6)$, proves the loss formula.

Finally,
$\nabla_xg_j=S'(P_0)V_0w_j+S(P_0)z_j$.
The elementary bounds above give, in $L^2(\gamma_d;\mathbb R^d)$,
\[
 \nabla_x\widetilde f_{\varepsilon Z}
     =\frac{\kappa\varepsilon^3}{2m}\{B(Z)x+\ell(Z)\}
                                      +O_R(\varepsilon^4).
\]
Taking its outer product and expectation proves
\eqref{swb:eq:agop-expansion}, because $B$ is symmetric and
$\swbE[xx^\top]=I_d$, $\swbE[x]=0$.
\end{proof}

\begin{lemma}[Tracking up to a fixed comparator time]
\label{swb:lem:tracking}
Fix a realization with $0<T<\tau_*$, and let $X=(X_j)_j$ be its comparator.
For flow define $Z_\varepsilon(\tau)=\varepsilon^{-1}
\theta(\tau\varepsilon^{-(k-2)})$.  For GD define
$Z_\varepsilon^n=\varepsilon^{-1}\theta^n$ and
$\Delta=h\varepsilon^{k-2}$.  Then, for sufficiently small $\varepsilon$,
\begin{align*}
 \sup_{0\le\tau\le T}\|Z_\varepsilon(\tau)-X(\tau)\|
     &\le C_T\varepsilon,\\
 \max_{0\le n\Delta\le T}\|Z_\varepsilon^n-X(n\Delta)\|
     &\le C_T(\varepsilon+\Delta).
\end{align*}
All these rescaled states lie in one fixed ball.  For $k=4$, the
$\varepsilon$ terms can be replaced by $\varepsilon^2$.
\end{lemma}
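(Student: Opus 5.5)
The argument is a standard Gronwall comparison in rescaled time, with the uniform expansion of Lemma~\ref{swb:lem:expansions} supplying consistency. Fix the realization and $0<T<\tau_*$. Then $K:=\sup_{0\le\tau\le T}\|X(\tau)\|<\infty$. Let $R=K+1$, and let $\Lambda_R$ be a Lipschitz constant of $\nabla\Psi$ on the ball of radius $R$; it exists because $\Psi$ is a polynomial. For flow, the chain rule gives
\[
 Z_\varepsilon'(\tau)=\varepsilon^{-1}\varepsilon^{-(k-2)}\mathcal F(\varepsilon Z_\varepsilon)=\varepsilon^{1-k}\mathcal F(\varepsilon Z_\varepsilon),
\]
and $Z_\varepsilon(0)=\vartheta=X(0)$ exactly. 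By \eqref{swb:eq:field-expansion}, on the ball of radius $R$ this field equals $\nabla\Psi(Z_\varepsilon)+r_\varepsilon$ with $\|r_\varepsilon\|\le c_R\varepsilon$, or $c_R\varepsilon^2$ when $k=4$.

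For the flow bound, introduce the exit time $\sigma_\varepsilon=\inf\{\tau\le T:\|Z_\varepsilon(\tau)\|\ge R\}$. On $[0,\sigma_\varepsilon]$ the error $e=Z_\varepsilon-X$ satisfies $\|e\|'\le\Lambda_R\|e\|+c_R\varepsilon$ with $e(0)=0$. Gronwall then gives $\|e\|\le c_R\varepsilon\,\tau e^{\Lambda_R\tau}\le C_T\varepsilon$, where $C_T=c_RTe^{\Lambda_RT}$. For $\varepsilon<1/C_T$ this forces $\|Z_\varepsilon\|<K+1$ up to $\sigma_\varepsilon$, so $\sigma_\varepsilon=T$ and the solution exists on the whole interval. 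This bootstrap closes both the bound and the claim that all states stay in one fixed ball. When $k=4$, the same argument with remainder $c_R\varepsilon^2$ gives the improved rate.

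For GD, rescale $Z^{n+1}=Z^n+\Delta\,\varepsilon^{1-k}\mathcal F(\varepsilon Z^n)$; this is exactly the update $\theta^{n+1}=\theta^n+h\mathcal F(\theta^n)$ divided by $\varepsilon$, since $h=\Delta\varepsilon^{2-k}$. It is an explicit Euler scheme for $\nabla\Psi$ with a consistency perturbation. Since $X$ is $C^2$ on $[0,T]$, the local truncation error of Euler is at most $\tfrac12\Delta^2\sup\|X''\|$, where $\|X''\|=\|D^2\Psi(X)\nabla\Psi(X)\|$ is bounded on the ball of radius $K$. Writing $e_n=Z^n-X(n\Delta)$, the same stopping argument, now with a discrete exit index, yields
\[
 \|e_{n+1}\|\le(1+\Lambda_R\Delta)\|e_n\|+\Delta(c_R\varepsilon+c'\Delta).
\]
The discrete Gronwall inequality then gives $\|e_n\|\le e^{\Lambda_RT}T(c_R\varepsilon+c'\Delta)$ for $n\Delta\le T$. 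Because $\Delta=h\varepsilon^{k-2}\to0$, the bootstrap closes for small $\varepsilon$ just as in the flow case. The $k=4$ refinement again comes from the $O(\varepsilon^2)$ field remainder.

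The only delicate point is that the expansion constant in Lemma~\ref{swb:lem:expansions} depends on the radius, while the trajectory's radius is itself part of what must be controlled. The exit-time bootstrap resolves this circularity. The radius $R$ is fixed first from the comparator alone, which is possible because $T<\tau_*$. All constants are then determined, and only after that is $\varepsilon$ restricted.
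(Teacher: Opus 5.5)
Your proposal is correct and follows the paper's proof essentially step for step. The radius is fixed from the comparator on $[0,T]$ before $\varepsilon$ is restricted, the flow error is closed by Gr\"onwall with an exit-time bootstrap, and the GD error is closed by the discrete recursion $e_{n+1}\le(1+L_0\Delta)e_n+C_0\varepsilon\Delta+C_1\Delta^2$ with an $O(\Delta^2)$ Euler defect. The paper also notes that the rounded endpoint $Z_\varepsilon^{\lfloor T/\Delta\rfloor}$ converges to $X(T)$ because $X'$ is bounded; this is used later but is not part of the stated lemma.
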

\begin{proof}
Choose a ball with radius $R>\sup_{\tau\le T}\|X(\tau)\|+1$.
On this ball $G(Z)=(\nabla\Psi(Z_j))_j$ is Lipschitz, say with constant
$L_0$, and the rescaled actual field is $G(Z)+r_\varepsilon(Z)$ with
$\|r_\varepsilon\|\le C_0\varepsilon$.  Stopping at the ball's first
exit, Gr\"onwall bounds the flow error by
$C_0\varepsilon T e^{L_0T}$.  Taking it smaller than half the margin
closes the stopping argument and proves existence through $T$.

For GD, $Z^{n+1}=Z^n+\Delta\{G(Z^n)+r_\varepsilon(Z^n)\}$.
Along the bounded comparator, the one-step Taylor defect is at most
$C_1\Delta^2$: this follows directly by integrating
$G(X(s))-G(X(n\Delta))$, using a bound for $\|X'\|$.
Writing $e_n=\|Z^n-X(n\Delta)\|$, we therefore obtain
\[
 e_{n+1}\le(1+L_0\Delta)e_n+C_0\varepsilon\Delta+C_1\Delta^2,
 \qquad e_0=0.
\]
Summing this recursion gives
$e_n\le T e^{L_0T}(C_0\varepsilon+C_1\Delta)$.
Induction closes the same radius bootstrap.  Since
$|T-\lfloor T/\Delta\rfloor\Delta|<\Delta$ and $X'$ is bounded,
the rounded GD endpoint also converges to $X(T)$.  The sharper bound
follows by replacing $C_0\varepsilon$ by $C_0\varepsilon^2$.
\end{proof}

\begin{proof}[Proof of Theorem~\ref{swb:thm:alignment}]
Lemma~\ref{swb:lem:escape} gives the escape probability and uniqueness.
Fix a realization in this event with unique
winner.  Every nonwinner extends across $\tau_*$ and stays bounded there.
Write $K(\tau)=\|X_W(\tau)\|$.  Since
$(K^2)'=2k\Psi(X_W)$ and $\Psi'=\|\nabla\Psi\|^2\ge0$, finite escape
forces $\Psi(X_W)$ to become positive: otherwise the norm is bounded.
In fact it becomes unbounded, since a bounded potential would keep
$K^2$ bounded on a finite time interval.  Applying
\eqref{swb:eq:escape-bound} from a positive-potential time gives, for
all sufficiently late $\tau<\tau_*$,
\begin{equation}
 \Psi(X_W(\tau))\ge cK(\tau)^k,\qquad
 K(\tau)\longrightarrow\infty\quad(\tau\uparrow\tau_*),\quad c>0.
 \label{swb:eq:winner-growth}
\end{equation}
This argument permits a winner whose initial potential was negative.

The comparator depends on the spatial weights only through $Pw_j,Pz_j$.
Thus $(I-P)w_j$ and $(I-P)z_j$ are constant along it; for $k=4$,
$b_j,c_j$ are also constant.  For the winner, set
\[
 \begin{aligned}
 B_W&=a_W(w_Wz_W^\top+z_Ww_W^\top),& D&=PB_WP,\\
 \ell_W&=a_W(c_Ww_W+b_Wz_W),& v&=P\ell_W.
 \end{aligned}
\]
Every term removed by these projections contains a frozen transverse
factor.  Since all other factors are bounded by $K$,
\begin{equation}
 \|B_W-D\|+\|\ell_W-v\|=O(K^2).
 \label{swb:eq:transverse-error}
\end{equation}
Here and below constants may depend on the fixed initialization.

For $k=4$, combine \eqref{swb:eq:winner-growth} with
\[
 |\Psi|\le(\kappa/2)\|T_3\|\,|a_W|\|Pw_W\|^2\|Pz_W\|.
\]
Each of $|a_W|,\|Pw_W\|,\|Pz_W\|$ is then at least a positive
constant times $K$.  Using
$\|uv^\top+vu^\top\|=\|u\|\|v\|+|u^\top v|$, we obtain
$\|D\|\ge c_1K^3$.  For $k=3$, instead use the exact identity
\[
 \Psi(X_W)=\kappa\{\tfrac12\operatorname{tr}(H_2D)+g_1^\top v\}.
\]
It implies $\|D\|_{\mathrm F}+\|v\|\ge c_2K^3$.
In both cases the positive semidefinite matrix
$Q=D^2+vv^\top$ has range contained in $U$ and satisfies
\begin{equation}
 \lambda_{\max}(Q)\ge c_3K^6,
 \label{swb:eq:supported-growth}
\end{equation}
using $\|D\|^2\ge\|D\|_{\mathrm F}^2/r$ in the second case.
The bounded nonwinners and \eqref{swb:eq:transverse-error} give
\begin{equation}
 M_c(\tau):=B(X(\tau))^2+\ell(X(\tau))\ell(X(\tau))^\top
             =Q+E,\qquad \|E\|=O(K^5).
 \label{swb:eq:agop-dominance}
\end{equation}
For \emph{any} unit top eigenvector $e$ of $Q+E$,
\[
 \lambda_{\max}(Q)-\|E\|
 \le e^\top(Q+E)e
 \le\lambda_{\max}(Q)\|Pe\|^2+\|E\|.
\]
Hence $\|Pe\|^2\ge1-2\|E\|/\lambda_{\max}(Q)=1-O(K^{-1})$.
This controls the entire top eigenspace without assuming an eigengap.

The hidden-weight claims follow as well.  For $k=4$ the individual
projected norms are of order $K$, whereas the transverse norms are fixed.
For $k=3$, writing
$s=(\|Pw_W\|^2+\|Pz_W\|^2)^{1/2}\le K$, the potential formula yields
$|\Psi(X_W)|\le C_2K^2s$.
Together with \eqref{swb:eq:winner-growth} this gives $s\ge c_4K$.
Thus the combined spatial norm dominates the fixed transverse norm;
no separate lower bound on the two projected norms is needed in this case.

Choose $\tau_1<\tau_*$ sufficiently late that all comparator alignment
bounds hold with strict margin relative to $\delta$.
By Lemmas~\ref{swb:lem:expansions} and~\ref{swb:lem:tracking}, for both
the flow and the rounded GD endpoint,
\[
 \varepsilon^{-6}\left(\frac{2m}{\kappa}\right)^2M
       \longrightarrow M_c(\tau_1)=Q(\tau_1)+E(\tau_1).
\]
The same quadratic-form inequality remains valid after adding the
operator-norm error $o(1)$ to $E$.  It proves the claimed uniform
top-eigenspace alignment, even at multiple top eigenvalues.  Parameter
convergence transfers the hidden-weight inequalities, since their
projected denominators are nonzero at this fixed time.
Uniform boundedness on $[0,\tau_1]$ and
\eqref{swb:eq:loss-expansion} prove the entire-prefix loss bounds and
\eqref{swb:eq:initial-loss}.  Shrinking $\varepsilon_0$ makes the flow
and GD conclusions simultaneous.  Since all the comparator estimates
improve as $K\to\infty$, $\tau_1$ can be chosen arbitrarily close to
$\tau_*$.  Rational choices of these times and countable choices of
cutoffs can be used throughout, so the random endpoints and thresholds
may be chosen measurably.
\end{proof}

\begin{corollary}[Unconditional initial headroom and a same-run gain]
\label{swb:cor:headroom}
For every $\varepsilon>0$, $\swbAtop(\theta(0))$ is stochastically
dominated by $B\sim\operatorname{Beta}(r/2,(d-r)/2)$; in particular its
expectation is at most $r/d$.  Fix $\delta\in(0,1)$ and $q\in(0,1-\delta)$.
Using the random endpoints of Theorem~\ref{swb:thm:alignment},
\begin{equation}
 \liminf_{\varepsilon\downarrow0}\swbP\left(
 \begin{array}{c}
 \text{the theorem's plateau conclusions hold, and}\\
 \swbAtop(\text{endpoint})-\swbAtop(0)\ge1-\delta-q
 \end{array}\right)
 \ge [\swbP(B\le q)-2^{-m}]_+.
 \label{swb:eq:same-run-probability}
\end{equation}
The endpoint event is understood to fail outside $\mathcal E$.
The same bound applies to the event that both the flow and the fixed-step
GD conclusions hold.  Equivalently, for any $\zeta>0$ there is a
deterministic sufficiently small cutoff for which the right side minus
$\zeta$ holds at each initialization scale below it.
\end{corollary}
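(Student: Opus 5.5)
We prove the two parts separately and then join them with Lemma~\ref{lem:common-event}.

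\emph{Stochastic domination.} Fix $\varepsilon>0$. The initial parameters are $\theta(0)=\varepsilon\vartheta$, and each spatial block $w_j,z_j$ is an isotropic Gaussian independent of the heads and biases. Let $R$ be orthogonal and consider the map $(w_j,z_j)\mapsto(Rw_j,Rz_j)$, applied to every neuron. This map preserves the initialization law. It also sends $\nabla_x\widetilde f$ to $R\nabla_x\widetilde f(R^\top x)$, so $M\mapsto RMR^\top$ because $x$ is Gaussian. Consequently the top eigenspace $\mathcal T$ of $M(\theta(0))$ is rotation-equivariant in law. The law of $\dim\mathcal T$ is invariant, and conditionally on $\dim\mathcal T=k$ the space $\mathcal T$ is Haar on the Grassmannian. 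This uses the same averaging argument as in Appendix~\ref{app:initialization}, and it needs no simplicity claim.

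Now bound the score. Since $A_{\rm top}$ is a minimum over unit vectors in $\mathcal T$, it is at most $\|Pe\|^2$ for any fixed measurable choice of unit $e\in\mathcal T$. Choose $e$ equivariantly by taking $e=Re_0$ with $R$ an independent Haar rotation. Then $e$ is uniform on the sphere, and so $\|Pe\|^2\sim\operatorname{Beta}(r/2,(d-r)/2)$. The same bound holds for a degenerate $M=0$, where every unit vector is eligible. Hence $A_{\rm top}(\theta(0))\le_{\rm st}B$, and in particular $\mathbb E A_{\rm top}(\theta(0))\le r/d$.

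\emph{Same-run gain.} Let $H=\{A_{\rm top}(\theta(0))\le q\}$. The previous step gives $\Pr(H)\ge\Pr(B\le q)$ at every scale, where $H$ depends on $\varepsilon$. Take $E=\mathcal E$, which is intersected with the probability-one subset of Theorem~\ref{swb:thm:alignment}; its failure probability is at most $2^{-m}$. On $E$ the theorem supplies a measurable random cutoff $c=\varepsilon_0(\vartheta,\delta,h)>0$. For $\varepsilon<c$ the plateau bounds and $A_{\rm top}(\text{endpoint})\ge1-\delta$ hold for both flow and GD. On $E\cap\{\varepsilon<c\}\cap H$ this gives a gain of at least $1-\delta-q$.

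Apply Lemma~\ref{lem:common-event} with $q_H=1-\Pr(B\le q)$. This yields the lower bound
\[
[\Pr(B\le q)-2^{-m}-r_\varepsilon]_+,\qquad r_\varepsilon=\Pr(E\cap\{\varepsilon\ge c\}).
\]
Bounded convergence gives $r_\varepsilon\to0$, so taking the limit inferior proves \eqref{swb:eq:same-run-probability}. For the deterministic-cutoff reformulation, choose $\varepsilon'$ with $r_\varepsilon\le\zeta$ for all $\varepsilon<\varepsilon'$; this is possible because $r_\varepsilon$ is monotone in $\varepsilon$.

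\emph{Main obstacle.} The delicate point is the domination step when the top eigenvalue may be multiple. We handle it with the randomized equivariant selection of $e$ described above, which requires checking measurability of the top eigenspace. We take that from standard measurable spectral projections. We also need $H$ to be measurable in the coupled draw at each scale, and it is, since $H$ depends only on $\vartheta$.
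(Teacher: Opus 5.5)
Your proposal is correct and follows essentially the paper's own argument. It uses rotational invariance of the initial spatial law, with $M\mapsto RMR^\top$, together with a randomized uniform unit vector in the (possibly multiple) top eigenspace to obtain the $\operatorname{Beta}(r/2,(d-r)/2)$ domination, and then applies Lemma~\ref{lem:common-event} with $E=\mathcal E$, $c=\varepsilon_0$ and $H_\varepsilon=\{A_{\mathrm{top}}(\theta(0))\le q\}$.

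The only point to state more carefully is that $e=Re_0$ must lie in the top eigenspace of the configuration being scored, since a Haar rotation of a fixed vector of $\mathbb R^d$ is generally not in $\mathcal T$. Either take $e_0$ to be a measurable selection in $\mathcal T$ and apply $R$ to the whole initialization (using invariance of its law), or let $R$ be a Haar rotation within $\mathcal T$.
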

\begin{proof}
At initialization the joint spatial-weight law is rotationally invariant,
and rotating these weights by $O\in O(d)$ conjugates $M$ to $OMO^\top$.
Conditionally on its top eigenspace choose a uniform unit vector $e$ in
that space using auxiliary randomness.  Its unconditional distribution
is uniform on $S^{d-1}$, and $\swbAtop(0)\le\|Pe\|^2$ pointwise.
The Gaussian representation of a uniform sphere vector shows that
$\|Pe\|^2$ has the displayed Beta distribution.

Apply Lemma~\ref{lem:common-event} with $E=\mathcal E$,
$c=\varepsilon_0$ the theorem's measurable common cutoff, and
$H_\varepsilon=\{\swbAtop(\theta(0))\le q\}$.
The unconditional bound just proved gives
$\swbP(H_\varepsilon^c)\le\swbP(B>q)$ at every scale.
On the resulting intersection both endpoint alignments are at least
$1-\delta$, proving the gain and the probability bound for flow and GD
simultaneously. No rotation law conditional on escape is used.
\end{proof}

\paragraph{Scope for several teacher directions.}
For any choice $P_r$ of a projector onto $r$ top AGOP eigenvectors, the
endpoint satisfies
$\operatorname{tr}(PP_r)/r\ge(1-\delta)/r$, because its range contains
a top eigenvector.  This is an average-alignment lower bound, not a bound
on the smallest principal-angle cosine between the two $r$-dimensional
spaces.  A single winner can contribute rank two, so one cannot attribute
all remaining eigenvalues to nonwinners.  More generally,
$\operatorname{range}(M)\subseteq\operatorname{span}\{w_j,z_j:1\le j\le m\}$
and $\operatorname{rank}(M)\le2m$.  Full-subspace recovery and the later
loss-release regime require additional arguments.

\newtheorem{swrtheorem}[theorem]{Theorem}
\newtheorem{swrlemma}[theorem]{Lemma}
\newtheorem{swrproposition}[theorem]{Proposition}
\newtheorem{swrcorollary}[theorem]{Corollary}
\theoremstyle{remark}
\newtheorem{swrremark}[theorem]{Remark}
\theoremstyle{plain}
\newcommand{\swrQ}{\mathcal Q}
\newcommand{\swrR}{\mathcal R}
\newcommand{\swrP}{\Pi_{\mathcal Q}}
\newcommand{\swrE}{\mathbb E}
\newcommand{\swrHe}{\operatorname{He}}

\subsection{Unrestricted population refitting of small SwiGLU features}
\label{swr:sec:refit}

This section proves an \emph{attained refit-risk bound} on the initial
plateau. It first proves the rank-one bound without asserting a comparison
with initialization. A separate proposition establishes such a comparison
at the boundary width only. All readout coefficients and the intercept are
unrestricted; the constructed coefficients may diverge as initialization
vanishes. These conclusions concern population least squares, not a bounded
readout, ridge regression, or a finite-sample estimator.

\paragraph{Setting and initialization scope.}
Use the student, profiled loss, and population dynamics in
\eqref{swb:eq:student}--\eqref{swb:eq:dynamics}: $p_j=(w_j,b_j)$,
$v_j=(z_j,c_j)$, $g_j=S(p_j^\top\widetilde x)(v_j^\top\widetilde x)$,
$x\sim N(0,I_d)$, and $\widetilde x=(x,1)$. The output scale is
$\kappa>0$, and a fixed mean-field GD step $h$ has raw learning rate $mh$.
Function norms and inner products are Gaussian $L^2$; vector, matrix,
and tensor norms follow the preceding subsection's conventions.
Here the initialization assumption is \emph{broader} than
\eqref{swb:eq:init}: $\theta_j(0)=\varepsilon\vartheta_j$, where the
$\vartheta_j$ are independent copies of a nondegenerate centered Gaussian
law invariant under $a\mapsto-a$. The head and inner-block variances may
differ and are fixed independently of $\varepsilon$. Spatial isotropy
is not assumed unless stated explicitly. The draw is coupled across
$\varepsilon$, and here we allow $1\le r\le d$.

The teacher is $y=F(U^\top x)$ with $U^\top U=I_r$ and
$F\in L^2(\gamma_r)$, $\operatorname{Var}(y)=1$. In this section it is
of type three:
\[
 \swrE[(y-\swrE[y])x]=0,\qquad
 \swrE[(y-\swrE[y])(xx^\top-I_d)]=0,\qquad
 T_3:=\swrE[y\swrHe_3(x)]\ne0.
\]
Here $U\in\mathbb R^{d\times r}$ spans the rank-$r$ teacher subspace,
$(\swrHe_3(x))_{ijk}=x_ix_jx_k-x_i\delta_{jk}-x_j\delta_{ik}
-x_k\delta_{ij}$, and $\delta_{ij}$ is the Kronecker delta.
The notation $T_3[a,b,c]$ denotes contraction with three vectors;
the tensor norm used to bound this contraction is the Frobenius norm.
Let $\swrQ$ be the real polynomials of degree at most two, as a subspace
of $L^2(\gamma_d)$, with orthogonal projector $\swrP$ and dimension
\begin{equation}\label{swr:eq:dimension}
 D=\dim\swrQ=1+d+\frac{d(d+1)}2=\frac{(d+1)(d+2)}2.
\end{equation}
Then $y-\swrE[y]\perp\swrQ$.
The refit risk at a frozen parameter state is
\begin{equation}\label{swr:eq:refit}
 \swrR(\theta)=\inf_{\eta_0\in\mathbb R,\,\eta\in\mathbb R^m}
 \swrE\left[\left(y-\eta_0-\sum_{j=1}^m\eta_jg_j\right)^2\right].
\end{equation}
This infimum is attained because the feature span is finite dimensional;
its coefficients need not be unique. The trained $a_j$ are not constraints
on the refitted coefficients.

\paragraph{Comparator and evaluation times.}
Let $X_j(\tau)$ solve the independent polynomial flows
\begin{equation}\label{swr:eq:comparator}
 X_j'=\nabla\Psi(X_j),\qquad X_j(0)=\vartheta_j,
 \qquad \Psi(a,p,v)=\frac{\kappa}{2}aT_3[w,w,z].
\end{equation}
Let $T_j\in(0,\infty]$ be their maximal forward existence times and
$\tau_* =\min_j T_j$. The event $E_{\rm esc}=\{\tau_*<\infty\}$
has a unique winner $W$ almost surely. At fixed comparator time $\tau$
write
\[
 t_\varepsilon(\tau)=\tau\varepsilon^{-2}\quad\hbox{for flow},
 \qquad
 t_\varepsilon(\tau)=h\left\lfloor\frac{\tau}{h\varepsilon^2}\right\rfloor
 \quad\hbox{for GD}.
\]
All limits below first keep $\tau<\tau_*$ fixed and then send
$\varepsilon\downarrow0$. The selected $\tau$ may depend on the
initialization, but never on $\varepsilon$.
For GD, $\theta(t_\varepsilon(\tau))$ means the iterate with index
$\lfloor\tau/(h\varepsilon^2)\rfloor$. Below $w_\perp=(I-UU^\top)w$
and $z_\perp=(I-UU^\top)z$ denote transverse spatial weights.
Subscripts on $O_\tau$ allow constants to depend on the fixed comparator
time; $O_{L^2}$ specifies the norm in which the remainder is bounded.

\begin{swrlemma}[Expansion, tracking, and escape facts]
\label{swr:lem:tracking}
Write $Z=(Z_1,\ldots,Z_m)$ for the rescaled parameter tuple and
$P=p^\top\widetilde x$, $V=v^\top\widetilde x$ for one neuron's
affine Gaussian forms. For tuples, $\nabla\Psi(Z)$ denotes
$(\nabla\Psi(Z_j))_{j=1}^m$. On every compact set of rescaled parameters,
uniformly on that set,
\begin{align}
 S(\varepsilon P)\varepsilon V
   &=\varepsilon^2 q+\varepsilon^3\chi+O_{L^2}(\varepsilon^5),
 &q&=\tfrac12PV,\quad\chi=\tfrac14P^2V,
 \label{swr:eq:expansion}\\
 \varepsilon^{-3}[-m\nabla L](\varepsilon Z)
   &=\nabla\Psi(Z)+O(\varepsilon^2),
 \label{swr:eq:field}\\
 L(\varepsilon Z)
   &=1-\frac{\varepsilon^4}{m}\sum_j\Psi(Z_j)+O(\varepsilon^6).
 \label{swr:eq:loss}
\end{align}
Here $q_j$ and $\chi_j$ denote the quadratic and cubic terms of neuron $j$;
$\chi_j$ is distinct from its value bias $c_j$.
Consequently, for every realization and fixed $\tau<\tau_*$,
the rescaled flow, and fixed-step GD at its rounded times, satisfy
\begin{equation}\label{swr:eq:tracking}
 \sup_{0\le s\le\tau}
 \max_j\left\|\varepsilon^{-1}
 \theta_j(t_\varepsilon(s))-X_j(s)\right\|
 =O_\tau((1+h)\varepsilon^2),
\end{equation}
where $h=0$ denotes flow.
Moreover $\Pr(E_{\rm esc})\ge1-2^{-m}$, there are no finite atoms in
the law of $T_j$, and, almost surely on $E_{\rm esc}$, the following
hold as $\tau\uparrow\tau_*$:
\begin{enumerate}
 \item all nonwinners extend smoothly to $\tau_*$;
 \item $K(\tau):=\|X_W(\tau)\|\to\infty$ and
       $\Psi(X_W(\tau))\ge cK(\tau)^4$ for some random $c>0$;
 \item the winner's $a$, $U^\top w$, and $U^\top z$ each have norm
       at least $c'K$, for some $c'>0$, while $w_\perp,z_\perp,b,c$
       remain constant.
\end{enumerate}
\end{swrlemma}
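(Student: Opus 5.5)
This lemma is the $k=4$ (type-three) specialization of Lemmas~\ref{swb:lem:escape}, \ref{swb:lem:expansions} and~\ref{swb:lem:tracking}, now under the broader sign-symmetric Gaussian initialization. The proof therefore retraces those arguments and checks at each step where the old initialization law was used.

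\textbf{Expansion \eqref{swr:eq:expansion}.} I would start from the SiLU identity \eqref{swb:eq:silu-remainder}. With $S(s)=s/2+s^2/4+R(s)$ and $|R(s)|\le|s|^4/48$, substituting $s=\varepsilon P$ and multiplying by $\varepsilon V$ gives $\varepsilon^2 q+\varepsilon^3\chi$ plus a remainder bounded by $\varepsilon^5|P|^4|V|/48$. Its $L^2$ norm is $O(\varepsilon^5)$ uniformly on compacts by Gaussian moment bounds for affine forms, via H\"older.

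\textbf{Field and loss, \eqref{swr:eq:field}--\eqref{swr:eq:loss}.} Since $y-\swrE[y]\perp\swrQ$, every covariance of $y$ with the quadratic feature $q$ vanishes. The first surviving term is $\swrCov(y,a\chi)$, which by the Hermite identity for $\swbCov(y,P_0^2V_0)$ (with $g_1=H_2=0$) equals $\tfrac14 aT_3[w,w,z]$. The next possible contribution comes from the order-four SiLU remainder, i.e.\ order $\varepsilon^6$ in $a g$. Differentiating in parameters with the bound $|R'(s)|\le|s|^3/12$ gives the $O(\varepsilon^2)$ relative error in the field after rescaling by $\varepsilon^{-3}$. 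The student feedback is $O(\varepsilon^5)=\varepsilon^3\cdot O(\varepsilon^2)$, exactly as in Lemma~\ref{swb:lem:expansions}. The loss expansion follows from $L=1-2\swbCov(y,\widetilde f)+\swbVar(\widetilde f)$, since $\swbVar(\widetilde f)=O(\varepsilon^6)$.

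\textbf{Tracking \eqref{swr:eq:tracking}.} I would repeat the Gr\"onwall and discrete-recursion bootstrap of Lemma~\ref{swb:lem:tracking} on a ball containing $X([0,\tau])$. The flow error is $O(\varepsilon^2)$ from the field remainder. For GD the recursion gives an additional $O(\Delta)=O(h\varepsilon^2)$ local defect, together with the rounding of the endpoint, which produces the factor $(1+h)\varepsilon^2$.

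\textbf{Escape facts.} These follow Lemma~\ref{swb:lem:escape}. The key point is that $\Psi$ is odd in $a$ and the law is invariant under $a\mapsto-a$, so $\Pr(\Psi(\vartheta_j)>0)=1/2$; here I use that the zero set of the nonzero polynomial $\Psi$ is null for a nondegenerate Gaussian. This yields $\Pr(E_{\rm esc})\ge1-2^{-m}$. The absence of finite atoms uses the radial scaling $T(s\vartheta)=s^{-2}T(\vartheta)$ and needs only absolute continuity, not isotropy. Independence of the neurons then gives a unique minimizer almost surely.

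Item 1 holds because a nonwinner's existence time strictly exceeds $\tau_*$. Item 2 is the growth bound \eqref{swb:eq:winner-growth}, applied from a time at which the potential has become positive.

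\textbf{Item 3, the main obstacle.} Because $\Psi$ depends only on $a$, $U^\top w$, $U^\top z$, the gradient has zero components in $w_\perp,z_\perp,b,c$, so these coordinates are frozen. For the lower bounds I would use
\[
 cK^4\le\Psi\le\tfrac{\kappa}{2}\|T_3\|\,|a|\,\|U^\top w\|^2\|U^\top z\|.
\]
Each of the four factors on the right is at most $K$. Hence if any one of them were $o(K)$, the product would be $o(K^4)$, a contradiction. This gives $|a|,\|U^\top w\|,\|U^\top z\|\ge c'K$ with $c'=2c/(\kappa\|T_3\|)$, up to the convention of counting $\|U^\top w\|$ twice.
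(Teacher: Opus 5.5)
Your proposal is correct and follows essentially the same route as the paper. You specialize the SiLU remainder, the $k=4$ field/loss expansion and the Gr\"onwall/discrete tracking bootstrap to type-three teachers, and you rerun the escape argument using only a density, head-sign symmetry and neuron independence rather than isotropy. You then finish item~3 from $cK^4\le\Psi\le\frac{\kappa}{2}\|T_3\|\,|a|\,\|U^\top w\|^2\|U^\top z\|$, together with frozenness of the coordinates on which $\Psi$ does not depend. Your loose ``counting $\|U^\top w\|$ twice'' remark is harmless: $c\le\frac{\kappa}{2}\|T_3\|$ forces $c'\le1$, so $\sqrt{c'}\ge c'$ covers the squared factor.
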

\begin{proof}
We use the earlier comparator lemmas through the following interface:
the expansion and tracking arguments are deterministic; the escape
probability uses a density, head-sign symmetry, and independence across
neurons. None of these ingredients requires spatial isotropy or $r<d$.
In particular this is not an invocation of the isotropic initialization
hypothesis in the alignment theorem.

For type three, $k=4$ and $2\kappa aT_{\mathrm{lead}}$ from
\eqref{swb:eq:lead} equals $\Psi$ in \eqref{swr:eq:comparator}. The global SiLU remainder
\eqref{swb:eq:silu-remainder}, with Gaussian affine moments, gives
\eqref{swr:eq:expansion}. The $k=4$ calculation in
Lemma~\ref{swb:lem:expansions} gives \eqref{swr:eq:field}; its loss
calculation gives \eqref{swr:eq:loss}, since the next teacher term and
the student variance are both $O(\varepsilon^6)$. Those calculations
use only $y\in L^2$. Lemma~\ref{swb:lem:tracking}, with mesh
$\Delta=h\varepsilon^2$, gives \eqref{swr:eq:tracking}, including
rounding, for every fixed compact comparator segment.

For the probability assertions, the proof of
Lemma~\ref{swb:lem:escape} applies to the present law: the nonzero
polynomial's zero set is null, head reflection gives probability $1/2$
of positive potential, radial homogeneity removes finite escape-time
atoms under any density, and neuron independence gives the escape
bound and uniqueness. Hence every nonwinner extends past $\tau_*$.
The pathwise argument for \eqref{swb:eq:winner-growth}, with $k=4$,
gives $K\to\infty$ and $\Psi\ge cK^4$ even if the winner started
with negative potential. Finally,
\[
 |\Psi|\le(\kappa/2)\|T_3\|_{\mathrm F}
                  |a|\,\|U^\top w\|^2\|U^\top z\|.
\]
Each factor is at most $K$, so the three active block norms are
bounded below by a positive constant times $K$. All remaining
coordinates are frozen because the potential does not depend on them.
\end{proof}

\begin{swrlemma}[A nonwinner quadratic bank at the random escape endpoint]
\label{swr:lem:bank}
Suppose $m\ge D$. Almost surely on $E_{\rm esc}$, the map
\begin{equation}\label{swr:eq:bank}
 B(\tau):\mathbb R\times\mathbb R^{m-1}\longrightarrow\swrQ,
 \qquad B(\tau)(\beta_0,\beta)
 =\beta_0+\sum_{j\ne W}\beta_jq_j(\tau)
\end{equation}
is surjective at $\tau=\tau_*$. Here $q_j$ is built from the
comparator parameters as in \eqref{swr:eq:expansion}.
There is a random $\tau_a<\tau_*$ such that its smallest positive
singular value is bounded below on $[\tau_a,\tau_*]$.
In particular, the right inverse $B^*(BB^*)^{-1}$ has uniformly
bounded norm there, using the $L^2(\gamma_d)$ norm on $\swrQ$.
Here $B^*$ denotes the Hilbert-space adjoint of $B$.
\end{swrlemma}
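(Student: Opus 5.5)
The plan is to turn surjectivity at $\tau_*$ into a statement about a polynomial in the nonwinners' parameters at a single time, show that polynomial is generically nonzero, and then pass to a neighbourhood of $\tau_*$ by continuity. Each $q_j=\tfrac12(w_j^\top x+b_j)(z_j^\top x+c_j)$ lies in $\mathcal Q$. In the orthonormal Hermite basis of $\mathcal Q$ (the constant, $x_i$, $(x_i^2-1)/\sqrt2$, $x_ix_k$ for $i<k$), the coefficient vector of $q_j$ is a fixed polynomial map $\pi:(p,v)\mapsto\mathbb R^{D}$. Concretely, the quadratic part is the symmetrization of $wz^\top$, the linear part is $c\,w+b\,z$, and the constant is $bc+w^\top z$.

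Surjectivity of $B(\tau)$ is therefore equivalent to the $D\times m$ matrix with columns $e_{\rm const}$ and $\pi(p_j(\tau),v_j(\tau))$, $j\ne W$, having rank $D$. By Lemma~\ref{swr:lem:tracking}(1), every nonwinner $X_j$ extends smoothly to $\tau_*$, so this matrix is continuous on $[0,\tau_*]$. The smallest singular value of a full-row-rank matrix is continuous, so once rank $D$ holds at $\tau_*$ there is a $\tau_a<\tau_*$ on which it stays bounded below by half its value at $\tau_*$. The adjoint bound follows from $\|B^*(BB^*)^{-1}\|=1/\sigma_{\min}(B)$. All of these quantities are random but finite, and measurability is handled as in the alignment theorem by taking rational times.

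It remains to prove full rank at $\tau_*$ almost surely on $E_{\rm esc}$. First I check that the image of $\pi$ spans the nonconstant part of $\mathcal Q$. Taking $w=e_i$, $z=e_k$, $b=c=0$ gives $\tfrac12(e_ie_k^\top+e_ke_i^\top)$ plus a constant. Taking $w=0$, $z=e_i$, $b=1$, $c=0$ gives $x_i/2$. Together with the free intercept direction this spans $\mathcal Q$. Since $m-1\ge D-1$ nonwinners are available, a generic choice of $D-1$ points makes the determinant polynomial $\det[e_{\rm const},\pi(\cdot),\ldots]$ not identically zero.

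The main obstacle is that the evaluation is at the flowed, random time $\tau_*$, which depends on all the neurons, rather than at initialization. My plan is to condition on the winner index $W$ and on its initial datum $\vartheta_W$, which fixes $\tau_*=T_W$; it is almost surely finite on this event. The nonwinners are independent of $\vartheta_W$, so conditionally they are independent Gaussians restricted to the event $\{T_j>T_W\}$. This event is open and has positive probability whenever it is nonempty, so the conditional law has a density on an open set.

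For fixed $\tau_*$, the map $\vartheta_j\mapsto X_j(\tau_*)$ is the time-$\tau_*$ flow map of a polynomial vector field. On its (open) domain it is a real-analytic diffeomorphism onto its image. The determinant of the evaluated matrix is then a real-analytic function of $(\vartheta_j)_{j\ne W}$ on a connected open set. To show it is not identically zero, I use that the flow map is a local diffeomorphism, so its image contains an open set of endpoint configurations. Over an open set the determinant polynomial does not vanish identically. Hence the determinant vanishes only on a Lebesgue-null set, and therefore with conditional probability zero.

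The delicate points are that $\{T_j>T_W\}$ may be disconnected and that $D-1\le m-1$ columns must be chosen. For the first, I handle each connected component separately, since each is open and nonempty and the same argument applies on it. For the second, I take any $D-1$ of the nonwinners, relabelled measurably. Integrating over $W$ and $\vartheta_W$ then gives the almost-sure conclusion.
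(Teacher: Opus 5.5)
Your proposal is correct and follows essentially the same route as the paper: the same monomial witness shows the determinant in the affine parameters is not identically zero, and you condition on the winner index and its initial datum so that $\tau_*$ becomes a fixed time. You then use that the nonwinners are Gaussians restricted to the open survival set, that the time-$\tau_*$ flow map is a diffeomorphism onto an open image, and you finish with continuity of the nonwinner trajectories up to $\tau_*$. The only cosmetic difference is in how nullity of the bad set is obtained. You use real-analyticity of the composition of the determinant with the flow map on each connected component, whereas the paper pushes the absolutely continuous conditional law forward through the flow map and uses that a nonzero polynomial has Lebesgue-null zero set. Both arguments are valid.
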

\begin{proof}
Choose an orthonormal coordinate basis of $\swrQ$. For any $D-1$
neurons, the determinant with columns $1,q_1,\ldots,q_{D-1}$ is
a polynomial in their affine parameters and is not identically zero.
To see this, realize the nonconstant monomials $x_i,x_i^2,x_ix_j$
as products $PV/2$, one per neuron. Together with $1$ they form a
basis. The rank-deficient set is therefore Lebesgue null in the
full neuron parameter space, including the unused heads.

The time $\tau_*$ is random, so fixed-time nonsingularity alone is
insufficient. Fix a candidate winner $i$ and condition on its entire
initial value $\vartheta_i=\xi$, with finite $t=T(\xi)$.
Conditional on $i$ being the unique winner, the other initial values
are independent copies of their Gaussian law restricted to
\[
 \mathcal D_t=\{\vartheta:T(\vartheta)>t\}.
\]
This conditional law is absolutely continuous whenever its normalizing
probability is positive; zero-probability cases contribute no winning
realizations. The survival domain $\mathcal D_t$ is open. For a smooth
autonomous vector field, its time-$t$ flow map is a smooth
diffeomorphism onto its open image, with inverse supplied by the
backward flow along each surviving segment. In particular its
Jacobian is nonsingular and it preserves absolute continuity.
Thus the joint nonwinner endpoint law at this now fixed $t$ is
absolutely continuous. The determinant polynomial vanishes with
conditional probability zero. Integrate over $\xi$ and sum over
the finitely many possible winners. This proves surjectivity at
the selected random endpoint. Continuity of the surviving
trajectories and of $BB^*$ proves the final assertion.
\end{proof}

\begin{swrlemma}[Exact quadratic cancellation in the actual feature bank]
\label{swr:lem:cancellation}
On the event of Lemma~\ref{swr:lem:bank}, fix
$\tau\in[\tau_a,\tau_*)$. Form $q_j^\varepsilon,\chi_j^\varepsilon$
from the \emph{actual rescaled parameters}
$\varepsilon^{-1}\theta_j(t_\varepsilon(\tau))$.
There are coefficients $\beta^\varepsilon\to\beta$, with
$\beta_W^\varepsilon=\beta_W=1$, such that
\begin{equation}\label{swr:eq:actualcancel}
 \beta_0^\varepsilon+\sum_j\beta_j^\varepsilon q_j^\varepsilon=0,
 \qquad
 C_\varepsilon:=\varepsilon^{-3}
 \left[\varepsilon^2\beta_0^\varepsilon+
             \sum_j\beta_j^\varepsilon g_j\right]
 \longrightarrow C:=\sum_j\beta_j\chi_j
 \quad\text{in }L^2.
\end{equation}
The limiting nonwinner/intercept coefficients obey
$\|(\beta_0,(\beta_j)_{j\ne W})\|=O(K(\tau)^2)$,
uniformly as $\tau\uparrow\tau_*$.
For every $q\in\swrQ$ there is also $Q_\varepsilon(q)$ in the
actual intercept-feature span such that
$Q_\varepsilon(q)\to q$ in $L^2$. Its nonwinner quadratic-bank
coefficients, before the $\varepsilon^{-2}$ readout scaling, are
bounded by a constant times $\|q\|_2$ once
$\varepsilon$ is sufficiently small for the chosen fixed $\tau$;
the bounding constant is uniform in these late comparator times,
but the $\varepsilon$ cutoff need not be.
\end{swrlemma}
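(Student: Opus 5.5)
The plan is to transfer the endpoint surjectivity of Lemma~\ref{swr:lem:bank} from the comparator parameters to the actual rescaled parameters by perturbation. I would then solve the quadratic cancellation exactly in the actual bank with a minimal-norm right inverse, and read off the cubic remainder from the expansion \eqref{swr:eq:expansion}.

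\emph{Step 1: the actual bank inherits a uniform right inverse.} Fix $\tau\in[\tau_a,\tau_*)$. Let $B_\varepsilon(\tau)$ be the map \eqref{swr:eq:bank} built from $q_j^\varepsilon=\tfrac12P_j^\varepsilon V_j^\varepsilon$, where $P_j^\varepsilon,V_j^\varepsilon$ are the affine forms of $Z_j^\varepsilon=\varepsilon^{-1}\theta_j(t_\varepsilon(\tau))$. Each $q_j$ is a quadratic polynomial in the neuron's parameters with values in the finite-dimensional space $\swrQ$. On the fixed ball supplied by \eqref{swr:eq:tracking} it is therefore Lipschitz in $L^2$, so $\|B_\varepsilon(\tau)-B(\tau)\|=O_\tau(\varepsilon^2)$. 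Let $\sigma>0$ be the uniform lower bound on the smallest singular value of $B$ on $[\tau_a,\tau_*]$ from Lemma~\ref{swr:lem:bank}. Then $\lambda_{\min}(B_\varepsilon B_\varepsilon^*)\ge\sigma^2/2$ once $\varepsilon$ is below a $\tau$-dependent cutoff. Consequently $R_\varepsilon=B_\varepsilon^*(B_\varepsilon B_\varepsilon^*)^{-1}$ obeys $\|R_\varepsilon\|\le\sqrt2/\sigma$ and converges to $R=B^*(BB^*)^{-1}$. The bound $\sqrt2/\sigma$ is uniform in late $\tau$; the cutoff is not, since the tracking constants depend on $\tau$. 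This matches exactly the quantifiers in the statement.

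\emph{Step 2: cancellation and the cubic limit.} Put $\beta_W^\varepsilon=1$ and $(\beta_0^\varepsilon,(\beta_j^\varepsilon)_{j\ne W})=-R_\varepsilon q_W^\varepsilon$. This gives the exact identity in \eqref{swr:eq:actualcancel}, and $\beta^\varepsilon\to\beta:=-Rq_W$ by convergence of $R_\varepsilon$ and of $q_W^\varepsilon$. For the size bound, $\|q_W\|_2\le\tfrac12\|P_W\|_{L^4}\|V_W\|_{L^4}\le CK(\tau)^2$ because both affine parameter norms are at most $K$. With $\|R\|\le1/\sigma$ this gives $O(K^2)$ uniformly as $\tau\uparrow\tau_*$. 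Next apply \eqref{swr:eq:expansion} at the actual parameters $\theta_j=\varepsilon Z_j^\varepsilon$. This is legitimate because the expansion is uniform on the compact ball, so $g_j=\varepsilon^2q_j^\varepsilon+\varepsilon^3\chi_j^\varepsilon+O_{L^2}(\varepsilon^5)$. Hence
\[
 C_\varepsilon=\varepsilon^{-1}\Big(\beta_0^\varepsilon+\sum_j\beta_j^\varepsilon q_j^\varepsilon\Big)
 +\sum_j\beta_j^\varepsilon\chi_j^\varepsilon+O(\varepsilon^2)
 =\sum_j\beta_j^\varepsilon\chi_j^\varepsilon+O(\varepsilon^2).
\]
The singular $\varepsilon^{-1}$ term vanishes identically precisely because the cancellation uses the actual, not comparator, quadratics. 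This is why the lemma insists on actual parameters. Since $\chi_j$ is a cubic polynomial in the parameters, $\chi_j^\varepsilon\to\chi_j$ in $L^2$ by \eqref{swr:eq:tracking}, and $C_\varepsilon\to C$ follows.

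\emph{Step 3: approximating an arbitrary quadratic.} For $q\in\swrQ$ take $\gamma^\varepsilon=R_\varepsilon q$, with the winner coefficient set to zero, and define
\[
 Q_\varepsilon(q)=\varepsilon^{-2}\Big(\varepsilon^2\gamma_0^\varepsilon+\sum_{j\ne W}\gamma_j^\varepsilon g_j\Big).
\]
The expansion gives $Q_\varepsilon(q)=B_\varepsilon\gamma^\varepsilon+O(\varepsilon)=q+O(\varepsilon)$, and $\|\gamma^\varepsilon\|\le(\sqrt2/\sigma)\|q\|_2$ by Step~1.

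The main obstacle I expect is Step~1's uniformity bookkeeping. One must keep the right-inverse bound uniform over late comparator times while conceding a $\tau$-dependent $\varepsilon$ cutoff. One must also check that the perturbation $B_\varepsilon-B$ is controlled in operator norm into $\swrQ$, which holds because both maps take values in the same finite-dimensional polynomial space. A secondary point is that the $O(\varepsilon^5)$ remainder, multiplied by $\varepsilon^{-3}$ and by coefficients of size $O(K^2)$, stays negligible at fixed $\tau$; this is immediate once $\tau$ is fixed before $\varepsilon\downarrow0$.
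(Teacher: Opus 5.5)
Your proposal is correct and follows essentially the same route as the paper. You perturb the endpoint bank $B$ to the actual bank $B_\varepsilon$ and set the nonwinner/intercept coefficients to $-R_\varepsilon q_W^\varepsilon$ using the minimal-norm right inverse $R_\varepsilon=B_\varepsilon^*(B_\varepsilon B_\varepsilon^*)^{-1}$, with winner coefficient one. You then read off $C_\varepsilon\to C$ and $Q_\varepsilon(q)\to q$ from the feature expansion. Your explicit singular-value bound $\|R_\varepsilon\|\le\sqrt2/\sigma$ simply makes quantitative the paper's uniform-boundedness claim, including the split between a $\tau$-uniform constant and a $\tau$-dependent $\varepsilon$ cutoff.
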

\begin{proof}
Replace $q_j$ by $q_j^\varepsilon$ in $B$ to obtain $B_\varepsilon$.
Tracking gives $B_\varepsilon\to B$ for this fixed $\tau$, so it is
surjective for all small $\varepsilon$ and its continuous right
inverse $R_\varepsilon=B_\varepsilon^*
(B_\varepsilon B_\varepsilon^*)^{-1}$ converges to $R$.
Set the nonwinner/intercept vector to
$-R_\varepsilon q_W^\varepsilon$ and the winner coefficient to one.
This solves \eqref{swr:eq:actualcancel} exactly at every
$\varepsilon$. Since $\|q_W\|_2=O(K^2)$ and $R$ is uniformly
bounded, the coefficient estimate follows. Expanding the actual
features now gives
\[
 C_\varepsilon
   =\sum_j\beta_j^\varepsilon \chi_j^\varepsilon+O_\tau(\varepsilon^2)
   \longrightarrow C.
\]
The constant hidden in this error may depend on the fixed time
$\tau$ and is not asserted uniform up to $\tau_*$.

Similarly put $(\eta_0^\varepsilon,\eta^\varepsilon)
=R_\varepsilon q$ and define
\[
 Q_\varepsilon(q)=\varepsilon^{-2}
 \left[\varepsilon^2\eta_0^\varepsilon+
             \sum_{j\ne W}\eta_j^\varepsilon g_j\right].
\]
It equals $q+O_\tau(\varepsilon)$ in $L^2$.
The factor $\varepsilon^2\beta_0^\varepsilon$ in the first witness
is essential: omitting it leaves an uncanceled constant after
division by $\varepsilon^3$. Cancellation of comparator quadratics
alone would also be insufficient under mere $o(1)$ tracking; the
present construction requires no such division of a tracking error.
\end{proof}

\begin{swrtheorem}[Rank-one attained refit risk on the plateau]
\label{swr:thm:rankone}
Suppose $r=1$, $U=u$ is unit, and $m\ge D$.
Write the orthogonal Hermite decomposition
\[
 y=\swrE[y]+A h_3(u^\top x)+F_{\ge4}(u^\top x),
 \qquad h_3(s)=\frac{s^3-3s}{\sqrt6},\qquad A\ne0,
 \qquad A^2+\|F_{\ge4}\|_2^2=1.
\]
Here $A=\swrE[yh_3(u^\top x)]$ and $F_{\ge4}$ contains only
Hermite degrees at least four.
Almost surely on $E_{\rm esc}$, for every $\delta>0$ there is a
fixed $\tau_1<\tau_*$, arbitrarily close to $\tau_*$ if desired,
such that, for flow or any fixed MF GD step $h>0$, all sufficiently
small $\varepsilon$ satisfy
\begin{equation}\label{swr:eq:rankonerisk}
 \swrR\big(\theta(t_\varepsilon(\tau_1))\big)
 \le \|F_{\ge4}\|_2^2+\delta,
 \qquad
 \sup_{0\le t\le t_\varepsilon(\tau_1)}|L(t)-L(0)|
 \le C\varepsilon^4.
\end{equation}
For GD the supremum is over its iterates. The constants and the
smallness threshold may depend on the realized initialization,
$\delta$, and $h$. For pure $h_3$ the attained refit risk is at most
$\delta$. The escape event has probability at least $1-2^{-m}$;
this is not a specified finite-$\varepsilon$ confidence bound.
\end{swrtheorem}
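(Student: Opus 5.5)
The plan is to build an explicit readout in the actual feature span at $\theta(t_\varepsilon(\tau_1))$ whose $L^2$ distance to $y$ is at most $\|F_{\ge4}\|_2+o(1)$. The witness combines the cubic $C$ from Lemma~\ref{swr:lem:cancellation} with quadratic corrections $Q_\varepsilon(q)$ and the intercept. The loss bound in \eqref{swr:eq:rankonerisk} follows directly from \eqref{swr:eq:loss} and tracking \eqref{swr:eq:tracking}. All rescaled states up to $\tau_1$ stay in a fixed ball, so $|L(t)-L(0)|\le C\varepsilon^4$ uniformly on the prefix, for flow and for GD.

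\textbf{Step 1: identify the limit $C$ for late $\tau$.} By Lemma~\ref{swr:lem:cancellation}, $C=\sum_j\beta_j\chi_j$ with $\beta_W=1$ and nonwinner coefficients of size $O(K^2)$. Let $\Pi_3$ denote projection onto exact Hermite degree three. Since $\chi_j=\tfrac14P_j^2V_j$ is a cubic polynomial, $C$ is a polynomial of degree at most three. Hence $C-\Pi_3C\in\swrQ$, and that part can be removed with the quadratic witnesses $Q_\varepsilon$. It remains to show that $\Pi_3C$, suitably normalized, is close to $h_3(u^\top x)$. The winner's cubic part is $\tfrac14\,\mathrm{Sym}(w_W^{\otimes2}\otimes z_W)$ contracted with $\operatorname{He}_3$, of Frobenius size $\asymp K^2$. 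Lemma~\ref{swr:lem:tracking}(3) gives $\|U^\top w_W\|,\|U^\top z_W\|\ge c'K$ with $w_\perp,z_\perp$ frozen, so this tensor equals $\tfrac14(u^\top w_W)^2(u^\top z_W)\,u^{\otimes3}+O(K)$. The nonwinner cubic terms are a problem, since they carry coefficients $O(K^2)$ times bounded $\chi_j$, which is also order $K^2$. \emph{This is the main obstacle.} A crude bound does not make the nonwinner contribution subdominant.

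\textbf{Step 2: handle the nonwinner cubic contamination.} As a first attempt, I would rescale and use the growth rates. The winner's cubic coefficient is $\asymp K^3$, not $K^2$, because $(u^\top w_W)^2(u^\top z_W)\ge c'^3K^3$. The nonwinner/intercept coefficients in Lemma~\ref{swr:lem:cancellation} are only $O(K^2)$, from $\|q_W\|_2=O(K^2)$, and they multiply bounded $\chi_j$. So $\Pi_3C=\tfrac14(u^\top w_W)^2(u^\top z_W)\sqrt6\,h_3(u^\top x)+O(K^2)$. Dividing by the leading scalar gives $\|\tilde C-h_3(u^\top x)\|_2=O(K^{-1})$. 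Then choose $\tau_1$ close enough to $\tau_*$ that this is below $\delta'/4$; this uses Lemma~\ref{swr:lem:tracking}(2), $K\to\infty$.

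\textbf{Step 3: assemble and pass $\varepsilon\to0$.} With $\lambda=A/(\text{leading scalar})$, take the readout
\[
 \swrE[y]+\lambda\bigl(C_\varepsilon-Q_\varepsilon(C-\Pi_3C)\bigr).
\]
It lies in the intercept-feature span: $C_\varepsilon$ is an $\varepsilon^{-3}$-scaled combination of features and intercept, and $Q_\varepsilon$ is an $\varepsilon^{-2}$-scaled one. These coefficients may diverge, which is allowed for the unrestricted refit \eqref{swr:eq:refit}. As $\varepsilon\to0$ at fixed $\tau_1$, the readout converges in $L^2$ to $\swrE[y]+\lambda\Pi_3C$, which is within $|A|\,O(K^{-1})$ of $\swrE[y]+Ah_3(u^\top x)$. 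The residual is therefore $F_{\ge4}+o(1)$, so $\swrR\le(\|F_{\ge4}\|_2+\delta'')^2\le\|F_{\ge4}\|_2^2+\delta$ for small $\varepsilon$. The GD case uses the same argument at rounded times. The escape probability is quoted from Lemma~\ref{swr:lem:tracking}.
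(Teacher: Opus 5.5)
Your proposal is correct and is essentially the paper's proof. The paper likewise compares $|\gamma|=\tfrac14|\pi^2\nu|\gtrsim K^3$ against the $O(K^2)$ winner cross terms and nonwinner contamination to get $\|C/\gamma-s^3\|_2=O(K^{-1})$, fixes a late $\tau_1$ before shrinking $\varepsilon$, and uses the witness $\frac{1}{\sqrt6}\bigl(C_\varepsilon/\gamma-3Q_\varepsilon(s)\bigr)$ where you use $C_\varepsilon-Q_\varepsilon(C-\Pi_3C)$.

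The remaining differences are cosmetic. In Step~1 the winner's cubic tensor actually has Frobenius size $\asymp K^3$, with an $O(K^2)$ (not $O(K)$) deviation from its $u^{\otimes3}$ term, so your ``main obstacle'' is an artifact that your Step~2 count already removes. Also, where you use the triangle inequality, the paper uses exact orthogonality of $F_{\ge4}(u^\top x)$ to all polynomials of degree at most three, which gives the sharper limit $\|F_{\ge4}\|_2^2+O(K^{-2})$.
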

\begin{proof}
Put $s=u^\top x$, $\pi=u^\top w_W$, and $\nu=u^\top z_W$ in the
comparator. Its winner has
$P_W=\pi s+P_\perp$, $V_W=\nu s+V_\perp$, where the two affine
remainders are fixed. Lemma~\ref{swr:lem:tracking} gives
$|\pi|,|\nu|\ge c'K$, so
\[
 \chi_W=\gamma s^3+O_{L^2}(K^2),\qquad
 \gamma=\tfrac14\pi^2\nu,\qquad |\gamma|\ge c''K^3.
\]
The nonwinner $\chi_j$ stay bounded to $\tau_*$ and their cancellation
coefficients are $O(K^2)$. Hence the limit $C$ in
Lemma~\ref{swr:lem:cancellation} satisfies
$\|C/\gamma-s^3\|_2=O(K^{-1})$.
At this fixed comparator time use the actual feature-span witness
\[
 H_\varepsilon
  =\frac1{\sqrt6}\left(C_\varepsilon/\gamma
                        -3Q_\varepsilon(s)\right).
\]
At this fixed $\tau$, it converges in $L^2$ to
\[
 H_\tau=\frac1{\sqrt6}(C/\gamma-3s),\qquad
 \|H_\tau-h_3(s)\|_2=O(K^{-1}).
\]
Every $\chi_j$ is a polynomial of degree at most three, so $H_\tau$
is too. Conditioning such a polynomial on $s$ leaves a polynomial of
degree at most three. Hence $F_{\ge4}(s)\perp h_3(s)-H_\tau$ in
$L^2(\gamma_d)$, including transverse terms in $H_\tau$. For the
allowed refit $\swrE[y]+AH_\varepsilon$, norm continuity therefore gives
\begin{equation}\label{swr:eq:limiting-orthogonality}
 \begin{aligned}
 \lim_{\varepsilon\downarrow0}
 \|y-\swrE[y]-AH_\varepsilon\|_2^2
 &=\|F_{\ge4}\|_2^2+A^2\|h_3(s)-H_\tau\|_2^2\\
 &=\|F_{\ge4}\|_2^2+O(K^{-2}).
 \end{aligned}
\end{equation}
This orthogonality concerns the limiting polynomial $H_\tau$.
At positive $\varepsilon$, the SwiGLU witness need not be a polynomial.
Choose $\tau_1$ sufficiently late that the limiting error is strictly
below $\|F_{\ge4}\|_2^2+\delta$, then take $\varepsilon$ small at this
fixed $\tau_1$. This order of choices proves the refit assertion.
Tracking keeps the full comparator segment compact up to that time;
\eqref{swr:eq:loss} proves the plateau bound and
$L(0)=1+O(\varepsilon^4)$. When the initialization assumptions of Theorem~\ref{swb:thm:alignment}
also hold, the same sufficiently late endpoint can certify its
leading-AGOP-direction conclusion.
\end{proof}

\begin{swrproposition}[The lower-width, fixed-time obstruction]
\label{swr:prop:lowerwidth}
Suppose $m\le D-1$ and the teacher is of type three (any rank).
For every \emph{deterministic} $\tau\ge0$, almost surely on
$\{\tau<\tau_*\}$,
\begin{equation}\label{swr:eq:lowerwidth}
 \lim_{\varepsilon\downarrow0}
 \swrR\big(\theta(t_\varepsilon(\tau))\big)=1.
\end{equation}
In fact $0\le1-\swrR=O_\tau(\varepsilon^2)$.
This includes initialization, but asserts neither a bound at all
adaptive times nor a bound for $\tau=\tau(\varepsilon)\uparrow\tau_*$.
\end{swrproposition}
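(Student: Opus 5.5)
The plan is to show that, at a fixed deterministic comparator time and for widths $m\le D-1$, the intercept plus the rescaled features span, to leading order, a subspace of $\swrQ$ with uniformly conditioned Gram matrix. The centered type-three teacher is orthogonal to $\swrQ$, so its projection onto the feature span is only $O(\varepsilon)$.

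\emph{Step 1: generic linear independence at the fixed time.} On $\{\tau<\tau_*\}$, every comparator $X_j$ survives through the deterministic time $\tau$. On the open survival domain $\{T>\tau\}$, the time-$\tau$ flow map of the smooth autonomous field $\nabla\Psi$ is a diffeomorphism onto its image, so it preserves absolute continuity. By independence across neurons, the joint law of $(X_j(\tau))_j$ on this event is therefore absolutely continuous; no conditioning on a random winner is needed, unlike in Lemma~\ref{swr:lem:bank}. Fix an orthonormal basis of $\swrQ$. I would then consider the Gram determinant of $1,q_1,\dots,q_m$ with $q_j=\tfrac12P_jV_j$, which is $m+1\le D$ vectors in $\swrQ$. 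It is a polynomial in the affine parameters, and it is not identically zero: as in Lemma~\ref{swr:lem:bank}, one can realize $m$ distinct nonconstant monomials as products $PV/2$, one per neuron, and these together with $1$ are independent. Hence, almost surely on $\{\tau<\tau_*\}$, the Gram matrix $G_0$ of $(1,q_1(\tau),\dots,q_m(\tau))$ is positive definite.

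\emph{Step 2: transfer to the actual features.} Form $q_j^\varepsilon,\chi_j^\varepsilon$ from the actual rescaled parameters $\varepsilon^{-1}\theta_j(t_\varepsilon(\tau))$. Lemma~\ref{swr:lem:tracking} keeps these parameters in a compact set, gives the tracking bound \eqref{swr:eq:tracking}, and gives the expansion \eqref{swr:eq:expansion}. Define $\widetilde g_j=\varepsilon^{-2}g_j=q_j^\varepsilon+\varepsilon\chi_j^\varepsilon+O_{L^2}(\varepsilon^3)$; this rescaling does not change the refit span. The Gram matrix $G_\varepsilon$ of $(1,\widetilde g_1,\dots,\widetilde g_m)$ then converges to $G_0$, so $\|G_\varepsilon^{-1}\|$ is bounded for all sufficiently small $\varepsilon$.

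\emph{Step 3: projection estimate.} By the intercept choice $\eta_0=\swrE[y]$ and $\operatorname{Var}(y)=1$, we have $\swrR\le1$. Moreover $1-\swrR=\|\Pi(y-\swrE y)\|_2^2$, where $\Pi$ is the orthogonal projector onto the span of the intercept and features. This norm equals $c^\top G_\varepsilon^{-1}c$, where $c=(\langle y-\swrE y,1\rangle,\langle y-\swrE y,\widetilde g_j\rangle)_j$. The first entry is $0$. The type-three hypothesis says $y-\swrE y\perp\swrQ$, so $\langle y-\swrE y,q_j^\varepsilon\rangle=0$ and each remaining entry is $\varepsilon\langle y,\chi_j^\varepsilon\rangle+O(\varepsilon^3)=O_\tau(\varepsilon)$, using $\|y\|_2\le C$ and boundedness of $\chi_j^\varepsilon$. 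Hence $0\le1-\swrR\le\|G_\varepsilon^{-1}\|\,\|c\|^2=O_\tau(\varepsilon^2)$, which gives \eqref{swr:eq:lowerwidth}; for flow and GD alike only the rounded tracking changes.

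The main obstacle is Step~1: nondegeneracy must hold at the evolved time-$\tau$ parameters, not only at initialization. I would handle it through absolute continuity of the flow-map image on the survival domain, which is simpler here than in Lemma~\ref{swr:lem:bank} because $\tau$ is deterministic. This is exactly why the statement excludes random or $\varepsilon$-dependent times $\tau\uparrow\tau_*$, where the winner's $q_W$ blows up and the bound $\|G_\varepsilon^{-1}\|=O(1)$ is lost.
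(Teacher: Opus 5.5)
Your proof is correct and follows the paper's argument essentially step for step. It uses absolute continuity of the time-$\tau$ comparator law via the flow diffeomorphism on the survival domain. It gets almost-sure independence of $1,q_1(\tau),\ldots,q_m(\tau)$ from the nonzero Gram-determinant polynomial witnessed by the monomial construction. It then applies an $O_\tau(\varepsilon)$ perturbation to a well-conditioned family lying in a space orthogonal to $y-\mathbb E[y]$.

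The only difference is cosmetic. The paper bounds the projection through $\|A_\varepsilon-A_0\|=O_\tau(\varepsilon)$ and $\sigma_{\min}(A_0)>0$, whereas you compute $c^\top G_\varepsilon^{-1}c$ explicitly. In that computation the entries should read $\varepsilon\langle y-\mathbb E[y],\chi_j^\varepsilon\rangle$ rather than $\varepsilon\langle y,\chi_j^\varepsilon\rangle$, which does not affect the $O(\varepsilon)$ bound.
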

\begin{proof}
At the fixed $\tau$, conditioning each initialization to survive
past $\tau$ and applying its smooth flow diffeomorphism gives an
absolutely continuous joint law. The $m+1\le D$ functions
$1,q_1(\tau),\ldots,q_m(\tau)$ are linearly independent almost surely:
their Gram determinant is a nonzero polynomial, as witnessed by
distinct members of the monomial construction in
Lemma~\ref{swr:lem:bank}.
Define $A_0:\mathbb R^{m+1}\to L^2$ to have these columns and
$A_\varepsilon$ to have columns $1,\varepsilon^{-2}g_1,\ldots,
\varepsilon^{-2}g_m$ at the actual time. Expansion and tracking give
$\|A_\varepsilon-A_0\|=O_\tau(\varepsilon)$ and
$\sigma=\sigma_{\min}(A_0)>0$.
For $y_0=y-\swrE[y]\perp\swrQ$ and every coefficient vector $v$,
\[
 |\langle y_0,A_\varepsilon v\rangle|
 \le\|y_0\|_2\|A_\varepsilon-A_0\|\,\|v\|,
 \qquad
 \|A_\varepsilon v\|_2
 \ge(\sigma-\|A_\varepsilon-A_0\|)\|v\|.
\]
The norm of the orthogonal projection of $y_0$ onto the actual
feature span is therefore $O_\tau(\varepsilon)$, and its square is
$1-\swrR$. Countably many preselected times may be intersected, but
this argument does not intersect an uncountable family of times.
\end{proof}

\begin{swrproposition}[An initial-risk statement at the boundary width]
\label{swr:prop:initial}
Assume $d\ge2$, $r=1$, and $m=D$. At initialization put
$q_j=\tfrac12P_jV_j$ and $\chi_j=\tfrac14P_j^2V_j$ using the unscaled
Gaussian parameters. Almost surely $1,q_1,\ldots,q_{D-1}$ is a basis
of $\swrQ$. Let the unique coefficients satisfy
\[
 \beta_D=1,\qquad \beta_0+\sum_{j=1}^D\beta_jq_j=0,
 \qquad H_0=(I-\swrP)\sum_{j=1}^D\beta_j\chi_j.
\]
Almost surely $H_0\ne0$ and $H_0$ is not a scalar multiple of
$h_3(u^\top x)$. For the rank-one teacher in
Theorem~\ref{swr:thm:rankone},
\begin{equation}\label{swr:eq:initialrisk}
 R_0:=\lim_{\varepsilon\downarrow0}\swrR(\theta(0))
 =1-A^2\frac{\langle h_3(u^\top x),H_0\rangle^2}{\|H_0\|_2^2}
 >\|F_{\ge4}\|_2^2.
\end{equation}
Thus, on $E_{\rm esc}$ and at this width, the endpoint theorem
does imply a strictly positive, initialization-dependent asymptotic
initial-to-endpoint improvement. No deterministic positive lower
bound on that improvement is asserted.
\end{swrproposition}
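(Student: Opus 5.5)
The plan has four steps: expand the features at $\varepsilon\to0$, identify the limiting span, compute the limiting refit risk as a projection, and prove the two genericity claims about $H_0$.

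\emph{Expansion.} At initialization the features are $g_j=\varepsilon^2 q_j+\varepsilon^3\chi_j+O_{L^2}(\varepsilon^5)$ by \eqref{swr:eq:expansion}, with $q_j,\chi_j$ built from the unscaled $\vartheta_j$. The basis claim for $1,q_1,\ldots,q_{D-1}$ follows as in Lemma~\ref{swr:lem:bank}. The determinant is a polynomial in the Gaussian parameters that is not identically zero, by the monomial witness, so it is nonzero almost surely. Consequently $\beta$ is unique, and $\beta_0+\sum_j\beta_jq_j=0$ with $\beta_D=1$.

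\emph{Limiting span.} Consider the actual span $\mathcal S_\varepsilon=\operatorname{span}\{1,g_1,\ldots,g_D\}$. I will show that it converges, as a $(D+1)$-dimensional subspace, to $\swrQ\oplus\operatorname{span}\{H_0\}$. The functions $1$ and $\varepsilon^{-2}g_j$ for $j<D$ converge to a basis of $\swrQ$. The remaining direction is the cancelled combination
\[
C_\varepsilon=\varepsilon^{-3}\Bigl[\varepsilon^2\beta_0+\sum_j\beta_jg_j\Bigr],
\]
which converges in $L^2$ to $\sum_j\beta_j\chi_j$. This is exactly the argument of Lemma~\ref{swr:lem:cancellation}, but at $\tau=0$ it needs no tracking. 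Subtracting its $\swrQ$-component, approximated within the span, leaves $H_0$. Once $H_0\ne0$ is known, the Gram matrix of the rescaled basis converges to a nonsingular limit. Hence the orthogonal projector onto $\mathcal S_\varepsilon$ converges in operator norm to the projector onto $\swrQ\oplus\operatorname{span}H_0$. This uses the same singular-value perturbation as in Proposition~\ref{swr:prop:lowerwidth}.

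\emph{Risk formula.} The refit risk is $\swrR=\|y\|^2-\|\Pi_{\mathcal S_\varepsilon}y\|^2$. In the limit, write $y=\swrE[y]+y_0$ with $y_0\perp\swrQ$ and $\operatorname{Var}(y)=1$. Since $H_0\perp\swrQ$, the limiting projection energy is $\swrE[y]^2+\langle y_0,H_0\rangle^2/\|H_0\|^2$. Next, $H_0$ is a polynomial of degree at most three orthogonal to $\swrQ$, so it is a pure third-chaos element. Therefore only the $Ah_3(u^\top x)$ part of $y_0$ correlates with it: $F_{\ge4}$ lies in degrees four and higher, and the rank-one teacher has no other third-chaos component. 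This gives \eqref{swr:eq:initialrisk}. The strict inequality then holds exactly when $H_0$ is not parallel to $h_3(u^\top x)$, by Cauchy--Schwarz together with $A^2+\|F_{\ge4}\|^2=1$.

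\emph{Genericity of $H_0$.} This is the main obstacle: showing $H_0\ne0$ and $H_0\notparallel h_3(u^\top x)$ almost surely. The third-chaos projection of $\chi_j$ is the symmetric tensor $\tfrac14 w_j\otimes w_j\otimes z_j$ symmetrized, so $H_0$ corresponds to the tensor $\sum_j\beta_j\operatorname{Sym}(w_j^{\otimes2}\otimes z_j)$. The $\beta_j$ are rational functions of the parameters with a nonvanishing denominator. Non-parallelism to $u^{\otimes3}$ is expressed by the vanishing of all $2\times2$ minors of the pair $(\mathrm{vec}\,H_0,\mathrm{vec}\,u^{\otimes3})$. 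After clearing denominators this becomes a polynomial condition, so the failure set is Lebesgue-null unless the polynomial vanishes identically. To rule that out I will exhibit one parameter point. First fix the first $D-1$ neurons to realize the monomial basis. Then choose neuron $D$ so that $q_D$ equals a chosen monomial, for instance $x_1x_2/2$ with $d\ge2$ and $x_1,x_2$ orthonormal coordinates not both equal to $u$. The cancelled combination then has a cubic remainder involving $x_1^2x_2$-type terms, from neuron $D$ minus its matched basis neuron, which is neither zero nor proportional to $h_3(u^\top x)$. The delicate point is verifying that the matched basis neuron's own cubic term does not cancel this remainder. I would handle it by perturbing the bias of neuron $D$, since biases enter $\chi$ through $P^2V$ but change $q$ only by lower-degree terms that the basis absorbs. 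This check uses $d\ge2$ essentially.
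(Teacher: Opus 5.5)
Your expansion, span-convergence and projection steps are the paper's argument, and so is the reduction of genericity to exhibiting a single witness point. The gap is in that witness, which you rightly call the crux. The repair you propose, perturbing a bias of neuron $D$, provably cannot work.

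Since $(I-\Pi_{\mathcal Q})$ annihilates every polynomial of degree at most two, $H_0=(I-\Pi_{\mathcal Q})\sum_j\beta_j\tfrac14(w_j^\top x)^2(z_j^\top x)$. Biases therefore affect $H_0$ only through the $\beta_j$. A bias change in neuron $D$ alters $q_D$ by a polynomial of degree at most one. That change is absorbed by $\beta_0$ and by the neurons realizing linear monomials. But $\tfrac12PV=x_i$ with affine $P,V$ forces $w=0$ or $z=0$, so those neurons have zero cubic part. Hence, in a monomial-basis witness, $H_0$ does not depend on neuron $D$'s biases at all. If neuron $D$ reproduces the matched neuron's factors (the cancellation you worry about), then $H_0=0$, and no bias perturbation changes that.

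The missing idea is that $q=\tfrac12PV$ does not determine the cubic part of $\chi=\tfrac14P^2V$. The rescaling $(P,V)\mapsto(tP,V/t)$ fixes $q$ and multiplies $\chi$ by $t$; swapping linear $P$ and $V$ also fixes $q$. The paper's witness uses exactly this. With $u=e_1$, the basis realizes $x_2^2$ by $P=x_2$, $V=2x_2$, and neuron $D$ takes $P_D=2x_2$, $V_D=x_2$. Only these two $\beta$'s are nonzero, and $H_0=\operatorname{He}_3(x_2)/2$, which is nonzero and orthogonal to $h_3(x_1)$.

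Your choice works the same way if the matched neuron realizes $x_1x_2/2$ as $P=x_1$, $V=x_2$ and neuron $D$ uses $P_D=tx_1$, $V_D=x_2/t$ with $t\ne1$. Then $H_0=\tfrac{t-1}{4}(x_1^2x_2-x_2)$. Its symmetric tensor has zero $(1,1,1)$ and $(2,2,2)$ entries but a nonzero $(1,1,2)$ entry, so it is not a multiple of any $v^{\otimes3}$. No condition relating $x_1,x_2$ to $u$ is needed.

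Finally, your plan omits the statement's last assertion. Set $g=R_0-\|F_{\ge4}\|_2^2>0$. Apply Theorem~\ref{swr:thm:rankone} with $\delta=g/4$, and the initial-risk convergence with tolerance $g/4$. This gives an initial-to-endpoint improvement of at least $g/2$ on $E_{\rm esc}$ for all sufficiently small $\varepsilon$.
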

\begin{proof}
The basis assertion follows from the same determinant polynomial.
On its nonzero set, the $\beta_j$ and the Hermite coefficients of
$H_0$ are rational functions of the initial affine parameters.
Both exceptional conditions $H_0=0$ and
$H_0\in\operatorname{span}\{h_3(u^\top x)\}$ are algebraic after
multiplying by the basis determinant. They are proper conditions:
rotate coordinates so that $u=e_1$ and realize
$q_1,\ldots,q_{D-1}$ as all nonconstant monomials, with
$q=x_2^2$ realized by $P=x_2,V=2x_2$. Realize the last feature by
$P_D=2x_2,V_D=x_2$, so $q_D=x_2^2$ also. Their cubic difference is
$x_2^3/2$, hence $H_0=\swrHe_3(x_2)/2$, which is nonzero and
orthogonal to $h_3(x_1)$. A nonzero polynomial's zero set is
Lebesgue null, and the Gaussian initialization has a density.

The $D$ actual functions $1,\varepsilon^{-2}g_1,\ldots,
\varepsilon^{-2}g_{D-1}$ converge to a basis of $\swrQ$. The extra
function
\[
 \varepsilon^{-3}\left[\varepsilon^2\beta_0+
                       \sum_{j=1}^D\beta_jg_j\right]
 \longrightarrow C_0:=\sum_{j=1}^D\beta_j\chi_j
\]
converges in $L^2$. These functions span the entire actual refit
space, since $\beta_D=1$. Their limiting Gram matrix is positive
definite because $(I-\swrP)C_0=H_0\ne0$. Inverting their Gram
matrices shows convergence of the orthogonal projectors to the
projector onto $\swrQ\oplus\operatorname{span}\{H_0\}$.
Here $H_0$ lies in the third Hermite chaos. The teacher's higher
Hermite remainder and its centered lower-degree part are therefore
orthogonal to that space except for the displayed cubic projection.
This proves the equality in \eqref{swr:eq:initialrisk}; strict
Cauchy--Schwarz and $A\ne0$ prove the inequality.

For clarity, let $g=R_0-\|F_{\ge4}\|_2^2>0$ on a realized escape
sample. Choose the endpoint theorem with $\delta=g/4$. Its endpoint
risk is at most the tail plus $g/4$ for all small $\varepsilon$,
while convergence of the initial risk makes it at least
$R_0-g/4$. Their difference is at least $g/2$.
\end{proof}

\begin{swrcorollary}[A probability bound for the boundary-width improvement]
\label{swr:cor:initialprobability}
In Proposition~\ref{swr:prop:initial}, assume additionally that the
initialization law is invariant under simultaneous spatial rotations
of all $w_j,z_j$. This holds for independent isotropic Gaussian inner
blocks, with any fixed positive head and inner variances.
For every $q\in(0,1)$ and $\delta>0$, there is an event of probability
at least
\begin{equation}\label{swr:eq:initialprobability}
 \left[1-2^{-D}-\frac{3}{q\,d(d+2)}\right]_+
\end{equation}
on which a fixed random $\tau_1<\tau_*$ can be chosen so that,
for flow or any fixed-step GD and all sufficiently small
$\varepsilon$,
\begin{equation}\label{swr:eq:initialgain}
 \swrR(\theta(0))-
 \swrR\big(\theta(t_\varepsilon(\tau_1))\big)
 \ge A^2(1-q)-\delta,
 \qquad
 \sup_{t\le t_\varepsilon(\tau_1)}|L(t)-L(0)|=O(\varepsilon^4).
\end{equation}
The time and initialization cutoff are sample dependent, and the
probability bound is informative only when its right-hand side is
positive. It is an asymptotic statement at $m=D$, not a guarantee
at a prescribed finite initialization scale.
\end{swrcorollary}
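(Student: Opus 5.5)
The plan is to combine Proposition~\ref{swr:prop:initial} with Theorem~\ref{swr:thm:rankone}, and to turn the qualitative inequality $R_0>\|F_{\ge4}\|_2^2$ into a quantitative one by using rotational invariance. By \eqref{swr:eq:initialrisk}, $R_0=1-A^2\rho^2$, where $\rho^2=\langle h_3(u^\top x),H_0\rangle^2/\|H_0\|_2^2$. The endpoint risk is at most $\|F_{\ge4}\|_2^2+\delta/2=1-A^2+\delta/2$. Hence the gain is at least $A^2(1-\rho^2)-\delta$ whenever the initial risk at small $\varepsilon$ is within $\delta/2$ of $R_0$. It therefore suffices to show $\Pr(\rho^2>q)\le 3/(q\,d(d+2))$ and then intersect with the escape event, whose failure probability is at most $2^{-D}$ when $m=D$. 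The two events are combined by Lemma~\ref{lem:common-event} with a random cutoff. No independence is needed, and the limit-inferior form accounts for the sample-dependent threshold.

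The key step is the law of $\rho^2$. The function $H_0$ is built only from the initialization and is a nonzero element of the third Hermite chaos, so write $H_0=\langle T,\operatorname{He}_3(x)\rangle$ for a symmetric tensor $T\neq0$. Then $\rho^2=\langle T,u^{\otimes3}\rangle^2/\|T\|_F^2$, up to the normalization $h_3=\operatorname{He}_3/\sqrt6$ and $\|\operatorname{He}_3\text{-chaos element}\|^2=6\|T\|_F^2$; equivalently, $\rho^2$ is the squared cosine between $T$ and the unit tensor $u^{\otimes3}$. Simultaneous rotation of all $w_j,z_j$ by $O\in O(d)$ maps $T$ to $O\cdot T$, because the $\beta_j$ are defined by rotation-equivariant linear conditions. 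Therefore $\rho^2$ has the same law as $\langle \hat T,(O^\top u)^{\otimes3}\rangle^2$ with $O$ Haar-distributed and independent of $\hat T=T/\|T\|_F$, that is, with $u$ replaced by a uniform point $e$ on the sphere. Conditioning on $\hat T$ gives
\[
\mathbb E_e\langle\hat T,e^{\otimes3}\rangle^2=\langle\hat T\otimes\hat T,\mathbb E[e^{\otimes6}]\rangle .
\]
Evaluate this with the sixth spherical moment $\mathbb E[e^{\otimes6}]=\mathrm{Sym}(I^{\otimes3})\cdot 15/(d(d+2)(d+4))$, where $\mathrm{Sym}(I^{\otimes3})$ sums over the 15 pairings. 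For a symmetric unit tensor, the pairings either give $\|\hat T\|_F^2=1$ (the 6 pairings matching the two copies) or give contractions of the trace vector $\operatorname{tr}\hat T$, whose squared norm is at most $d\|\hat T\|_F^2$ by Cauchy--Schwarz. This bounds the conditional mean by roughly $6(1+\tfrac{3}{2}\cdot\ldots)/(d(d+2)(d+4))$, which is at most $3/(d(d+2))$. Markov's inequality then gives $\Pr(\rho^2>q)\le 3/(q\,d(d+2))$. Any slack in the constant should go the right way, since the trace-part contribution is at most $9d/(d(d+2)(d+4))$ and the remaining term is $6/(d(d+2)(d+4))$.

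The remaining steps are routine. Proposition~\ref{swr:prop:initial} guarantees $H_0\ne0$ almost surely. Theorem~\ref{swr:thm:rankone} supplies $\tau_1$ and the plateau bound $O(\varepsilon^4)$ for flow and GD simultaneously, and it applies because $m=D$ satisfies its requirement $m\ge D$. The convergence $\swrR(\theta(0))\to R_0$ supplies the initial side. I expect the tensor-moment bound to be the main obstacle, specifically checking the exact pairing count so that the constant is $3$ and not something larger. If the direct count fails, the first fallback is to decompose $\hat T$ into its traceless (harmonic) part and its trace part and bound each projection separately.
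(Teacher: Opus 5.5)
Your architecture matches the paper's. Equivariance of the cancellation identity and of the third-chaos projection makes the unoriented line of $H_0$ rotation invariant, so averaging over an independent Haar rotation reduces $\mathbb E[\rho^2]$ to a sixth spherical moment. Markov then gives the tail, and Lemma~\ref{lem:common-event} combines $\{\rho^2\le q\}$ with $E_{\rm esc}$; only its union-bound part is needed, since $\rho^2$ does not depend on $\varepsilon$, so no limit-inferior form is involved. Your $\delta/2+\delta/2$ bookkeeping with $\|F_{\ge4}\|_2^2=1-A^2$ is also right.

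The gap is in the step you flagged as the main obstacle. Your pairing count is correct: $\mathbb E_\omega[\hat T[\omega,\omega,\omega]^2]=(6+9\|\operatorname{tr}\hat T\|^2)/(d(d+2)(d+4))$. Here the sixth moment is $\frac{1}{d(d+2)(d+4)}$ times the sum over the $15$ pairings, with no extra factor $15$. However, the Cauchy--Schwarz bound $\|\operatorname{tr}\hat T\|^2\le d$ does not deliver the constant. It gives $(9d+6)/(d(d+2)(d+4))$, whereas $3/(d(d+2))=(3d+12)/(d(d+2)(d+4))$, and $9d+6\le3d+12$ holds only for $d\le1$. For $d=2$ your bound is $1/2$ against the target $3/8$, and for large $d$ it is about $9/(d(d+2))$, three times too big. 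So the slack goes the wrong way, and the stated probability $1-2^{-D}-3/(q\,d(d+2))$ does not follow from your argument.

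The missing ingredient is the sharp norm of the trace map on \emph{symmetric} tensors. Cauchy--Schwarz ignores that each entry $T_{ijj}$ with $i\ne j$ appears three times in $\|T\|_F^2$. The paper takes $L:T\mapsto(\sum_jT_{ijj})_i$, whose adjoint on symmetric tensors is $(L^*v)_{ijk}=(v_i\delta_{jk}+v_j\delta_{ik}+v_k\delta_{ij})/3$. Then $LL^*=\frac{d+2}{3}I_d$, hence $\|\operatorname{tr}T\|^2\le\frac{d+2}{3}\|T\|_F^2$. Since $6+9\cdot\frac{d+2}{3}=3(d+4)$, this gives exactly $3/(d(d+2))$.

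This bound is attained by pure-trace tensors $T\propto L^*v$. Moreover, $H_0$ genuinely has a trace component; the paper's witness $H_0=\operatorname{He}_3(x_2)/2$ has $T=e_2^{\otimes3}/2$ with nonzero trace. So you cannot discard that component. Your harmonic-decomposition fallback would succeed only if it reproduces this same constant $\frac{d+2}{3}$ for the trace part.
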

\begin{proof}
The random unoriented line spanned by $H_0$ is rotation invariant:
rotations act on the polynomial bank, its cancellation identity,
and its Hermite projection equivariantly. Normalize it as
$H_0/\|H_0\|_2=\swrHe_3[T]/\sqrt6$ for a symmetric tensor $T$
with Frobenius norm one; its sign will not matter. Here
$\swrHe_3[T]=\sum_{ijk}T_{ijk}\swrHe_3(x)_{ijk}$, and the Hermite
isometry gives
$\langle h_3(u^\top x),H_0/\|H_0\|_2\rangle^2=T[u,u,u]^2$.
Write this squared correlation as $Z$.
For every deterministic symmetric $T$ with $\|T\|_F=1$, the sixth
sphere moment, or Gaussian Wick pairing followed by a radial
integration, gives, with $S^{d-1}$ the Euclidean unit sphere,
\[
 \swrE_{\omega\sim\mathrm{Unif}(S^{d-1})}\left[T[\omega,\omega,\omega]^2\right]
 =\frac{6+9\|\operatorname{tr}T\|^2}{d(d+2)(d+4)}.
\]
There are six pairings connecting all three indices of one tensor
to the other, giving $6\|T\|_F^2$, and nine pairings with an
internal contraction in each tensor, giving
$9\|\operatorname{tr}T\|^2$. The trace map
$L:T\mapsto(\sum_jT_{ijj})_i$ has adjoint
\[
 (L^*v)_{ijk}=(v_i\delta_{jk}+v_j\delta_{ik}+v_k\delta_{ij})/3,
 \qquad LL^*=\frac{d+2}{3}I_d.
\]
Therefore $\|\operatorname{tr}T\|^2\le(d+2)/3$ and the displayed
moment is at most $3/[d(d+2)]$. Averaging an independent uniform
rotation and using invariance of the unoriented line gives
$\swrE_{\vartheta}[Z]\le3/[d(d+2)]$. Markov's inequality yields
$\Pr(Z>q)\le3/[q\,d(d+2)]$.
Lemma~\ref{lem:common-event} gives the lower bound
\eqref{swr:eq:initialprobability} for $\{Z\le q\}\cap E_{\rm esc}$.
No rotation law conditional on escape is used. On this intersection the limiting initial risk exceeds
the Hermite tail by at least $A^2(1-q)$. Apply the endpoint theorem
with tolerance $\delta/2$ and the initial convergence with tolerance
$\delta/2$ to obtain \eqref{swr:eq:initialgain}.
\end{proof}

\begin{swrproposition}[A one-winner projection bound at general rank]
\label{swr:prop:generalrank}
Let $1\le r\le d$, the teacher be of type three, and $m\ge D$.
Almost surely on $E_{\rm esc}$, for every sufficiently late fixed
$\tau<\tau_*$ define unit vectors in the teacher subspace by
\[
 \mu=\frac{UU^\top w_W}{\|U^\top w_W\|},\qquad
 \nu=\frac{UU^\top z_W}{\|U^\top z_W\|},\qquad
 \rho=\mu^\top\nu.
\]
There is a finite random constant $C$, independent of these late
times, such that
\begin{equation}\label{swr:eq:generalrank}
 \limsup_{\varepsilon\downarrow0}
 \swrR\big(\theta(t_\varepsilon(\tau))\big)
 \le 1-\frac{T_3[\mu,\mu,\nu]^2}{2+4\rho^2}
          +\frac{C}{K(\tau)}.
\end{equation}
In particular the quotient in this formula is captured teacher
energy, while the whole right-hand side is residual risk.
For some random $c>0$,
$|T_3[\mu,\mu,\nu]|\ge2c/\kappa$ at all sufficiently late times.
This result asserts neither axis entry nor recovery of a full
teacher subspace.
\end{swrproposition}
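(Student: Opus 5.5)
The plan follows the proof of Theorem~\ref{swr:thm:rankone}, with the scalar cubic $s^3$ replaced by the degree-three polynomial $G(x)=(\mu^\top x)^2(\nu^\top x)$. We then compute the teacher energy captured by the third-chaos part of $G$.

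\textbf{Step 1: winner's cubic term.} Fix a realization in the almost-sure subset of $E_{\rm esc}$ on which Lemmas~\ref{swr:lem:tracking}--\ref{swr:lem:cancellation} hold, and a late fixed $\tau\in[\tau_a,\tau_*)$. In the comparator, write the winner's affine forms as
\[
 P_W=\|U^\top w_W\|\,\mu^\top x+P_\perp,\qquad V_W=\|U^\top z_W\|\,\nu^\top x+V_\perp .
\]
The remainders $P_\perp,V_\perp$ involve only the frozen $w_\perp,z_\perp,b,c$. By item~3 of Lemma~\ref{swr:lem:tracking}, both projected norms are at least $c'K$ and at most $K$. Expanding $\chi_W=\tfrac14P_W^2V_W$ gives
\[
 \chi_W=\gamma G+O_{L^2}(K^2),\qquad \gamma=\tfrac14\|U^\top w_W\|^2\|U^\top z_W\|,\qquad |\gamma|\ge c''K^3 .
\]
Lemma~\ref{swr:lem:cancellation} supplies two facts: the nonwinner cancellation coefficients are $O(K^2)$, and the nonwinner $\chi_j$ stay bounded up to $\tau_*$. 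Hence its limit $C$ satisfies $\|C/\gamma-G\|_2=O(K^{-1})$, with a constant independent of the late time $\tau$.

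\textbf{Step 2: third-chaos witness.} Decompose $G=G_3+G_{\le1}$ into Hermite chaoses. Here
\[
 G_3=\operatorname{He}_3[S],\qquad S=\operatorname{Sym}(\mu\otimes\mu\otimes\nu),
\]
and $G_{\le1}$ is a linear polynomial, hence lies in $\mathcal Q$. Define the actual-span witness
\[
 H_\varepsilon=C_\varepsilon/\gamma-Q_\varepsilon\bigl(\Pi_{\mathcal Q}(C/\gamma)\bigr).
\]
As $\varepsilon\downarrow0$ it converges in $L^2$ to $H_\tau=(I-\Pi_{\mathcal Q})(C/\gamma)$, and $\|H_\tau-G_3\|_2=O(K^{-1})$. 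The refit uses the intercept $\mathbb E[y]$ plus $\alpha H_\varepsilon$. By norm continuity,
\[
 \limsup_{\varepsilon\downarrow0}\mathcal R\le 1-\frac{\langle y,H_\tau\rangle^2}{\|H_\tau\|_2^2}.
\]
We have $\operatorname{Var}(y)=1$, and $y-\mathbb E[y]\perp\mathcal Q$, so the lower chaos contributes nothing. Replacing $H_\tau$ by $G_3$ changes the captured energy by $O(K^{-1})$, because $\|G_3\|_2$ is bounded below (see Step 3). This yields the term $C/K(\tau)$.

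\textbf{Step 3: Hermite isometry.} We have $\langle y,\operatorname{He}_3[S]\rangle=T_3[S]=T_3[\mu,\mu,\nu]$, since $T_3$ is symmetric. Also $\|\operatorname{He}_3[S]\|_2^2=6\|S\|_F^2$. Expanding the three distinct terms of the symmetrization gives $\|S\|_F^2=(1+2\rho^2)/3$, so $\|G_3\|_2^2=2+4\rho^2\ge2$. This gives~\eqref{swr:eq:generalrank}.

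\textbf{Step 4: lower bound on $T_3[\mu,\mu,\nu]$.} Item~2 of Lemma~\ref{swr:lem:tracking} gives $\Psi(X_W)\ge cK^4$. On the other hand,
\[
 \Psi(X_W)=\tfrac{\kappa}{2}\,a_W\,\|U^\top w_W\|^2\|U^\top z_W\|\,T_3[\mu,\mu,\nu],
\]
and each of the four prefactor norms is at most $K$. Therefore $|T_3[\mu,\mu,\nu]|\ge2c/\kappa$.

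The main obstacle is uniformity, which is bookkept exactly as in Theorem~\ref{swr:thm:rankone}. The constant $C$ must not depend on $\tau$, so only the limiting ($\varepsilon=0$) estimates of Lemma~\ref{swr:lem:cancellation} may enter it. The $\tau$-dependent $O_\tau(\varepsilon)$ errors must be absorbed by first fixing $\tau$ and then taking $\limsup_{\varepsilon\downarrow0}$. The $O(K^2)$ remainders in $\chi_W$ come only from frozen transverse factors, whose size is fixed by the initialization.
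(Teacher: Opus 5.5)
Your proposal is correct and follows the paper's proof essentially step for step: the same winner expansion $\chi_W=\gamma(\mu^\top x)^2(\nu^\top x)+O_{L^2}(K^2)$ with $|\gamma|\ge c''K^3$, the same $O(K^2)$ nonwinner cancellation giving a feature-span limit within $O(K^{-1})$ of the cubic, a third-chaos witness of squared norm $2+4\rho^2$ whose correlation with $y$ is $T_3[\mu,\mu,\nu]$, and the same $\Psi$-based lower bound $|T_3[\mu,\mu,\nu]|\ge 2c/\kappa$. The differences are only cosmetic. You subtract the whole $\Pi_{\mathcal Q}$-component of $C/\gamma$ rather than just the linear Hermite part $\nu^\top x+2\rho\,\mu^\top x$ of the monomial, and you compute the norm via the Frobenius norm of $\operatorname{Sym}(\mu\otimes\mu\otimes\nu)$ rather than the paper's explicit decomposition $\rho\operatorname{He}_3(Z)+\sqrt{1-\rho^2}(Z^2-1)W$.
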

\begin{proof}
Replace the rank-one leading product in the cancellation proof by
\[
 \chi_W=\gamma(\mu^\top x)^2(\nu^\top x)+O_{L^2}(K^2),
 \qquad \gamma=\tfrac14\|U^\top w_W\|^2\|U^\top z_W\|
 \ge c'K^3.
\]
The same bounded nonwinner inverse and $O(K^2)$ coefficients give
a feature-span limit within $O(K^{-1})$ of this cubic monomial.
Use $Q_\varepsilon$ to subtract its linear Hermite part and obtain
a limit within $O(K^{-1})$ of
\[
 H_{\mu,\nu}(x)=(\mu^\top x)^2(\nu^\top x)
                  -\nu^\top x-2\rho\mu^\top x.
\]
This function is in the third Hermite chaos. To compute its norm
without numerical approximation, take independent $Z,W\sim N(0,1)$
and write $\mu^\top x=Z$, $\nu^\top x=\rho Z+\sqrt{1-\rho^2}W$.
Then
\[
 H_{\mu,\nu}=\rho\swrHe_3(Z)
       +\sqrt{1-\rho^2}(Z^2-1)W,
 \qquad \|H_{\mu,\nu}\|_2^2=6\rho^2+2(1-\rho^2)=2+4\rho^2.
\]
Its correlation with $y-\swrE[y]$ is exactly
$T_3[\mu,\mu,\nu]$. Projecting onto its approximating single
feature-span direction proves \eqref{swr:eq:generalrank}; the
denominator is at least two, so the $O(K^{-1})$ perturbation is
uniform over unit $\mu,\nu$. Finally
\[
 cK^4\le\Psi(X_W)
 =\frac\kappa2 a_W\|U^\top w_W\|^2\|U^\top z_W\|
                 T_3[\mu,\mu,\nu],
\]
and the absolute product of the four parameter factors is at most
$K^4$, giving the asserted lower bound.
\end{proof}

\begin{swrremark}[Scope and order of limits]\label{swr:rem:scope}
The threshold $m=D$ counts the intercept: $m+1$ leading quadratic
columns first admit a cancellation when $m=D$, and there are
$m-D+1$ such coefficient directions when the quadratic bank has
full rank. They are combinations of neurons, not independently
available residual neurons. At larger widths further cancellation
layers can improve the initial refit, so Proposition~\ref{swr:prop:initial}
is not a lower bound for arbitrary $m\ge D$.
The witness readout coefficients can be $O_\tau(\varepsilon^{-3})$
and its intercept can be $O_\tau(\varepsilon^{-1})$.
The limit first fixes a late pre-escape comparator time and then
shrinks initialization; neither the lower-width proposition nor
compact-time tracking controls an $\varepsilon$-dependent approach
to escape. The high-Hermite remainder is retained in the rank-one
bound and need not be small. No assertion about a subsequent
fixed-size decrease of the \emph{trained} loss is proved here;
release remains an open transition, with only conditional routes
available. One winner's AGOP
contribution can have rank two, so this one-direction refit argument
does not assign all remaining teacher directions to nonwinners.
\end{swrremark}

\subsection{A joint rank-one consequence}
\label{swj:sec:joint}

The initial-risk calculation gives a concrete simultaneous consequence of
the alignment and refit results. This consequence is restricted to the
boundary width and retains the qualitative initialization cutoff.
Here $d$ is the input dimension, $m$ the student width,
$D=(d+1)(d+2)/2$ the dimension of the degree-at-most-two polynomial
space, and $\varepsilon$ the initialization scale. The scalar teacher
has the form $y=F(u^\top x)$, $\|u\|=1$, with unit variance, vanishing
degree-one and degree-two Hermite coefficients, and nonzero cubic
coefficient $a_3=\mathbb E[F(Z)h_3(Z)]$, $Z\sim N(0,1)$.
The statistic $A_{\mathrm{top}}$ is the minimum squared overlap with
$u$ among unit top AGOP eigenvectors, and $\mathcal R$ is the
unrestricted population refit MSE from \eqref{swr:eq:refit}.
Probabilities below refer to the coupled Gaussian initialization;
$q_{\mathrm A},q_{\mathrm R}$ are headroom thresholds and
$\delta_{\mathrm A},\delta_{\mathrm R}$ the endpoint tolerances.

\begin{corollary}[Simultaneous alignment and refit gains]
\label{swj:cor:joint}
Assume the rank-one, type-three setting, $d\ge2$,
$m=D=(d+1)(d+2)/2$, and the independent isotropic Gaussian initialization
of \eqref{swb:eq:init}. Put $a_3=\langle F,h_3\rangle$ and let
$B_d\sim\operatorname{Beta}(1/2,(d-1)/2)$.
Fix $q_{\mathrm A},q_{\mathrm R}\in(0,1)$ and positive tolerances
$\delta_{\mathrm A},\delta_{\mathrm R}$, with
$q_{\mathrm A}+\delta_{\mathrm A}<1$.
For flow or any fixed-step GD, choose a common plateau endpoint as in the
preceding theorems. The limit inferior, as $\varepsilon\downarrow0$,
of the probability that the plateau conclusions and both inequalities
\begin{align*}
 A_{\mathrm{top}}(\mathrm{endpoint})-A_{\mathrm{top}}(0)
   &\ge 1-\delta_{\mathrm A}-q_{\mathrm A},\\
 \mathcal R(0)-\mathcal R(\mathrm{endpoint})
   &\ge a_3^2(1-q_{\mathrm R})-\delta_{\mathrm R}
\end{align*}
hold is at least
\begin{equation}
 \left[\Pr(B_d\le q_{\mathrm A})-2^{-D}
       -\frac{3}{q_{\mathrm R}d(d+2)}\right]_+.
 \label{swj:eq:jointprobability}
\end{equation}
The time is proportional to $\varepsilon^{-2}$ and the entire-prefix
loss variation is $O(\varepsilon^4)$, with sample-dependent constants.
\end{corollary}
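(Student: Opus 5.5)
The plan is to assemble Corollary~\ref{swj:cor:joint} from three pieces already proved, all evaluated on one coupled trajectory at one common endpoint, and then to combine their probability costs with Lemma~\ref{lem:common-event}. The pieces are: the alignment conclusion of Theorem~\ref{swb:thm:alignment} in its comparator formulation (with $r=1$, $k=4$); the refit endpoint bound of Theorem~\ref{swr:thm:rankone}; and the two initial-headroom statements, Corollary~\ref{swb:cor:headroom} and Corollary~\ref{swr:cor:initialprobability}. The escape event $E_{\rm esc}=\mathcal E$ is the same in all of them, since for type-three teachers the comparator potential $\Psi$ in \eqref{swr:eq:comparator} equals $2\kappa aT_{\rm lead}$ from \eqref{swb:eq:lead}. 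Isotropic Gaussian initialization \eqref{swb:eq:init} satisfies the broader hypotheses of Section~\ref{swr:sec:refit}, and it is also rotation invariant, as Corollary~\ref{swr:cor:initialprobability} requires.

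\emph{Common endpoint.} Fix a realization in $\mathcal E$ with a unique winner. Theorem~\ref{swb:thm:alignment} gives an alignment time $\tau_1^{\rm A}<\tau_*$ with $A_{\rm top}\ge1-\delta_{\rm A}$. Theorem~\ref{swr:thm:rankone}, with tolerance $\delta_{\rm R}/2$, gives a refit time $\tau_1^{\rm R}<\tau_*$. Both proofs show their conclusions hold at every sufficiently late fixed comparator time, because the alignment error is $O(K^{-1})$ and the refit error is $O(K^{-2})$. So I take $\tau_1=\max\{\tau_1^{\rm A},\tau_1^{\rm R}\}$, restricted to rationals so that the choice is measurable. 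I then take the cutoff $c$ to be the minimum of the two $\varepsilon$-cutoffs together with the cutoff for convergence of the initial refit risk, $\mathcal R(\theta(0))\to R_0$, at tolerance $\delta_{\rm R}/2$. The plateau bound $O(\varepsilon^4)$ on the prefix and the time scaling $t_1=\tau_1\varepsilon^{-2}$ come directly from Theorem~\ref{swb:thm:alignment} with $k=4$, and flow and GD are handled simultaneously as there.

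\emph{Headroom events.} I would set $H^{\rm A}_\varepsilon=\{A_{\rm top}(\theta(0))\le q_{\rm A}\}$. Corollary~\ref{swb:cor:headroom} with $r=1$ gives $\Pr((H^{\rm A}_\varepsilon)^c)\le\Pr(B_d>q_{\rm A})$ at every scale. For the refit I use the initialization-only event $\{Z\le q_{\rm R}\}$, where $Z$ is the squared correlation of $h_3(u^\top x)$ with $H_0/\|H_0\|_2$. The Markov bound in Corollary~\ref{swr:cor:initialprobability} gives it failure probability at most $3/(q_{\rm R}d(d+2))$. On that event Proposition~\ref{swr:prop:initial} yields $R_0\ge1-a_3^2q_{\rm R}$, using $A=a_3$ and $A^2+\|F_{\ge4}\|^2=1$. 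Combined with the endpoint bound $\|F_{\ge4}\|^2+\delta_{\rm R}/2$, this gives the refit gain $a_3^2(1-q_{\rm R})-\delta_{\rm R}$. The alignment gain follows directly from $1-\delta_{\rm A}-q_{\rm A}$ on $H^{\rm A}_\varepsilon$.

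\emph{Probability and the main obstacle.} Lemma~\ref{lem:common-event} is applied with $E=\mathcal E$, where $\Pr(\mathcal E^c)\le2^{-D}$ because $m=D$. The $\varepsilon$-dependent event is the alignment headroom, and the $\varepsilon$-independent event is $\{Z\le q_{\rm R}\}$. Summing the failure probabilities gives exactly \eqref{swj:eq:jointprobability}, and the $r_\varepsilon\to0$ term disappears in the limit inferior. No independence between the events is needed, and no rotation law conditional on escape is used.

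The step I expect to need the most care is the common endpoint. The cutoff for Theorem~\ref{swr:thm:rankone} depends on the fixed $\tau$ and is not uniform up to $\tau_*$, so $\tau_1$ has to be fixed from both theorems before any cutoff is taken, and only then may $\varepsilon$ shrink. I also have to check that the late-time monotone improvement claims really hold for every later fixed time, not just for the time each proof selected. Both proofs give this through the $K\to\infty$ error bounds.
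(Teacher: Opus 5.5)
Your proposal is correct and follows essentially the same route as the paper's proof. Both use the escape event $E_{\rm esc}$ and the scale-independent event $\{Z\le q_{\rm R}\}$, pick one late rational comparator time that serves both the alignment and refit margins (tolerance $\delta_{\rm R}/2$) before $\varepsilon$ shrinks, take a common cutoff that includes initial-risk convergence, use the $\varepsilon$-dependent headroom event from Corollary~\ref{swb:cor:headroom}, and finish with Lemma~\ref{lem:common-event}.
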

\begin{proof}
Use $E_1=E_{\rm esc}$ and $E_2=\{Z\le q_{\mathrm R}\}$,
where $Z$ is the scale-independent squared cubic correlation in
Corollary~\ref{swr:cor:initialprobability}. Their failure bounds are
$2^{-D}$ and $3/[q_{\mathrm R}d(d+2)]$.
On their intersection, discard the null exceptional sets for unique
escape and the nonwinner bank. Choose one sufficiently late rational
$\tau_1<\tau_*$ so that the alignment and attained-refit proofs have
strict margins for $\delta_{\mathrm A}$ and $\delta_{\mathrm R}/2$.
These estimates hold throughout a sufficiently late comparator interval,
so the same time works for both. It is fixed before $\varepsilon$ shrinks.

Initial-risk convergence, with error at most $\delta_{\mathrm R}/2$,
and the two endpoint transfers give a common positive measurable cutoff
$c(\vartheta,\delta_{\mathrm A},\delta_{\mathrm R},h)$. Measurability follows
by countable rational time choices and the compact tracking/error bounds
and nonsingular Gram-matrix margins in those proofs. Below this cutoff,
refit improves by at least $a_3^2(1-q_{\mathrm R})-\delta_{\mathrm R}$,
endpoint alignment is at least $1-\delta_{\mathrm A}$, and the full-prefix
plateau bound holds, for flow and the rounded GD endpoint.

At each scale let $H_\varepsilon=\{A_{\mathrm{top}}(0)\le q_{\mathrm A}\}$.
Corollary~\ref{swb:cor:headroom} gives its unconditional failure bound
$\Pr(B_d>q_{\mathrm A})$. Lemma~\ref{lem:common-event} now proves
\eqref{swj:eq:jointprobability}. No stabilization of $H_\varepsilon$
along a coupled draw, independence, or conditional rotation law is needed.
\end{proof}

For example, with a pure $h_3$ teacher, $d=16$, $m=153$,
$q_{\mathrm A}=0.4$, $\delta_{\mathrm A}=0.1$,
$q_{\mathrm R}=0.25$, and $\delta_{\mathrm R}=0.05$, the two gains
are at least $0.5$ and $0.7$, respectively. The exact probability
expression in \eqref{swj:eq:jointprobability} is approximately $0.9519$.
This is an asymptotic probability lower bound: it supplies neither a
practical numerical initialization scale nor a finite-sample guarantee.
The refit is unrestricted, and the original trained loss still remains
near one throughout the certified window. Its subsequent decrease is
a separate question.

\subsection{What is not established by the plateau theorem}
\label{app:swiglu-release}

For a normalized pure cubic teacher $y=h_3(u^Tx)$, where
$x\sim N(0,I_d)$, $\|u\|=1$, and $h_3(s)=(s^3-3s)/\sqrt6$, the small-initialization
comparison suggests that one neuron dominates the first departure from the
plateau. A natural candidate description then follows the population
gradient flow of one aligned SwiGLU neuron,
\[
 f(x)=b_0+\frac{\kappa}{m}a\,
 \operatorname{SiLU}(\rho u^Tx+b)(q u^Tx+c),
\]
where $m$ is the original student width, $\kappa>0$ its output scale,
$a$ the surviving head, $\rho,q$ the scalar gate and value weights
along $u$, and $b,c$ their biases. The intercept $b_0$ is minimized out.
This is a candidate mechanism for
loss release, not a consequence of the preceding compact-time comparison.

A rigorous transfer requires control from a fixed pre-escape comparison
time to an actual parameter norm independent of the initialization scale,
as well as a characterization of the trajectory leaving the degenerate
origin. The compact-time argument stops before that transition. Thus the
long plateau interval proved above should not be interpreted as a theorem
identifying its endpoint with the first substantial loss decrease.

The candidate one-neuron release mechanism described here is restricted
to pure $h_3$. Higher Hermite components of the teacher can affect the
finite-amplitude vector field; there is no claim that the same curve
applies to every teacher covered by the alignment theorem. Similarly,
fixed-step GD at finite amplitude has a discretization error that does not
vanish merely because initialization tends to zero. A continuous-flow
release curve therefore does not establish the corresponding fixed-step
GD limit without an additional argument. The numerical evidence is
reported with these distinctions in place.

\subsection{Population-quadrature experiments with a SwiGLU student}
\label{swexp:protocol}

These experiments examine finite-initialization trajectories; they do not establish a finite-$\epsilon$ probability bound or a release theorem. We use the student and profiled intercept above, $x\sim\mathcal N(0,I_d)$, $\kappa=1$, and the teacher basis $U=(e_1,\ldots,e_r)$, where $e_i$ are coordinate unit vectors. Here $d$ is the input dimension, $m$ the student width, $r$ the teacher rank, $\kappa$ the output scale, and $\epsilon=\varepsilon$ the initialization scale. The independent Gaussian initializations have coordinate standard deviation $\epsilon/\sqrt{d+1}$ in each inner block and standard deviation $\epsilon$ in the output head. The intercept is minimized analytically at each state; the other three blocks share one learning rate. All expectations and variances in this protocol are population quantities over $x$; $n$ counts optimizer updates and $\theta$ collects the trained neuron parameters.

\paragraph{Reference implementation.}
Here \emph{reference} denotes the archived earlier population-quadrature
implementation and its saved diagnostics, included with the reproduction
code. It is an implementation of the same student model, not an external
published benchmark. Reference reproductions repeat its specified
configurations; new teacher, seed and rank settings are identified
separately. Section~\ref{swexp:outcomes} reports the diagnostic comparisons,
including the width-256 checkpoint replay and the retained validation failures.

\paragraph{Teacher normalization and clock.}
The experimental implementation uses $y=\gamma\sum_{i=1}^r c_i\phi(x_i)$ with $\mathbb E[y^2]=1$, where $\phi$ is the listed teacher link, $c=(c_1,\ldots,c_r)$ its coefficient vector, and $\gamma>0$ the normalizing scalar. We report $\ell=L/V_y$, where $L=\operatorname{Var}(y-\widetilde f)$ is the profiled training MSE and $V_y=\operatorname{Var}(y)$, so the intercept-only baseline is one. This differs from rescaling the teacher itself to have unit variance: for rank-one ReLU, $V_y=1-1/\pi$, whereas all centered teachers here have $V_y=1$. The force is the gradient of the unnormalized $L$. Recorded time is the mean-field clock $t=\sum_n\Delta t_n$ for $\theta_{n+1}=\theta_n-m\Delta t_n\nabla_\theta L$. An ordinary parameter-space learning rate is therefore $\eta=m\Delta t$. Adaptive trajectories use explicit Euler steps with a maximum relative parameter change rule and are labeled as flow approximations. Separate fixed-$\Delta t$ runs test ordinary gradient descent; adaptive trajectories are not evidence for a fixed-step guarantee.

\paragraph{Loss-selected plateau and seed policy.}
For $\delta=10^{-3}$, define the measured plateau as the maximal initial prefix of recorded optimizer states on which $|\ell(t)-\ell(0)|\leq\delta$. The entire update loss history is used, before examining alignment or refit. We also report the stricter $\delta=10^{-4}$ prefix for comparison with the earlier reference diagnostics. Alignment and refit at a prefix endpoint are evaluated at the last saved diagnostic checkpoint inside it; the audit retains the exact update-grid endpoint, diagnostic time and lag. Subsequent crossing times are first observed update times, without interpolation. Seed~0 is the fixed representative; all declared seeds and unsuccessful runs are retained in the tables and supplementary panels.

\paragraph{Population diagnostics.}
The AGOP $M=\mathbb E[\nabla_x f\nabla_x f^\top]$ includes all neuron cross terms; $f=b_0+\widetilde f$ is the trained predictor with profiled intercept. We distinguish $A_{\rm top}=\|U^\top e_1(M)\|^2$ from $A_{\min}=\sigma_{\min}^2(U^\top Q_r(M))$. Here $e_1(M)$ is a unit leading eigenvector, $Q_r(M)$ has orthonormal columns spanning the selected top-$r$ eigenspace, and $\sigma_{\min}$ is the smallest singular value. The first measures whether one leading direction lies in the teacher subspace; the second measures its weakest recovered direction. At rank one they agree. At higher rank we display both, and do not infer full-subspace recovery from $A_{\rm top}$.

\paragraph{Integration and numerical refit.}
The population integrals are evaluated by deterministic quadrature, with no finite training sample. Two-projection feature pairs use tensor Gauss--Hermite rules, diagonal terms use a one-dimensional rule, and teacher terms condition on the relevant teacher coordinate. That coordinate is integrated on $[-10,10]$ using piecewise Gauss--Legendre rules weighted by the Gaussian density, split at $-6,-4,-2,0,2,4,6$ and link kinks. These are numerical approximations, not certified exact integrals. The force uses Gaussian Stein identities before quadrature; at finite order it need not be the exact derivative of the quadrature-discretized loss scalar. Higher-order checks and Gram-matrix diagnostics are retained with the raw records.

For the frozen centered feature vector $\bar g$, with
$\bar g_j=g_j-\mathbb E[g_j]$, write its Gram matrix as
$G=\mathbb E[\bar g\bar g^\top]$, its teacher covariance as
$b=\mathbb E[(y-\mathbb E[y])\bar g]$, and a candidate head as
$w\in\mathbb R^m$. This $w$ is a readout vector, distinct from the
spatial gate weights $w_j$. Its normalized prediction risk is
\[
 R(w)=\frac{V_y-2b^\top w+w^\top G w}{V_y}.
\]
The unrestricted population refit is $\inf_w R(w)$, without a coefficient budget. Numerical ridge or eigencutoff solves approximate this infimum; their attained risks and cutoff sensitivity are reported explicitly. In particular, the reference diagnostic $[V_y-b^\top(G+\lambda I)^{-1}b]/V_y$, where $\lambda$ is the ridge parameter and $I=I_m$, includes the ridge penalty and is not the prediction risk. We retain it for comparison with the reference, but do not label it an exact unrestricted refit. This distinction is most relevant for the $d=16,m=256$ cancellation experiment.

\paragraph{Loss stopping thresholds and observation ceilings.}
The loss prefix above selects a diagnostic window; it does not stop training.
Training stops at the first evaluated update for which
$\ell_n\le\ell_{\rm stop}$, $t_n\ge t_{\max}$, or $n\ge N_{\max}$,
or earlier on a numerical failure. The configuration-specific thresholds are
\[
\ell_{\rm stop}=\begin{cases}
0.5,&\text{rank-one $h_3$, $m=64$, including fixed-step controls},\\
0.7,&\text{rank-one $h_3+0.3h_2$, ReLU, and tanh},\\
0.6,&\text{rank-two $h_3$, $d=16,m=64$, either coefficient vector},\\
0.9,&\text{the two $d=4,m=6$ smokes and the $m=256$ run}.
\end{cases}
\]
The adaptive rank-one, smoke and width-256 runs use
$(t_{\max},N_{\max})=(10^6,20{,}000)$; the fixed-step controls use
$(250,20{,}000)$; the six rank-two scientific runs use the common
analysis ceilings $(500,3000)$. Adaptive steps cap the relative
parameter change at $0.01$ and the mean-field step at $50$.
Thresholds are checked on the update grid, so the last loss can undershoot
$\ell_{\rm stop}$. A ``loss stop'' records threshold attainment, not a
converged or fixed-horizon final loss. Time and update ceilings censor runs
that have not attained their loss target. The timing of the ceiling
amendments and the retained out-of-budget tail are specified below.

\paragraph{Runtime amendments and censoring.}
The seed/configuration matrix was fixed before production, but the execution horizons were amended for runtime and are not presented as wholly preregistered. Before any fixed-step control launched, its nominal mean-field horizon was capped at 250; the original steps and seed were retained. The stopping rule is checked on the accumulated update clock, allowing at most one step of overshoot (the $\Delta t=.1$ run stops at $t=250.1$), and actual endpoint times are retained. After a rank-two trajectory stalled with very small adaptive steps, a common rank-two budget of time 500 and 3000 updates was adopted. The balanced seed~2 run completed naturally at update3060; its raw tail is retained but the endpoint analysis stops at update3000. Only unbalanced seed~0 was interrupted and replayed with the same initialization and force, with the interrupted attempt and its log preserved. Other previously completed trajectories finished inside the amended bounds. The results distinguish final outcome slots from interrupted execution attempts and mark all budget-censored outcomes. These finite observation windows do not establish long-time convergence.

\subsection{Finite-initialization outcomes}

\label{swexp:outcomes}

We retain the notation of Section~\ref{swexp:protocol}: $d,m,r$ are
the input dimension, student width, and teacher rank; $\epsilon$ is the
initialization scale; $c$ is the teacher coefficient vector; and $t$ is
mean-field time. The reported $\ell=L/\operatorname{Var}(y)$ is normalized
trained loss, $A_{\rm top}$ and $A_{\min}$ are leading and weakest-direction
AGOP alignments, and $R$ is normalized attained numerical refit MSE.
The polynomials are $h_2(s)=(s^2-1)/\sqrt2$ and
$h_3(s)=(s^3-3s)/\sqrt6$. Subscripts or superscripts $0,P,D,f$ mark
initialization, the last diagnostic inside a loss prefix, the last
in-budget diagnostic, and the final in-budget update, respectively.

The predeclared matrix contains 24 trajectories: two reference smoke checks, three rank-one cubic reproductions, nine rank-one illustrations with degree-one or degree-two teacher signal, six balanced/unbalanced rank-two illustrations, one width-256 reference reproduction, and three fixed-step controls. Figures~\ref{swexp:rankone0}--\ref{swexp:smoke} retain all groups. Tables~\ref{swexp:tableprimary} and~\ref{swexp:tablestrict} report every seed, including the stricter loss-prefix comparison; Table~\ref{swexp:tablefinal} records final observations.

For rank-one $h_3$, initial leading-direction alignments are 0.128, 0.022, 0.000 (seeds 0, 1, 2). At the primary loss-selected endpoints they are 0.998, 0.994, 0.994, and at the stricter endpoints 0.986, 0.974, 0.977. The primary update-grid prefix durations are 226.4, 266.7, 312.1. All three runs reach the stopping threshold $\ell_{\rm stop}=0.5$; their last evaluated losses are 0.4999, 0.4997, 0.5000 (rounded). These values record threshold crossings beyond the plateau, not unconstrained final-loss outcomes or consequences of the alignment theorem.

For $h_3+.3h_2$, all three primary-prefix leading-direction alignments lie in [0.992,0.996]; all three runs subsequently reach $\ell_{\rm stop}=0.7$, with last evaluated losses in [0.6972,0.7000].

For ReLU, all three primary-prefix leading-direction alignments lie in [0.979,0.997]; all three runs subsequently reach $\ell_{\rm stop}=0.7$, with last evaluated losses in [0.6932,0.6972].

For tanh, all three primary-prefix leading-direction alignments lie in [0.956,0.983]; all three runs subsequently reach $\ell_{\rm stop}=0.7$, with last evaluated losses in [0.6926,0.6998].

For $c=(1,1)$, the primary-prefix weakest-direction alignments are 0.976, 0.151, 0.178 (seeds 0, 1, 2), while the corresponding leading-direction alignments are 0.997, 0.997, 0.995.

For $c=(1,.6)$, the primary-prefix weakest-direction alignments are 0.008, 0.067, 0.004 (seeds 0, 1, 2), while the corresponding leading-direction alignments are 0.997, 0.994, 0.994.

Full-subspace alignment need not be monotone within the low-loss-movement regime: balanced rank-two seed~2 has $A_{\min}=.871$ at the stricter prefix checkpoint, $.178$ at the primary-prefix checkpoint and approximately $.999$ at its last in-budget diagnostic. These endpoints illustrate variability rather than persistent failure or monotone recovery.

For the width-256 reproduction, the numerical refit risk changes from 0.8183 to 0.0435 on the primary loss prefix, with normalized loss 1.000000 to 0.999037. At the reference-comparison tolerance $1.2\times10^{-5}$, the legacy penalized diagnostic changes from 0.8183 to 0.3306, whereas the corrected eigencutoff prediction risk changes from 0.8183 to 0.3306. This additional threshold was fixed from the reference statement before the new width-256 output arrived.

The reference schedule additionally saved update 66, which lies between new diagnostics 52 and 69. An exact 14-update replay from saved state 52 recovers its $t=48.834273$, $\ell=0.999988250$ and legacy diagnostic 0.232107532. The attained eigencutoff prediction risk is 0.232107528 (all four cutoffs agree), changing by only $1.11\times10^{-14}$ at doubled quadrature. Thus the reference decrease from approximately $.82$ to $.23$ is reproduced at its original loss movement $1.17993\times10^{-5}$. This reference-defined replay is a checkpoint validation, not an additional seed, a new primary endpoint or a selected minimum.

All 3830 valid saved-state refit diagnostics have finite normalized risks in $[0,1]$ and positive equilibrated Gram eigenvalues (smallest eigenvalue ratio $2.88\times10^{-4}$). The four prescribed spectral cutoffs retain full rank and give the same risk to recorded precision. However, 955 saved states have Gram relative asymmetry above $10^{-8}$, with maximum $1.07\times10^{-4}$; the solver uses the symmetric part. Positivity and cutoff stability therefore do not certify the underlying Gaussian integrals or the exact unrestricted infimum. Five post-failure diagnostics are explicitly omitted.

Doubling quadrature at the selected rank-one cubic seed-0 states changes refit risk by at most $1.08\times10^{-6}$; at the width-256 primary-prefix checkpoint the change is $3.36\times10^{-8}$, increasing to $4.36\times10^{-4}$ at its final state. Across the six rank-two runs the largest checked change is $4.52\times10^{-4}$ in refit risk and $7.28\times10^{-4}$ in leading alignment; this includes the separately retained out-of-budget tail. These saved-state comparisons are neither integral-error bounds nor higher-order retraining.

The fixed $\Delta t=.1$ endpoint has a material quadrature sensitivity: doubled orders change its normalized loss from $.650360$ to $.653321$ and numerical refit risk from $.412908$ to $.401575$ (difference $.011333$). Its fine numerical endpoint values should not be interpreted as resolved to the displayed digits. The $\Delta t=.05$ endpoint is unchanged at recorded precision, but its trajectory is nonmonotone and returns to loss $1.000056$ by the horizon.

The unchanged reference validation script exits with code zero but prints four failed checks: two finite-difference gradient discrepancies ($1.35\times10^{-4}$ and $1.79\times10^{-4}$) and two Monte Carlo comparisons with largest standardized discrepancies $4.61$ and $4.66$. We retain these failures. The separate higher-order finite-difference diagnostic reduces its four gradient discrepancies to $2.12\times10^{-9}$, $3.86\times10^{-7}$, $4.88\times10^{-8}$ and $7.01\times10^{-7}$, respectively; this convergence evidence does not replace the failed checks. The successful record-integrity audit is a distinct claim.

The fixed $\Delta t=0.3$ control is a retained numerical failure: after exact fixed steps through update 762, the reference relative-gradient safeguard activates and the numerical clock freezes near $t=228.9005$. The recorded loss reaches approximately $6.11\times10^{237}$. Post-safeguard alignment/refit is omitted from the figure and no stability or release conclusion is drawn from that tail.

The fixed $\Delta t=0.1$ control ends at its actual $t=250.1000$ with normalized loss 0.6504 at the nominal observation horizon (censored).

The fixed $\Delta t=0.05$ control ends at its actual $t=250.0000$ with normalized loss 1.0001 at the nominal observation horizon (censored).

The rank-two panels show why leading-direction alignment and recovery of the whole teacher subspace must be separated: the dashed $A_{\min}$ curves can lag far behind the solid $A_{\rm top}$ curves. All three seeds are displayed for each coefficient vector. The width-256 and fixed-step experiments are interpreted separately, with the numerical limitations just described.

Throughout these figures, navy denotes loss, teal denotes AGOP alignment,
and purple denotes numerical refit risk, matching the main figures.
Sparse downward-triangle, diamond and filled-plus markers distinguish
overlaid seeds or controls; the first-row legend gives their assignment.
The divergent fixed-step control uses stars. These markers are placed on
existing saved samples and carry the same identity across rows.
Markers on dotted vertical boundaries identify each run's loss-prefix end.

In all panels, circular markers identify the last diagnostic within the primary loss-selected prefix; a cross at a loss-curve endpoint marks a numerical failure or the common time/step analysis horizon reached before the loss stopping target. Any retained raw states beyond the common budget appear as faint dotted tails and are excluded from the endpoint tables. A single-run panel shades that prefix; a multi-run panel uses a separate vertical dotted boundary for each run. Refit curves use the predetermined $10^{-12}$ relative eigencutoff after diagonal equilibration. Shading around those curves spans the four cutoffs $10^{-8},10^{-10},10^{-12},10^{-14}$; it is a numerical sensitivity range, not a statistical confidence interval or a rigorous risk bound. Dotted purple refit curves, when visibly different, give the actual prediction risk of the reference ridge coefficients. Hollow squares show doubled-quadrature evaluations of the same saved states, without retraining; the width-256 triangle marks the separately reproduced reference checkpoint 66.

\begin{figure}[!htbp]
\centering
\includegraphics[width=\textwidth]{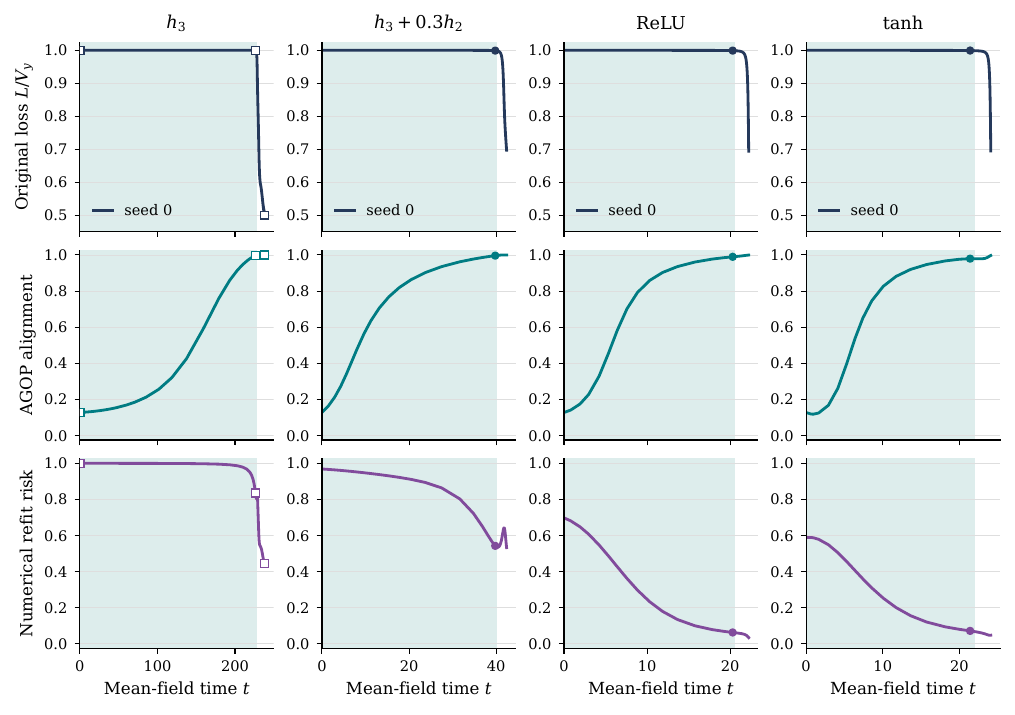}
\caption{Fixed representative seed~0 for the four rank-one teacher links. Rows show original normalized loss, leading AGOP alignment and numerical frozen-feature refit risk. These are adaptive Euler flow approximations, with $d=16,m=64,\epsilon=.05$. The display includes the subsequent measured loss decrease.}
\label{swexp:rankone0}
\end{figure}

\begin{figure}[!htbp]
\centering
\includegraphics[width=\textwidth]{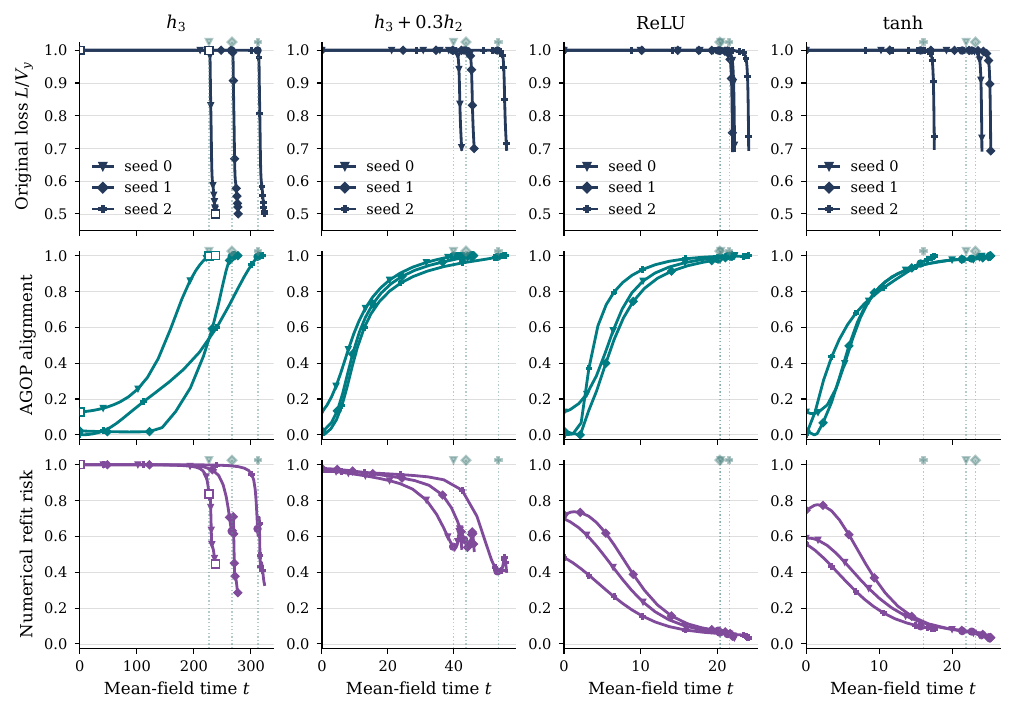}
\caption{All three predeclared seeds for every rank-one link, using the same scales and diagnostics as Figure~\ref{swexp:rankone0}. Each curve is an individual run; no seed is selected by its outcome.}
\label{swexp:rankoneall}
\end{figure}

\begin{figure}[!htbp]
\centering
\includegraphics[width=\textwidth]{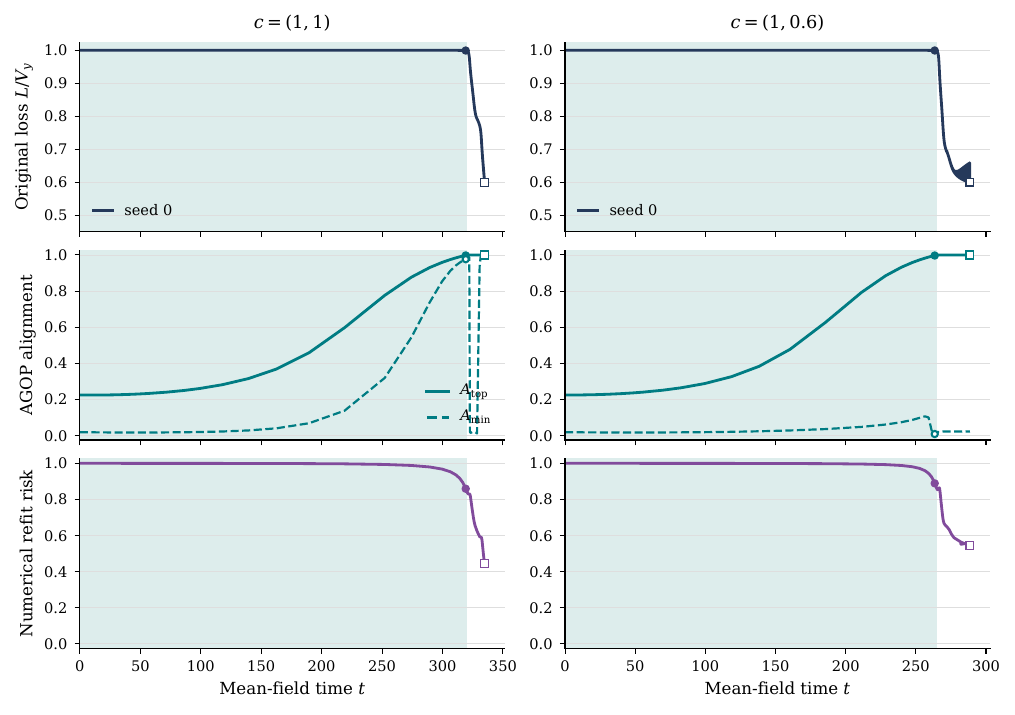}
\caption{Fixed seed~0 for balanced and unbalanced rank-two cubic teachers. In the alignment row, solid curves are $A_{\rm top}$ and dashed curves are $A_{\min}$. Strong leading-direction alignment can coexist with weak recovery of the least aligned teacher direction.}
\label{swexp:ranktwo0}
\end{figure}

\begin{figure}[!htbp]
\centering
\includegraphics[width=\textwidth]{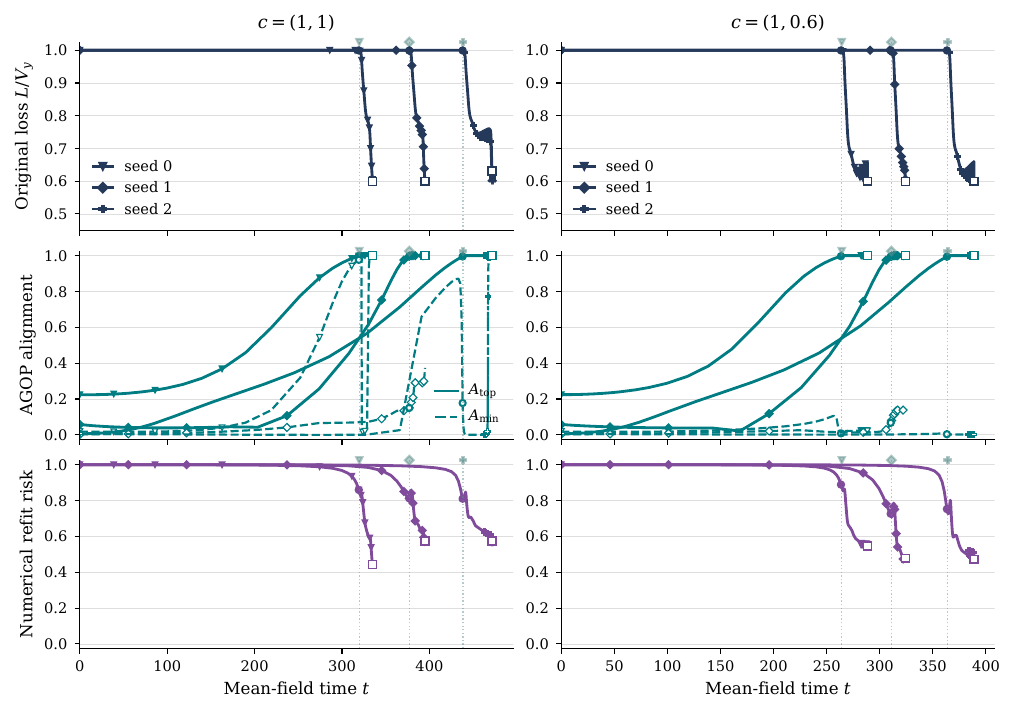}
\caption{All three seeds for each rank-two coefficient vector. The weakest-direction statistic is retained for every seed, including cases with poor simultaneous recovery.}
\label{swexp:ranktwoall}
\end{figure}

\begin{figure}[!htbp]
\centering
\includegraphics[width=\textwidth]{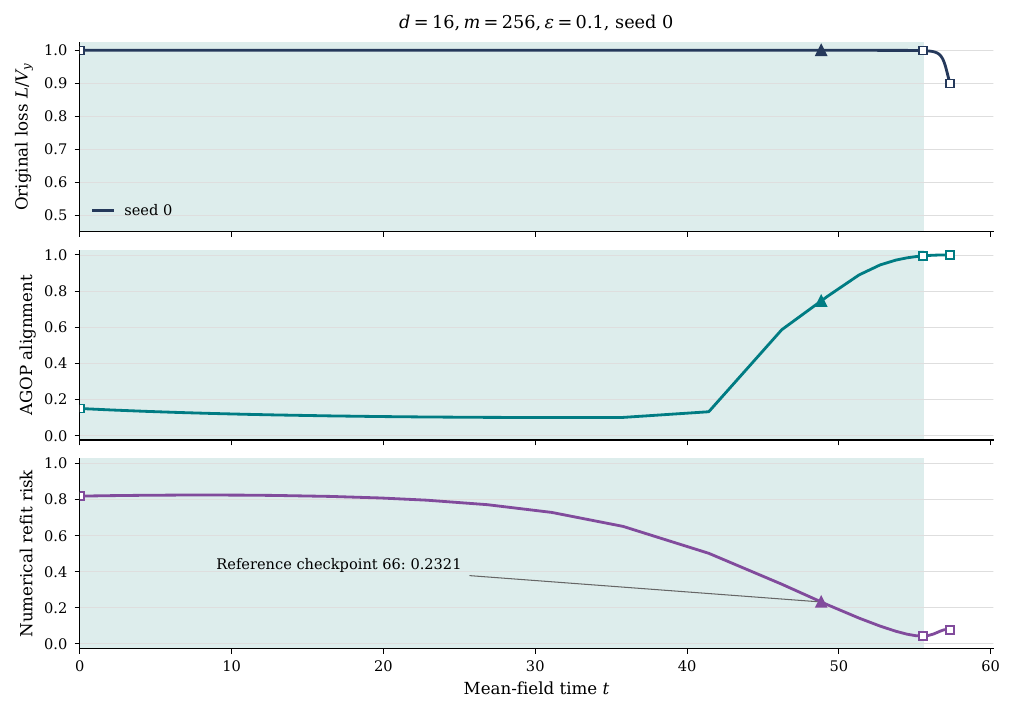}
\caption{Width-256 rank-one reference reproduction ($d=16,\epsilon=.1$, seed~0). The refit panel reports attained prediction risk from the unregularized numerical head solve. The triangle is the separately validated reference checkpoint 66, whose corrected risk is $.232108$; it does not change the primary prefix. The width exceeds $(d+1)(d+2)/2=153$. Quadrature and spectral-cutoff checks accompany these finite-initialization values; the display does not assert an exact zero-risk fit.}
\label{swexp:width256}
\end{figure}

\begin{figure}[!htbp]
\centering
\includegraphics[width=\textwidth]{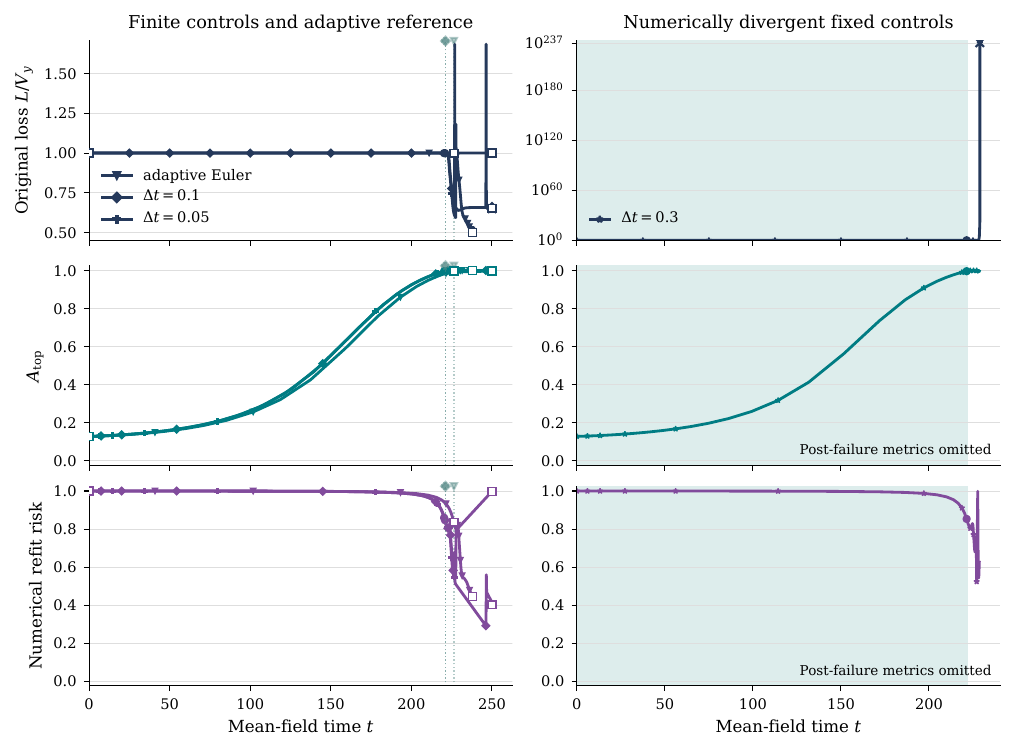}
\caption{Fixed mean-field steps $\Delta t=.3,.1,.05$ versus the adaptive flow approximation, all with the same rank-one cubic seed~0 initialization. Parameter-space learning rates are $m\Delta t$. The x-axis is accumulated mean-field time, not optimizer step count. The right column retains numerically divergent controls on their full loss scale, while omitting alignment/refit after the reference safeguard activates; the left column displays finite controls and the adaptive reference. The nominal time horizon is 250, checked on the accumulated update clock with at most one-step overshoot (actual endpoint times are retained); any trajectory reaching that horizon is censored, and no long-time convergence is claimed. These controls do not convert measured release into a fixed-step release theorem.}
\label{swexp:fixed}
\end{figure}

\begin{figure}[!htbp]
\centering
\includegraphics[width=\textwidth]{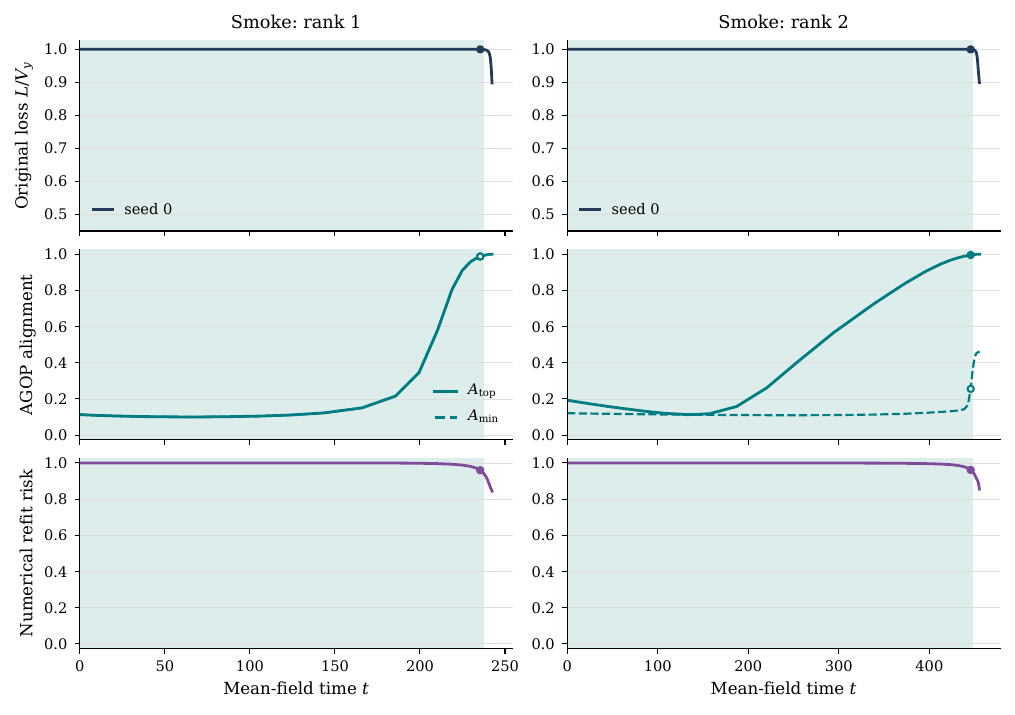}
\caption{The two predeclared reference smoke checks ($d=4,m=6$, seed~0), retained for completeness. The rank-one smoke uses $\epsilon=.1$ and the rank-two smoke $\epsilon=.05$.}
\label{swexp:smoke}
\end{figure}

\begin{table}[!htbp]
\centering
\caption{Every declared SwiGLU trajectory at the loss-selected prefix with $\delta=10^{-3}$. $P$ is the last diagnostic inside the prefix, so $t_P$ may precede the exact last admissible update (both clocks are retained in the audit). $R$ is the attained prediction risk of the equilibrated $10^{-12}$ eigencutoff solve, not a certified unrestricted infimum. \texttt{--} means the weakest-direction statistic duplicates $A_{\rm top}$ at rank one. The reference smokes use $d=4,m=6$; the scientific runs use $d=16,m=64,\epsilon=.05$, except $m=256,\epsilon=.1$. The three GD controls share seed~0; $\dagger$ marks the stated time/step analysis horizon reached before the loss stopping target, rather than convergence; $\ddagger$ denotes the retained numerical divergence of the $\Delta t=.3$ control. Loss values are divided by the teacher variance.}
\label{swexp:tableprimary}

\textit{Loss and time}\par
\begin{tabular}{@{}p{100pt}rrr@{}}
\toprule
Case (seed) & $t_P$ & $\ell_0\to\ell_P$ & $\ell_f$ \\
\midrule
Smoke $r=2$ (0) & 445.6 & $1.000000\to 0.999418$ & 0.8994 \\
Smoke $r=1$ (0) & 235.4 & $1.000000\to 0.999461$ & 0.8977 \\
$h_3$ (0) & 226.4 & $1.000000\to 0.999039$ & 0.4999 \\
$h_3$ (1) & 266.2 & $1.000000\to 0.999403$ & 0.4997 \\
$h_3$ (2) & 311.6 & $1.000000\to 0.999401$ & 0.5000 \\
$h_3$, $m=256$ (0) & 55.5 & $1.000000\to 0.999037$ & 0.8994 \\
$h_3+.3h_2$ (0) & 39.8 & $0.999999\to 0.999128$ & 0.6972 \\
$h_3+.3h_2$ (1) & 43.1 & $1.000000\to 0.999493$ & 0.7000 \\
$h_3+.3h_2$ (2) & 53.2 & $1.000000\to 0.999280$ & 0.6972 \\
ReLU (0) & 20.3 & $0.999999\to 0.999130$ & 0.6939 \\
ReLU (1) & 20.1 & $1.000001\to 0.999240$ & 0.6932 \\
ReLU (2) & 21.3 & $0.999998\to 0.999129$ & 0.6972 \\
tanh (0) & 21.4 & $1.000001\to 0.999313$ & 0.6948 \\
tanh (1) & 22.7 & $1.000001\to 0.999306$ & 0.6926 \\
tanh (2) & 15.7 & $0.999998\to 0.999397$ & 0.6998 \\
$r=2$, $(1,1)$ (0) & 319.2 & $1.000000\to 0.999374$ & 0.5997 \\
$r=2$, $(1,1)$ (1) & 377.0 & $1.000000\to 0.999060$ & 0.5998 \\
$r=2$, $(1,1)$ (2) & 438.0 & $1.000000\to 0.999354$ & 0.6119$\dagger$ \\
$r=2$, $(1,.6)$ (0) & 263.4 & $1.000000\to 0.999486$ & 0.6000 \\
$r=2$, $(1,.6)$ (1) & 310.3 & $1.000000\to 0.999465$ & 0.5994 \\
$r=2$, $(1,.6)$ (2) & 363.3 & $1.000000\to 0.999434$ & 0.6000 \\
GD $\Delta t=0.3$ (0) & 221.4 & $1.000000\to 0.999424$ & failed$\ddagger$ \\
GD $\Delta t=0.1$ (0) & 220.9 & $1.000000\to 0.999282$ & 0.6504$\dagger$ \\
GD $\Delta t=0.05$ (0) & 220.3 & $1.000000\to 0.999507$ & 1.0001$\dagger$ \\
\bottomrule
\end{tabular}
\end{table}

\begin{table}[!htbp]
\centering
\textbf{Table~\ref{swexp:tableprimary} (continued).}\par
\textit{Alignment and refit}\par
\begin{tabular}{@{}p{100pt}rrr@{}}
\toprule
Case (seed) & $A^0_{\rm top}\to A^P_{\rm top}$ & $A^0_{\min}\to A^P_{\min}$ & $R_0\to R_P$ \\
\midrule
Smoke $r=2$ (0) & $0.193\to 0.995$ & $0.121\to 0.256$ & $1.000\to 0.961$ \\
Smoke $r=1$ (0) & $0.113\to 0.987$ & -- & $1.000\to 0.960$ \\
$h_3$ (0) & $0.128\to 0.998$ & -- & $1.000\to 0.836$ \\
$h_3$ (1) & $0.022\to 0.994$ & -- & $1.000\to 0.629$ \\
$h_3$ (2) & $0.000\to 0.994$ & -- & $1.000\to 0.642$ \\
$h_3$, $m=256$ (0) & $0.150\to 0.995$ & -- & $0.818\to 0.043$ \\
$h_3+.3h_2$ (0) & $0.128\to 0.996$ & -- & $0.969\to 0.543$ \\
$h_3+.3h_2$ (1) & $0.022\to 0.992$ & -- & $0.979\to 0.582$ \\
$h_3+.3h_2$ (2) & $0.000\to 0.993$ & -- & $0.963\to 0.405$ \\
ReLU (0) & $0.128\to 0.990$ & -- & $0.698\to 0.063$ \\
ReLU (1) & $0.022\to 0.979$ & -- & $0.721\to 0.074$ \\
ReLU (2) & $0.000\to 0.997$ & -- & $0.483\to 0.055$ \\
tanh (0) & $0.128\to 0.980$ & -- & $0.590\to 0.072$ \\
tanh (1) & $0.022\to 0.983$ & -- & $0.745\to 0.069$ \\
tanh (2) & $0.000\to 0.956$ & -- & $0.558\to 0.098$ \\
$r=2$, $(1,1)$ (0) & $0.224\to 0.997$ & $0.018\to 0.976$ & $1.000\to 0.859$ \\
$r=2$, $(1,1)$ (1) & $0.058\to 0.997$ & $0.003\to 0.151$ & $1.000\to 0.812$ \\
$r=2$, $(1,1)$ (2) & $0.006\to 0.995$ & $0.006\to 0.178$ & $1.000\to 0.810$ \\
$r=2$, $(1,.6)$ (0) & $0.224\to 0.997$ & $0.018\to 0.008$ & $1.000\to 0.889$ \\
$r=2$, $(1,.6)$ (1) & $0.058\to 0.994$ & $0.003\to 0.067$ & $1.000\to 0.726$ \\
$r=2$, $(1,.6)$ (2) & $0.006\to 0.994$ & $0.006\to 0.004$ & $1.000\to 0.752$ \\
GD $\Delta t=0.3$ (0) & $0.128\to 0.997$ & -- & $1.000\to 0.853$ \\
GD $\Delta t=0.1$ (0) & $0.128\to 0.997$ & -- & $1.000\to 0.846$ \\
GD $\Delta t=0.05$ (0) & $0.128\to 0.996$ & -- & $1.000\to 0.859$ \\
\bottomrule
\end{tabular}
\end{table}

\begin{table}[!htbp]
\centering
\caption{Every declared SwiGLU trajectory at the loss-selected prefix with $\delta=10^{-4}$. $P$ is the last diagnostic inside the prefix, so $t_P$ may precede the exact last admissible update (both clocks are retained in the audit). $R$ is the attained prediction risk of the equilibrated $10^{-12}$ eigencutoff solve, not a certified unrestricted infimum. \texttt{--} means the weakest-direction statistic duplicates $A_{\rm top}$ at rank one. The reference smokes use $d=4,m=6$; the scientific runs use $d=16,m=64,\epsilon=.05$, except $m=256,\epsilon=.1$. The three GD controls share seed~0; $\dagger$ marks the stated time/step analysis horizon reached before the loss stopping target, rather than convergence; $\ddagger$ denotes the retained numerical divergence of the $\Delta t=.3$ control. Loss values are divided by the teacher variance.}
\label{swexp:tablestrict}

\textit{Loss and time}\par
\begin{tabular}{@{}p{100pt}rrr@{}}
\toprule
Case (seed) & $t_P$ & $\ell_0\to\ell_P$ & $\ell_f$ \\
\midrule
Smoke $r=2$ (0) & 432.5 & $1.000000\to 0.999930$ & 0.8994 \\
Smoke $r=1$ (0) & 224.9 & $1.000000\to 0.999938$ & 0.8977 \\
$h_3$ (0) & 221.5 & $1.000000\to 0.999945$ & 0.4999 \\
$h_3$ (1) & 262.4 & $1.000000\to 0.999927$ & 0.4997 \\
$h_3$ (2) & 307.6 & $1.000000\to 0.999930$ & 0.5000 \\
$h_3$, $m=256$ (0) & 52.7 & $1.000000\to 0.999945$ & 0.8994 \\
$h_3+.3h_2$ (0) & 34.8 & $0.999999\to 0.999949$ & 0.6972 \\
$h_3+.3h_2$ (1) & 39.3 & $1.000000\to 0.999941$ & 0.7000 \\
$h_3+.3h_2$ (2) & 49.7 & $1.000000\to 0.999914$ & 0.6972 \\
ReLU (0) & 13.7 & $0.999999\to 0.999941$ & 0.6939 \\
ReLU (1) & 16.1 & $1.000001\to 0.999909$ & 0.6932 \\
ReLU (2) & 13.7 & $0.999998\to 0.999939$ & 0.6972 \\
tanh (0) & 15.7 & $1.000001\to 0.999917$ & 0.6948 \\
tanh (1) & 16.8 & $1.000001\to 0.999918$ & 0.6926 \\
tanh (2) & 12.2 & $0.999998\to 0.999927$ & 0.6998 \\
$r=2$, $(1,1)$ (0) & 314.3 & $1.000000\to 0.999926$ & 0.5997 \\
$r=2$, $(1,1)$ (1) & 370.9 & $1.000000\to 0.999944$ & 0.5998 \\
$r=2$, $(1,1)$ (2) & 433.2 & $1.000000\to 0.999925$ & 0.6119$\dagger$ \\
$r=2$, $(1,.6)$ (0) & 259.1 & $1.000000\to 0.999940$ & 0.6000 \\
$r=2$, $(1,.6)$ (1) & 305.9 & $1.000000\to 0.999935$ & 0.5994 \\
$r=2$, $(1,.6)$ (2) & 358.7 & $1.000000\to 0.999934$ & 0.6000 \\
GD $\Delta t=0.3$ (0) & 216.6 & $1.000000\to 0.999949$ & failed$\ddagger$ \\
GD $\Delta t=0.1$ (0) & 217.2 & $1.000000\to 0.999923$ & 0.6504$\dagger$ \\
GD $\Delta t=0.05$ (0) & 216.4 & $1.000000\to 0.999940$ & 1.0001$\dagger$ \\
\bottomrule
\end{tabular}
\end{table}

\begin{table}[!htbp]
\centering
\textbf{Table~\ref{swexp:tablestrict} (continued).}\par
\textit{Alignment and refit}\par
\begin{tabular}{@{}p{100pt}rrr@{}}
\toprule
Case (seed) & $A^0_{\rm top}\to A^P_{\rm top}$ & $A^0_{\min}\to A^P_{\min}$ & $R_0\to R_P$ \\
\midrule
Smoke $r=2$ (0) & $0.193\to 0.981$ & $0.121\to 0.137$ & $1.000\to 0.985$ \\
Smoke $r=1$ (0) & $0.113\to 0.910$ & -- & $1.000\to 0.988$ \\
$h_3$ (0) & $0.128\to 0.986$ & -- & $1.000\to 0.933$ \\
$h_3$ (1) & $0.022\to 0.974$ & -- & $1.000\to 0.706$ \\
$h_3$ (2) & $0.000\to 0.977$ & -- & $1.000\to 0.859$ \\
$h_3$, $m=256$ (0) & $0.150\to 0.945$ & -- & $0.818\to 0.098$ \\
$h_3+.3h_2$ (0) & $0.128\to 0.977$ & -- & $0.969\to 0.723$ \\
$h_3+.3h_2$ (1) & $0.022\to 0.975$ & -- & $0.979\to 0.759$ \\
$h_3+.3h_2$ (2) & $0.000\to 0.980$ & -- & $0.963\to 0.561$ \\
ReLU (0) & $0.128\to 0.936$ & -- & $0.698\to 0.135$ \\
ReLU (1) & $0.022\to 0.944$ & -- & $0.721\to 0.112$ \\
ReLU (2) & $0.000\to 0.968$ & -- & $0.483\to 0.097$ \\
tanh (0) & $0.128\to 0.947$ & -- & $0.590\to 0.121$ \\
tanh (1) & $0.022\to 0.956$ & -- & $0.745\to 0.112$ \\
tanh (2) & $0.000\to 0.864$ & -- & $0.558\to 0.140$ \\
$r=2$, $(1,1)$ (0) & $0.224\to 0.990$ & $0.018\to 0.958$ & $1.000\to 0.916$ \\
$r=2$, $(1,1)$ (1) & $0.058\to 0.976$ & $0.003\to 0.135$ & $1.000\to 0.850$ \\
$r=2$, $(1,1)$ (2) & $0.006\to 0.982$ & $0.006\to 0.871$ & $1.000\to 0.908$ \\
$r=2$, $(1,.6)$ (0) & $0.224\to 0.988$ & $0.018\to 0.100$ & $1.000\to 0.945$ \\
$r=2$, $(1,.6)$ (1) & $0.058\to 0.975$ & $0.003\to 0.029$ & $1.000\to 0.782$ \\
$r=2$, $(1,.6)$ (2) & $0.006\to 0.979$ & $0.006\to 0.008$ & $1.000\to 0.904$ \\
GD $\Delta t=0.3$ (0) & $0.128\to 0.985$ & -- & $1.000\to 0.936$ \\
GD $\Delta t=0.1$ (0) & $0.128\to 0.988$ & -- & $1.000\to 0.923$ \\
GD $\Delta t=0.05$ (0) & $0.128\to 0.987$ & -- & $1.000\to 0.931$ \\
\bottomrule
\end{tabular}
\end{table}

\begin{table}[!htbp]
\centering
\caption{Terminal observations under the stopping rules in Section~\ref{swexp:protocol}, for all 24 conditions. ``Loss stop'' means the prescribed $\ell_{\rm stop}$ was reached; differences below the target reflect update-grid undershoot, not comparable unconstrained final losses. $t_f,\ell_f$ use the complete update history inside the stated analysis budget; $D$ is the latest saved diagnostic within that budget. For balanced rank-two seed~2, $t_f=471.9$ is update 3000, whereas the last diagnostic is update 2928 at $t_D=471.5$; the raw 60-update tail remains archived and dotted in the plot. Failed-control post-safeguard metrics are omitted. Refit risks are numerical attained risks, subject to the quadrature limitations in the text; in particular the fixed $.1$ endpoint risk changes by $.01133$ on doubling quadrature.}
\label{swexp:tablefinal}

\textit{Loss, time, and status}\par
\begin{tabular}{@{}p{100pt}rrl@{}}
\toprule
Case (seed) & $t_f$ & $\ell_f$ & Status \\
\midrule
Smoke $r=2$ (0) & 455.2 & 0.8994 & loss stop \\
Smoke $r=1$ (0) & 242.4 & 0.8977 & loss stop \\
$h_3$ (0) & 238.0 & 0.4999 & loss stop \\
$h_3$ (1) & 277.7 & 0.4997 & loss stop \\
$h_3$ (2) & 323.8 & 0.5000 & loss stop \\
$h_3$, $m=256$ (0) & 57.3 & 0.8994 & loss stop \\
$h_3+.3h_2$ (0) & 42.4 & 0.6972 & loss stop \\
$h_3+.3h_2$ (1) & 46.2 & 0.7000 & loss stop \\
$h_3+.3h_2$ (2) & 56.0 & 0.6972 & loss stop \\
ReLU (0) & 22.2 & 0.6939 & loss stop \\
ReLU (1) & 22.0 & 0.6932 & loss stop \\
ReLU (2) & 24.1 & 0.6972 & loss stop \\
tanh (0) & 24.1 & 0.6948 & loss stop \\
tanh (1) & 25.3 & 0.6926 & loss stop \\
tanh (2) & 17.6 & 0.6998 & loss stop \\
$r=2$, $(1,1)$ (0) & 334.8 & 0.5997 & loss stop \\
$r=2$, $(1,1)$ (1) & 394.9 & 0.5998 & loss stop \\
$r=2$, $(1,1)$ (2) & 471.9 & 0.6119 & censored \\
$r=2$, $(1,.6)$ (0) & 288.4 & 0.6000 & loss stop \\
$r=2$, $(1,.6)$ (1) & 324.2 & 0.5994 & loss stop \\
$r=2$, $(1,.6)$ (2) & 388.7 & 0.6000 & loss stop \\
GD $\Delta t=0.3$ (0) & 228.9 & $6.11\times10^{237}$ & failure \\
GD $\Delta t=0.1$ (0) & 250.1 & 0.6504 & censored \\
GD $\Delta t=0.05$ (0) & 250.0 & 1.0001 & censored \\
\bottomrule
\end{tabular}
\end{table}

\begin{table}[!htbp]
\centering
\textbf{Table~\ref{swexp:tablefinal} (continued).}\par
\textit{Final feature diagnostics}\par
\begin{tabular}{@{}p{100pt}rrrr@{}}
\toprule
Case (seed) & $t_D$ & $A^D_{\rm top}$ & $A^D_{\min}$ & $R_D$ \\
\midrule
Smoke $r=2$ (0) & 455.2 & 1.000 & 0.463 & 0.856 \\
Smoke $r=1$ (0) & 242.4 & 1.000 & -- & 0.845 \\
$h_3$ (0) & 238.0 & 1.000 & -- & 0.445 \\
$h_3$ (1) & 277.7 & 1.000 & -- & 0.291 \\
$h_3$ (2) & 323.8 & 1.000 & -- & 0.332 \\
$h_3$, $m=256$ (0) & 57.3 & 1.000 & -- & 0.077 \\
$h_3+.3h_2$ (0) & 42.4 & 1.000 & -- & 0.533 \\
$h_3+.3h_2$ (1) & 46.2 & 1.000 & -- & 0.521 \\
$h_3+.3h_2$ (2) & 56.0 & 1.000 & -- & 0.404 \\
ReLU (0) & 22.2 & 0.999 & -- & 0.035 \\
ReLU (1) & 22.0 & 0.999 & -- & 0.043 \\
ReLU (2) & 24.1 & 0.999 & -- & 0.033 \\
tanh (0) & 24.1 & 0.997 & -- & 0.048 \\
tanh (1) & 25.3 & 0.999 & -- & 0.036 \\
tanh (2) & 17.6 & 0.998 & -- & 0.083 \\
$r=2$, $(1,1)$ (0) & 334.8 & 1.000 & 1.000 & 0.444 \\
$r=2$, $(1,1)$ (1) & 394.9 & 1.000 & 0.373 & 0.574 \\
$r=2$, $(1,1)$ (2) & 471.5 & 1.000 & 0.999 & 0.574 \\
$r=2$, $(1,.6)$ (0) & 288.4 & 1.000 & 0.022 & 0.546 \\
$r=2$, $(1,.6)$ (1) & 324.2 & 1.000 & 0.138 & 0.478 \\
$r=2$, $(1,.6)$ (2) & 388.7 & 1.000 & 0.002 & 0.473 \\
GD $\Delta t=0.3$ (0) & -- & -- & -- & -- \\
GD $\Delta t=0.1$ (0) & 250.1 & 1.000 & -- & 0.413 \\
GD $\Delta t=0.05$ (0) & 250.0 & 0.997 & -- & 0.998 \\
\bottomrule
\end{tabular}
\end{table}

\clearpage
\subsection{Twenty-seed rank-one extension}
\label{sw20:protocol}

We use $d$ for input dimension, $m$ for student width, $\kappa$ for
the output scale, $\varepsilon$ for initialization scale, $n$ for the
optimizer update index, and $t$ for accumulated mean-field time.
The diagnostics are normalized trained loss
$\ell_n=L(\theta_n)/\operatorname{Var}(y)$, leading alignment
$A_{\rm top}=\|U^\top e_1(M)\|^2$, and normalized numerical refit MSE
$R$ as defined in Section~\ref{swexp:protocol}. Here $M$ is the population
AGOP, $U$ is the rank-one teacher basis, and $e_1(M)$ is its computed
unit leading eigenvector. In the tables,
$\Delta A=\Delta A_{\rm top}=A_{\rm top}(P)-A_{\rm top}(0)$ and
$\Delta R=R(0)-R(P)$, where $P$ is the last saved diagnostic inside
the primary loss prefix. The teacher polynomials are
$h_2(s)=(s^2-1)/\sqrt2$ and $h_3(s)=(s^3-3s)/\sqrt6$.

The four rank-one teacher configurations are extended from seeds 0--2
to seeds 0--19, giving eighty outcomes. The original twelve runs are
retained exactly, and sixty-eight new runs use the same initial distribution,
population force, adaptive step rule, quadrature and stopping criteria
as Section~\ref{swexp:protocol}. The other twelve smoke, rank-two,
wide-student and fixed-step control conditions are unchanged. Thus the
combined study has ninety-two distinct conditions. The new rank-one
initializations are a declared consecutive extension, with no selection
based on alignment, loss or plot appearance. Fifty-nine trajectories
were completed locally and twenty-one on a CPU server, with NumPy 1.24.3
and SciPy 1.10.1 in both environments and unchanged numerical kernels.
The final termination counts are 80 loss-threshold stops.

For each run, $d=16$, $m=64$, $\kappa=1$ and $\varepsilon=.05$.
The relative-change step cap is $.01$ and the maximum mean-field step
is $50$. The cubic stops at normalized loss $.5$, the other links at
$.7$, subject to ceilings of $20{,}000$ updates and mean-field time
$10^6$. These are adaptive Euler flow approximations, with the output
intercept profiled exactly. They do not instantiate the fixed-step
theorem at a quantified practical initialization scale.

\paragraph{Uniform displays and distinct meanings.}
The colors, row order, typography and seed-summary convention match
Figure~\ref{fig:main-trajectories}. Loss and refit divide by teacher
variance, the refit coefficients are unrestricted, and rank one makes
the leading and weakest-direction alignments identical. The teacher-plane
snapping and two-axis angle diagnostics in the ReLU figure do not apply
to this rank-one comparison.

Each loss curve includes every recorded update. AGOP and corrected refit
diagnostics use their original saved states; linear interpolation onto
integer update indices is used only for drawing seed means and pointwise
empirical 10th--90th percentiles. Such interpolation adds no observations.
Means and bands stop at the earliest terminal update among all twenty
seeds for that teacher. Individual recorded tails remain visible beyond
that common support. There is no endpoint holding, extrapolation,
survivor-only averaging or time alignment of individual trajectories.
The display coordinate is $\log(1+n/100)$ with actual update ticks;
adaptive update index is a numerical-work clock, not elapsed flow time.

\paragraph{Numerical checks and endpoint accounting.}
The corrected prediction risk uses the same equilibrated Gram solve at
relative cutoff $10^{-12}$, with sensitivity checks at $10^{-8}$,
$10^{-10}$ and $10^{-14}$. Every new seed also doubles all quadrature
orders at initialization, its primary-prefix diagnostic and its final
state. These are checks of stored states, not retraining or certified
integration-error bounds. Across the 207 resolved comparisons,
the largest absolute changes in normalized loss, leading alignment and
normalized refit risk are $1.09\times10^{-6}$, $3.85\times10^{-8}$ and $2.88\times10^{-3}$, respectively;
0 comparisons remain unresolved.
Nonfinite or negative prediction risks,
nonpositive Gram spectra, and numerically unresolved AGOP leading
directions are flagged. The AGOP flag uses relative leading gap
$>10^{-8}$ and smallest-eigenvalue ratio $\ge-10^{-8}$; it is a
numerical screening rule, not an error certificate. An unresolved seed
suppresses the corresponding ensemble statistic instead of being dropped.
The recorded unresolved-diagnostic count is 0.

For each seed, the primary plateau is the full-update initial prefix
$|\ell_n-\ell_0|\le10^{-3}$. Endpoint gains use the last saved diagnostic
inside that prefix; they are never evaluated at an interpolated threshold
crossing. Table~\ref{sw20:summary} summarizes these gains.
Table~\ref{sw20:seeds} reports every seed's endpoint gains.
Later loss changes are measured outcomes, not conclusions of a
trained-loss release theorem. This unrestricted numerical readout is
distinct from the bounded-refit comparison in the ReLU theorem.

\begin{table}[!htbp]
\centering
\caption{Twenty-seed rank-one outcomes. Alignment and refit gains are
median [minimum, maximum] at the final saved in-prefix diagnostic,
using all twenty values unless a parenthesized resolved count $n$ is shown.}
\label{sw20:summary}

\begin{tabular}{@{}p{80pt}p{135pt}p{135pt}@{}}
\toprule
Teacher & $\Delta A_{\rm top}$ & $\Delta R$\\
\midrule
$h_3$ & 0.962 [0.745, 0.996] & 0.222 [0.146, 0.491] \\
$h_3+.3h_2$ & 0.956 [0.742, 0.996] & 0.475 [0.320, 0.672] \\
ReLU & 0.927 [0.719, 0.997] & 0.519 [0.414, 0.683] \\
$\tanh$ & 0.942 [0.709, 0.983] & 0.476 [0.299, 0.676] \\
\bottomrule
\end{tabular}
\end{table}

\begin{figure}[!htbp]
\centering
\includegraphics[width=\textwidth]{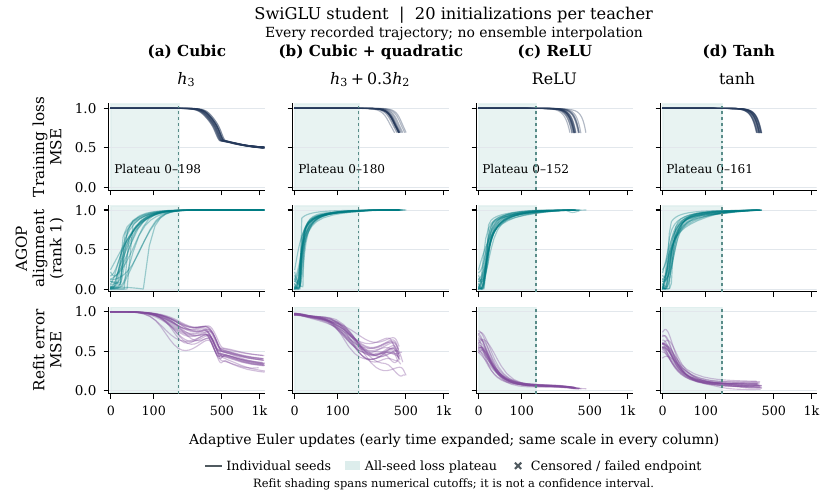}
\caption{All twenty recorded trajectories per rank-one teacher.
Individual curves retain their own saved grids and terminal times.
The three rows and numerical meanings are those of
Figure~\ref{fig:swiglu-main}; all outcomes remain visible.}
\label{sw20:individuals}
\end{figure}

\begin{table}[!htbp]
\centering
\caption{Every seed's endpoint changes in leading AGOP alignment and
normalized numerical refit MSE. Positive $\Delta R$ means lower error.
Values are rounded numerical diagnostics, not error certificates;
\texttt{--} indicates an unresolved value.}
\label{sw20:seeds}

\begin{tabular}{@{}r rr rr rr rr@{}}
\toprule
&\multicolumn{2}{c}{$h_3$}&\multicolumn{2}{c}{$h_3+.3h_2$}&
\multicolumn{2}{c}{ReLU}&\multicolumn{2}{c}{$\tanh$}\\
Seed&$\Delta A$&$\Delta R$&$\Delta A$&$\Delta R$&$\Delta A$&$\Delta R$&$\Delta A$&$\Delta R$\\
\midrule
0 & 0.870 & 0.164 & 0.869 & 0.426 & 0.862 & 0.635 & 0.852 & 0.518 \\
1 & 0.972 & 0.371 & 0.969 & 0.397 & 0.957 & 0.647 & 0.961 & 0.676 \\
2 & 0.994 & 0.358 & 0.993 & 0.558 & 0.997 & 0.428 & 0.956 & 0.460 \\
3 & 0.981 & 0.235 & 0.979 & 0.483 & 0.969 & 0.551 & 0.955 & 0.459 \\
4 & 0.745 & 0.239 & 0.742 & 0.360 & 0.719 & 0.486 & 0.709 & 0.438 \\
5 & 0.899 & 0.146 & 0.894 & 0.459 & 0.882 & 0.619 & 0.869 & 0.539 \\
6 & 0.786 & 0.208 & 0.783 & 0.506 & 0.766 & 0.514 & 0.770 & 0.457 \\
7 & 0.988 & 0.150 & 0.979 & 0.575 & 0.973 & 0.447 & 0.978 & 0.428 \\
8 & 0.996 & 0.349 & 0.996 & 0.557 & 0.984 & 0.442 & 0.983 & 0.510 \\
9 & 0.963 & 0.312 & 0.956 & 0.466 & 0.927 & 0.534 & 0.926 & 0.470 \\
10 & 0.979 & 0.181 & 0.977 & 0.501 & 0.973 & 0.588 & 0.969 & 0.564 \\
11 & 0.978 & 0.288 & 0.980 & 0.505 & 0.975 & 0.637 & 0.972 & 0.623 \\
12 & 0.995 & 0.203 & 0.992 & 0.499 & 0.977 & 0.525 & 0.972 & 0.498 \\
13 & 0.993 & 0.186 & 0.992 & 0.320 & 0.983 & 0.572 & 0.967 & 0.504 \\
14 & 0.873 & 0.491 & 0.873 & 0.468 & 0.871 & 0.683 & 0.836 & 0.633 \\
15 & 0.962 & 0.156 & 0.956 & 0.449 & 0.926 & 0.470 & 0.951 & 0.483 \\
16 & 0.924 & 0.264 & 0.922 & 0.438 & 0.907 & 0.464 & 0.892 & 0.457 \\
17 & 0.941 & 0.367 & 0.935 & 0.539 & 0.913 & 0.472 & 0.933 & 0.420 \\
18 & 0.826 & 0.175 & 0.830 & 0.672 & 0.823 & 0.414 & 0.793 & 0.299 \\
19 & 0.896 & 0.174 & 0.895 & 0.325 & 0.882 & 0.466 & 0.878 & 0.451 \\
\bottomrule
\end{tabular}
\end{table}

\end{document}